\newcommand{\papertitle}{Infinite attention: NNGP and NTK for deep attention networks}
\newcommand*\bigcdot{\mathpalette\bigcdot@{.5}}
\newcommand*\bigcdot@[2]{\mathbin{\vcenter{\hbox{\scalebox{#2}{$\m@th#1\bullet$}}}}}
\newcommand{\activationSymbol}{f}
\newcommand{\activitySymbol}{g}
\newcommand{\depthSymbol}{\ell}
\newcommand{\inputSymbol}{x}
\newcommand{\indexedObject}[4]{{#1}^{#2}_{#3}{#4}}
\newcommand{\indexedActivation}[3]{\indexedObject{\activationSymbol}{#1}{#2}{(#3)}}
\newcommand{\indexedActivity}[3]{\indexedObject{\activitySymbol}{#1}{#2}{(#3)}}
\newcommand{\nonlinearity}{\phi}
\newcommand{\depth}{L}
\newcommand{\sequenceVariable}{n}
\newcommand{\indexSet}{\mathcal{X}}
\newcommand{\genericRV}{X}
\newcommand{\rowIndex}{n}
\newcommand{\rowVariance}{\sigma^2_{\rowIndex}}
\newcommand{\limitStd}{\sigma_{*}}
\newcommand{\limitVariance}{\sigma^2_{*}}
\newcommand{\colIndex}{i}
\newcommand{\generalSum}{S}
\newcommand{\littleO}{\mathrm{o}}
\newcommand{\summand}{\gamma}
\newcommand{\projection}{\mathcal{T}}
\newcommand{\projectionIndeces}{\mathcal{L}}
\newcommand{\projectionCoefficients}{\alpha}
\newtheorem{theorem}{Theorem}
\newtheorem{lemma}[theorem]{Lemma}
\newtheorem{corollary}[theorem]{Corollary}
\newtheorem{definition}[theorem]{Definition}
\DeclareMathOperator*{\E}{\mathbb{E}}
\DeclareMathOperator{\Prob}{\mathbb{P}}
\DeclareMathOperator{\poly}{poly}
\DeclareMathOperator{\vectorise}{vect}
\DeclarePairedDelimiter\floor{\lfloor}{\rfloor}
\DeclarePairedDelimiterX{\infdivx}[2]{(}{)}{%
	#1 \, \delimsize\| \, #2%
}
\DeclarePairedDelimiterX{\commaparen}[2]{(}{)}{%
	#1 , #2%
}
\DeclarePairedDelimiterX{\innerprod}[2]{\langle}{\rangle}{%
	#1 , #2%
}
\DeclarePairedDelimiterX{\parenththree}[3]{\lparen}{\rparen}{
	#1 ; #2 ; #3	
}
\newcommand*\xbar[1]{%
	\hbox{%
		\vbox{%
			\hrule height 0.5pt 
			\kern0.5ex
			\hbox{%
				\kern-0.1em
				\ensuremath{#1}%
				\kern-0.1em
			}%
		}%
	}%
}
\newcommand{\rv}[1]{\expandafter\MakeUppercase\expandafter{#1}}
\newcommand{\distr}[1]{\expandafter\MakeUppercase\expandafter{\mathrm{#1}}}
\newcommand{\natnum}{\mathbb{N}}
\newcommand{\indicator}[1]{\mathbbm{1}_{#1}}
\newcommand{\kernel}{\kappa}
\newcommand{\kerntilde}{\tilde{\kernel}}
\newcommand{\kernelf}[2]{\kernel_{#1}^{#2} \, \commaparen}
\newcommand{\kerntildef}[2]{\kerntilde_{#1}^{#2} \, \commaparen}
\newcommand{\ntk}{\Theta}
\newcommand{\ntktilde}{\widetilde{\ntk}}
\newcommand{\ntkhat}{\widehat{\ntk}}
\newcommand{\ntkf}[2]{\ntk_{#1}^{#2} \, \commaparen}
\newcommand{\ntktildef}[2]{\ntktilde_{#1}^{#2} \, \commaparen}
\newcommand{\ntkhatf}[2]{\ntkhat_{#1}^{#2} \, \commaparen}
\newcommand{\interpKern}{\mathcal{I}}
\newcommand{\interpKernCoeff}{\alpha}
\newcommand{\R}[1]{\mathbb{R}^{#1}}
\newcommand{\gauss}{\mathcal{N}}
\newcommand{\gp}{\mathcal{GP}}
\newcommand{\convergeDist}{\rightsquigarrow}
\newcommand{\convergeProb}{\overset{P}{\longrightarrow}}
\newcommand\restr[2]{{
  \left.\kern-\nulldelimiterspace 
  #1 
  \right|_{#2} 
  }}
\newcommand{\generalMean}{\bar{\generalSum}}
\newcommand{\genericDimenstion}{d}
\newcommand{\inputDimension}{\genericDimenstion^{0}}
\newcommand{\spatialDimension}{\genericDimenstion^s}
\newcommand{\layerDimension}[1]{\genericDimenstion_{\sequenceVariable}^{#1}}
\newcommand{\layerDimensionN}{\layerDimension{\depthSymbol}}
\newcommand{\headIndex}{h}
\newcommand{\headSymbol}{H}
\newcommand{\softmax}{\zeta}
\newcommand{\softmaxMean}{\bar{\softmax}}
\newcommand{\params}{\theta}
\newcommand{\keySymbol}{K}
\newcommand{\querySymbol}{Q}
\newcommand{\valueSymbol}{V}
\newcommand{\logitSymbol}{G}
\newcommand{\outputSymbol}{O}
\newcommand{\key}[2]{\keySymbol_{#1}^{#2}}
\newcommand{\query}[2]{\querySymbol_{#1}^{#2}}
\newcommand{\val}[2]{\valueSymbol_{#1}^{#2}}
\newcommand{\logit}[2]{\logitSymbol_{#1}^{#2}}
\newcommand{\keyN}{\key{\sequenceVariable}{\depthSymbol \headIndex}}
\newcommand{\queryN}{\query{\sequenceVariable}{\depthSymbol \headIndex}}
\newcommand{\valueN}{\val{\sequenceVariable}{\depthSymbol \headIndex}}
\newcommand{\logitN}{\logit{\sequenceVariable}{\depthSymbol \headIndex}}
\newcommand{\posEmbSymbol}{E}
\newcommand{\posEmb}[2]{\posEmbSymbol_{#1}^{#2}}
\newcommand{\covPosEmb}[2]{R_{#1}^{#2}}
\newcommand{\weightMatSymbol}{W}
\newcommand{\W}{\weightMatSymbol}
\newcommand{\WQ}{\weightMatSymbol^{\querySymbol}}
\newcommand{\WK}{\weightMatSymbol^{\keySymbol}}
\newcommand{\WV}{\weightMatSymbol^{\valueSymbol}}
\newcommand{\weightQ}[1]{\weightMatSymbol_{\sequenceVariable, #1}^{\depthSymbol \headIndex, \querySymbol}}
\newcommand{\weightK}[1]{\weightMatSymbol_{\sequenceVariable, #1}^{\depthSymbol \headIndex, \keySymbol}}
\newcommand{\weightV}[1]{\weightMatSymbol_{\sequenceVariable, #1}^{\depthSymbol \headIndex, \valueSymbol}}
\newcommand{\weightOGen}[2]{\weightMatSymbol_{#1}^{#2}}
\newcommand{\weightO}[1]{\weightOGen{\sequenceVariable, #1}{\depthSymbol, \outputSymbol}}
\newcommand{\weightsQ}{\weightMatSymbol_{\sequenceVariable}^{\depthSymbol \headIndex, \querySymbol}}
\newcommand{\weightsK}{\weightMatSymbol_{\sequenceVariable}^{\depthSymbol \headIndex, \keySymbol}}
\newcommand{\weightsV}{\weightMatSymbol_{\sequenceVariable}^{\depthSymbol \headIndex, \valueSymbol}}
\newcommand{\weightsO}{\weightMatSymbol_{\sequenceVariable}^{\depthSymbol, \outputSymbol}}
\newcommand{\headDimension}{\genericDimenstion_{\sequenceVariable}^{\depthSymbol, \headSymbol}}
\newcommand{\valueDimension}{\genericDimenstion_{\sequenceVariable}^{\depthSymbol, \valueSymbol}}
\newcommand{\logitDimension}{\genericDimenstion_{\sequenceVariable}^{\depthSymbol, \logitSymbol}}
\newcommand{\peDimension}{\genericDimenstion_{\sequenceVariable}^{\depthSymbol, \posEmb{}{}}}
\newcommand{\stdSymbol}{\sigma}
\newcommand{\keyStd}{\stdSymbol_{\keySymbol}}
\newcommand{\queryStd}{\stdSymbol_{\querySymbol}}
\newcommand{\valueStd}{\stdSymbol_{\valueSymbol}}
\newcommand{\outStd}{\stdSymbol_{\outputSymbol}}
\newcommand{\QKStd}{\stdSymbol_{\querySymbol \keySymbol}}
\newcommand{\OVStd}{\stdSymbol_{\outputSymbol \valueSymbol}}
\newcommand{\keyVar}{\keyStd^2}
\newcommand{\queryVar}{\queryStd^2}
\newcommand{\valueVar}{\valueStd^2}
\newcommand{\outVar}{\outStd^2}
\newcommand{\QKVar}{\QKStd^2}
\newcommand{\OVVar}{\OVStd^2}
\newcommand{\activations}[2]{\activationSymbol_{#1}^{#2}}
\newcommand{\activationsN}{\activations{\sequenceVariable}{\depthSymbol}}
\newcommand{\logits}[2]{\logitSymbol_{#1}^{#2}}
\newcommand{\logitsN}{\logits{\sequenceVariable}{\depthSymbol}}
\newcommand{\scaling}{\tau}
\newcommand{\fIndexA}{a}
\newcommand{\fIndexB}{b}
\newcommand{\gIndexA}{a'}
\newcommand{\gIndexB}{b'}
\newcommand{\gIndexZA}{i'}
\newcommand{\gIndexZB}{j'}
\newcommand{\projectionN}{\projection_{\sequenceVariable}}
\newcommand{\summandN}{\summand_{\sequenceVariable, \headIndex}}
\newcommand{\projectionLogitSymbol}{\projection}
\newcommand{\summandLogitSymbol}{\varphi}
\newcommand{\projectionLogit}{\projectionLogitSymbol_{\sequenceVariable}^{\logitSymbol}}
\newcommand{\projectionCoefficientsLogit}{\beta}
\newcommand{\summandLogit}[1]{\summandLogitSymbol_{#1}}
\newcommand{\summandLogitN}{\summandLogit{\sequenceVariable, j}}
\newcommand{\tildeActivation}[3]{\indexedObject{\tilde{\activationSymbol}}{#1}{#2}{(#3)}}
\newcommand{\tildeActivity}[3]{\indexedObject{\tilde{\activitySymbol}}{#1}{#2}{(#3)}}
\newcommand{\tildeLogit}[2]{\widetilde{\logitSymbol}_{#1}^{#2}}
\newcommand{\tildeLogitN}[3]{\tildeLogit{#1}{#2} (#3)}
\newcommand{\uW}{\widetilde{\weightMatSymbol}}
\definecolor{darkgreen}{rgb}{0,0.6,0}
\icmltitlerunning{\papertitle}
\begin{document}

\twocolumn[
\icmltitle{\papertitle}

\icmlsetsymbol{equal}{*}

\begin{icmlauthorlist}
\icmlauthor{Jiri Hron}{c}
\icmlauthor{Yasaman Bahri}{g}
\icmlauthor{Jascha Sohl-Dickstein}{g}
\icmlauthor{Roman Novak}{g}
\end{icmlauthorlist}

\icmlaffiliation{g}{Google Brain}
\icmlaffiliation{c}{University of Cambridge. Work done while interning at Google Brain.}

\icmlcorrespondingauthor{Jiri Hron}{jh2084@cam.ac.uk}
\icmlkeywords{Machine Learning, ICML, nngp, ntk, gaussian process, infinite, attention, wide, neural networks}

\vskip 0.3in
]

\printAffiliationsAndNotice{}  %

\begin{abstract}
There is a growing amount of literature on the relationship between wide neural networks (NNs) and Gaussian processes (GPs), identifying an equivalence between the two for a variety of NN architectures.
This equivalence enables, for instance, accurate approximation of the behaviour of wide Bayesian NNs without MCMC or variational approximations, or characterisation of the distribution of randomly initialised wide NNs optimised by gradient descent without ever running an optimiser.
We provide a rigorous extension of these results to NNs involving attention layers,
showing that unlike single-head attention, which induces non-Gaussian behaviour, multi-head attention architectures behave as GPs as the number of heads tends to infinity. 
We further discuss the effects of positional encodings and layer normalisation, and propose modifications of the attention mechanism which lead to improved results for both finite and infinitely wide NNs.
We evaluate attention kernels empirically, leading to a moderate improvement upon the previous state-of-the-art on CIFAR-10 for GPs without trainable kernels and advanced data preprocessing.
Finally, we introduce new features to the Neural Tangents library \citep{novak2020neural} allowing applications of NNGP/NTK models, with and without attention, to variable-length sequences, with an example on the IMDb reviews dataset.
\end{abstract}

\section{Introduction}\label{sect:intro}

One of the currently most active research directions in theoretical deep learning is the study of NN behaviour as the number of parameters in each layer goes to infinity \citep[e.g.,][]{matthews2018gaussian,lee2018deep,garriga2019deep,novak2019bayesian,li2018learning,allenzhu19convergence,du2019gd,arora2019,yang2019v2}. 
Building upon these efforts, we study the asymptotic behaviour of NNs with attention layers \citep{bahdanu2015nmt,vaswani2017attention} and derive the corresponding neural network Gaussian proccess (NNGP) and Neural Tangent kernels \citep[NTK,][]{jacot2018ntk,lee2019wide}.

Beyond their recent empirical successes \citep[e.g.,][]{radford2019language,devlin2019bert}, attention layers are also interesting from the theoretical perspective as the standard proof techniques used to establish asymptotic Gaussianity of the input-to-output mappings represented by wide NNs \citep{matthews2018gaussian,yang2019v2} cannot be applied. 

To understand why, consider the following simplified attention layer model: let $\inputSymbol \in \R{\spatialDimension \times \genericDimenstion'}$ be the input with $\spatialDimension$ \emph{spatial} and $\genericDimenstion'$ \emph{embedding} dimensions (by spatial, we mean, e.g., the number of tokens in a string or pixels in an image),  
$\WQ, \WK, \WV \in \R{\genericDimenstion' \times \genericDimenstion}$ be weight matrices, and define queries
$\querySymbol(\inputSymbol) \coloneqq \inputSymbol \WQ$, keys $\keySymbol(\inputSymbol) \coloneqq \inputSymbol \WK$, and values $\valueSymbol(\inputSymbol) \coloneqq \inputSymbol \WV$ as usual.
The attention layer output is then
\begin{align}\label{eq:single_head_informal}
    \activationSymbol(\inputSymbol)
    \coloneqq
    \softmax \biggl(
        \frac{
            \querySymbol(\inputSymbol) \keySymbol(\inputSymbol)^\top
        }{\sqrt{\genericDimenstion}}
    \biggr) 
    \valueSymbol(\inputSymbol)
    =
    \softmax (
        \logitSymbol(\inputSymbol)
    )
    \valueSymbol(\inputSymbol)
    \, ,
\end{align}
where $\softmax$ is the row-wise softmax function.

Now observe that 
$\dim \logitSymbol(\inputSymbol) = \spatialDimension \times \spatialDimension$ 
where the spatial dimension $\spatialDimension$ stays finite even as the number of parameters---here proportional to $\genericDimenstion$---goes to infinity.
As we will show rigorously in \Cref{sect:theory}, this fact combined with the $\genericDimenstion^{-1/2}$ scaling 
causes each column of $\activationSymbol(\inputSymbol)$ to be a linear combination of the \emph{same stochastic matrix} $\softmax(\logitSymbol(\inputSymbol))$, and thus statistically dependent even in the infinite width limit.

Since the exchangeability based arguments
\citep{matthews2018gaussian,garriga2019deep} require that certain moment statistics of $\activationSymbol(\inputSymbol)$ asymptotically behave as if its columns were independent \citep[see condition {\em b} in lemma 10,][]{matthews2018gaussian}, they do not extend to attention layers in a straightforward manner.
Similarly, the proofs based on Gaussian conditioning \citep{novak2019bayesian,yang2019v2} require that given the input $\inputSymbol$, the conditional covariance of each column of $\activationSymbol(x)$ converges (in probability) to the same \emph{deterministic} positive semidefinite matrix \citep[see propositions 5.5 and G.4 in][]{yang2019v2} which will not be the case due to the aforementioned stochasticity of $\softmax(\logitSymbol(\inputSymbol))$.

Among the many interesting contributions in \citep{yang2019v2}, the author proposes to resolve the above issue by replacing the $\genericDimenstion^{-1/2}$ scaling in \Cref{eq:single_head_informal} by $\genericDimenstion^{-1}$ which does enable application of the Gaussian conditioning type arguments.
However, it also forces the attention layer to only perform computation similar to average pooling in the infinite width limit, and reduces the overall expressivity of attention even if suitable modifications preventing the pooling behaviour are considered (see \Cref{sect:linear_scaling_limit}).

We address the above issues by modifying the exchangability based technique and provide a rigorous characterisation of the infinite width behaviour under both the $\genericDimenstion^{-1/2}$ and $\genericDimenstion^{-1}$ scalings.
We also show that positional encodings \citep{gehring2017convolutional,vaswani2017attention} can improve empirical performance even in the infinite width limit, and propose modifications to the attention mechanism which results in further gains for both finite and infinite NNs.
In experiments, we moderately improve upon the previous state-of-the-art result on CIFAR-10 for GP models without data augmentation and advanced preprocessing \citep[cf.][]{li2019enhanced}.
Finally, since attention is often applied to text datasets, we release code allowing applications of NNGP/NTK models to variable-length sequences, including an example on the IMDb reviews dataset.

\begin{figure}[tbp]
    \begin{center}
        \centerline{
            \includegraphics[width=0.5\columnwidth,keepaspectratio]{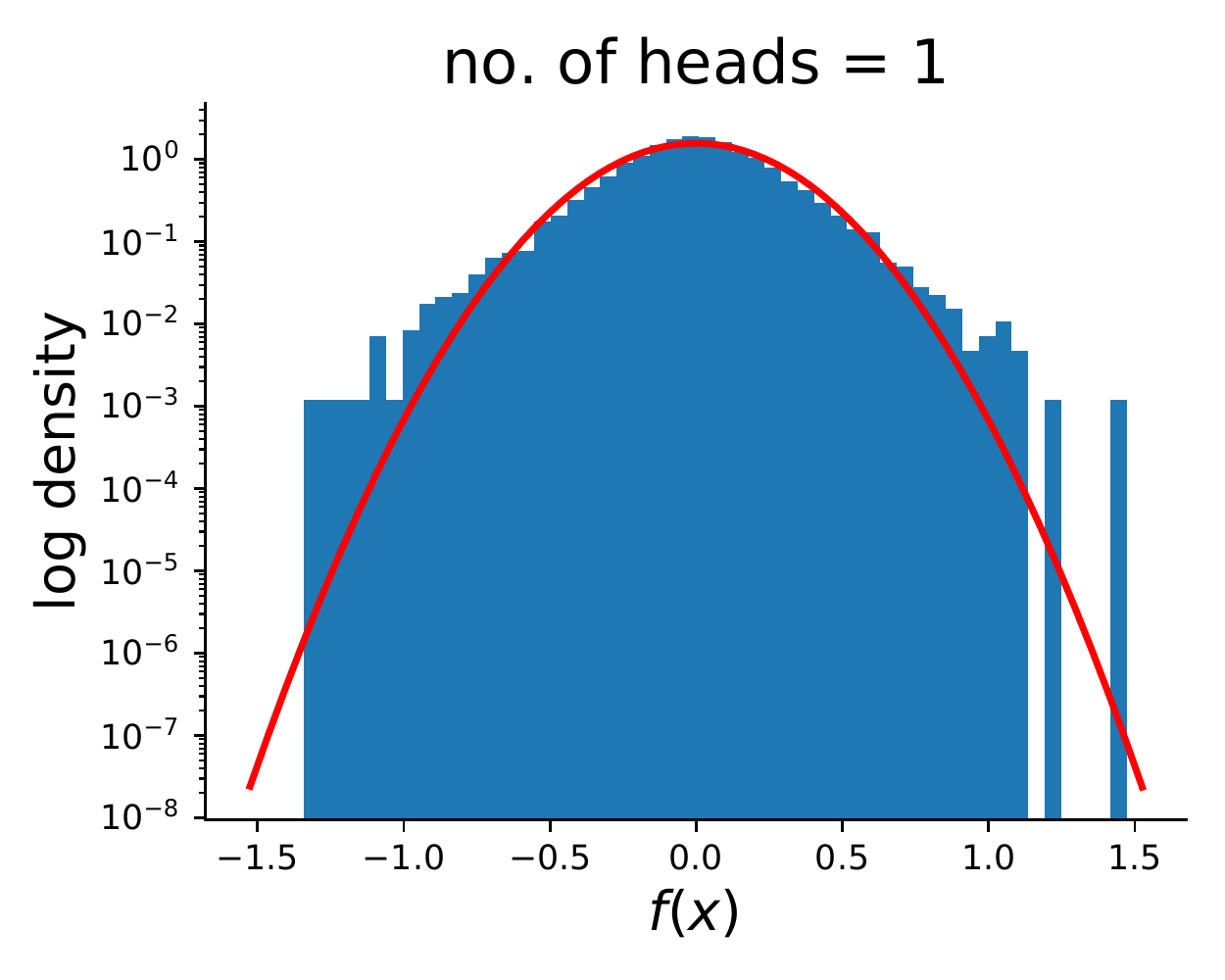}
            \hfill
            \includegraphics[width=0.5\columnwidth,keepaspectratio]{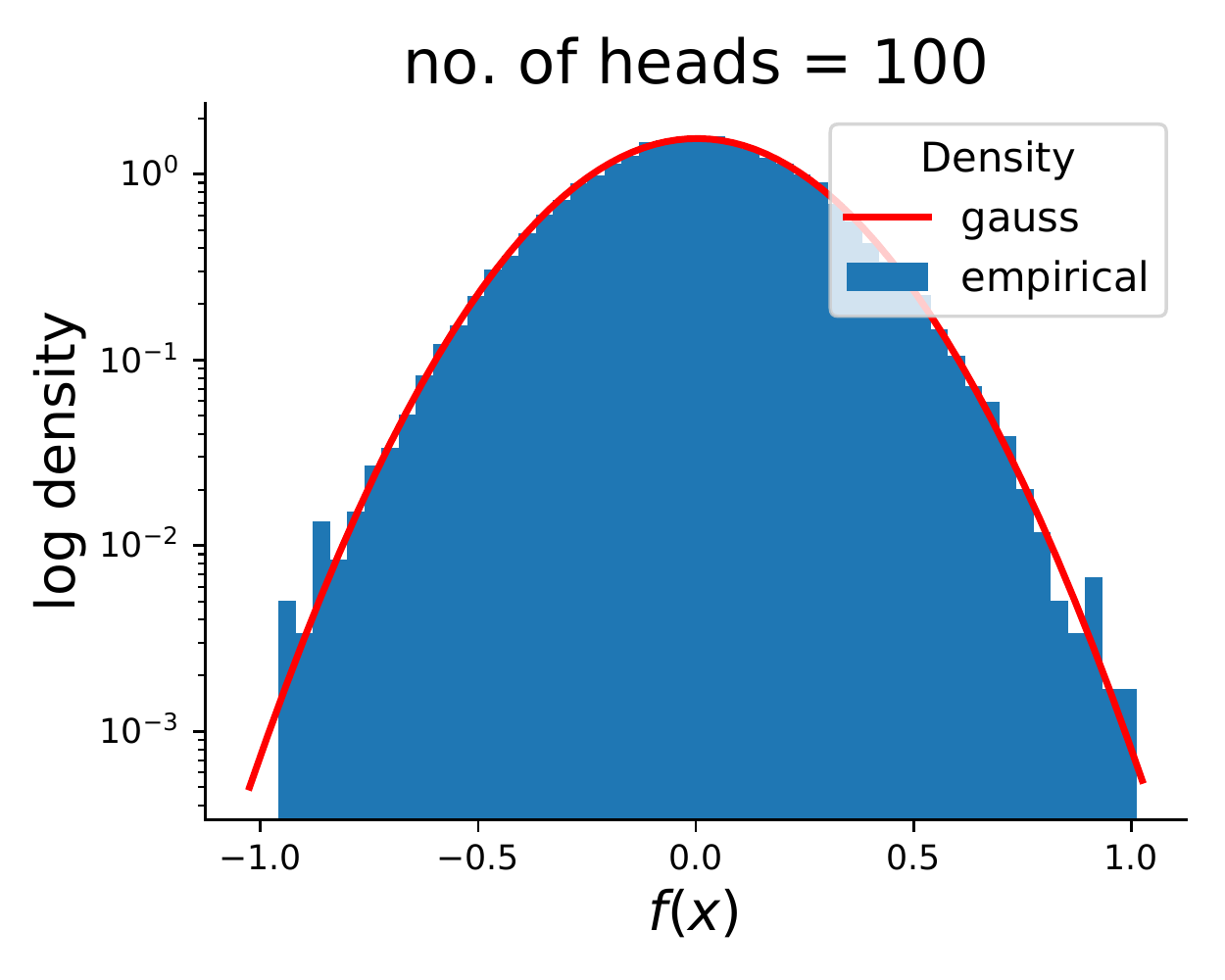}
        }
        \vskip -0.2in
        \caption{
        \textbf{Distribution of an attention layer output for single-head} (left) \textbf{and 100-head} (right) architecture at initialisation under the $\genericDimenstion^{-1/2}$ scaling when $\genericDimenstion$ is large ($1000$).
        Red line is the Gaussian density with sample mean and variance substituted for its parameters.
        Unlike multi-head, the empirical distribution of single-head attention significantly deviates from Gaussian despite $\genericDimenstion \gg 0$.}
        \label{fig:scale_mixture}
    \end{center}
    \vskip -0.25in
\end{figure}

\newlength{\tableFS}
\setlength{\tableFS}{8.4pt}

\begin{table*}[t]
\caption{
\textbf{Overview of the discussed kernels.}
The $\genericDimenstion$ column refers to the $\genericDimenstion^{-1}$ and $\genericDimenstion^{-1/2}$ scaling of the $\querySymbol (\inputSymbol) \keySymbol(\inputSymbol)^\top$.
$(\kerntilde, \ntktilde)$ denote the input and $(\kernel, \ntk)$ the output NNGP and NTK kernels.
\textsc{NNGP} and \textsc{NTK} columns are stated as updates for full $\spatialDimension \times \spatialDimension$ covariance blocks unless the generic spatial dimension subscripts $a b$ are used.
To fit to page width, we use superscripts to denote dependence on inputs, e.g., replacing $\kerntildef{}{}{\inputSymbol}{\inputSymbol}$ by $\kerntilde^{\inputSymbol\inputSymbol'}$.
$\langle A, B \rangle_F = \sum_{ij} A_{ij} B_{ij}$ is the Frobenius product of matrices $A, B$, with $\| A \|_F^2 = \langle A, A \rangle$.
$\interpKern$ denotes interpolation,
e.g.,
$\interpKern \circ \kerntildef{}{}{\inputSymbol}{\inputSymbol'} = \interpKernCoeff \kerntildef{}{}{\inputSymbol}{\inputSymbol'} + (1 - \interpKernCoeff) \covPosEmb{}{}$ with fixed hyperparameters $\interpKernCoeff \in [0, 1]$ and $\covPosEmb{}{}$ (a generic covariance related to initialisation of positional encodings);
the special case $\covPosEmb{}{} = I$ is denoted by $\interpKern_{I}$.
$\ddag$ is for optional operators (e.g., $\interpKern^\ddag \circ  \kerntilde^{\inputSymbol\inputSymbol'}$ can be replaced with $\kerntilde^{\inputSymbol\inputSymbol'}$).
$\WQ = \WK$ initialisation assumed for all $\genericDimenstion^{-1}$, and $\softmax = \text{identity}$ for all $\genericDimenstion^{-1/2}$ kernels (see \Cref{sect:linear_scaling_limit,sect:softmax_alternatives} respectively).
See \Cref{sect:theory,sect:beyon_vanilla_attn} for derivations, and \citep{yang2019v2} for the \textsc{LayerNorm} kernel (stated here for ease of reference).
}
\label{tab:kernel_overview}
\begin{center}
\fontsize{\tableFS}{1.2\tableFS}
\begin{sc}
\renewcommand{\arraystretch}{1.75}
\begin{tabular}{lcccc}
    \toprule
    Kernel & $d$ &  NNGP & NTK  \\
    \midrule
    \multirow{2}{*}{Vanilla}    &
    $1$
    &
    $
    \softmax(
        \kerntilde^{\inputSymbol\inputSymbol}
    )
        \kerntilde^{\inputSymbol\inputSymbol'}
    \softmax(
        \kerntilde^{\inputSymbol'\inputSymbol'}
    )^\top
    $ 
    &
    $
    2 
    \kernel^{\inputSymbol\inputSymbol'}
    +
    \softmax(
        \kerntilde^{\inputSymbol\inputSymbol}
    )
        \ntktilde^{\inputSymbol\inputSymbol'}
    \softmax(
        \kerntilde^{\inputSymbol'\inputSymbol'}
    )^\top
    $ 
    \\
    
    &
    $\frac{1}{2}$
    &
    $
    \kerntilde^{\inputSymbol\inputSymbol'}
    \norm{
        \kerntilde^{\inputSymbol\inputSymbol'}
    }_F^2
    $
    & 
    $
    4 
    \kernel_{ab}^{\inputSymbol\inputSymbol'}
    +
    \langle
        \kerntilde^{\inputSymbol\inputSymbol'}
        ,
        2 
        \kerntilde_{ab}^{\inputSymbol\inputSymbol'}
        \ntktilde^{\inputSymbol\inputSymbol'}
        +
        \ntktilde_{ab}^{\inputSymbol\inputSymbol'}
        \kerntilde^{\inputSymbol\inputSymbol'}
    \rangle_F
    $
    \\
    
    \midrule
    
    \multirow{2}{*}{
    \begin{minipage}[t]{0.2\columnwidth}%
    Random Positional Encoding
    \end{minipage}
    } 
    &
    $1$
    &
    $
    \softmax (
        \interpKern_I \circ
        \kerntilde^{\inputSymbol\inputSymbol}
    )
    [
        \interpKern_I \circ
        \kerntilde^{\inputSymbol\inputSymbol'}
    ]
    \softmax (
        \interpKern_I \circ
        \kerntilde^{\inputSymbol'\inputSymbol'}
    )^\top
    $
    &
    $
    2
    \kernel^{\inputSymbol\inputSymbol'}
    +
    \softmax (
        \interpKern_I \circ
        \kerntilde^{\inputSymbol\inputSymbol}
    )
    [
        \interpKern_I \circ
        \ntktilde^{\inputSymbol\inputSymbol'}
    ]
    \softmax (
        \interpKern_I \circ
        \kerntilde^{\inputSymbol'\inputSymbol'}
    )^\top
    $
    \\ 
    
    &
    $\frac{1}{2}$
    &
    $
    \interpKern_I \circ
    \kerntilde^{\inputSymbol\inputSymbol'}
    \norm{
        \interpKern_I \circ \kerntilde^{\inputSymbol\inputSymbol'}
    }_F^2
    $
    & 
    $
    4 
    \kernel_{ab}^{\inputSymbol\inputSymbol'}
    +
    \langle
        \interpKern_I \circ
        \kerntilde^{\inputSymbol\inputSymbol'}
        ,
        2
        [
            \interpKern_I \circ
            \kerntilde_{ab}^{\inputSymbol\inputSymbol'}
        ]
            \interpKern_I \circ
            \ntktilde^{\inputSymbol\inputSymbol'}
        +
        [
            \interpKern_I \circ
            \ntktilde_{ab}^{\inputSymbol\inputSymbol'}
        ]
        \interpKern_I \circ
        \kerntilde^{\inputSymbol\inputSymbol'}
    \rangle_F
    $
    \\
    
    \midrule
    
    \multirow{2}{*}{
    \begin{minipage}[t]{0.2\columnwidth}%
    Structured Positional Encoding
    \end{minipage}
    } 
    &
    $1$
    &
    $
    \softmax (
        \interpKern \circ
        \kerntilde^{\inputSymbol\inputSymbol}
    )
    [
        \interpKern^\ddag \circ
        \kerntilde^{\inputSymbol\inputSymbol'}
    ]
    \softmax (
        \interpKern \circ
        \kerntilde^{\inputSymbol'\inputSymbol'}
    )^\top
    $
    &
    $
    2 
    \kernel^{\inputSymbol\inputSymbol'}
    +
    \softmax (
        \interpKern \circ
        \kerntilde^{\inputSymbol\inputSymbol}
    )
    [
        \interpKern^\ddag \circ
        \ntktilde^{\inputSymbol\inputSymbol'}
    ]
    \softmax (
        \interpKern \circ
        \kerntilde^{\inputSymbol'\inputSymbol'}
    )^\top
    $
    \\
    
    &
    $\frac{1}{2}$
    &
    $
    \interpKern \circ
    \kerntilde^{\inputSymbol\inputSymbol'}
    \langle
        \interpKern^\ddag \circ
        \kerntilde^{\inputSymbol\inputSymbol'}
        ,
        \interpKern \circ \kerntilde^{\inputSymbol\inputSymbol'}
    \rangle_F
    $
    &
    $
    \begin{aligned}
    4 \kernel_{ab}^{\inputSymbol\inputSymbol'}
    &+
    \langle
        \interpKern^\ddag \circ
        \kerntilde^{\inputSymbol\inputSymbol'}
        ,
        [
            \interpKern \circ
            \kerntilde_{ab}^{\inputSymbol\inputSymbol'}
        ]
        \interpKern \circ
        \ntktilde^{\inputSymbol\inputSymbol'}
        +
        [
            \interpKern \circ
            \ntktilde_{ab}^{\inputSymbol\inputSymbol'}
        ]
        \interpKern \circ
        \kerntilde^{\inputSymbol\inputSymbol'}
    \rangle_F
    \\
    &+
    \langle
        \interpKern \circ
        \kerntilde^{\inputSymbol\inputSymbol'}
        \phantom{\ddag}
        ,
        [
            \interpKern \circ
            \kerntilde_{ab}^{\inputSymbol\inputSymbol'}   
        ]
        \interpKern^\ddag \circ
        \ntktilde^{\inputSymbol\inputSymbol'}
    \rangle_F
    \end{aligned}
    $
    \\

    \midrule
    
    Residual
    &
    --
    &
    $
        \interpKernCoeff 
        \kerntilde^{\inputSymbol\inputSymbol'}
        +
        (1 - \interpKernCoeff)
        \covPosEmb{}{}
        \kerntilde^{\inputSymbol\inputSymbol'}
        \covPosEmb{}{\top}
    $
    &
    $
    2 (1 - \interpKernCoeff)
    \kernel^{\inputSymbol\inputSymbol'}
    +
    \interpKernCoeff
    \ntktilde^{\inputSymbol \inputSymbol'}
    +
    (1 - \interpKernCoeff) 
    \covPosEmb{}{} 
    \ntktilde^{\inputSymbol\inputSymbol'}
    \covPosEmb{}{\top}
    $
    \\
    
    \midrule
    
    LayerNorm  &
    --
    &
    $
        \kerntilde_{ab}^{\inputSymbol\inputSymbol'}
        [
            \kerntilde_{aa}^{\inputSymbol\inputSymbol}
            \kerntilde_{bb}^{\inputSymbol\inputSymbol'}
        ]^{-1/2}
    $
    &
    $
        \ntktilde_{ab}^{\inputSymbol\inputSymbol'}
        [
            \ntktilde_{aa}^{\inputSymbol\inputSymbol}
            \ntktilde_{bb}^{\inputSymbol'\inputSymbol'}
        ]^{-1/2}
    $

    \\
    \bottomrule
\end{tabular}
\end{sc}
\end{center}
\vskip -0.18in
\end{table*}

\section{Definitions and notation}\label{sect:background}

\textbf{Neural networks:}
$\indexedActivation{\depthSymbol}{}{\inputSymbol}$ denotes the~output of $\depthSymbol$\textsuperscript{th} layer for an~input $\inputSymbol \in \indexSet \subset \R{\spatialDimension \times \inputDimension}$, and $\indexedActivity{\depthSymbol}{}{\inputSymbol} \coloneqq \nonlinearity (\indexedActivation{\depthSymbol}{}{\inputSymbol})$ the~corresponding post-nonlinearity where $\nonlinearity \colon \R{} \to \R{}$ is the~activation function applied elementwise (for convenience, we set $\indexedActivity{0}{}{\inputSymbol} = \inputSymbol$).
We assume the~network has $\depth \in \natnum$ hidden layers, making $\indexedActivation{\depth + 1}{}{\inputSymbol}$
the output,
and that the input set $\indexSet$ is \emph{countable}.
As we will be examining behaviour of sequences of increasingly wide NNs, the~variables corresponding to the~$\sequenceVariable$\textsuperscript{th} network are going to be denoted by a~subscript $\sequenceVariable$ (e.g., $\indexedActivation{\depthSymbol}{\sequenceVariable}{\inputSymbol}$ is the~output of $\depthSymbol$\textsuperscript{th} layer of the~$\sequenceVariable$\textsuperscript{th} network in the~sequence evaluated at $\inputSymbol$).
We also use
\begin{align*}
    \activations{\sequenceVariable, \cdot j}{\depthSymbol} 
    &\coloneqq 
    \{ 
        \indexedActivation{\depthSymbol}{\sequenceVariable, ij}{\inputSymbol} 
        \colon 
        \inputSymbol \in \indexSet,
        i \in [\spatialDimension]
    \} 
    \\
    \activations{\sequenceVariable}{\depthSymbol} 
    &\coloneqq 
    \{ 
        \activations{\sequenceVariable, \cdot j}{\depthSymbol} 
        \colon 
        j \in \natnum 
    \} 
    \, ,
\end{align*}
with $[\spatialDimension] = \{1, 2, \ldots, \spatialDimension \}$.
To reduce clutter, we omit the $\depthSymbol$ index where it is clear from the context or unimportant.

\textbf{Shapes:} $\indexedActivation{\depthSymbol}{\sequenceVariable}{\inputSymbol}, \indexedActivity{\depthSymbol}{\sequenceVariable}{\inputSymbol} \in \R{\spatialDimension \times \layerDimensionN}$ with $\spatialDimension, \layerDimensionN \in \natnum$ respectively the~\emph{spatial} and \emph{embedding} dimensions.
If there are multiple spatial dimensions, such as height and width for images, we assume these have been flattened into a single dimension.
Finally, we will allow the row space dimension of $\W_{\sequenceVariable}^{\depthSymbol,\querySymbol}, \W_{\sequenceVariable}^{\depthSymbol,\keySymbol} \in \R{\layerDimension{\depthSymbol - 1} \times \logitDimension}$ to differ from that of $\W_{\sequenceVariable}^{\depthSymbol,\valueSymbol} \in \R{\layerDimension{\depthSymbol - 1} \times \layerDimensionN}$, leading to the modified definition 
\begin{align}\label{eq:logit_def}
    \logitSymbol_{\sequenceVariable}^{\depthSymbol} (\inputSymbol)
    =
    \frac{\querySymbol_{\sequenceVariable}^{\depthSymbol}(\inputSymbol) \keySymbol_{\sequenceVariable}^{\depthSymbol} (\inputSymbol)^\top}{\sqrt{\logitDimension}}
\end{align}

\textbf{Multi-head attention:} \Cref{eq:single_head_informal} describes a vanilla version of a \emph{single-head} attention layer.
Later in this paper, we examine the \emph{multi-head} attention alternative in which the~output $\indexedActivation{\depthSymbol}{\sequenceVariable}{\inputSymbol}$ is computed as
\begin{equation}\label{eq:attention_out}
    \indexedActivation{
    \depthSymbol
    }{\sequenceVariable}{\inputSymbol}
    =
    \bigl[
        \indexedActivation{
        \depthSymbol
        1}{\sequenceVariable}{\inputSymbol},
        \ldots,
        \indexedActivation{
        \depthSymbol
        \headDimension}{\sequenceVariable}{\inputSymbol}
    \bigr]
    \weightsO
    \, ,
\end{equation}
i.e., by stacking the outputs of $\headDimension \in \natnum$ independently parametrised heads into a $\spatialDimension \times \headDimension \layerDimensionN$ matrix and projecting back into $\spatialDimension \times \layerDimensionN$ by $\weightsO \in \R{\headDimension \layerDimensionN \times \layerDimensionN}$.
The embedding dimension of each head $\valueDimension$ can optionally differ from $\layerDimensionN$.
To distinguish the weight matrices corresponding to the individual heads, we will be using a superscript $\headIndex$, e.g., 
$\queryN (\inputSymbol) = \indexedActivity{\depthSymbol - 1}{\sequenceVariable}{\inputSymbol} \weightsQ$.

\textbf{Weight distribution:} 
As usual, we will assume Gaussian initialisation of the weights, i.e., 
$\weightQ{ij} \sim \gauss (0, \queryVar / \layerDimension{\depthSymbol - 1} )$, 
$\weightK{ij} \sim \gauss (0, \keyVar / \layerDimension{\depthSymbol - 1} )$,
$\weightV{ij} \sim \gauss (0, \valueVar / \layerDimension{\depthSymbol - 1} )$, 
and
$\weightO{ij} \sim \gauss (0, \outVar / (\headDimension \layerDimension{\depthSymbol}) )$, all i.i.d.\ over the $i, j$ and $\depthSymbol, \headIndex$ indices for all $\sequenceVariable$.
The scaling of variance by inverse of the input dimension is standard and ensures that the asymptotic variances do not diverge \citep{neal1996,lecun1998efficient,he2015delving}.
Throughout \Cref{sect:beyon_vanilla_attn,sect:theory}, we assume all the $\sigma^2$ parameters are equal to one, and only state the results in full generality in the appendix.

\textbf{NNGP/NTK:} 
As discussed in the introduction, randomly initialised NNs induce a distribution over the $\activationSymbol_{\sequenceVariable}^{\depth + 1}$ mappings.
For a variety of architectures, this distribution converges (weakly) to that of a GP as $\min_{\depthSymbol \in [\depth]} \layerDimensionN \to \infty$, both at initialisation (NNGP), and after continuous gradient descent optimisation of the randomly initialised NN
with respect to a mean squared error loss (NTK).
Both the NNGP and NTK distributions are typically zero mean, and we use $\kernel^{\depth + 1}$ and $\ntk^{\depth + 1}$ to denote their respective kernel functions.

These kernel functions tend to have a recursive structure where each layer in the underlying NN architecture is associated with a mapping $(\kernel^{\depthSymbol - 1}, \ntk^{\depthSymbol - 1}) \mapsto (\kernel^{\depthSymbol}, \ntk^{\depthSymbol})$ transforming the NNGP and NTK kernels according to the layer's effect on the outputs in the infinite width limit.
Since nonlinearities are typically not treated as separate layers, we use $\kerntilde^{\depthSymbol}$ and $\ntktilde^{\depthSymbol}$ to denote the intermediate transformation $(\kernel^{\depthSymbol - 1}, \ntk^{\depthSymbol - 1}) \mapsto (\kerntilde^{\depthSymbol}, \ntktilde^{\depthSymbol})$ they induce.

We generally assume every layer is followed by a nonlinearity, setting $(\kerntilde^{\depthSymbol}, \ntktilde^{\depthSymbol}) = (\kernel^{\depthSymbol-1}, \ntk^{\depthSymbol-1})$ if none is used.
In the next two sections,
we uncover the mappings $(\kerntilde^{\depthSymbol}, \ntktilde^{\depthSymbol}) \mapsto (\kernel^{\depthSymbol}, \ntk^{\depthSymbol})$ induced by various attention architectures.

\section{Attention and Gaussian process behaviour}\label{sect:theory}

Throughout the rest of this paper, we restrict our focus to increasingly wide NNs including at least one attention layer.
In particular, we consider sequences of NNs such that
\begin{align}\label{eq:sim_limit}
    \lim_{\sequenceVariable \to \infty} \min_{\depthSymbol \in [\depth]} \layerDimensionN = \infty
    \, ,
\end{align}
and the reader should thus interpret any statements involving $\sequenceVariable \to \infty$ as implicitly assuming
\Cref{eq:sim_limit} holds.

Due to the space constraints, most of the technical discussion including derivation of the NTK is relegated to \Cref{sect:proofs}.
In this section, we only focus on the key step in our proof which relies on an inductive argument adapted from \citep{matthews2018gaussian}.
On a high level, the induction is applied from $\depthSymbol = 1$ to $\depthSymbol = \depth + 1$, and establishes that whenever 
$\activationSymbol_{\sequenceVariable}^{\depthSymbol - 1}$ converges in distribution to 
$\gp(0 ,\kernel^{\depthSymbol - 1})$ at initialisation, $\activationSymbol_{\sequenceVariable}^{\depthSymbol}$ also converges in distribution to $\gp(0 ,\kernel^{\depthSymbol})$ as $\sequenceVariable \to \infty$.
Since this fact is known for dense, convolutional, and average pooling layers, and almost all nonlinearities \citep{matthews2018gaussian,lee2018deep,garriga2019deep,novak2019bayesian,yang2019v2}, it will be sufficient to show the same for attention layers.

\subsection{Infinite width limit under the $\genericDimenstion^{-1}$ scaling}\label{sect:linear_scaling}

As illustrated in \Cref{fig:scale_mixture}, use of the $\genericDimenstion^{-1/2}$ scaling within a \emph{single-head} architecture leads to a scale mixture behaviour of the attention layer outputs as the number of parameters goes to infinity.
To obtain a Gaussian limit, \citet[appendix A]{yang2019v2} proposes to replace the definition in \Cref{eq:logit_def} by 
$
\logitSymbol_{\sequenceVariable} %
(\inputSymbol) 
= 
(\genericDimenstion_{\sequenceVariable}^{\logitSymbol})^{-1}
\querySymbol_{\sequenceVariable} %
(\inputSymbol) 
\keySymbol_{\sequenceVariable} %
(\inputSymbol)^\top 
$,
i.e., the use of $\genericDimenstion^{-1}$ scaling.
The desired result then follows:

\begin{theorem}[$\genericDimenstion^{-1}$ limit \citep{yang2019v2}]\label{thm:linear_scaling_limit}
    Under the $\genericDimenstion^{-1}$ scaling and the assumptions stated in \citep{yang2019v2}:
    \vspace{-0.5\baselineskip}
    \begin{enumerate}[\hspace{-0.5em}(I)]
        \renewcommand\labelenumi{\emph{\bfseries(\theenumi)}}
        \item 
        For any $(\inputSymbol, \inputSymbol') \in \indexSet \times \indexSet$ and $a, b, i, j \in [\spatialDimension]$, there exist constants $(\softmaxMean_{a i}^\inputSymbol, \softmaxMean_{b j}^{\inputSymbol'}) \in \R{} \times \R{}$ such that
        \begin{align}\label{eq:logit_determ_limit}
            (
                \softmax(
                    \logitSymbol_{\sequenceVariable}%
                    (\inputSymbol)
                )_{ai},
                \softmax(
                    \logitSymbol_{\sequenceVariable}%
                    (\inputSymbol')
                )_{bj}
            ) 
            \to 
            ( 
                \softmaxMean_{a i}^\inputSymbol,
                \softmaxMean_{b j}^{\inputSymbol'}
            )
        \end{align}
        in probability as $\sequenceVariable \to \infty$.
        
        \item 
        $\activations{\sequenceVariable}{}$ %
        converges in distribution to $\activationSymbol%
        \sim \gp(0, \kernel)$ with 
        \begin{align}\label{eq:determ_kernel}
            \kernelf{a b}{}{\inputSymbol}{\inputSymbol'}
            &=
            \E [
                \indexedActivation{}{a1}{\inputSymbol}
                \indexedActivation{}{b1}{\inputSymbol'}
            ]
            \\
            &=
            \sum_{i , j = 1}^{\spatialDimension} 
                \kerntildef{i j}{}{\inputSymbol}{\inputSymbol'}
                    \softmaxMean_{a i}^\inputSymbol
                    \softmaxMean_{b j}^{\inputSymbol'}
            \nonumber
            \, ,
        \end{align}
        and $\activations{\cdot k}{}$ and $\activations{\cdot l}{}$ are independent for any $k \neq l$.
    \end{enumerate}
\end{theorem}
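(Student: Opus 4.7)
The plan is to argue by induction on $\depthSymbol$, with inductive hypothesis that $\activations{\sequenceVariable}{\depthSymbol-1}$ converges in distribution to $\gp(0, \kernel^{\depthSymbol-1})$ and that $(\layerDimension{\depthSymbol-1})^{-1} \indexedActivity{\depthSymbol-1}{\sequenceVariable}{\inputSymbol} \indexedActivity{\depthSymbol-1}{\sequenceVariable}{\inputSymbol'}^\top \convergeProb \kerntilde^{\inputSymbol\inputSymbol'}$ for every $\inputSymbol, \inputSymbol' \in \indexSet$. The base case is trivial, and the induction step through dense and nonlinearity layers is classical \citep{matthews2018gaussian,yang2019v2}, so the only novel content is propagating the hypothesis through an attention layer. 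That step splits naturally into proving \textbf{(I)}, that the softmax concentrates, and then deducing \textbf{(II)} via Gaussian conditioning on the values.

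For \textbf{(I)}, the $\WQ = \WK$ assumption recorded in \Cref{tab:kernel_overview} lets me write $\logitSymbol_{\sequenceVariable}(\inputSymbol) = (\genericDimenstion_{\sequenceVariable}^{\logitSymbol})^{-1} \indexedActivity{\depthSymbol-1}{\sequenceVariable}{\inputSymbol} \W \W^\top \indexedActivity{\depthSymbol-1}{\sequenceVariable}{\inputSymbol}^\top$ for a single Gaussian matrix $\W$. A standard LLN gives $(\genericDimenstion_{\sequenceVariable}^{\logitSymbol})^{-1} \W \W^\top \convergeProb (\layerDimension{\depthSymbol-1})^{-1} I$ (up to the shared variance constant), and combining this with the inductive control on the activity Gram matrix delivers $\logitSymbol_{\sequenceVariable}(\inputSymbol) \convergeProb \kerntilde^{\inputSymbol\inputSymbol}$, jointly across the finitely many inputs of interest. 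The continuous mapping theorem applied to the row-wise softmax then yields \Cref{eq:logit_determ_limit} with $\softmaxMean_{ai}^\inputSymbol = \softmax(\kerntilde^{\inputSymbol\inputSymbol})_{ai}$.

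For \textbf{(II)}, I would condition on the sigma-algebra generated by the shared query/key weights together with $\{\indexedActivity{\depthSymbol-1}{\sequenceVariable}{\inputSymbol} : \inputSymbol \in \indexSet\}$. Under this conditioning $\softmax(\logitSymbol_{\sequenceVariable}(\inputSymbol))$ is measurable, while $\valueSymbol_{\sequenceVariable}(\inputSymbol) = \indexedActivity{\depthSymbol-1}{\sequenceVariable}{\inputSymbol} \weightsV$ remains a centred Gaussian matrix whose columns are i.i.d.\ across the embedding index and whose cross-input covariance is $(\layerDimension{\depthSymbol-1})^{-1} \indexedActivity{\depthSymbol-1}{\sequenceVariable}{\inputSymbol} \indexedActivity{\depthSymbol-1}{\sequenceVariable}{\inputSymbol'}^\top$. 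Hence $\activations{\sequenceVariable}{\depthSymbol}(\inputSymbol) = \softmax(\logitSymbol_{\sequenceVariable}(\inputSymbol)) \valueSymbol_{\sequenceVariable}(\inputSymbol)$ is conditionally Gaussian with columns that remain independent (left multiplication preserves column independence), and its conditional covariance converges in probability to the right-hand side of \Cref{eq:determ_kernel} by part \textbf{(I)} and the inductive LLN. A Slutsky-type argument at the level of characteristic functions then turns this into unconditional convergence in distribution to the stated GP. The multi-head case of \Cref{eq:attention_out} follows at once, since the heads are independent and the $\weightsO$ projection preserves Gaussianity and per-column independence.

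The main obstacle I anticipate is the joint passage to the limit: $\softmax(\logitSymbol_{\sequenceVariable})$ converges only in probability while $\valueSymbol_{\sequenceVariable}$ converges only in distribution, yet the covariance of the output is bilinear in the two. The cleanest workaround is a Skorokhod-style move to a common probability space along a subsequence where the softmax converges almost surely, on which the conditional Gaussian CLT for the values applies directly, with the subsequence principle lifting the conclusion back to the whole sequence. One must additionally check tightness and characterise all finite-dimensional projections over the countable $\indexSet$ as in \citep{matthews2018gaussian}, but this is essentially routine once the marginal and pairwise behaviour is controlled.
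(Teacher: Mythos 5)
Your proposal is correct in substance, but note that the paper does not prove \Cref{thm:linear_scaling_limit} itself: the result is imported from \citet{yang2019v2}, and the argument there is exactly the Gaussian-conditioning mechanism you reconstruct. Under the $\genericDimenstion^{-1}$ scaling the logits concentrate, so conditioning on $\indexedActivity{\depthSymbol-1}{\sequenceVariable}{\cdot}$ and the query/key weights leaves $\softmax(\logitSymbol_{\sequenceVariable})$ measurable while the values stay exactly Gaussian with conditionally i.i.d.\ columns, and the conditional covariance converges in probability to the deterministic limit in \Cref{eq:determ_kernel}. So your route is a self-contained version of the cited proof, not the exchangeability-CLT machinery this paper develops for the $\genericDimenstion^{-1/2}$ case (\Cref{thm:gp_convergence_sqrt}), where precisely this conditioning argument breaks because the softmax remains random in the limit. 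Within your sketch, two things deserve correction or simplification. First, $\WQ = \WK$ is \emph{not} an assumption of \Cref{thm:linear_scaling_limit}; it is the later modification of \Cref{sect:linear_scaling_limit} leading to \Cref{eq:fixed_determ_kernel}. Under the theorem's actual assumptions (independent $\WQ,\WK$), the $\genericDimenstion^{-1}$-scaled logits converge in probability to zero and $\softmaxMean_{a i}^{\inputSymbol} = 1/\spatialDimension$; your argument handles this trivially, but your identification $\softmaxMean_{a i}^{\inputSymbol} = \softmax(\kerntildef{}{}{\inputSymbol}{\inputSymbol})_{a i}$ belongs to the modified initialisation, so state both cases. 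Second, the Skorokhod/subsequence detour is unnecessary: since the output is exactly Gaussian given your conditioning $\sigma$-algebra, the unconditional characteristic function of any finite-dimensional projection is $\E[\exp(-\tfrac12 t^\top \Sigma_{\sequenceVariable} t)]$ with $\Sigma_{\sequenceVariable}$ the conditional covariance, and $\Sigma_{\sequenceVariable} \convergeProb \Sigma_*$ together with boundedness of the integrand gives the limit directly; no common-probability-space construction is needed.

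Two smaller rigour points you should make explicit if you flesh this out. The statement ``$(\logitDimension)^{-1}\W\W^\top \convergeProb (\layerDimension{\depthSymbol-1})^{-1} I$'' is not well posed because the matrix dimension grows with $\sequenceVariable$; what you actually need is concentration of the quadratic forms $\indexedActivity{\depthSymbol-1}{\sequenceVariable, a\cdot}{\inputSymbol}\W\W^\top\indexedActivity{\depthSymbol-1}{\sequenceVariable, b\cdot}{\inputSymbol}^\top / (\logitDimension\,\layerDimension{\depthSymbol-1})$ around the activity Gram entries, which requires uniform-in-$\sequenceVariable$ moment control of the previous layer (as in \Cref{lem:mmnt_propagation}, cf.\ \Cref{lem:inner_prod_converge}) so that the conditional Chebyshev bound integrates to an unconditional one. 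For the multi-head extension, the heads are only \emph{conditionally} independent, and the $\weightsO$ mixing is itself random, so the per-column conditional covariance involves $\sum_{\headIndex,k}(\weightOGen{\sequenceVariable,(\headIndex,k) i}{\depthSymbol,\outputSymbol})^2$ and needs one further law of large numbers over $\headDimension \layerDimensionN$; this is routine but is the reason the paper requires either the number of heads or the per-head width to diverge.
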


An analogous result also holds for multi-head attention architectures
which follows by the usual argument for fully connected layers as long as either the number of embedding dimensions per head or the number of heads goes to infinity.

\subsection{Limitations of the $\genericDimenstion^{-1}$ scaling}\label{sect:linear_scaling_limit}

While \Cref{thm:linear_scaling_limit} is a good starting point, several issues have to be resolved before using the kernel function described in \Cref{eq:determ_kernel} in practice.
Firstly, since $\W_{\sequenceVariable}^{\querySymbol}$ and $\W_{\sequenceVariable}^{\keySymbol}$ are initialised independently, the $\genericDimenstion^{-1}$ scaled inner products of keys and queries will converge to zero (the mean), and thus for any $a, i$ and $\inputSymbol$, $\softmaxMean_{a i}^{\inputSymbol} \to (\spatialDimension)^{-1}$ in probability by the continuous mapping theorem.
This issue was already noted by \citeauthor{yang2019v2} in appendix~A but not discussed further as the main focus of the paper lies elsewhere.
In any case, substituting $(\spatialDimension)^{-1}$ for all the $\softmaxMean$ coefficients will make $\kernelf{a b}{}{\inputSymbol}{\inputSymbol '} = \kernelf{i j}{}{\inputSymbol}{\inputSymbol '}$ for any $a, b, i, j \in [\spatialDimension]$, and in fact all of these entries will be equivalent to output of a simple global average pooling kernel \citep[equation 17]{novak2019bayesian}.\footnote{In fact, the asymptotic distribution induced by such an attention layer followed by flatten and dense layers is the same as that induced by global average pooling followed by a dense layer.}

Perhaps the simplest way to address the above issue is by drawing the initial weights such that $\W_{\sequenceVariable}^{\querySymbol} = \W_{\sequenceVariable}^{\keySymbol}$.
This will ensure that the key and query for a particular spatial dimension will point in the same direction and thus the attention weight corresponding to itself will be large with high probability.
The resulting formula for $\kernelf{a b}{}{\inputSymbol}{\inputSymbol'}$ is
\begin{align}\label{eq:fixed_determ_kernel}
    \sum_{i , j = 1}^{\spatialDimension} 
        \kerntildef{i j}{}{\inputSymbol}{\inputSymbol'}
        \softmax(
            \kerntildef{a i}{}{\inputSymbol}{\inputSymbol}
        )
        \softmax(
            \kerntildef{b j}{}{\inputSymbol'}{\inputSymbol'}
        )
    \, .
\end{align}

Since \Cref{eq:fixed_determ_kernel} resolves the issue of reduction to average pooling, a natural question is whether  swapping $\genericDimenstion^{-1/2}$ for $\genericDimenstion^{-1}$ has any undesirable consequences in the infinite width limit.
As we will see, this question can be answered in affirmative.
In particular, we start by a proposition inspired by \citep{cordonnier2020On} in which the authors show that an attention layer with a sufficient number of heads is at least as expressive as a standard convolutional layer, 
and that attention layers often empirically learn to perform computation akin to convolution.
In contrast, \Cref{prop:linear_scale_no_conv} proves that there is no initial distribution of $\W_{\sequenceVariable}^{\querySymbol}$ and $\W_{\sequenceVariable}^{\keySymbol}$ which would recover the convolutional kernel \citep{novak2019bayesian,garriga2019deep} in the infinite width limit.

\begin{restatable}{proposition}{linearNoConv}
\label{prop:linear_scale_no_conv}
    There is no set of attention coefficients $\{ \softmaxMean_{a i}^{\inputSymbol} \! \in \R{} \! \colon a, i \in [\spatialDimension], \inputSymbol \in \indexSet \}$ such that for \emph{all} positive semidefinite kernels $\kerntilde$ simultaneously
    \begin{align*}
        \sum_{i , j = 1}^{\spatialDimension} 
            \kerntildef{i j}{}{\inputSymbol}{\inputSymbol'}
                \softmaxMean_{a i}^\inputSymbol
                \softmaxMean_{b j}^{\inputSymbol'}
        =
        \sum_{i = 1}^{\genericDimenstion_f} 
            \kerntildef{N_a(i) N_b(i)}{}{\inputSymbol}{\inputSymbol'}
            \frac{1}{\genericDimenstion_f}
        \, ,
    \end{align*}
    where $\genericDimenstion_f$ is the dimension of the (flattened) convolutional filter, $N_a, N_b \subset [\spatialDimension]$ are the ordered subsets of pixels which are used to compute the new values of pixels $a$ and $b$, respectively, and $N_a(i), N_b(i)$ are the $i$\textsuperscript{th} pixels in $N_a, N_b$.
\end{restatable}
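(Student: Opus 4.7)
The idea is to rewrite the asserted identity as an equality between two bilinear forms in the matrix $\kerntildef{}{}{\inputSymbol}{\inputSymbol'} \in \mathbb{R}^{\spatialDimension \times \spatialDimension}$, to exhibit a subfamily of positive semidefinite kernels rich enough to pin down the associated coefficient matrices, and finally to observe a rank mismatch that rules out any valid choice of $\{ \softmaxMean_{a i}^{\inputSymbol} \}$.

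First, I would package the coefficients as vectors $\softmaxMean_{a}^{\inputSymbol} \coloneqq (\softmaxMean_{a 1}^{\inputSymbol}, \ldots, \softmaxMean_{a \spatialDimension}^{\inputSymbol})^\top \in \mathbb{R}^{\spatialDimension}$ and rewrite both sides as Frobenius inner products against fixed $\spatialDimension \times \spatialDimension$ matrices: the left-hand side equals $\langle \softmaxMean_{a}^{\inputSymbol} (\softmaxMean_{b}^{\inputSymbol'})^\top, \kerntildef{}{}{\inputSymbol}{\inputSymbol'} \rangle_F$, while the right-hand side equals $\langle M_{ab}, \kerntildef{}{}{\inputSymbol}{\inputSymbol'} \rangle_F$, where $M_{ab}$ carries the value $1/\genericDimenstion_f$ at each of the $\genericDimenstion_f$ positions $(N_a(i), N_b(i))$ for $i \in [\genericDimenstion_f]$ and zero elsewhere.

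Second, I would restrict the universal quantifier over PSD kernels to the subfamily $\kerntildef{i j}{}{\inputSymbol}{\inputSymbol'} \coloneqq v_{\inputSymbol, i} v_{\inputSymbol', j}$ indexed by arbitrary $v \colon \indexSet \to \mathbb{R}^{\spatialDimension}$. Each such $\kerntilde$ is PSD, being the Gram of the deterministic features $\tilde{\activationSymbol}_i(\inputSymbol) \coloneqq v_{\inputSymbol, i}$, and allowing $v$ to vary freely over the two points $\{\inputSymbol, \inputSymbol'\}$ realises $\kerntildef{}{}{\inputSymbol}{\inputSymbol'}$ as an arbitrary rank-one matrix $v_{\inputSymbol} v_{\inputSymbol'}^\top$. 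The asserted identity then becomes
\begin{equation*}
(\softmaxMean_{a}^{\inputSymbol} \cdot v_{\inputSymbol}) (\softmaxMean_{b}^{\inputSymbol'} \cdot v_{\inputSymbol'}) = v_{\inputSymbol}^\top M_{ab} v_{\inputSymbol'} \qquad \forall \, v_{\inputSymbol}, v_{\inputSymbol'} \in \mathbb{R}^{\spatialDimension},
\end{equation*}
and equating these bilinear forms forces the coefficient-matrix identification $\softmaxMean_{a}^{\inputSymbol} (\softmaxMean_{b}^{\inputSymbol'})^\top = M_{ab}$.

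Finally, I would close by a rank count. The left-hand side of this matrix identity is an outer product, hence of rank at most one. On the other hand, since the receptive-field maps $i \mapsto N_a(i)$ and $i \mapsto N_b(i)$ are both injective (the filter visits each pixel at most once), the $\genericDimenstion_f$ nonzero entries of $M_{ab}$ occupy $\genericDimenstion_f$ distinct rows and $\genericDimenstion_f$ distinct columns, so $M_{ab}$ is $1/\genericDimenstion_f$ times a $\genericDimenstion_f \times \genericDimenstion_f$ permutation matrix embedded inside a $\spatialDimension \times \spatialDimension$ zero matrix, and therefore has rank exactly $\genericDimenstion_f$. For any meaningful convolutional layer, $\genericDimenstion_f \geq 2$, giving the contradiction. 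The main subtlety is calibrating the test family of PSD kernels: restricting to the symmetric case $\inputSymbol = \inputSymbol'$ would only identify the symmetric part of $M_{ab}$, whereas $M_{ab}$ is generically non-symmetric since $N_a \neq N_b$ whenever $a \neq b$; letting $\inputSymbol \neq \inputSymbol'$ and varying $v_{\inputSymbol}, v_{\inputSymbol'}$ independently is what makes the matrix identification tight.
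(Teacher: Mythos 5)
Your proof is correct, and it takes a genuinely different route from the paper's. The paper works entirely on the diagonal: it sets $\inputSymbol = \inputSymbol'$ and $a = b$, feeds in diagonal PSD test matrices to force $(\softmaxMean_{a i}^{\inputSymbol})^2 = 1/\genericDimenstion_f$ for $i \in N_a$ (and $0$ otherwise), and then breaks the identity by adding a single off-diagonal pair to an otherwise diagonal $\kerntilde(\inputSymbol,\inputSymbol)$, which the attention side picks up with coefficient $\pm 1/\genericDimenstion_f$ while the convolutional side ignores it. You instead test against rank-one (deterministic feature) kernels $\kerntilde_{ij}(\inputSymbol,\inputSymbol') = v_{\inputSymbol,i} v_{\inputSymbol',j}$, identify the full coefficient matrices $\softmaxMean_{a}^{\inputSymbol}(\softmaxMean_{b}^{\inputSymbol'})^\top = M_{ab}$, and conclude by a rank count: an outer product has rank at most one, while $M_{ab}$ is $1/\genericDimenstion_f$ times a partial permutation matrix of rank $\genericDimenstion_f \geq 2$. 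Your version is more structural and settles all pairs $(a,b)$ and $(\inputSymbol,\inputSymbol')$ at once; the paper's is more elementary and needs only a single input and the block $a=b$. Two small remarks. First, your route as written needs two distinct inputs so that $v_{\inputSymbol}$ and $v_{\inputSymbol'}$ vary independently, but this is not actually essential: taking $\inputSymbol = \inputSymbol'$ and $a = b$, the quadratic-form identification already gives $\softmaxMean_{a}^{\inputSymbol}(\softmaxMean_{a}^{\inputSymbol})^\top = M_{aa}$ because $M_{aa}$ is diagonal (hence symmetric) with $\genericDimenstion_f$ nonzero entries, and the same rank contradiction follows — so your closing worry about only recovering the symmetric part matters for identifying general $M_{ab}$, not for the contradiction itself. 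Second, both your argument and the paper's implicitly exclude $\genericDimenstion_f = 1$ (a $1\times1$ convolution genuinely is recoverable by setting $\softmaxMean_{a i}^{\inputSymbol} = \delta_{i = N_a(1)}$); you state this assumption explicitly, which the paper does not.
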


In the next section, we will see that the convolutional kernel can be recovered under the $\genericDimenstion^{-1/2}$ scaling (\Cref{prop:sqrt_scale_conv_recover}). 
However, we need to establish convergence scaling first.

\begin{figure}[tbp]
    \begin{center}
        \centerline{
            \includegraphics[width=0.5\columnwidth,keepaspectratio]{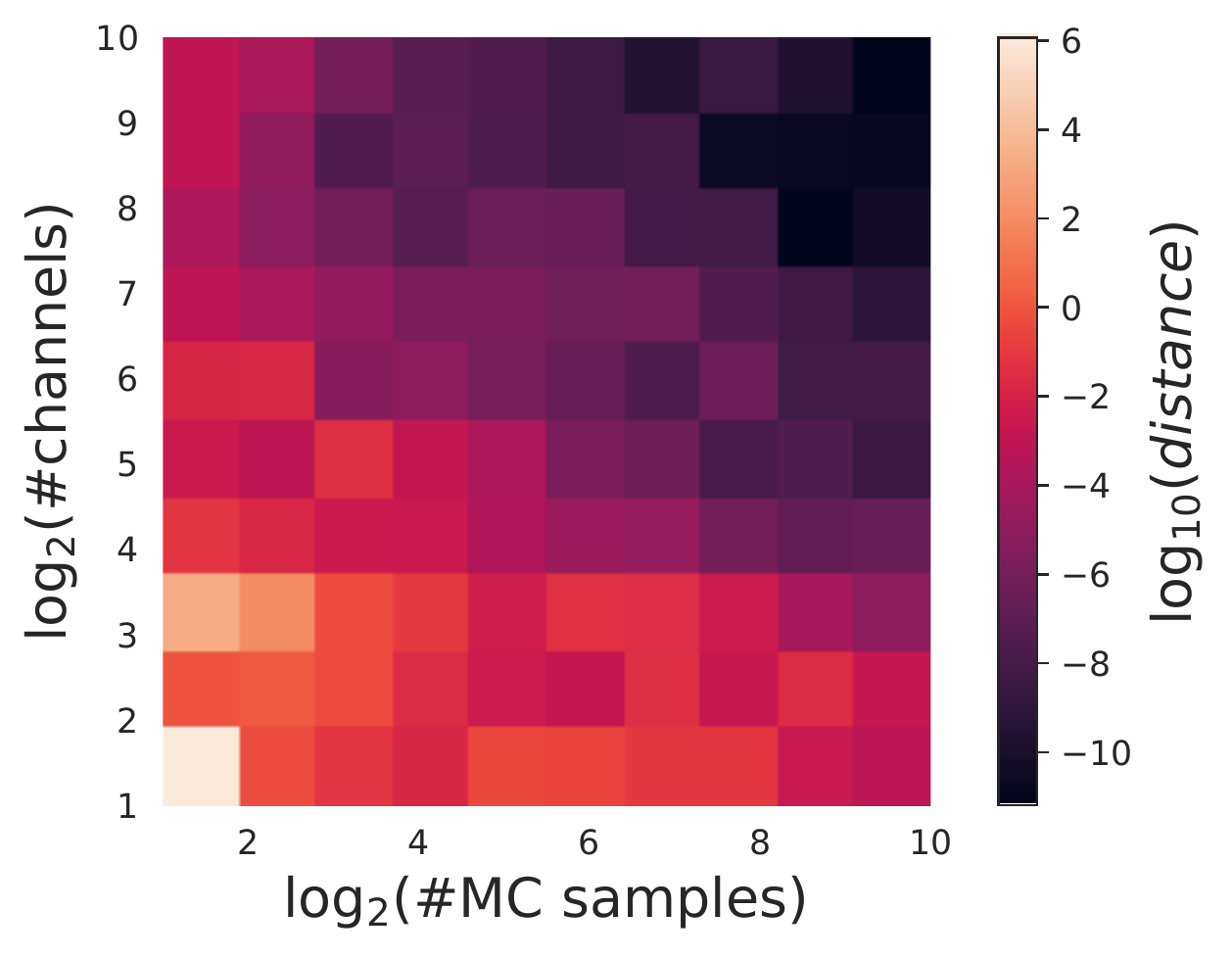}
            \hfill
            \includegraphics[width=0.5\columnwidth,keepaspectratio]{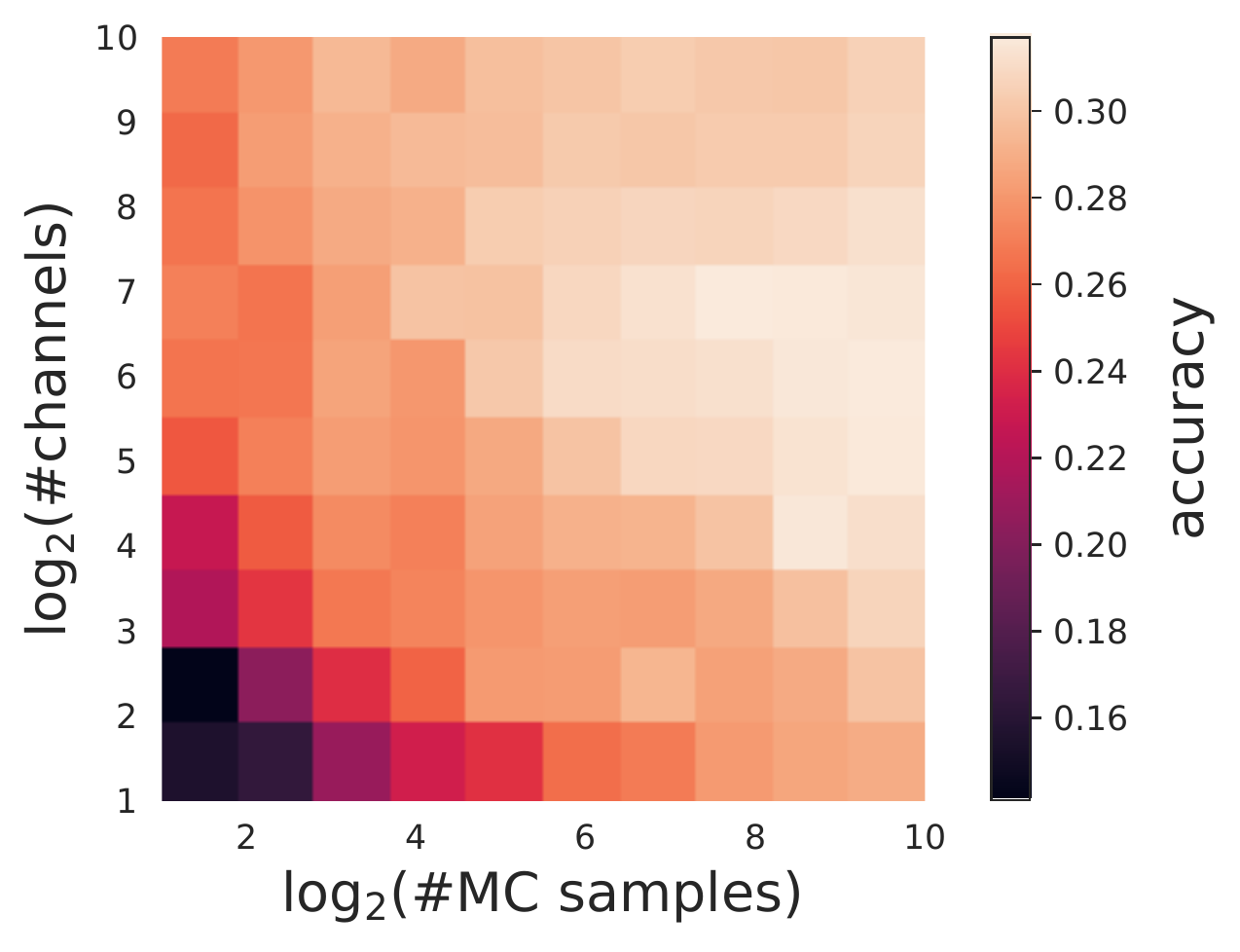}
            }
        \vskip -0.15in
        \caption{
        \textbf{Convergence} (left) \textbf{and validation accuracy} (right) \textbf{plots for an empirical NNGP kernel} estimated by Monte Carlo on a 2K/4K train/validation subset of 8x8-downsampled CIFAR-10, as the number weight samples averaged over (x-axis) and the number of parameters (y-axis) grows. Architecture: Convolution + ReLU, 2x Attention + ReLU, Flatten, Dense. For attention layers, $\logitDimension = \text{\#channels}$ but $\headDimension = \valueDimension =  \floor{\sqrt{\text{\#channels}}}$ to reduce the memory footprint. Details in \Cref{sect:convergence_appendix}.}
        \label{fig:convergence_plots}
    \end{center}
    \vskip -0.25in
\end{figure}

\subsection{Infinite width limit under the $\genericDimenstion^{-1/2}$ scaling}\label{sect:multi_head}

As discussed in \Cref{sect:intro}, single-head attention architectures can exhibit non-Gaussian asymptotic behaviour under the $\genericDimenstion^{-1/2}$ scaling.
This is inconvenient for our purposes as many modern NN architectures combine attention with fully connected, convolutional, and other layer types, all of which have Gaussian NNGP and NTK limits \citep[e.g.,][]{novak2019bayesian,garriga2019deep,yang2019v2}.
This Gaussianity simplifies derivation of the infinite width behaviour of many architectures and allows for easy integration with existing software libraries \citep{novak2020neural}. 
Fortunately, the output of an attention layer becomes asymptotically Gaussian when the number of heads becomes large.

\begin{restatable}[$\genericDimenstion^{-1/2}$ limit]{theorem}{nngpConvergence}
\label{thm:gp_convergence_sqrt}
    Let 
    $\depthSymbol \in \{ 2, \ldots , \depth + 1 \}$, and 
    $\nonlinearity$ be such that $|\nonlinearity(\inputSymbol)| \leq c + m |\inputSymbol|$ for some $c, m \in \R{}_+$.
    Assume $\activations{\sequenceVariable}{\depthSymbol - 1}$
    converges in distribution to $\activationSymbol^{\depthSymbol - 1} \sim \gp(0, \kernel^{\depthSymbol-1})$, such that $\activations{\cdot j}{\depthSymbol - 1}$ and $\activations{\cdot k}{\depthSymbol - 1}$ are independent for any $j \neq k$,
    the~variables $\{ \activations{\sequenceVariable, \cdot j}{\depthSymbol - 1} \colon j \in \natnum \}$ are exchangeable over $j$.
    
    Then as $ \min \, \{ \sequenceVariable, \headDimension , \logitDimension \} \to \infty \, $:
    \vspace{-0.5\baselineskip}
    \begin{enumerate}[\hspace{-0.5em}(I)]
        \renewcommand\labelenumi{\emph{\bfseries(\theenumi)}}
        \item 
        $\logitSymbol_{\sequenceVariable}^{\depthSymbol} = \{ \logitN(x) \colon x \in \indexSet , \headIndex \in \natnum  \}$ converges in distribution to $\logitSymbol^{\depthSymbol} \sim \gp (0, \kernel^{\depthSymbol, \logitSymbol})$ with
        \begin{align}\label{eq:logit_cov}
            \E [\logitSymbol_{a i}^{\depthSymbol \headIndex}(\inputSymbol) \logitSymbol_{b j}^{\depthSymbol \headIndex'}(\inputSymbol')]
            &=
            \delta_{\headIndex = \headIndex'}
            \kerntildef{a b}{\depthSymbol}{\inputSymbol}{\inputSymbol'} \,
            \kerntildef{i j}{\depthSymbol}{\inputSymbol}{\inputSymbol'}
            \, .
        \end{align}
        
        \item 
        $\activationsN$ converges in distribution to $\activationSymbol^{\depthSymbol} \sim \gp(0, \kernel^{\depthSymbol})$ with
        \begin{align}\label{eq:sqrt_scaling_kernel}
            &\kernelf{a b}{\depthSymbol}{\inputSymbol}{\inputSymbol'}
            =
            \E [
                \activationSymbol_{a1}^{\depthSymbol}(\inputSymbol)
                \activationSymbol_{b1}^{\depthSymbol}(\inputSymbol')
            ]
            \\
            &=
            \sum_{i , j = 1}^{\spatialDimension} 
                \kerntildef{i j}{\depthSymbol}{\inputSymbol}{\inputSymbol'}
                \E [\softmax(\logitSymbol^{\depthSymbol 1}(\inputSymbol))_{a i} \softmax(\logitSymbol^{\depthSymbol 1}(\inputSymbol'))_{b j}]
            \nonumber
            \, ,
        \end{align}
        and $\activations{\cdot k}{\depthSymbol}$ and $\activations{\cdot l}{\depthSymbol}$ are independent for any $k \neq l$.
    \end{enumerate}
\end{restatable}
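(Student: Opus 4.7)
My plan is to proceed by induction on $\depthSymbol$, simultaneously establishing parts \emph{(I)} and \emph{(II)} while exploiting the inductive hypothesis on $\activations{\sequenceVariable}{\depthSymbol-1}$. The novelty relative to \citet{matthews2018gaussian} is to combine a CLT over the query--key contraction (which Gaussianises the logits) with an exchangeability / Lindeberg argument taken across heads (which Gaussianises the output in spite of the intra-head column dependence introduced by $\softmax$). Intuitively, it is the multi-head averaging that washes out the non-Gaussian scale-mixture behaviour visible in \Cref{fig:scale_mixture}.

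\textbf{Part \emph{(I)} - logit convergence.} Conditioning on $\activations{\sequenceVariable}{\depthSymbol-1}$ throughout, $\queryN(\inputSymbol)$ and $\keyN(\inputSymbol)$ have independent (along the $\logitDimension$ axis) Gaussian columns with row-covariance $\queryVar (\layerDimension{\depthSymbol-1})^{-1} \activations{\sequenceVariable}{\depthSymbol-1}(\inputSymbol) \activations{\sequenceVariable}{\depthSymbol-1}(\inputSymbol')^\top$ (and analogously for $\keyN$ with $\keyVar$), which by the inductive hypothesis and standard concentration converges in probability to $\queryVar \kerntildef{}{\depthSymbol}{\inputSymbol}{\inputSymbol'}$ (resp.~$\keyVar \kerntildef{}{\depthSymbol}{\inputSymbol}{\inputSymbol'}$). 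Because $\weightsQ \perp \weightsK$, the entries $\query{\sequenceVariable, ak}{\depthSymbol\headIndex}(\inputSymbol)$ and $\key{\sequenceVariable, ik}{\depthSymbol\headIndex}(\inputSymbol)$ are independent, so
\begin{equation*}
    \logit{\sequenceVariable, ai}{\depthSymbol\headIndex}(\inputSymbol) = (\logitDimension)^{-1/2} \sum_{k=1}^{\logitDimension} \query{\sequenceVariable, ak}{\depthSymbol\headIndex}(\inputSymbol)\, \key{\sequenceVariable, ik}{\depthSymbol\headIndex}(\inputSymbol)
\end{equation*}
is a normalised sum of i.i.d.\ products of independent Gaussians. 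A Lindeberg CLT then yields Gaussianity of finite-dimensional marginals as $\logitDimension \to \infty$; independence across heads is inherited from independence of $\{(\weightsQ, \weightsK)\}_{\headIndex}$ across $\headIndex$; and \Cref{eq:logit_cov} follows from a direct product-moment computation.

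\textbf{Part \emph{(II)} - output convergence.} Define the per-head pre-projection tensor $Z^{\headIndex}_{\sequenceVariable}(\inputSymbol) \coloneqq \softmax(\logitN(\inputSymbol))\, \valueN(\inputSymbol) \in \R{\spatialDimension \times \valueDimension}$, and let $w^{\headIndex}_k$ denote the $\headIndex$-th $\valueDimension$-dimensional block of the $k$-th column of $\weightsO$. Then
\begin{equation*}
    \activations{\sequenceVariable, \cdot k}{\depthSymbol}(\inputSymbol) = \sum_{\headIndex=1}^{\headDimension} Z^{\headIndex}_{\sequenceVariable}(\inputSymbol)\, w^{\headIndex}_k,
\end{equation*}
which, conditional on $\activations{\sequenceVariable}{\depthSymbol-1}$, is a sum of i.i.d.-across-$\headIndex$ random vectors because every head is parametrised by independent weights. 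I would adapt \citet[Lemma~10]{matthews2018gaussian} by taking the \emph{whole head} as the exchangeable unit rather than an individual post-activation channel; this yields the Gaussian limit for every finite-dimensional projection as $\headDimension \to \infty$. The limiting covariance is obtained via iterated expectation: condition first on $\softmax(\logitN)$, use $\valueN \perp \logitN$ to reduce the inner $V$-contraction to the $i, j$ sum against $\kerntildef{i j}{\depthSymbol}{\inputSymbol}{\inputSymbol'}$, and recover \Cref{eq:sqrt_scaling_kernel}; column independence of $\activations{\sequenceVariable}{\depthSymbol}$ is inherited from the column independence of $\weightsO$.

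\textbf{Main obstacle.} The delicate point will be the interplay between the $\logitDimension \to \infty$ limit (which Gaussianises the logits) and the $\headDimension \to \infty$ limit (which Gaussianises the output) across the nonlinear $\softmax$. Because $\softmax$ takes values in the probability simplex, convergence in probability of $\logitN$ can be upgraded to $L^p$ convergence of $\softmax(\logitN)$ via dominated convergence; combined with the affine envelope on $\nonlinearity$, which controls the $L^2$ moments of $\valueN$ (hence of $Z^{\headIndex}_{\sequenceVariable}$), this supplies the uniform integrability / Lindeberg moment bound required by the per-head CLT. Extending from finite-dimensional marginals on a finite subset of $\indexSet \times [\spatialDimension] \times \natnum$ to the full countable $\indexSet$ is then a standard Kolmogorov extension / projective-limit argument as in \citet{matthews2018gaussian}.
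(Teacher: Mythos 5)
Your strategy is, at its core, the same as the paper's: for part \emph{(II)} the paper also reduces (via countability, finite-dimensional marginals, and the Cram\'er--Wold device) to a scalar projection written as $H^{-1/2}\sum_{\headIndex}\gamma_{n,\headIndex}$ with the \emph{whole head} as the exchangeable unit, and then applies the adaptation of lemma~10 of \citep{matthews2018gaussian} (\Cref{lemma:eCLT}), verifying the zero-mean, variance, mixed fourth-moment and third-moment conditions through uniform moment bounds (\Cref{lem:mmnt_propagation}), convergence of empirical inner products (\Cref{lem:inner_prod_converge}), \Cref{lem:slutsky}, and convergence of means of uniformly integrable sequences (\Cref{thm:mean_convergence}). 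Where you diverge is part \emph{(I)}: you condition on the previous layer and invoke a Lindeberg CLT for the query--key contraction, whereas the paper stays unconditional and reuses the same exchangeable CLT over the $\logitDimension$ index, conditioning only to obtain exchangeability via de~Finetti. Your route is viable, but since the conditional per-summand law changes with $\sequenceVariable$ through the empirical kernel of $\activitySymbol^{\depthSymbol-1}_{\sequenceVariable}$, you need a triangular-array/stable version of the CLT with the Lindeberg condition verified in probability together with concentration of the conditional covariance --- exactly the work the paper's moment conditions perform. Three of your statements also need tightening: (i) under the $\genericDimenstion^{-1/2}$ scaling the logits converge in \emph{distribution} to a nondegenerate Gaussian, not in probability, so moments of $\softmax(\logitSymbol^{\depthSymbol\headIndex}_{\sequenceVariable})$ converge by boundedness/uniform integrability rather than by upgrading convergence in probability to $L^p$; (ii) $\valueSymbol$ and $\logitSymbol$ are independent only \emph{conditionally} on $\activitySymbol^{\depthSymbol-1}_{\sequenceVariable}$ (they share it), and the factorised covariance in \Cref{eq:sqrt_scaling_kernel} comes from concentration of $\langle \activitySymbol^{\depthSymbol-1}_{\sequenceVariable,i\cdot},\activitySymbol^{\depthSymbol-1}_{\sequenceVariable,j\cdot}\rangle/\layerDimension{\depthSymbol-1}$, not from unconditional independence; (iii) likewise, heads are only conditionally independent at finite $\sequenceVariable$, so the $\delta_{\headIndex=\headIndex'}$ structure and the independence of output columns are asymptotic statements resting on the same concentration. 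None of this breaks your plan --- these are precisely the gaps the paper closes with \Cref{lem:inner_prod_converge,lem:mmnt_propagation,lem:slutsky,thm:mean_convergence}.
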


We can now revisit our argument from the previous section, and prove that unlike in \Cref{prop:linear_scale_no_conv}, $\genericDimenstion^{-1/2}$ scaling ensures a convolutional kernel can in principle be recovered.

\begin{restatable}{proposition}{sqrtConvRecover}
\label{prop:sqrt_scale_conv_recover}
    Under the $\genericDimenstion^{-1/2}$ scaling, there exists a distribution over $\logitSymbol$ such that
    for any $\inputSymbol, \inputSymbol'$ and $a, b, i, j$
    \begin{align}
        &\E [
            \softmax(\logitSymbol(\inputSymbol))_{a i}
            \softmax(\logitSymbol(\inputSymbol'))_{b j}
        ]
        \nonumber
        \\
        &=
        \begin{cases}
            \frac{1}{\genericDimenstion_f}\, , &  \exists k \in [\genericDimenstion] \text{ s.t.\ } i = N_a(k) \, , j = N_b(k) \, ,  \\
            0 \, ,&  \text{ otherwise.}
        \end{cases}
    \end{align}
\end{restatable}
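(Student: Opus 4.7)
The proof will be constructive: I would exhibit a distribution on $\logitSymbol$ whose row-wise softmax becomes, in an appropriate limit, a one-hot indicator of the \emph{same} randomly chosen index within each receptive field. Concretely, introduce a single shared latent $K$ uniform on $[\genericDimenstion_f]$ and a temperature $M > 0$, and set
\begin{align*}
    \logitSymbol_{a i}(\inputSymbol) = M \cdot \indicator{i = N_a(K)}
\end{align*}
for every $a, i \in [\spatialDimension]$, independently of $\inputSymbol$. Because $K$ is shared across rows, the row-wise argmax of $\logitSymbol(\inputSymbol)$ is $N_a(K)$ simultaneously for every $a$, so $\softmax(\logitSymbol(\inputSymbol))_{a i} \to \indicator{i = N_a(K)}$ pointwise as $M \to \infty$.

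By bounded convergence, the expectation of the product also converges, and taking the expectation over $K$ gives
\begin{align*}
    \E [
        \softmax(\logitSymbol(\inputSymbol))_{a i}
        \softmax(\logitSymbol(\inputSymbol'))_{b j}
    ]
    \;\to\;
    \E [\indicator{i = N_a(K)} \indicator{j = N_b(K)}]
    =
    \tfrac{1}{\genericDimenstion_f} \sum_{k=1}^{\genericDimenstion_f} \indicator{i = N_a(k),\, j = N_b(k)},
\end{align*}
which, under the standard parametrisation of convolutional receptive fields where each pair $(N_a(k), N_b(k))$ arises for at most one $k$, equals $1/\genericDimenstion_f$ whenever such a $k$ exists and $0$ otherwise. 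This matches the target formula.

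To anchor the construction within the setting of \Cref{thm:gp_convergence_sqrt}, where $\logitSymbol$ is zero-mean Gaussian with covariance $\kerntildef{a b}{}{\inputSymbol}{\inputSymbol'} \kerntildef{i j}{}{\inputSymbol}{\inputSymbol'}$, the hard indicator should be replaced by a Gaussian surrogate together with a temperature scaling. One would pick $\kerntilde_{i j}$ with a rank-$\genericDimenstion_f$ component supported on the receptive-field directions and $\kerntilde_{a b}$ constant in $(a, b)$, so that the rows of $\logitSymbol$ share a common ranking over receptive-field positions; then applying $\kerntilde \mapsto \tau^2 \kerntilde$ with $\tau \to \infty$ concentrates $\softmax(\logitSymbol)$ on a common argmax that is uniform on $[\genericDimenstion_f]$ by the symmetry of the Gaussian.

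The main obstacle is the cross-row coupling: if the rows of $\logitSymbol$ were independent, the expectation would factorise as $\softmaxMean_{a i}^\inputSymbol \softmaxMean_{b j}^{\inputSymbol'}$, which \Cref{prop:linear_scale_no_conv} already rules out as a way of producing the convolutional pattern. The essential advantage of the $\genericDimenstion^{-1/2}$ scaling over $\genericDimenstion^{-1}$ is precisely that the covariance \emph{between} rows of $\logitSymbol$ does not vanish in the limit, and so one can engineer $\kerntilde_{a b}$ to preserve the shared-argmax structure used above; verifying the joint softmax concentration against a nontrivial $\kerntilde_{a b}$ is the only technically delicate step.
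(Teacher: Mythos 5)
Your construction is essentially the paper's own proof: a single shared latent, uniform over the $\genericDimenstion_f$ filter positions, couples the argmax of every row, so that conditionally on the latent the softmax is exactly the one-hot indicator of $N_a(K)$, and averaging over the latent gives the stated kernel (the paper realises the latent as $\omega \in [0,1)$ partitioned into $\genericDimenstion_f$ equal intervals, and injectivity of $N_a$ makes your ``at most one $k$'' caveat automatic). The only real difference is that the paper pushes your $M \to \infty$ limit inside the construction by letting $\logitSymbol_{a i}$ take extended-real values $\pm\infty$, which yields exact equality for a single distribution rather than equality only in the temperature limit (for finite $M$ the off-field expectations are strictly positive); your closing sketch about realising this with a Gaussian $\logitSymbol$ constrained by $\kerntilde$ is not needed for the proposition and would not work as described, and the paper instead obtains a ``more realistic'' version by multiplying the rows of $\querySymbol$ and $\keySymbol$ by suitably coupled random scalars.
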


\section{Beyond the vanilla attention definition}\label{sect:beyon_vanilla_attn}

Before progressing to empirical evaluation of infinitely wide attention architectures,
two practical considerations have to be addressed: (i)~the $\genericDimenstion^{-1/2}$ scaling induced kernel in \Cref{eq:sqrt_scaling_kernel} involves an analytically intractable integral $\E [\softmax(\logitSymbol^{\depthSymbol 1}(\inputSymbol)) \softmax(\logitSymbol^{\depthSymbol 1}(\inputSymbol'))]$; (ii)~incorporation of positional encodings \citep{gehring2017convolutional,vaswani2017attention}.

\subsection{Alternatives to softmax in attention networks}\label{sect:softmax_alternatives}

We propose to resolve the analytical intractability of the $\E [\softmax(\logitSymbol^{\depthSymbol 1}(\inputSymbol)) \softmax(\logitSymbol^{\depthSymbol 1}(\inputSymbol'))]$ in \Cref{eq:sqrt_scaling_kernel} by substituting functions other than softmax for $\softmax$.
In particular, we consider two alternatives:
(i)~$\softmax(x) = \text{ReLU}(x)$, and (ii)~$\softmax(x) = x$, both applied elementwise. Besides analytical tractability of the expectation, our motivation for choosing (i) and (ii) is that ReLU removes the normalisation while still enforcing positivity of the attention weights, while the identity function allows the attention layer to learn an arbitrary linear combination of the values without constraints.

To see if either is a sensible modification, we evaluated performance of \emph{finite} attention networks on CIFAR-10 for different choices of $\softmax$.
Since softmax typically dampens the marginal variance of attention layer outputs (variance of a convex combination of random variables is upper bounded by the maximum of the individual variances), and both ReLU and identity can also significantly affect scale of the outputs, we optionally add layer normalisation as is common in attention architectures.
We consider no normalisation (\texttt{none}),\footnote{Despite the similarity between attention with ReLU or identity for $\softmax$ and dense layers with cubic nonlinearities, which are known to be hard to train, we found that layer normalisation was not strictly necessary.
We believe this is partly because we only used a single attention layer, and partly because the weights for keys, queries, and values are initialised independently which leads to relatively better behaved distribution of gradients at initialisation.} 
normalisation applied after each head prior to multiplication by $\weightsO$ (\texttt{per\_head}), and normalisation applied to the output after $\weightsO$ (\texttt{at\_output}).

\Cref{fig:softmax_replacements} shows the results across varying hyperparameters and random seeds, and \Cref{tab:finite_attention_best} (\Cref{sect:replacements_appendix}) reports accuracies attained under optimal hyperparameter settings.
As you can see, both the replacement of softmax and addition of layer normalisation significantly increases the performance of the NN, with $\softmax(x) = x$ and \texttt{at\_output} normalisation being the best across variety of hyperparameter choices. 

In light of the above, we will restrict our attention to the identity function alternative for $\softmax$ in the rest of the paper, and contrast its performance with the standard softmax choice where possible (finite NNs, and infinite attention NNs under the $\genericDimenstion^{-1}$ scaling---see \Cref{thm:linear_scaling_limit}).
Similarly, we will also leverage the \texttt{at\_output} layer normalisation over the embedding dimension in our experiments.
As shown by \citet[appendix A]{yang2019v2}, layer normalisation does not prevent Gaussianity of the infinite width limit (see \Cref{tab:kernel_overview} for the associated NNGP and NTK kernel transformations).

\begin{figure}[tbp]
    \begin{center}
        \centerline{
            \includegraphics[width=0.9\columnwidth,keepaspectratio]{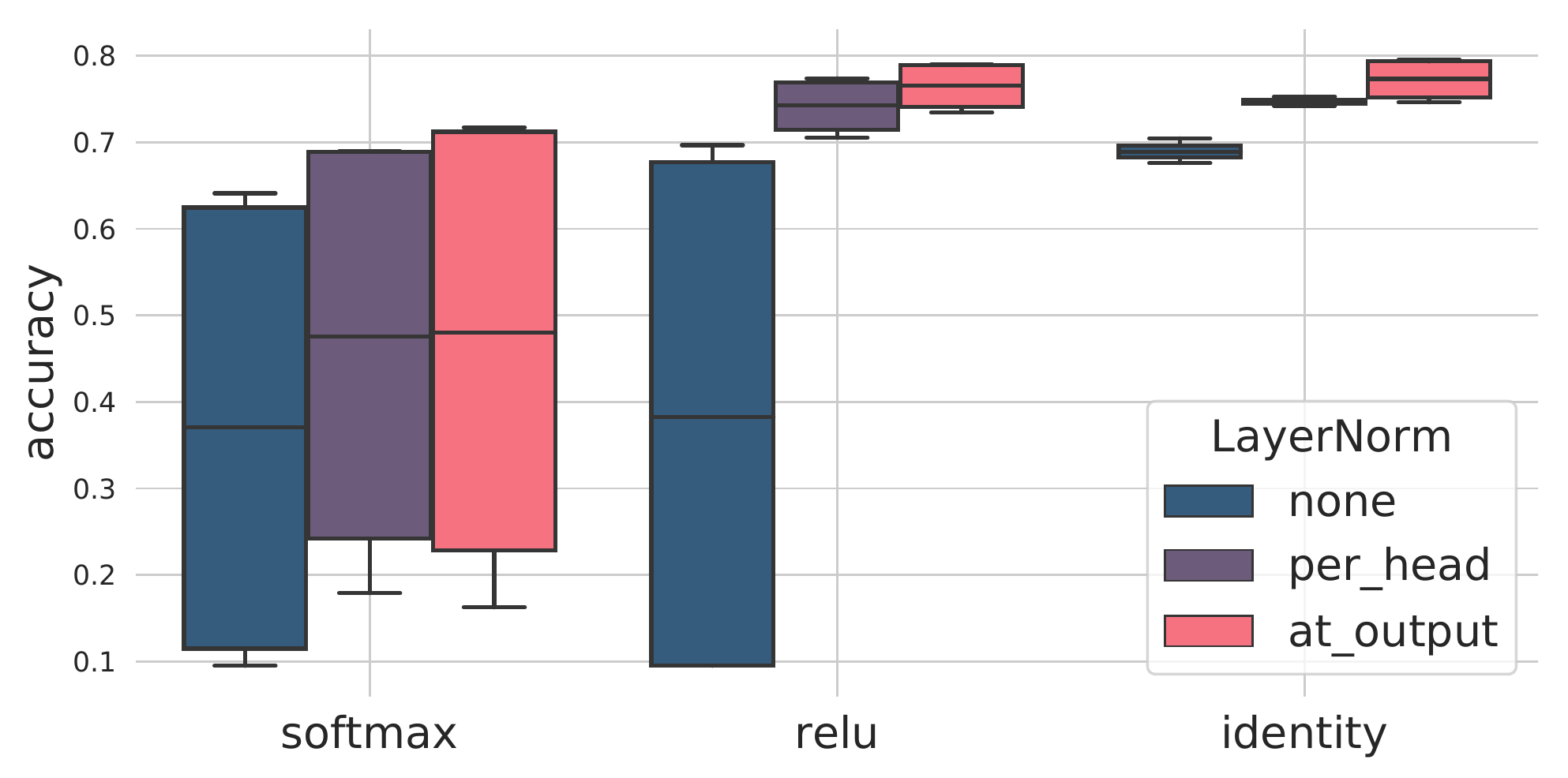}
        }
        \vskip -0.15in
        \caption{\textbf{Comparison of $\softmax$ alternatives.}
        Architecture: 4x Convolution + ReLU, Attention, Flatten, Dense.
        The captured variability is due to multiple random seeds, varying learning rate and network width, illustrating robustness of the reported results.
        Softmax significantly underperforms other $\softmax$ alternatives whenever attention is followed by layer normalisation. Details in \Cref{sect:replacements_appendix}.}
        \label{fig:softmax_replacements}
    \end{center}
    \vskip -0.25in
\end{figure}

\subsection{Positional encodings}\label{sect:positional_encodings}

While substituting the identity function for $\softmax$ as suggested in \Cref{sect:softmax_alternatives} would technically allow us to move on to the experimental evaluation already, we found that positional encodings are as important in the infinite width limit as they are for the finite attention layers \citep{vaswani2017attention}.
Since there are many possible variants of the positional encoding implementation, we focus only on the major points here and provide more detail in \Cref{sect:positional_encodings_appendix}.

In finite networks, some of the most common ways to implement positional encodings is to modify the attention layer input by either \texttt{add}-ing $\indexedActivity{\depthSymbol - 1}{\sequenceVariable}{\inputSymbol} + \posEmb{\sequenceVariable}{\depthSymbol}$ or \texttt{append}-ing $[ \indexedActivity{\depthSymbol - 1}{\sequenceVariable}{\inputSymbol} \, , \posEmb{\sequenceVariable}{\depthSymbol} ]$ a matrix 
$\posEmb{\sequenceVariable}{\depthSymbol}$ 
which may be either fixed or a trainable parameter.
The purpose of $\posEmb{\sequenceVariable}{\depthSymbol}$ is to provide the attention layer with information about the relationships between individual spatial dimensions (e.g., position of a particular pixel in an image, or of a token in a string).

\subsubsection{Effect on the infinite width limit}

If we assume $\posEmb{\sequenceVariable}{\depthSymbol}$ is trainable and each of its columns is initialised independently from $\gauss (0 , R)$,
$\covPosEmb{}{}$ positive semidefinite,
it can be shown that both in the \texttt{add} and \texttt{append} case, the attention layer output converges (in distribution) to a Gaussian infinite width limit (see \Cref{sect:positional_encodings_appendix}).
The corresponding kernels can be stated in terms of an operator $\interpKern$ which interpolates any given kernel $\kernel$ with $\covPosEmb{}{}$
\begin{align}\label{eq:kernel_interp_op}
    \interpKern
    \colon
    \kernelf{}{}{\inputSymbol}{\inputSymbol'}
    \mapsto
    \interpKernCoeff \kernelf{}{}{\inputSymbol}{\inputSymbol'}
    +
    (1 - \interpKernCoeff)
    \covPosEmb{}{}
    \, ,
\end{align}
where $\interpKernCoeff \in [0, 1]$ is a hyperparameter,\footnote{If $\posEmb{\sequenceVariable}{\depthSymbol}$ is \texttt{append}-ed, $\interpKernCoeff = \lim_{\sequenceVariable \to \infty} \layerDimension{\depthSymbol - 1} / (\peDimension + \layerDimension{\depthSymbol - 1})$ with $\peDimension$ the row space dimension of $\posEmb{\sequenceVariable}{\depthSymbol}$.
When $\posEmb{\sequenceVariable}{\depthSymbol}$ is \texttt{add}-ed, we replace $\indexedActivity{\depthSymbol - 1}{\sequenceVariable}{\inputSymbol}$ by 
$\sqrt{\interpKernCoeff} \indexedActivity{\depthSymbol - 1}{\sequenceVariable}{\inputSymbol} + \sqrt{1 - \interpKernCoeff} \posEmb{\sequenceVariable}{\depthSymbol}$ 
so as to prevent increase of the layer's input variance (see \Cref{sect:positional_encodings_appendix}).} yielding the following modification of the kernel induced by the $\genericDimenstion^{-1}$ scaling and $\WQ = \WK$ initialisation (\Cref{eq:fixed_determ_kernel}):
\begin{align}\label{eq:sum_pe_lin_scale_kernel}
    \kernelf{a b}{}{\inputSymbol}{\inputSymbol'}
    =
    \softmaxMean_{a}^{\inputSymbol}
    [
        \interpKern \circ
        \kerntildef{}{}{\inputSymbol}{\inputSymbol'}
    ]
    (\softmaxMean_{b}^{\inputSymbol'})^\top
    \, ,
\end{align}
where 
$
    \softmaxMean_{a}^{\inputSymbol}
    \coloneqq
    \softmax (
        \interpKern \circ
        \kerntildef{}{}{\inputSymbol}{\inputSymbol}
    )_{a \cdot}
$
and similarly for $\softmaxMean_{b}^{\inputSymbol'}$.
The modification of the kernel induced by the $\genericDimenstion^{-1/2}$ scaling, $\WQ, \WK$ initialised independently, and $\softmax$ replaced by the identity function (\Cref{eq:sqrt_scaling_kernel}), then leads to:
\begin{align}\label{eq:sum_pe_sqrt_scale_kernel}
    \kernelf{a b}{}{\inputSymbol}{\inputSymbol'}
    =
    \interpKern \circ
    \kerntildef{a b}{}{\inputSymbol}{\inputSymbol'}
    \!
    \sum_{i, j=1}^{\spatialDimension}
         [
            \interpKern \circ
            \kerntildef{i j}{}{\inputSymbol}{\inputSymbol'}
         ]^2
    .
\end{align}

Several comments are in order.
Firstly, the typical choice of the initialisation covariance for $\posEmb{\sequenceVariable}{\depthSymbol}$ is $\covPosEmb{}{} = \rho I$, $\rho > 0$. This may be reasonable for the
$
    \softmaxMean_{a}^{\inputSymbol}
    =
    \softmax (
        \interpKern \circ
        \kerntildef{}{}{\inputSymbol}{\inputSymbol}
    )_{a \cdot}
$
in \Cref{eq:sum_pe_lin_scale_kernel} when $\softmax$ is the softmax function as it increases attention to the matching input spatial dimension, but does not seem to have any ``attention-like'' interpretation in \Cref{eq:sum_pe_sqrt_scale_kernel} where the effect of applying $\interpKern$ 
to $\kerntilde^{}$ with $\covPosEmb{ }{} = \rho I$
is essentially analogous to that of just adding i.i.d.\ Gaussian noise to each of the attention layer inputs.

Secondly, the right hand side of \Cref{eq:sum_pe_sqrt_scale_kernel} is just a scaled version of the discussed $\interpKern \circ \kerntilde$ kernel,
with the scaling constant disappearing when the attention layer is followed by layer normalization (\Cref{tab:kernel_overview}).
Both of these call into question whether the performance of the corresponding finite NN architectures will translate to its infinite width equivalent.
We address some of these issues next.

\subsubsection{Structured positional encodings}\label{sect:struct_pe}

As mentioned, the main purpose of positional encodings is to inject structural information present in the inputs which would be otherwise ignored by the attention layer.
A natural way to resolve the issues discussed in previous section is thus to try to incorporate similar information directly into the $\covPosEmb{}{}$ covariance matrix.
In particular, we propose
\begin{align}\label{eq:decay_pos_emb}
    \covPosEmb{a b}{}
    =
    \rho
    \begin{cases}
        \exp \{ -\varphi [r_{h}(a, b)^2 + r_{v}(a, b)^2 ] \}
        & \text{(image)} \\
        \exp \{ -\varphi \, r_{s}(a, b)^2 \}
        & \text{(string)}
    \end{cases}
\end{align}
where $\rho, \varphi > 0$ are hyperparameters, $r_h (a, b)$ and $r_v(a, b)$ are the absolute horizontal and vertical distances between the pixels $a$ and $b$ divided by the image width and height respectively, and $r_s(a, b)$ is the absolute distance between the relative position of tokens $a$ and $b$, e.g., if $a$ is the 4\textsuperscript{th} token out of 7 in the first, and $b$ is the 2\textsuperscript{nd} token out of 9 in the second string, then $r_s(a, b) = |\frac{4}{7} - \frac{2}{9}|$.

To motivate the above definition, let us briefly revisit \Cref{eq:sum_pe_lin_scale_kernel}.
Intuitively, the $d^{-1}$ kernel
$
\softmaxMean_{a}^{\inputSymbol}
[
    \interpKern \circ
    \kerntildef{}{}{\inputSymbol}{\inputSymbol'}
]
(\softmaxMean_{b}^{\inputSymbol'})^\top
$
is a result of multiplying the asymptotically Gaussian values $\val{}{} \sim \gp (0, \interpKern \circ \kerntilde)$ by matrices of row-wise stacked $\softmaxMean^\inputSymbol = [ \softmaxMean_1^\inputSymbol ; \ldots ; \softmaxMean_{\spatialDimension}^\inputSymbol]$ vectors, e.g.,
$
\indexedActivation{}{}{\inputSymbol} =   
\softmaxMean^\inputSymbol
\val{}{}(\inputSymbol)
$,\footnote{By standard Gaussian identities, if $Z \sim \gauss (0, \Sigma)$, and $A$ is a deterministic matrix, then $A Z \sim \gauss (0, A \Sigma A^\top).$}
meaning that the $\softmaxMean$
vectors serve the role of attention weights in the infinite width limit.
This in turn implies that the greater the similarity under $\kerntildef{a b}{}{\inputSymbol}{\inputSymbol}$ the higher the attention paid by $a$ to $b$.
Thus, if we want to inject information about the relevance of neighbouring pixels in an image or tokens in a string, we need to increase the corresponding entries of
$
    \interpKern \circ
    \kerntildef{}{}{\inputSymbol}{\inputSymbol}
    =
    \interpKernCoeff \kerntildef{}{}{\inputSymbol}{\inputSymbol'}
    +
    (1 - \interpKernCoeff)
    \covPosEmb{}{}
$
which can be achieved exactly by substituting the $\covPosEmb{}{}$ from \Cref{eq:decay_pos_emb}.

The above reasoning only provides the motivation for modifying the attention weights using positional encodings but not necessarily for modifying the asymptotic distribution of the values $\val{}{}$.
Adding positional encodings only inside the $\softmax$ is not uncommon \citep[e.g.,][]{shaw2018self}, and thus we will also experiment with kernels induced by adding positional encodings
only to the inputs of $\query{\sequenceVariable}{}$ and $\key{\sequenceVariable}{}$, leading to
\begin{align}\label{eq:decay_lin_scale_kernel}
    \kernelf{a b }{}{\inputSymbol}{\inputSymbol'}
    =
    \softmaxMean_{a}^{\inputSymbol}
        \kerntildef{}{}{\inputSymbol}{\inputSymbol'}
    (\softmaxMean_{b}^{\inputSymbol'})^\top
    \, ,
\end{align}
under the $\genericDimenstion^{-1}$ scaling (cf.\ \Cref{eq:sum_pe_lin_scale_kernel}), and
\begin{align*} %
    \kernelf{a b}{}{\inputSymbol}{\inputSymbol'}
    =
    \interpKern \circ
    \kerntildef{a b}{}{\inputSymbol}{\inputSymbol'}
    \!
    \sum_{i, j=1}^{\spatialDimension}
        \kerntildef{i j}{}{\inputSymbol}{\inputSymbol'}
            \interpKern \circ
            \kerntildef{i j}{}{\inputSymbol}{\inputSymbol'}
    \, ,
\end{align*}
under the $\genericDimenstion^{-1/2}$ scaling (cf.\ \Cref{eq:sum_pe_sqrt_scale_kernel}).

Finally, note that the last kernel remains a scaled version of the aforementioned $\interpKern \circ \kerntilde$ kernel, albeit now with $\covPosEmb{}{}$ as in \Cref{eq:decay_pos_emb}.
In our experience, using just $\interpKern \circ \kerntilde$ without the scaling leads to improved empirical performance, and further gains can be obtained with the related kernel
\begin{align}\label{eq:residual_kernel}
    \kernelf{a b }{}{\inputSymbol}{\inputSymbol'}
    =
    \interpKernCoeff \kerntildef{a b}{}{\inputSymbol}{\inputSymbol'}
    +
    (1 - \interpKernCoeff)
    \covPosEmb{a \cdot}{}
    \kerntildef{}{}{\inputSymbol}{\inputSymbol'}
    \covPosEmb{b \cdot}{\top}
    \, .
\end{align}
We call \Cref{eq:residual_kernel} the \emph{residual} attention kernel, as it can be obtained as a limit of architecture with a skip connection, $\indexedActivation{\depthSymbol}{\sequenceVariable}{\inputSymbol} = \sqrt{\interpKernCoeff} \indexedActivity{\depthSymbol - 1}{\sequenceVariable}{\inputSymbol} + \sqrt{1 - \interpKernCoeff} \tildeActivation{\depthSymbol}{\sequenceVariable}{\inputSymbol}$, where $\tildeActivation{\depthSymbol}{\sequenceVariable}{\inputSymbol}$ is output of an attention layer (details in \Cref{sect:residual_attention_appendix}).

\section{Experiments}

We evaluate the attention NNGP/NTK kernels on the CIFAR-10 \citep{cifar10} and IMDb reviews \citep{maas2011learning} datasets.
While IMDb is a more typical setting for attention models (\Cref{sec:imdb}), we included CIFAR-10 experiments (\Cref{sec:cifar}) due to desire to compare with other NNGPs/NTKs on an established benchmark \citep[e.g.,][]{novak2019bayesian,du2019gd,li2019enhanced}, 
and the recent successes of attention on vision tasks \citep[e.g.,][]{wang2017residual,wang2018non,hu2018squeeze,woo2018cbam,chen2018,ramachandran2018stand,bello2019attention}.
Our experimental code utilises the JAX \citep{jax2018github} and Neural Tangents \citep{novak2020neural} libraries.

\begin{figure}[tp]
    \begin{center}
        \centerline{
            \includegraphics[width=\columnwidth,keepaspectratio]{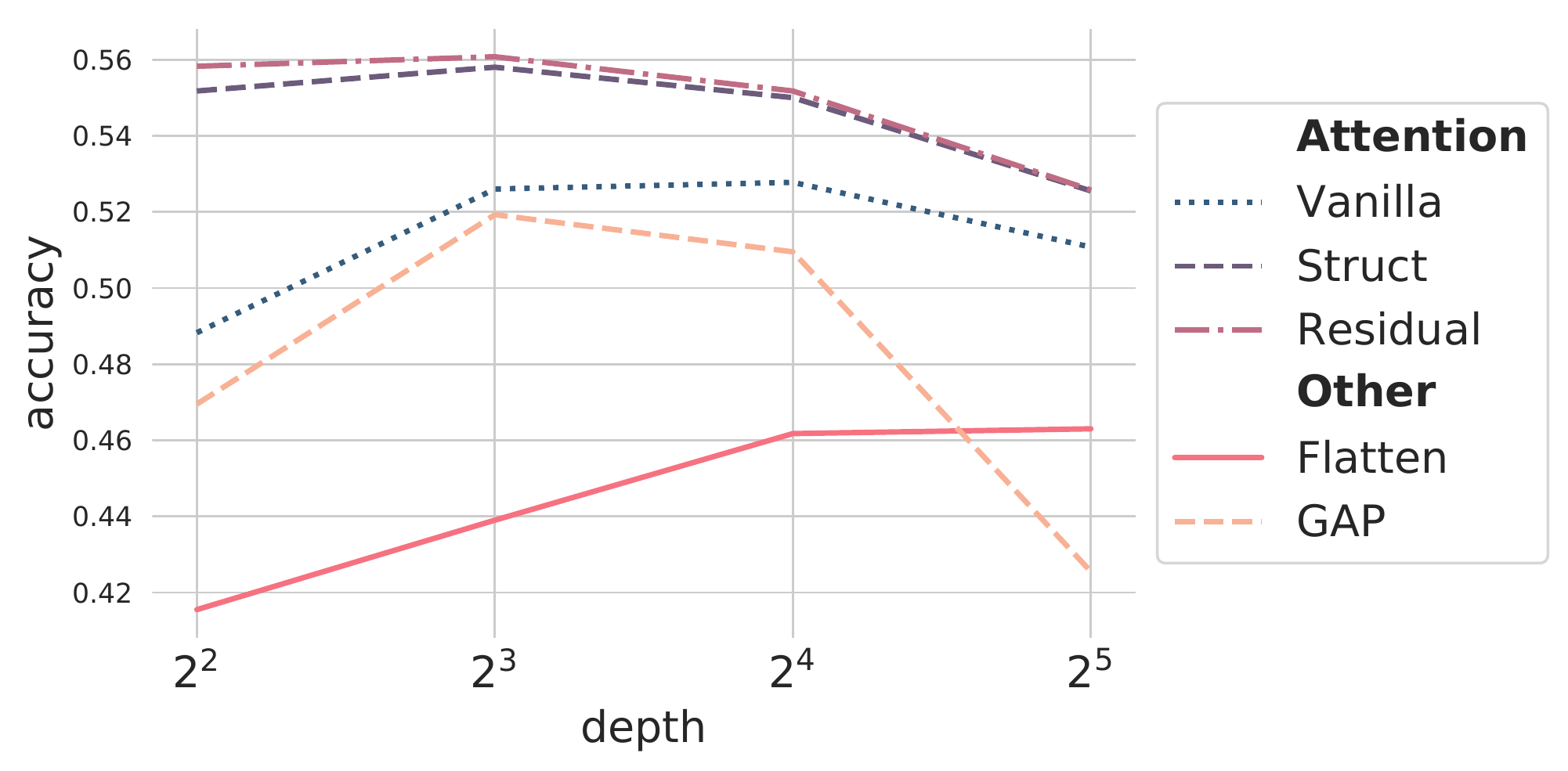}
        }
        \vskip -0.15in
        \caption{
        \textbf{Validation accuracy as a function of depth for various NNGP kernels} on a 2K/4K train/validation split of CIFAR-10 (no pixel downsampling).
        Architecture: \texttt{[depth]}x Convolution + ReLU, followed by a single instance of the kernel specified in the legend (attention kernels combined with additional Flatten), and Dense.
        See \Cref{tab:kernel_overview} for attention, and \citep{novak2019bayesian,garriga2019deep} for Convolutional, Flatten, and Global Average Pooling (GAP) kernel descriptions.
        Results reported for best hyperparameters ($\genericDimenstion^{-1}$ scaling generally resulted in better performance for the \texttt{Struct} kernel).
        More experimental details in \Cref{sect:depth_appendix}.
        Notice the improved performance of attention kernels with positional embeddings and layer normalisation (i.e., \texttt{Struct}, \texttt{Residual}) over their \texttt{Vanilla} counterpart.
        }
        \label{fig:cifar_pe}
    \end{center}
    \vskip -0.25in
\end{figure}

\subsection{CIFAR-10}\label{sec:cifar}

We have run two types of experiments on CIFAR-10: 
(i)~smaller scale experiments focused on understanding how different hyperparameters of the attention kernel affect empirical performance;
(ii)~a larger scale experiment comparing attention kernels to existing NNGP/NTK benchmarks.
The smaller scale experiments were run on a randomly selected subset of six thousand observations from the \emph{training} set, with the 2K/4K train/validation split.
This subset was used in \Cref{fig:convergence_plots,fig:cifar_pe}, and for hyperparameter tuning.
Selected hyperparameters were then employed in the larger scale experiment with the usual 50K/10K train/test split.

All kernels evaluated in this section correspond to NN architectures composed of multiple stacked convolutional layers with ReLU activations, followed by either simple flattening, global average pooling (GAP), or one of our attention kernels itself followed by flattening and, except for the \texttt{Vanilla} attention case (see \Cref{tab:kernel_overview}), also by layer normalisation;
the output is then computed by a single dense layer placed on top.
The choice to use only one attention layer was made to facilitate comparison with \citep{novak2019bayesian,du2019gd,li2019enhanced} where the same set-up with a stack of convolutional layers was considered.
Adding more attention layers did not result in significant gains during hyperparameter search though.
Exact details regarding data normalisation, hyperparameter tuning, and other experimental settings can be found in \Cref{sect:experimental_details}.

\begin{table}[tbp]
\caption{\textbf{CIFAR-10 test accuracies} of attention kernels and existing NNGP/NTK alternatives.
The standard 50K/10K train/test split is used (no pixel downsampling).
Best hyperparemeters from the 2K/4K subset experiments used for each kernel, $\genericDimenstion^{-1}$ scaling for the \texttt{Struct} kernel (see \Cref{tab:kernel_overview}). Details in \Cref{sect:full_cifar_appendix}.}
\label{tab:full_data_results}
\vskip 0.15in
\begin{center}
\begin{small}
\begin{sc}
\noindent
\begin{tabularx}{\columnwidth}{X l c}
    \toprule
    Kernel & NNGP & NTK \\
    \midrule
    Flatten    &  65.54  & 66.27 \\
    GAP \cite{li2019enhanced} & 77.85  & 77.39 \\
    LAP \cite{li2019enhanced} & 80.36 & 79.71 \\
    Struct & 80.55  & 79.93 \\
    Residual & \textbf{80.72}  &  \textbf{80.10} \\
    \bottomrule
\end{tabularx}
\end{sc}
\end{small}
\end{center}
\vskip -0.18in
\end{table}

The most important observations from the smaller scale experiments are captured in \Cref{fig:cifar_pe} which shows the validation accuracy of various NNGP models as a function of kernel choice and number of convolutional layers (\texttt{depth}) preceding the final flatten/GAP/attention plus dense block.
Firstly, notice that except for the \texttt{Flatten} model, all other kernel choices achieve their best performance at smaller depths which is consistent with existing literature \citep{arora2019,li2019enhanced}.

Secondly, observe that both the \texttt{Struct} and \texttt{Residual} attention kernels significantly outperform the \texttt{Vanilla} one, demonstrating that the use of positional embeddings and layer normalisation is helpful even in the infinite width limit as claimed in \Cref{sect:positional_encodings}.
In contrast, we did not find significant evidence for $\softmax(x) = x$ outperforming the standard softmax choice as was the case for finite networks (see \Cref{fig:softmax_replacements}), with the best set of hyperparameters for \texttt{Struct} $\genericDimenstion^{-1}$ with softmax being only marginally better than the best results with the identity function (recall that no $\genericDimenstion^{-1/2}$ kernels use $\softmax = \text{softmax}$ due to the intractability discussed in \Cref{sect:beyon_vanilla_attn}).
This finding provides hope that the $\genericDimenstion^{-1/2}$ kernels also do not sacrifice much in terms of performance by using identity for $\softmax$, but also points to salient differences between the qualitative effects of individual hyperparameter choices in finite and infinite attention layers.

Using the insights from the smaller scale experiments, we ran the larger scale experiment on the full dataset using eight layer models and the \texttt{Struct} and \texttt{Residual} attention kernels.
We used the positional embedding covariance matrix defined in \Cref{eq:decay_pos_emb} in both cases, and $\genericDimenstion^{-1}$ with softmax for the \texttt{Struct} kernel (further details in \Cref{sect:full_cifar_appendix}).
The results can be found in \Cref{tab:full_data_results}.
As you can see, attention performs significantly better than the \texttt{GAP} kernel \citep{arora2019}, and also provides a moderate improvement over the recent local average pooling (\texttt{LAP}) results \citep{li2019enhanced}.
Since we used the validation accuracy from smaller scale experiments to determine our hyperparameters, we are comparing against the best cross-validation results from \citep{li2019enhanced} for fairness.

\begin{table}[t]
\caption{\textbf{IMDb sentiment classification, test accuracies} of simple NNGP/NTK models on the  25K/25K train/test split using GloVe word embeddings (\citet{pennington2014glove}; 840B.300d). \texttt{GAP-only} corresponds to a single global average pooling layer followed by a linear fully connected readout. \texttt{GAP-FCN} has 2 ReLU fully connected layers after GAP. \texttt{Struct} has an attention layer preceding GAP, followed by one (NNGP) or two (NTK) fully connected layers. Models selected on a validation set of 10K reviews. Details in \Cref{sect:imdb_full_appendix}.}
\label{tab:imdb_full_results}
\vskip 0.15in
\begin{center}
\begin{small}
\begin{sc}
\begin{tabularx}{\columnwidth}{X l c}
    \toprule
    Kernel & NNGP & NTK \\
    \midrule
    GAP-only &  \multicolumn{2}{c}{--\,\,\,\,\,84.98\,\,\,\,\,--} \\
    GAP-FCN & 85.82 & 85.80 \\
    Struct & \textbf{86.09} & \textbf{86.09} \\
    \bottomrule
\end{tabularx}
\end{sc}
\end{small}
\end{center}
\vskip -0.18in
\end{table}

\subsection{IMDb reviews}\label{sec:imdb}

Although there has been interest in applying attention in vision, to date it has been predominantly recognized for performance on  language tasks. However, most of available NNGP/NTK kernel implementations  \citep{matthews2018gaussian, lee2018deep, garriga2019deep, arora2019, yang2019v2, li2019enhanced} are hard-coded for the specific experiments performed in the respective paper. Neural Tangents \citep{novak2020neural} allows for some flexibility, yet still accepts only inputs of fixed length and having exactly zero (i.e. inputs to fully connected networks) or two (images for CNNs) spatial dimensions.

We release code allowing use of NNGP/NTK models (with or without attention) on inputs of variable spatial extent and arbitrary dimensionality (e.g., one spatial dimension for texts and time series, three spatial dimensions for videos).
Our implementation seamlessly extends the Neural Tangents library, enabling research and application of NNGP and NTK models to new domains with almost no extra effort.

As an example, we present the first benchmarks of simple NNGP and NTK models on the IMDb sentiment classification dataset in \Cref{tab:imdb_full_results}. We observe that \texttt{Struct} kernels outperform the \texttt{GAP-only} kernel (corresponding to linear regression on the word embeddings mean), but provides marginal benefit compared to a fully connected model on top of the pooling layer (\texttt{GAP-FCN}). We conjecture this is due to high-quality word embeddings  partially incorporating the inductive bias of the considered model. Indeed, we further demonstrate this effect by contrasting the gaps in performance between different kernel families on high- and low-quality word embeddings in \Cref{tab:imdb_small_results}.

\begin{table}[t]
\caption{\textbf{IMDb sentiment classification, test accuracies} on a 3.2K/1.6K train/test split. When high-quality word embeddings are used (300-dimensional GloVe trained on 840B tokens), complex models yield diminishing returns. Contrarily, simple embeddings (50-dimensional GloVe trained on 6B tokens) lead to significant gaps in model performance due to respective inductive biases ($\texttt{GAP-only} < \texttt{GAP-FCN} << \texttt{CNN-GAP} \approx \texttt{Struct}$). Models selected on a validation set of 1.6K reviews. Details in \Cref{sect:imdb_small_appendix}.}\label{tab:imdb_small_results}
\vskip 0.15in
\begin{sc}
\begin{small}
\begin{tabularx}{\columnwidth}{X X c c c}
\toprule
\multicolumn{2}{l}{Embeddings:}     & GloVe 840B      & GloVe 6B        \\
\multicolumn{2}{l}{(dimension)} & (300)             & (50)               \\
\midrule
\multicolumn{2}{l}{GAP only}            & 83.81          & 73.00         \\
\midrule
\multirow{3}{*}{NNGP}      & GAP-FCN        & 83.75          & 74.44         \\
                           & CNN-GAP        & 84.69          & 81.00         \\
                           & Struct     & 83.56          & 80.88         \\
\midrule
\multirow{3}{*}{NTK}       & GAP-FCN        & 83.81          & 74.88          \\
                           & CNN-GAP        & \textbf{84.88} & 80.31          \\
                           & Struct     & 84.00          & \textbf{81.06}  \\
\bottomrule
\end{tabularx}
\end{small}
\end{sc}
\vskip -0.18in
\end{table}

Naturally, our sample IMDb results are not competitive with the state-of-the-art, which achieve up to 97.4\% \citep[Table 4]{thongtan-phienthrakul-2019-sentiment}. However, we hope they will be a useful baseline for future research in infinite width sequence models, and that our codebase will substantially facilitate the process by enabling variable-length, arbitrary-dimensional input processing.

\section{Conclusion}

Unlike under the $\genericDimenstion^{-1}$ scaling of $\querySymbol(\inputSymbol) \keySymbol (\inputSymbol)^\top$ proposed in \citep{yang2019v2}, the standard $\genericDimenstion^{-1/2}$ scaling may lead to non-Gaussian asymptotic behaviour of attention layer outputs.
Gaussianity of the limit can however be obtained by taking the number of heads to infinity.
We explored the effect of positional encodings and replacements for the softmax function in attention layers, leading to improved performance for both finite and infinite attention architectures.
On CIFAR-10, attention improves moderately upon the previous state-of-the-art for GPs without trainable kernels and advanced data preprocessing \citep{li2019enhanced}.
We further released code allowing application of NNGP/NTK kernels to variable-length sequences and demonstrated its use on the IMDb reviews dataset.
While caution is needed in extrapolation of any results, we hope that particularly \Cref{fig:softmax_replacements} and \Cref{tab:full_data_results} inspire novel NN architectures and kernel designs.

\section*{Acknowledgements}
    We thank Jaehoon Lee for frequent discussion, help with scaling up the experiments, and feedback on the manuscript. We thank Prajit Ramachandran for frequent discussion about attention architectures. We thank Greg Yang, Niki Parmar, and Ashish Vaswani, for useful discussion and feedback on the project. Finally, we thank Sam Schoenholz for insightful code reviews.

\bibliography{ref}
\bibliographystyle{icml2020}

\onecolumn

\appendix

\section{Experimental details}\label{sect:experimental_details}

\subsection{CIFAR-10}

The CIFAR-10 datasest \citep{cifar10} was fetched using the TensorFlow datasets\footnote{\url{https://www.tensorflow.org/datasets/catalog/cifar10}}.

In all of the CIFAR-10 experiments, the data was preprocessed by subtracting mean and dividing by a standard deviation for each pixel and data point separately (equivalent to using \texttt{LayerNorm} as the first layer). 
We inflated all of the standard deviations by $10^{-15}$ to avoid division by zero. 

All the classification tasks were converted into regression tasks by encoding the targets as $C$--dimensional vectors, where $C$ is the number of classes, with the entry corresponding to the correct label set to $\frac{C - 1}{C}$ and all other entries to $-\frac{1}{C}$.
This enabled us to perform closed form NNGP and NTK inference using the Gaussian likelihood/MSE loss.

\subsubsection{Hyperparameter search}\label{sect:hyperparam_appendix}

The hyperparameter search was on a fixed architecture with 8x Convolution + ReLU, Attention, Flatten, and a Dense readout layer.
We used $1.7562$ and $0.1841$ respectively for the weight and bias variances as in \citep[appendix G.1]{novak2019bayesian} except for the attention output variance $\outVar$ which was set to one.
The convolutional layers were used with the \texttt{SAME} padding, stride one, and filter size $3 \times 3$.
For attention kernels with positional encodings, the reported $\rho$ parameter (\Cref{eq:decay_pos_emb}) is actually $\rho / (\queryVar \keyVar)$ so that the relative scale of the contribution of $\covPosEmb{}{}$ remains the same with changing $\queryVar \keyVar$.

There were two stages of the hyperparameter search, first to identify the most promising candidates (\Cref{tab:hypers_stage_one}), and second to refine the parameters of these candidate kernels (\Cref{tab:hypers_stage_two}).
The second stage also included the \emph{residual attention kernel} (\Cref{eq:residual_kernel}); the $\interpKernCoeff$ in the second table should thus be interpreted as the one stated in \Cref{eq:residual_kernel} (cf.\ \Cref{sect:residual_attention_appendix}).
The best hyperparameters used in \Cref{fig:cifar_pe} and \Cref{tab:full_data_results} can be found in a bold typeset in \Cref{tab:hypers_stage_two}.

All computation was done in 32-bit precision, and run on up to 8 NVIDIA V100 GPUs with 16Gb of RAM each.

\begin{table}[htbp]
\caption{Hyperparameter values for the first stage of search. 
\textsc{value positional encoding} stands for whether the positional encodings should be added to all $\querySymbol$, $\keySymbol$, and $\valueSymbol$ (\textsc{True}), or only to the inputs of $\querySymbol$ and $\keySymbol$ (\textsc{False}; see \Cref{sect:struct_pe}).
\textsc{encodings covariance} represent whether positional encodings should be added ($0$ for no), and if so, what should their initialisation covariance be ($I$ for identity, and $\covPosEmb{}{}$ for the covariance defined in \Cref{eq:decay_pos_emb}).
$\softmax = \text{softmax}$ was only used when \textsc{query/key scaling} was $\genericDimenstion^{-1}$ (see \Cref{sect:beyon_vanilla_attn}).
$\varphi, \rho, \interpKernCoeff$ were skipped when \textsc{value positional encoding} was \textsc{False}, and $\varphi$ was only varied when \textsc{encodings covariance} was $\covPosEmb{}{}$.}
\label{tab:hypers_stage_one}
\vskip 0.15in
\begin{center}
\begin{small}
\begin{sc}
\begin{tabular}{lc}
    \toprule
    hyperparameter & values \\
    \midrule
    query/key scaling & $\{ \genericDimenstion^{-1/2} , \genericDimenstion^{-1}  \}$ \\
    $\softmax$ (\Cref{sect:softmax_alternatives}) & \{softmax, identity\} \\
    value positional encoding & \{True, False\} \\
    encodings covariance & $\{ 0, I , \covPosEmb{}{} \}$ \\
    $\varphi$ (\Cref{eq:decay_pos_emb}) & $\{ 1, 5 \}$ \\
    $\rho$ (\Cref{eq:decay_pos_emb}) & $\{ 1\}$ \\
    $\interpKernCoeff$ (\Cref{eq:kernel_interp_op}) & $\{ 0.5, 0.8 \}$ \\
    $\queryStd \cdot \keyStd$ & $\{ 0.1, 1.0 \}$ \\
    \bottomrule
\end{tabular}
\end{sc}
\end{small}
\end{center}
\vskip -0.18in
\end{table}

\begin{table}[htbp]
\caption{Hyperparameter values for the first stage of search.
See \Cref{tab:hypers_stage_one} for description of the individual hyperparameters.
The parameters that achieved the best \emph{NNGP} validation accuracy and were selected for the subsequent experiments are in a bold typeset.}
\label{tab:hypers_stage_two}
\vskip 0.15in
\begin{center}
\begin{small}
\begin{sc}
\begin{tabular}{lcc}
    \toprule
    hyperparameter & struct & residual \\
    \midrule
    query/key scaling & $\{ \boldsymbol{\genericDimenstion^{-1}}  \}$ & $\{ \boldsymbol{\genericDimenstion^{-1}}  \}$ \\
    $\softmax$ (\Cref{sect:softmax_alternatives}) & \{\textbf{softmax}\} & \{\textbf{identity}\} \\
    value positional encoding & \{True, \textbf{False}\} & -- \\
    encodings covariance & $\{ \boldsymbol{\covPosEmb{}{}} \}$ & $\{ \boldsymbol{\covPosEmb{}{}} \}$ \\
    $\varphi$ (\Cref{eq:decay_pos_emb}) & $\{ 1, \boldsymbol{5}, 10 \}$ & $\{ 1, \boldsymbol{5}, 10 \}$ \\
    $\interpKernCoeff$ (\Cref{eq:kernel_interp_op,eq:residual_kernel}) & $\{ \boldsymbol{0.4}, 0.5, 0.65, 0.8, 0.9 \}$ & $\{ 0.4, 0.5, \boldsymbol{0.65}, 0.8, 0.9 \}$ \\
    $\rho$ (\Cref{eq:decay_pos_emb}) & $\{ 0.5, 1, \boldsymbol{1.5} \}$ & $\{ \boldsymbol{0.5} , 1 , 1.5 \}$ \\
    $\queryStd \cdot \keyStd$ & $\{ 0.001, \boldsymbol{0.1}, 1.0 \}$ & -- \\
    \bottomrule
\end{tabular}
\end{sc}
\end{small}
\end{center}
\vskip -0.18in
\end{table}

\subsubsection{Details for \Cref{fig:convergence_plots}}\label{sect:convergence_appendix}

The downsampling was performed using \texttt{skimage.transform.resize} with parameters \texttt{mode="reflect"} and \texttt{anti\_aliasing=True}, using downsampled height and width of size 8 as mentioned.

Both the convergence and accuracy plots are for the $d^{-1/2}$ vanilla NNGP kernel with $\softmax = \text{softmax}$.
The intractable softmax integral of the limiting covariance function was estimated using MC integration with 2048 samples.

We used $1.7562$ and $0.1841$ respectively for the weight and bias variances as in \citep[appendix G.1]{novak2019bayesian} for all the convolutional and dense layers, $1.7562$ for the $\keyVar, \queryVar$ and $\valueVar$, and $\outVar = 1$.
The convolutional layer used \texttt{VALID} paddingstride one, and filter size $3 \times 3$.

As in \citep{novak2019bayesian}, The reported distance between kernel matrices is the logarithm of
\begin{align}
    \frac{
        \|
            \hat{\mathcal{K}} - \mathcal{K}
        \|_F^2
    }{
        \|
            \mathcal{K}
        \|_F^2
    }
    \, ,
\end{align}
where $\hat{\mathcal{K}}$ and $\mathcal{K}$ are respectively the empirical and the predicted theoretical covariance matrices for the training set.

All computation was done in 32-bit precision, and run on up to 8 NVIDIA V100 GPUs with 16Gb of RAM each.

\subsubsection{Details for \Cref{fig:softmax_replacements}}\label{sect:replacements_appendix}

We used a 45K/5K train/validation split of the usual 50K CIFAR-10 training set and reported the validation set accuracy after training for 1000 epochs with batch size 64 and the Adam optimiser.

The attention layers used the usual $\genericDimenstion^{-1/2}$ scaling of the query/key inner products, and the convolutional layers used the \texttt{SAME} padding, stride one, and filter size $3 \times 3$.
We used $2.0$ and $10^{-2}$ respectively for the weight and bias variances except in the attention where $\queryVar = \keyVar = \valueVar = 2$ but $\outVar = 1$.
Further, we used the \texttt{append} type positional encodings (\Cref{sect:positional_encodings}) with the same embedding dimension as \textsc{n\_channels} (\Cref{tab:softmax_replacements_hypers}), thus doubling the embedding dimension of the attention layer inputs.

All computation was done in 32-bit precision, and run on a single NVIDIA V100 GPU with 16Gb of RAM each.

\begin{table}[htbp]
\caption{Hyperparameter values for which results are reported in \Cref{fig:softmax_replacements}.
\textsc{n\_channels} is the number of channels used in the convolutional layers.
The same number was used for $\logitDimension$ and output dimension in the attention layer, but $\headDimension = \valueDimension = \floor{\textsc{n\_channels}}$ to reduce the memory footprint.
The learning rate was fixed throughout the training, relying only on Adam to adapt step size.
Each configuration was run with three random seeds and each of the corresponding results was included in the appropriate column in \Cref{fig:softmax_replacements}.}
\label{tab:softmax_replacements_hypers}
\vskip 0.15in
\begin{center}
\begin{small}
\begin{sc}
\begin{tabular}{lc}
    \toprule
    hyperparameter & values \\
    \midrule
    $\zeta$ (attention) & \{relu, abs, softmax\} \\
    LayerNorm & \{none, per\_head , at\_output\} \\
    \midrule
    n\_channels   &  $\{ 32, 192 \}$  \\
    learning rate  &  $\{ 10^{-3}, 10^{-2} \}$ \\
    \bottomrule
\end{tabular}
\end{sc}
\end{small}
\end{center}
\vskip -0.18in
\end{table}

\subsubsection{Details for \Cref{fig:cifar_pe}}\label{sect:depth_appendix}

We used $1.7562$ and $0.1841$ respectively for the weight and bias variances as in \citep[appendix G.1]{novak2019bayesian} except for the attention output variance $\outVar$ which was set to one.
The convolutional layers were used with the \texttt{SAME} padding, stride one, and filter size $3 \times 3$.
For the vanilla attention kernels, we report the best performance over $\queryStd \keyStd = \{ 10^{-3}, 10^{-1}, 1, 2, 10 \}$ at each depth.
The \texttt{Struct} and \texttt{Residual} were used with the best hyperparameters found during hyperparameter search as reported in \Cref{sect:hyperparam_appendix}.

All computation was done in 32-bit precision, and run on up to 8 NVIDIA V100 GPUs with 16Gb of RAM each.

\subsubsection{Details for \Cref{tab:full_data_results}}\label{sect:full_cifar_appendix}

The best set-up from \Cref{sect:hyperparam_appendix} was used (including the best hyperparameters as stated in \Cref{tab:hypers_stage_two}).

All computation was done in 64-bit precision, and run on up to 8 NVIDIA V100 GPUs with 16Gb of RAM each.

\subsection{IMDb}

\subsubsection{General settings for \Cref{tab:imdb_full_results} and \Cref{tab:imdb_small_results}.}

    The IMDb reviews dataset \citep{maas2011learning} was fetched using TensorFlow datasets\footnote{\url{https://www.tensorflow.org/datasets/catalog/imdb\_reviews}}.

    All sentences were truncated or padded to 1000 tokens using the default settings of \texttt{tf.keras.preprocessing.text.Tokenizer}\footnote{\url{https://www.tensorflow.org/api_docs/python/tf/keras/preprocessing/text/Tokenizer}}. No words were removed from the embedding model dictionary. Tokens were embedded using GloVe embeddings \citep{pennington2014glove} with no other pre-processing. Binary targets were mapped to $\left\{-0.5, 0.5\right\}$ values. Diagonal regularizers for inference were selected based on validation performance among the values of $10^{-7},10^{-6},\dots,1$ multiplied by the mean trace of the kernel.
    
    When applicable, all models used ReLU nonlinearities, \texttt{Struct} (Structured positional encoding, $\genericDimenstion^{-1}$ scaling, \Cref{tab:kernel_overview}) kernel with $\softmax$ being the row-wise softmax function (\Cref{eq:projection_def}), decaying positional embeddings used only for the attention keys and queries, with $\varphi=2.5$ (\Cref{eq:decay_pos_emb}), $\alpha = 0.75$, and $\rho = 1$ (\Cref{eq:kernel_interp_op}). These parameters were selected based on preliminary experiments with CIFAR-10, and fine-tuning on IMDb specifically is an interesting avenue for future research.
    
    All preliminary and validation experiments were carried out in 32-bit precision, while test evaluation (reported in the \Cref{tab:imdb_full_results} and \Cref{tab:imdb_small_results}) were done in 64-bit precision. All experiments were run on machines with up to 8 NVIDIA V100 GPUs with 16Gb of RAM each.

\subsubsection{Details for \Cref{tab:imdb_full_results}}\label{sect:imdb_full_appendix}
    Words were embedded using GloVe 840B.300d embeddings.

    The embedding model was selected on a small-scale experiment (4000 train and 4000 validation sets) among GloVe 6B 50-, 100-, 200-, and 300-dimensional variants, as well as GloVe 840B.300d, and 1024-dimensional ELMO \citep{Peters:2018} embeddings (using TensorFlow Hub\footnote{\url{https://tfhub.dev/google/elmo/3}}). In this preliminary experiment, GloVe 840B.300d, GloVe6B.300d, and ELMO.1024d performed similarly, and GloVe 840B.300d was chosen for the full dataset experiment.
    
    The validation experiment was run on the 25K training set partitioned into a 15K and 10K training and validation sets, with the best models then evaluated on the 25K training and 25K test sets.\footnote{Precisely, subsets of sizes 14880/9920 and 24960/24960 were used to make the dataset be divisible by 8 (the number of GPUs) times 20 (the batch size), which is a technical limitation of the Neural Tangents \citep{novak2020neural} library.}
    
    All layers used weight and bias variances $2$ and $0.01$ respectively, expect for attention outputs and values variances which were set to $1$, and the top linear readout layer with weight variance 1 and no bias.
    
    Three classes of models were considered:
    \begin{enumerate}
        \item \texttt{GAP-only}, doing only global average pooling over inputs followed by the linear readout.
        \item \texttt{GAP-FCN}, in which GAP was followed by 0, 1, or 2 fully connected layers.
        \item \texttt{Struct}, allowing the same models as \texttt{GAP-FCN}, except for necessarily having an attention layer before GAP.
    \end{enumerate}
    Each class could also have an optional \texttt{LayerNorm} layer following GAP. The best model from each class was then evaluated on the test set.

\subsubsection{Details for \Cref{tab:imdb_small_results}}\label{sect:imdb_small_appendix}
    All convolutional layers used the total window (context) size of 9 tokens, stride 1, and \texttt{SAME} (zero) padding.
    
    Experiments were run on a 3200/1600/1600 train/validation/test splits. Four classes of models were considered:
    \begin{enumerate}
        \item \texttt{GAP-only}, identical to the one in \Cref{sect:imdb_full_appendix}.
        \item \texttt{GAP-FCN}, also identical to the one in \Cref{sect:imdb_full_appendix}.
        \item \texttt{CNN-GAP}, allowing the same models as in \texttt{GAP-FCN}, but having GAP preceeded by 0, 1, 2, 4, or 8 CNN layers.
        \item \texttt{Struct}, allowing the same models as in \texttt{CNN-GAP}, but having 1 or 2 attention layers (each optionally followed by \texttt{LayerNorm} over channels) before GAP. If the model also had CNN layers, attention and CNN layers were interleaved, attention layers being located closer to GAP (for example, a model with 8 CNN layers and 2 attention layers would have 7 CNN layers followed by attention, CNN, attention, GAP).
    \end{enumerate}
    All models were allowed to have either ReLU or Erf nonlinearity, with weight and bias variances set to 2 and 0.01 for ReLU, and 1.7562 and 0.1841 for Erf, with the same values used by attention keys and queries layers, but having variance 1 for values and output layers. The readout linear layer had weight variance 1 and no bias.

\begin{table}[htbp]
\caption{Best validation accuracy for various finite attention architectures. The reported numbers are an average over three random seeds.}
\label{tab:finite_attention_best}
\vskip 0.15in
\begin{center}
\begin{small}
\begin{sc}
\begin{tabular}{lccc}
    \toprule
     & softmax & relu & identity \\
    \midrule
    none    &  64.10  & 69.68  & 70.46 \\
    per\_head   &  68.96 & 77.40  & 75.28   \\
    at\_output   & 71.70 & 79.00  & 79.56 \\
    \bottomrule
\end{tabular}
\end{sc}
\end{small}
\end{center}
\vskip -0.18in
\end{table}

\section{Proofs}\label{sect:proofs}

\textbf{Assumptions:} We assume the~input set $\indexSet \subset \R{\natnum \times \inputDimension}$ is \emph{countable}, and
the~usual \emph{Borel product $\sigma$-algebra} on any of the involved countable real spaces (inputs, weights, outputs of intermediary layers).
We also assume that the nonlinearities $\nonlinearity$ and $\softmax$ are \emph{continuous} and (entrywise) \emph{polynomially bounded}, i.e.,
$|\nonlinearity(z)| \leq \sum_{t = 0}^{m} c_t |z|^t$ for some $m \in \natnum$ and $c_0, \ldots, c_m \in \R{}_+$ independent of $z$,\footnote{This is a relaxation of the original `linear envelope' condition $|\nonlinearity(z)| \leq c + m |z|$ for some $c, m \in \R{}_+$, used in \citep{matthews2018gaussian,garriga2019deep} and stated in \Cref{thm:gp_convergence_sqrt}.
We decided to keep the reference to the linear envelope condition in the main text since it is general enough to guarantee convergence for all bounded (e.g., softmax, tanh) and ReLU like (e.g., ReLU, Leaky ReLU, SeLU) nonlinearities, and matches the existing literature with which the readers may already be familiar.
Nevertheless, all the presented proofs are valid for the polynomially bounded nonlinearities, similarly to \citep{yang2019v2}.} and $|\softmax(\logitSymbol)_{a i}| \leq \sum_{t = 0}^m c_0 |\logitSymbol_{a i}|^t$ for some $m \in \natnum$ and $c_0, \ldots , c_M \in \R{}_+$ independent of $\logitSymbol$.
For the NTK proofs, we further assume that $\nabla \nonlinearity$ and $\nabla \softmax$ are continuous bounded almost everywhere, where for ReLU, Leaky ReLU, or similar, we set $\nabla \nonlinearity(0) \coloneqq \lim_{z \to 0^-} \nabla \nonlinearity(z)$ which for ReLU/Leaky ReLU is equal to zero.

As \citet{matthews2018gaussian}, we will need to use the~`infinite width, finite fan-out' construction of the~sequence of NNs.
In particular, we will assume that for any attention layer $\depthSymbol \in [\depth + 1]$ and $\sequenceVariable \in \natnum$, the~output is computed as defined in~\Cref{eq:attention_out}, but we will add a~countably infinite number of additional heads which do not affect the output of the $\sequenceVariable$\textsuperscript{th} network, but are used by wider networks,
i.e., each head $h > \headDimension$ is only used to compute the~outputs by networks with index $m \in \natnum$ such that $\genericDimenstion_{m}^{\depthSymbol,\headSymbol} \geq h$.
Similar construction can be used for fully connected, convolutional, and other types of layers as demonstrated in~\citep{matthews2018gaussian,garriga2019deep}.
Since the~outputs remain unchanged, a~proof of convergence of the~`infinite width, finite fan-out networks' implies convergence of the~standard finite width networks, and thus the~construction should be viewed only as an~analytical tool which will allow us to treat all the~random variables
\begin{equation*}
    \{
        \indexedActivation{\depthSymbol}{\sequenceVariable, ij}{\inputSymbol}, \indexedActivation{\depthSymbol h}{\sequenceVariable, ij}{\inputSymbol}
        \colon
        n, h, i, j \in \natnum, \depthSymbol \in [\depth + 1], x \in \indexSet
    \}
    \, ,
\end{equation*}
as defined on the~same probability space, and thus allows us to make claims about convergence in probability and similar.

Finally, we will be using the \emph{NTK parametrisation} \citep{jacot2018ntk} within the NTK convergence proofs, i.e., we implicitly treat each weight $\W_{ij} \sim \gauss(0, \sigma^2 / \genericDimenstion)$, i.i.d., as $\W = \frac{\sigma}{\sqrt{\genericDimenstion}} \uW$ where only $\uW$ is trainable.
This parametrisation ensures that not only the forward but also the backward pass are properly normalised; under certain conditions, proofs for NTK parametrisation can be extended to standard parametrisation \citep{lee2019wide}.

\textbf{Notation:} For random variables $X, (X_n)_{n \geq 1}$, $X_n \convergeDist X$ denotes convergence in distribution, and $X_n \convergeProb X$ convergence in probability. 
For vectors $x, y \in \R{m}$, $\langle x, y \rangle = \sum_{j=1}^m x_j y_j$ denotes the~usual inner product, and for matrices $A, B \in \R{m \times m}$, $\langle A , B \rangle = \langle \vectorise(A) , \vectorise(B) \rangle = \sum_{i, j = 1}^{m} A_{i j} B_{i j}$ denotes the~Frobenius inner product.
For any $A \in \R{m \times k}$, we will use $A_{i \cdot} \in \R{1 \times k}$ and $A_{\cdot j} \in \R{m \times 1}$ to respectively denote $i$\textsuperscript{th} row and $j$\textsuperscript{th} column of the~matrix.
Later on, we will be working with finite subsets $\projectionIndeces \subset \indexSet \times \natnum$ for which we define the~coordinate projections
\begin{align*}
    \projectionIndeces_{\indexSet} 
    \coloneqq 
    \{ x \in \indexSet \colon \exists i \in \natnum \text{ s.t. } (x,  i) \in \projectionIndeces \}
    \, , \qquad
    \projectionIndeces_{\natnum} 
    \coloneqq 
    \{ i \in \natnum \colon \exists x \in \indexSet \text{ s.t. } (x,  i) \in \projectionIndeces \}
    \, .
\end{align*}
Since \citet{yang2019v2} provides convergence for attention architectures under the $\genericDimenstion^{-1}$ only in the NNGP regime, we use $\scaling \in \{ 1, \frac{1}{2} \}$ to refer to the different $\genericDimenstion^{-\scaling}$ within the NTK proofs.
As explained in \Cref{sect:linear_scaling_limit}, the $\scaling = 1$ limit is not very interesting when $\WQ$ and $\WK$ are initialised independently with zero mean, and thus we will be assuming $\WQ = \WK$ a.s.\ whenever $\scaling = 1$. 
Finally, we use $\OVStd \coloneqq \outStd \valueStd$, $\QKStd \coloneqq \queryStd \keyStd$, $\lesssim$ as `less then up to a universal constant', $\poly(x_1, \ldots , x_M)$ for a polynomial in $x_1, \ldots, x_m \in \R{}$, and the shorthand
\begin{align}
    \tildeLogit{\sequenceVariable, a i}{\depthSymbol \headIndex} (\inputSymbol)
    \coloneqq
    \softmax(\logitSymbol_{\sequenceVariable, a i}^{\depthSymbol \headIndex}(\inputSymbol))
    \, .
\end{align}

\textbf{Proof technique:} The now common way of establishing convergence of various deep NNs architectures is to inductively prove that whenever a preceding layer's outputs converge in distribution to a GP, the outputs of the subsequent layer converge to a GP too under the same assumptions on the nonlinearities and initialisation \citep[e.g.,][]{matthews2018gaussian,lee2018deep,novak2019bayesian,garriga2019deep,yang2019v1,yang2019v2}.
We prove this induction step for NNGP under the $\genericDimenstion^{-1/2}$ scaling in \Cref{thm:gp_convergence_sqrt} (recall that the equivalent result under the $\genericDimenstion^{-1}$ is already known due to \citet{yang2019v2}), and for NTK in \Cref{thm:ntk_convergence}.
As in \citep{matthews2018gaussian}, our technique is based on exchangeability (\Cref{lem:exchangeability}), and we repeatedly make use of \Cref{thm:mean_convergence} which says that if a sequence of real valued random variables $(\genericRV_\sequenceVariable)_{\sequenceVariable \geq 1}$ converges in distribution to $\genericRV$, and the $(X_\sequenceVariable)_{\sequenceVariable \geq 1}$ are uniformly integrable (\Cref{def:ui} below), then $X$ is integrable and $\E [ \genericRV_{\sequenceVariable} ] \to \E [ \genericRV ]$.

\begin{lemma}[Exchangeability]\label{lem:exchangeability}
    For any $\sequenceVariable \in \natnum$, the outputs of an attention layer $\indexedActivation{\depthSymbol}{\sequenceVariable, a i}{\inputSymbol}$ are exchangeable along the $i$ index.
    Furthermore, each of $\indexedActivation{\depthSymbol\headIndex}{\sequenceVariable}{\inputSymbol}, \logitSymbol_{\sequenceVariable}^{\depthSymbol \headIndex}(\inputSymbol), \queryN(\inputSymbol), \keyN(\inputSymbol), \valueN(\inputSymbol)$ is exchangeable over the $\headIndex$ index, and for a fixed $\headIndex$, each of $\indexedActivation{\depthSymbol\headIndex}{\sequenceVariable, a i}{\inputSymbol}, \query{\sequenceVariable, a i}{\depthSymbol \headIndex}(\inputSymbol), \key{\sequenceVariable, a i}{\depthSymbol\headIndex}(\inputSymbol), \val{\sequenceVariable, a i}{\depthSymbol \headIndex}(\inputSymbol)$ is exchangeable over the $i$ index.
\end{lemma}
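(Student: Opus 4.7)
The plan is to leverage the i.i.d.\ initialisation of the weight matrices and propagate exchangeability from the innermost quantities ($\querySymbol$, $\keySymbol$, $\valueSymbol$) outward to the final layer output. The underlying elementary fact I will use repeatedly is: if $\W \in \R{m \times k}$ has i.i.d.\ Gaussian entries and $A$ is any random matrix independent of $\W$ (or deterministic), then for any permutation $\pi$ of $[k]$ the columns of $A\W$ satisfy $(A\W)_{\cdot \pi(1)}, \ldots, (A\W)_{\cdot \pi(k)} \stackrel{d}{=} (A\W)_{\cdot 1}, \ldots, (A\W)_{\cdot k}$, because $\W_{\cdot \pi(1)}, \ldots, \W_{\cdot \pi(k)} \stackrel{d}{=} \W_{\cdot 1}, \ldots, \W_{\cdot k}$ jointly with $A$. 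An analogous statement holds for row permutations, and also for the joint law over multiple input points $\inputSymbol \in \indexSet$, since the randomness in $\W$ is shared across inputs while the permutation acts only on the embedding index.

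First, I would handle the head-index exchangeability. By assumption the triples $(\weightsQ, \weightsK, \weightsV)$ for different $\headIndex \in \natnum$ are drawn i.i.d., and $\weightsO$ is drawn independently of all of them. Hence the joint distribution of $(\weightsQ, \weightsK, \weightsV)_{\headIndex}$ is invariant under any finite permutation of $\headIndex$. Because $\queryN(\inputSymbol), \keyN(\inputSymbol), \valueN(\inputSymbol)$ are each measurable functions of $\indexedActivity{\depthSymbol-1}{\sequenceVariable}{\inputSymbol}$ (shared across heads) and the head-$\headIndex$ weights only, the permutation of $\headIndex$ lifts to the corresponding permutation of these quantities; the same is true for $\logitN(\inputSymbol)$ and $\indexedActivation{\depthSymbol \headIndex}{\sequenceVariable}{\inputSymbol}$, proving the head exchangeability claims.

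Next, I would handle column/embedding exchangeability within a fixed head. Since $\weightQ{ij}$ are i.i.d.\ over both indices, permuting the $i$-index of the columns is a measure-preserving transformation on $\weightsQ$, and $\query{\sequenceVariable, \cdot i}{\depthSymbol \headIndex}(\inputSymbol) = \indexedActivity{\depthSymbol-1}{\sequenceVariable}{\inputSymbol} \weightsQ_{\cdot i}$ is a linear function of this column, with $\indexedActivity{\depthSymbol-1}{\sequenceVariable}{\inputSymbol}$ independent of $\weightsQ$. The claim for $\query{\sequenceVariable,ai}{\depthSymbol\headIndex}$ follows, and identical arguments apply to $\keyN$ and $\valueN$. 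For $\indexedActivation{\depthSymbol \headIndex}{\sequenceVariable, \cdot i}{\inputSymbol} = \softmax(\logitN(\inputSymbol)) \valueN(\inputSymbol)_{\cdot i}$, observe that $\logitN(\inputSymbol)$ depends only on $\weightsQ, \weightsK$ and not on $\weightsV$, so permuting the $i$-index of $\weightsV$ leaves $\softmax(\logitN(\inputSymbol))$ unchanged while inducing the same permutation on the columns of $\valueN$; exchangeability of $\indexedActivation{\depthSymbol \headIndex}{\sequenceVariable, \cdot i}{\inputSymbol}$ over $i$ then follows from the elementary fact applied with $A = \softmax(\logitN(\inputSymbol))$ and $\W = \weightsV$.

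Finally, for the full multi-head output $\indexedActivation{\depthSymbol}{\sequenceVariable}{\inputSymbol} = [\indexedActivation{\depthSymbol 1}{\sequenceVariable}{\inputSymbol}, \ldots, \indexedActivation{\depthSymbol \headDimension}{\sequenceVariable}{\inputSymbol}] \weightsO$, I note that $\weightsO$ has i.i.d.\ entries and is independent of $(\weightsQ, \weightsK, \weightsV)_{\headIndex}$ and hence of the concatenated head outputs. Applying the elementary fact one more time with $A$ being the concatenation of head outputs yields exchangeability of the columns of $\indexedActivation{\depthSymbol}{\sequenceVariable}{\inputSymbol}$ over $i$. The main (minor) obstacle is simply bookkeeping to ensure each permutation acts on only one type of index at a time and that the remaining structure (spatial index, input point, other index types) is preserved; once each permutation is lifted to a measure-preserving map on the appropriate weight space, all the claimed identities follow by measurability.
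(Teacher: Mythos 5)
Your proof is correct and in substance matches the paper's: both arguments identify the previous-layer activations (and, for the within-head claims, the query/key weights) as the shared randomness, and exploit that the per-head weight triples, respectively the individual weight columns, are i.i.d.\ and independent of it. The paper phrases this as conditional i.i.d.-ness of the relevant variables given that shared randomness (invoking de~Finetti, of which only the trivial direction is needed), while you push a measure-preserving permutation of the weight columns/heads through the network map -- an equivalent route that is, if anything, spelled out in slightly more detail for each of the claimed quantities.
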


\begin{definition}[Uniform integrability]\label{def:ui}
    A collection of real valued random variables $\mathcal{C}$ is called uniformly integrable if for any $\varepsilon > 0$ there exists $c_{\varepsilon} \geq 0$ s.t.\ $\E |\genericRV| \indicator{|\genericRV| \geq c_{\varepsilon}} \leq \varepsilon$ for all $\genericRV \in \mathcal{C}$ simultaneously.
\end{definition}

\begin{proof}[Proof of \Cref{lem:exchangeability}]
    Recall that by the de~Finnetti's theorem, it is sufficient to exhibit a set of random variables conditioned on which the set of random variables becomes i.i.d.
    This is is trivial for the columns of $\indexedActivation{\depthSymbol}{\sequenceVariable, a i}{\inputSymbol}$ as we can simply condition on $\{ \indexedActivation{\depthSymbol\headIndex}{\sequenceVariable, a i}{\inputSymbol} \colon \headIndex \in [\headDimension] \}$.
    The remainder of the claims can be obtained by observing that
    \begin{equation*}
        \indexedActivation{\depthSymbol \headIndex}{\sequenceVariable}{\inputSymbol}
        =
        \softmax \biggl(
            \frac{1}{\sqrt{\logitDimension}}
            \indexedActivity{\depthSymbol - 1}{\sequenceVariable}{\inputSymbol}
            \weightsQ
            (\indexedActivity{\depthSymbol - 1}{\sequenceVariable}{\inputSymbol} \weightsK)^\top
        \biggr)
        \indexedActivity{\depthSymbol - 1}{\sequenceVariable}{\inputSymbol}
        \weightsV
    \, ,
    \end{equation*}
    and thus if we condition on $\indexedActivity{\depthSymbol - 1}{\sequenceVariable}{\inputSymbol}$, the variables associated with individual heads are i.i.d.
\end{proof}

\subsection{$\genericDimenstion^{-1/2}$ NNGP convergence proof}\label{sect:nngp_proofs}

\nngpConvergence*

\begin{proof}%
Since we have assumed that the~input set $\indexSet$ is countable, we can use \Cref{lem:fin_dim_marg} to see that all that we need to do to prove \Cref{thm:gp_convergence_sqrt} is to show that every finite dimensional marginal of $\activationsN$ converges to the~corresponding Gaussian limit.
Because the~finite coordinate projections are continuous by definition of the~product topology, the~continuous mapping theorem \citep[theorem~9.3.7]{dudley02} tells us it is sufficient to prove convergence of the~finite dimensional marginals of
\begin{equation}\label{eq:def_channels}
    \{
        \indexedActivation{\depthSymbol}{\sequenceVariable, \cdot j}{\inputSymbol} \colon \inputSymbol \in \indexSet, j \in \natnum
    \}
    \, ,
\end{equation}
as any finite dimensional marginal of $\activationsN$ can be obtained by a~finite coordinate projection.

Focusing on an~arbitrary finite marginal $\projectionIndeces \subset \indexSet \times \natnum$, we follow \citeauthor{matthews2018gaussian}\ and use the~Cram{\' e}r-Wold device \citep[p.~383]{billingsey86} to reduce the~problem to that of establishing convergence of
\begin{equation}\label{eq:projection_def}
    \projectionN
    \coloneqq
    \sum_{(\inputSymbol, i) \in \projectionIndeces}
        \langle 
            \projectionCoefficients^{\inputSymbol, i} , 
            \indexedActivation{\depthSymbol}{\sequenceVariable, \cdot i}{\inputSymbol}
        \rangle
    \, ,
\end{equation}
for any choice of $\{ \projectionCoefficients^{(\inputSymbol, i)} \in \R{\spatialDimension} \colon (x, i) \in \projectionIndeces \} \subset \R{\spatialDimension \times \projectionIndeces}$.
We can rewrite $\projectionN$ as
\begin{align*}
    \projectionN
    &=
    \sum_{\inputSymbol, i \in \projectionIndeces}
        \langle 
            \projectionCoefficients^{\inputSymbol, i} , 
            \indexedActivation{\depthSymbol}{\sequenceVariable, \cdot i}{\inputSymbol}
        \rangle
    =
    \sum_{\inputSymbol, i \in \projectionIndeces}
        \bigl\langle 
            \projectionCoefficients^{\inputSymbol, i} , 
            \bigl[
                \indexedActivation{\depthSymbol 1}{\sequenceVariable}{\inputSymbol},
                \ldots,
                \indexedActivation{\depthSymbol \headDimension}{\sequenceVariable}{\inputSymbol}
            \bigr]
            \weightO{\cdot i}
        \bigr\rangle
    \\
    &=
    \frac{1}{\sqrt{\headDimension}}
    \sum_{\headIndex = 1}^{\headDimension}
    \sum_{(\inputSymbol, i)}
        \bigl\langle
            \projectionCoefficients^{\inputSymbol, i} , 
            \sqrt{\headDimension}
            \indexedActivation{\depthSymbol \headIndex}{\sequenceVariable}{\inputSymbol}
            \weightOGen{\sequenceVariable, \cdot i}{\depthSymbol \headIndex, \outputSymbol}
        \bigr\rangle
    \eqqcolon
    \frac{1}{\sqrt{\headDimension}}
    \sum_{\headIndex = 1}^{\headDimension}
        \summandN
    \, ,
\end{align*}
where we have defined $\weightOGen{\sequenceVariable, \cdot i}{\depthSymbol \headIndex, \outputSymbol} \coloneqq [\weightOGen{\sequenceVariable, (\headIndex \layerDimensionN + 1) i}{\depthSymbol, \outputSymbol}, \ldots, \weightOGen{\sequenceVariable, (\headIndex \layerDimensionN + \layerDimensionN - 1) i}{\depthSymbol, \outputSymbol}] \subset \R{\layerDimensionN}$.

We are now prepared to apply lemma~10 from \citep{matthews2018gaussian} which we restate (with minor modifications) here. 

\begin{lemma}[Adaptation of theorem~2 from \citep{Blum1958}]\label{lemma:eCLT}
For each $\rowIndex \in \natnum$, let $\{ \genericRV_{\rowIndex, \colIndex} \colon \colIndex = 1,2, \ldots \}$ be an infinitely exchangeable sequence with $\E \genericRV_{\rowIndex, 1} = 0$ and $\E \genericRV_{\rowIndex, 1}^2 = \rowVariance$, such that $\lim_{\rowIndex \to \infty} \rowVariance = \limitVariance$ for some $\limitVariance \geq 0$.
Let 
\begin{equation}
    \generalSum_{\rowIndex} 
    \coloneqq
    \frac{1}{\sqrt{\genericDimenstion_\rowIndex}}
    \sum_{\colIndex=1}^{\genericDimenstion_\rowIndex} 
        \genericRV_{\rowIndex, \colIndex} 
    \, , 
\end{equation}
for some sequence $(\genericDimenstion_\rowIndex)_{\rowIndex \geq 1} \subset \natnum$ s.t.\ $\lim_{\rowIndex \to \infty} \genericDimenstion_\rowIndex = \infty$.
Assume:

\begin{enumerate}[\hspace{1em}(a)]
    \item $\E{\genericRV_{\rowIndex, 1} \genericRV_{\rowIndex, 2}} = 0 $
    \item $ \lim_{\rowIndex \to \infty }\E{\genericRV_{\rowIndex, 1}^{2} \genericRV_{\rowIndex, 2}^{2}}  = \limitStd^{4} $
    \item $ \E{|\genericRV_{\rowIndex, 1}|^{3}} = \littleO(\sqrt{\genericDimenstion_\rowIndex}) $
\end{enumerate}

Then $\generalSum_{\rowIndex} \convergeDist Z$, where $Z = 0$ (a.s.) if $\limitVariance = 0$, and $Z \sim \gauss (0, \limitVariance)$ otherwise.
\end{lemma}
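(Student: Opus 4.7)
The plan is to reduce the exchangeable CLT to a conditionally i.i.d.\ central limit theorem via de Finetti's theorem, and then close the argument using characteristic functions. For each $n$, infinite exchangeability of $\{X_{n,j}\}_{j \geq 1}$ supplies a random probability measure $\mu_n$ on $\R{}$ such that, conditional on $\mu_n$, the $X_{n,j}$ are i.i.d.\ with distribution $\mu_n$. I would write $m_n$, $v_n$, and $\rho_n$ for the conditional mean, variance, and third absolute moment, all measurable with respect to the $\sigma$-algebra generated by $\mu_n$.

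First I would translate the hypotheses into statements about these conditional moments via the tower property. Condition (a) gives $\E[m_n^2] = \E[X_{n,1} X_{n,2}] = 0$, so $m_n = 0$ almost surely, and therefore $\E[v_n] = \sigma_n^2 \to \sigma_*^2$. Condition (b) becomes $\E[v_n^2] = \E[X_{n,1}^2 X_{n,2}^2] \to \sigma_*^4$, so $\mathrm{Var}(v_n) \to 0$ and $v_n \to \sigma_*^2$ in $L^2$, hence in probability. Condition (c) translates to $\E[\rho_n] = \E|X_{n,1}|^3 = \littleO(\sqrt{d_n})$, giving $\rho_n/\sqrt{d_n} \to 0$ in $L^1$ and in probability by Markov's inequality.

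The degenerate case $\sigma_*^2 = 0$ is immediate from $\E[S_n^2] = \sigma_n^2 \to 0$. For $\sigma_*^2 > 0$, set $\psi_n(s) \coloneqq \int e^{isx}\,d\mu_n(x)$ for the random conditional characteristic function; conditional independence yields $\E[e^{itS_n}] = \E[\psi_n(t/\sqrt{d_n})^{d_n}]$. Taylor's theorem together with $m_n = 0$ gives the pointwise bound $|\psi_n(s) - 1 + s^2 v_n/2| \leq |s|^3 \rho_n / 6$, so
\[
\psi_n(t/\sqrt{d_n}) = 1 - \tfrac{t^2 v_n}{2 d_n} + R_n, \qquad d_n |R_n| \leq \tfrac{|t|^3 \rho_n}{6 \sqrt{d_n}} \convergeProb 0.
\]
Applying the telescoping bound $|a^{d_n} - b^{d_n}| \leq d_n|a-b|$ for $|a|, |b| \leq 1$ with $a = \psi_n(t/\sqrt{d_n})$ and $b = 1 - t^2 \sigma_*^2/(2 d_n)$, combined with $v_n \convergeProb \sigma_*^2$, gives $\psi_n(t/\sqrt{d_n})^{d_n} \convergeProb e^{-t^2 \sigma_*^2/2}$. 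Since these random variables are bounded by $1$ in modulus, bounded convergence promotes this to $\E[\psi_n(t/\sqrt{d_n})^{d_n}] \to e^{-t^2 \sigma_*^2/2}$, and L\'evy's continuity theorem completes the argument.

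The main obstacle is bridging the \emph{unconditional} moment hypothesis (c) with control of the \emph{random} conditional remainder $R_n(\omega)$: without the in-probability convergence of $\rho_n/\sqrt{d_n}$, the pointwise Taylor estimate on $\psi_n$ would be useless. A secondary subtlety is that one should avoid taking logarithms of the complex-valued $\psi_n(t/\sqrt{d_n})$, since its argument is random and the branch cut is awkward; the elementary telescoping bound above sidesteps this cleanly. Given that the proof of \Cref{thm:gp_convergence_sqrt} applies this lemma to a centred sum of the form $\summandN = \sum_{(\inputSymbol, i)} \langle \projectionCoefficients^{\inputSymbol, i}, \sqrt{\headDimension}\indexedActivation{\depthSymbol \headIndex}{\sequenceVariable}{\inputSymbol} \weightOGen{\sequenceVariable, \cdot i}{\depthSymbol \headIndex, \outputSymbol} \rangle$ which is exchangeable over the head index by \Cref{lem:exchangeability}, the conditional i.i.d.\ representation secured by de Finetti is precisely the abstraction that powers the overall induction.
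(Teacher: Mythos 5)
Your proof is correct. Note that the paper does not actually prove this lemma: it is restated, with minor modifications, from lemma~10 of \citep{matthews2018gaussian}, which in turn adapts theorem~2 of \citep{Blum1958}, so the in-paper ``proof'' is a citation. Your route---de Finetti's representation of each infinitely exchangeable row, translation of the hypotheses into conditional-moment statements ($\E[m_n^2]=\E[X_{n,1}X_{n,2}]=0$ forces $m_n=0$ a.s.; $\E[v_n]=\sigma^2_{n}\to\sigma^2_{*}$ together with $\E[v_n^2]=\E[X_{n,1}^2X_{n,2}^2]\to\sigma^4_{*}$ gives $v_n\convergeProb\sigma^2_{*}$; Markov gives $\rho_n/\sqrt{d_n}\convergeProb 0$), then the conditional characteristic-function expansion with the telescoping bound $|a^{d_n}-b^{d_n}|\le d_n|a-b|$, bounded convergence, and L\'evy continuity---is essentially the classical argument underlying the exchangeable CLT of \citep{Blum1958}, so in substance you are reconstructing the imported proof rather than replacing it, with the benefit of making it self-contained and making explicit exactly where each of conditions (a)--(c) enters. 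All steps check: the Taylor bound is legitimate because the conditional mean vanishes a.s.\ and the conditional third absolute moment is a.s.\ finite, the passage from in-probability convergence of the bounded variables $\psi_n(t/\sqrt{d_n})^{d_n}$ to convergence of their expectations is valid, and the degenerate case follows from $\E[S_n^2]=\sigma^2_n\to 0$ using (a). The only cosmetic point worth stating is that the telescoping inequality needs $|1-t^2\sigma^2_{*}/(2d_n)|\le 1$, which holds for all sufficiently large $n$ at fixed $t$.
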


Substituting $\generalSum_{\rowIndex} = \projectionN$ and $\genericRV_{\rowIndex, \headIndex} = \summandN$, convergence of $\projectionN$ follows from \Cref{lemma:eCLT}:
\begin{itemize}
    \item Exchangeability requirement is satisfied by \Cref{lem:head_exchangeability}.
    \item Zero mean and covariance follow from \Cref{lem:head_mean_corr_zero}.
    \item Convergence of variance is established in \Cref{lem:head_var_convergence}.
    \item Convergence of $\E \lbrack \summandLogit{\sequenceVariable , 1}^2 \summandLogit{\sequenceVariable , 2}^2 \rbrack$ and $\E |\summandN|^3 = \littleO(\sqrt{\layerDimensionN})$ are implied by \Cref{lem:head_all_sqmoments_converge}.
\end{itemize}
Combining the~above with \Cref{lem:inner_prod_converge,lem:logit_dist_convergence} concludes the~proof.
\end{proof}

\vspace{0.5\baselineskip}
\begin{lemma}[Infinite exchangeability]\label{lem:head_exchangeability}
    $\summandN$ are exchangeable over the~index $\headIndex$.
\end{lemma}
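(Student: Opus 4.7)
The plan is to reduce to a de~Finetti-style argument by writing $\summandN$ as a measurable function of per-head random quantities together with a shared input-dependent factor. First, I would recall from \Cref{eq:attention_out} that the output of head $\headIndex$, and hence $\indexedActivation{\depthSymbol \headIndex}{\sequenceVariable}{\inputSymbol}$, depends only on the head-specific weight matrices $\weightsQ, \weightsK, \weightsV$ together with the shared activities $\{\indexedActivity{\depthSymbol - 1}{\sequenceVariable}{\inputSymbol} \colon \inputSymbol \in \projectionIndeces_{\indexSet}\}$, and that the `infinite width, finite fan-out' construction defines analogous matrices for every $\headIndex \in \natnum$. Similarly, the relevant output-projection slices $\weightOGen{\sequenceVariable, \cdot i}{\depthSymbol \headIndex, \outputSymbol}$ for $i \in \projectionIndeces_{\natnum}$ can be identified with disjoint row blocks of a countable family of i.i.d.\ Gaussian variables.

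Next, I would observe that, conditional on $\{\indexedActivity{\depthSymbol - 1}{\sequenceVariable}{\inputSymbol} \colon \inputSymbol \in \projectionIndeces_{\indexSet}\}$, the tuples
\[
    \bigl(\weightsQ,\, \weightsK,\, \weightsV,\, \{\weightOGen{\sequenceVariable, \cdot i}{\depthSymbol \headIndex, \outputSymbol} \colon i \in \projectionIndeces_{\natnum}\}\bigr)
\]
are i.i.d.\ across $\headIndex \in \natnum$, simply because the initialisation draws all of these weights independently from the same Gaussian law and no two heads share any coordinate of $\weightsO$. Since $\summandN$ is a measurable function only of the head-$\headIndex$ copy of this tuple and of the conditioning variables, the sequence $\{\summandN \colon \headIndex \in \natnum\}$ is conditionally i.i.d., hence unconditionally infinitely exchangeable (either by de~Finetti's theorem or directly from the fact that conditional i.i.d.\ implies unconditional exchangeability under any mixing law).

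The only real bookkeeping is verifying that $\weightOGen{\sequenceVariable, \cdot i}{\depthSymbol \headIndex, \outputSymbol}$ and $\weightOGen{\sequenceVariable, \cdot i}{\depthSymbol \headIndex', \outputSymbol}$ are independent for $\headIndex \neq \headIndex'$, which follows immediately from the definition $\weightOGen{\sequenceVariable, \cdot i}{\depthSymbol \headIndex, \outputSymbol} \coloneqq [\weightOGen{\sequenceVariable, (\headIndex \layerDimensionN + 1) i}{\depthSymbol, \outputSymbol}, \ldots, \weightOGen{\sequenceVariable, (\headIndex \layerDimensionN + \layerDimensionN - 1) i}{\depthSymbol, \outputSymbol}]$ given just before the statement, together with the i.i.d.\ Gaussian initialisation of $\weightsO$. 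Beyond this there is no analytic obstacle; the claim is essentially the observation that independent copies of the same stochastic mechanism produce exchangeable outputs once the shared inputs are fixed, and \Cref{lem:exchangeability} already records the analogous exchangeability along $\headIndex$ in slightly different notation.
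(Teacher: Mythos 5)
Your proposal is correct and follows essentially the same route as the paper: condition on the previous-layer activities $\{\indexedActivity{\depthSymbol - 1}{\sequenceVariable}{\inputSymbol} \colon \inputSymbol \in \projectionIndeces_{\indexSet}\}$, note that the head-specific weights (queries, keys, values, and the disjoint row blocks of $\weightsO$) are i.i.d.\ across $\headIndex$, so the $\summandN$ are conditionally i.i.d.\ and hence exchangeable. Your explicit bookkeeping about the non-overlapping output-projection slices is a detail the paper leaves implicit, but the argument is the same.
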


\begin{proof}
    Recall that by the~de~Finetti's theorem, it is sufficient to exhibit a~set of random variables conditioned on which the~$\{ \summandN \colon \headIndex \in \natnum \}$ are i.i.d.
    From \Cref{sect:background}, we have 
    \begin{equation*}
        \indexedActivation{\depthSymbol \headIndex}{\sequenceVariable}{\inputSymbol}
        =
        \softmax \biggl(
            \frac{1}{\sqrt{\logitDimension}}
            \indexedActivity{\depthSymbol - 1}{\sequenceVariable}{\inputSymbol}
            \weightsQ
            (\indexedActivity{\depthSymbol - 1}{\sequenceVariable}{\inputSymbol} \weightsK)^\top
        \biggr)
        \indexedActivity{\depthSymbol - 1}{\sequenceVariable}{\inputSymbol}
        \weightsV
    \, .
    \end{equation*}
    Hence if we condition on $\{ \indexedActivity{\depthSymbol - 1}{\sequenceVariable, \cdot j}{\inputSymbol} \colon j \in [\layerDimension{\depthSymbol - 1}], x \in \projectionIndeces_{\indexSet} \}$, where $\projectionIndeces_{\indexSet} \coloneqq \{ x \in \indexSet \colon \exists i \in \natnum \text{ s.t. } (x,  i) \in \projectionIndeces \}$, it is easy to see that $\{ \summandN \}_{\headIndex \geq 1}$ are exchangeable.   
\end{proof}

\begin{lemma}[Zero mean and covariance]\label{lem:head_mean_corr_zero}
    $\E \summand_{\sequenceVariable , 1} = \E \summand_{\sequenceVariable , 1}  \summand_{\sequenceVariable , 2} = 0$.
\end{lemma}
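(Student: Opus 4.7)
The key observation is that $\summandN$ is \emph{linear} in the output weight slice $\weightOGen{\sequenceVariable, \cdot i}{\depthSymbol \headIndex, \outputSymbol}$ (and more precisely, multilinear in it over $i \in \projectionIndeces_{\natnum}$). Indeed, writing it out,
\begin{equation*}
    \summandN
    =
    \sum_{(\inputSymbol, i) \in \projectionIndeces}
        \sqrt{\headDimension} \,
        (\projectionCoefficients^{\inputSymbol, i})^\top
        \indexedActivation{\depthSymbol \headIndex}{\sequenceVariable}{\inputSymbol}
        \weightOGen{\sequenceVariable, \cdot i}{\depthSymbol \headIndex, \outputSymbol}
    \, ,
\end{equation*}
which is a linear combination of entries of $\weightOGen{\sequenceVariable, \cdot i}{\depthSymbol \headIndex, \outputSymbol}$ with coefficients that depend only on $\indexedActivation{\depthSymbol \headIndex}{\sequenceVariable}{\inputSymbol}$. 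By construction, $\weightOGen{\sequenceVariable, \cdot i}{\depthSymbol \headIndex, \outputSymbol}$ is a slice of $\weightsO$ consisting of i.i.d.\ zero-mean Gaussian entries that is independent of the weights $\weightsQ, \weightsK, \weightsV$ determining $\indexedActivation{\depthSymbol \headIndex}{\sequenceVariable}{\inputSymbol}$, as well as of the layer input $\indexedActivity{\depthSymbol - 1}{\sequenceVariable}{\inputSymbol}$.

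For the mean, the plan is to first verify integrability (so that Fubini applies): polynomial boundedness of $\nonlinearity$ and $\softmax$ together with Gaussianity of all weights gives finite moments of every order for $\summandN$. Then, conditioning on the sigma-algebra generated by $\indexedActivation{\depthSymbol \headIndex}{\sequenceVariable}{\inputSymbol}$, we can pull the (conditionally deterministic) coefficients out of the expectation and obtain
\begin{equation*}
    \E [\summandN \mid \indexedActivation{\depthSymbol \headIndex}{\sequenceVariable}{\inputSymbol}]
    =
    \sum_{(\inputSymbol, i) \in \projectionIndeces}
        \sqrt{\headDimension} \,
        (\projectionCoefficients^{\inputSymbol, i})^\top
        \indexedActivation{\depthSymbol \headIndex}{\sequenceVariable}{\inputSymbol}
        \E [\weightOGen{\sequenceVariable, \cdot i}{\depthSymbol \headIndex, \outputSymbol}]
    =
    0
    \, ,
\end{equation*}
since $\E [\weightOGen{\sequenceVariable, \cdot i}{\depthSymbol \headIndex, \outputSymbol}] = 0$. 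The tower property then yields $\E \summandN = 0$.

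For the covariance $\E \summand_{\sequenceVariable, 1} \summand_{\sequenceVariable, 2} = 0$, the plan is to condition on the layer input $\{ \indexedActivity{\depthSymbol - 1}{\sequenceVariable}{\inputSymbol} \colon \inputSymbol \in \projectionIndeces_{\indexSet} \}$. Given this input, $\summand_{\sequenceVariable, 1}$ is a function only of the head-$1$ internal weights $(\W_\sequenceVariable^{\depthSymbol 1, \querySymbol}, \W_\sequenceVariable^{\depthSymbol 1, \keySymbol}, \W_\sequenceVariable^{\depthSymbol 1, \valueSymbol})$ and the head-$1$ slice of $\weightsO$, whereas $\summand_{\sequenceVariable, 2}$ is a function only of the corresponding head-$2$ parameters; these two parameter sets are mutually independent by the initialisation assumption, and the head-$1$ and head-$2$ row blocks of $\weightsO$ do not overlap. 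Hence, conditionally on the input, the two summands are independent, and the conditional expectation factorises. Applying the preceding mean-zero argument to each factor (with the same conditioning on the head's activation) and invoking the tower property once more then gives $\E \summand_{\sequenceVariable, 1} \summand_{\sequenceVariable, 2} = 0$.

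There is no substantive obstacle: the only minor care needed is the integrability check before invoking Fubini/the tower property, and the bookkeeping to confirm that the head-specific weight slices are indeed disjoint and that each of $\{ \weightsQ, \weightsK, \weightsV, \weightO{}\text{-slice} \}$ is independent across $\headIndex$ given the layer input.
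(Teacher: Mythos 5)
Your proposal is correct and follows essentially the same route as the paper: factor out the zero-mean output-weight slices using their independence from the head activations (with integrability supplied by the moment-propagation argument), and kill the cross term via independence of the head-specific parameters. The paper phrases the second step directly as $\E [ \weightOGen{\sequenceVariable, \cdot i}{\depthSymbol 1, \outputSymbol} (\weightOGen{\sequenceVariable, \cdot i}{\depthSymbol 2, \outputSymbol})^\top] = 0$ rather than via conditional independence given the layer input, but this is only a cosmetic difference.
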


\begin{proof}
    Using $\E \weightOGen{\sequenceVariable, \cdot i}{\depthSymbol 1, \outputSymbol} = 0$, $\E \summand_{\sequenceVariable , 1} = 0$ if $|\! \E \indexedActivation{\depthSymbol 1}{\sequenceVariable, ij}{\inputSymbol} | < \infty$ for all $(i, j) \in [\spatialDimension] \times \natnum$.
    Substituting $\E \indexedActivation{\depthSymbol 1}{\sequenceVariable, ij}{\inputSymbol} = \E \softmax(\logitSymbol_{\sequenceVariable}^{\depthSymbol 1})_{i \cdot} \indexedActivity{\depthSymbol - 1}{\sequenceVariable}{\inputSymbol} \weightV{\cdot j} = 0$ as long as $|\! \E \softmax(\logitSymbol_{\sequenceVariable}^{\depthSymbol 1})_{i a} \indexedActivity{\depthSymbol - 1}{\sequenceVariable, b k}{\inputSymbol}| < \infty$ for any $a, b, k \in [\spatialDimension]$.
    This can be obtained by combining H{\" o}lder's inequality with \Cref{lem:mmnt_propagation}.
    An~analogous argument applies for $\E \summand_{\sequenceVariable , 1}  \summand_{\sequenceVariable , 2} = 0$ since $\E [ \weightOGen{\sequenceVariable, \cdot i}{\depthSymbol 1, \outputSymbol} (\weightOGen{\sequenceVariable, \cdot i}{\depthSymbol 2, \outputSymbol})^\top] = 0$ by assumption.
\end{proof}

\begin{lemma}[Convergence of variance]\label{lem:head_var_convergence}
    $\lim_{\sequenceVariable \to \natnum} \E \summand_{\sequenceVariable , 1}^2 = \limitVariance$.
\end{lemma}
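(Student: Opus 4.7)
The plan is to expand $\summand_{n,1}^2$, integrate out the independent output weights $\weightOGen{n, \cdot i}{\depthSymbol 1, \outputSymbol}$ first (they are independent of everything inside the head), and then reduce the problem to convergence of a single entry-wise covariance of the head outputs, which can be attacked via joint distributional convergence of $(\softmax(\logitN), \valueN)$ combined with uniform integrability.

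Concretely, writing $\summand_{n,1} = \sum_{(\inputSymbol, i) \in \projectionIndeces} \sum_{a=1}^{\spatialDimension} \projectionCoefficients_a^{\inputSymbol, i} \sqrt{\headDimension}\, [f_n^{\depthSymbol 1}(\inputSymbol) \weightOGen{n, \cdot i}{\depthSymbol 1, \outputSymbol}]_a$ and using $\E \weightOGen{n, k i}{\depthSymbol 1, \outputSymbol} \weightOGen{n, k' i'}{\depthSymbol 1, \outputSymbol} = (\OVStd^2 / (\headDimension \layerDimensionN)) \kron_{k=k'} \kron_{i=i'} / \valueVar$ (the extra $\valueVar^{-1}$ absorbs the $\valueStd$ inside $\valueN$), the independence of $\weightMatSymbol^{\outputSymbol}$ from $f_n^{\depthSymbol 1}$ and the exchangeability of channels within a head (\Cref{lem:exchangeability}) collapses the inner sum over $k$ to
\begin{align*}
    \E \summand_{n,1}^2
    =
    \OVVar \!
    \sum_{\substack{(\inputSymbol, i), (\inputSymbol', i') \in \projectionIndeces \\ i = i'}}
    \sum_{a, b=1}^{\spatialDimension} \projectionCoefficients_a^{\inputSymbol, i} \projectionCoefficients_b^{\inputSymbol', i'}
    \E [ f_{n, a1}^{\depthSymbol 1}(\inputSymbol) f_{n, b1}^{\depthSymbol 1}(\inputSymbol') ]
    \, ,
\end{align*}
so it suffices to show $\E [ f_{n, a1}^{\depthSymbol 1}(\inputSymbol) f_{n, b1}^{\depthSymbol 1}(\inputSymbol') ]$ converges to the right limit.

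Next, I would expand $f_{n, a1}^{\depthSymbol 1}(\inputSymbol) = \sum_{k=1}^{\spatialDimension} \tildeLogit{n, ak}{\depthSymbol 1}(\inputSymbol) \val{n, k 1}{\depthSymbol 1}(\inputSymbol)$ and use the induction hypothesis that $\activations{n}{\depthSymbol - 1}$ converges in distribution to a GP, together with independence of $\weightMatSymbol^{\querySymbol}, \weightMatSymbol^{\keySymbol}, \weightMatSymbol^{\valueSymbol}$, to obtain joint convergence in distribution of $(\logitSymbol_n^{\depthSymbol 1}(\inputSymbol), \logitSymbol_n^{\depthSymbol 1}(\inputSymbol'), \valueN(\inputSymbol), \valueN(\inputSymbol'))$ (the logit marginal is exactly the content of the forthcoming \Cref{lem:logit_dist_convergence}, and the value marginal is an ordinary inner-product CLT conditional on $\activations{n}{\depthSymbol - 1}$ as in \Cref{lem:inner_prod_converge}). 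By continuity of $\softmax$ and the continuous mapping theorem, this yields convergence in distribution of the product $\tildeLogit{n, ak}{\depthSymbol 1}(\inputSymbol) \tildeLogit{n, b k'}{\depthSymbol 1}(\inputSymbol') \val{n, k 1}{\depthSymbol 1}(\inputSymbol) \val{n, k' 1}{\depthSymbol 1}(\inputSymbol')$ to the corresponding limiting product.

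To upgrade convergence in distribution to convergence of expectations, I would invoke \Cref{thm:mean_convergence} (the uniform integrability criterion). Uniform integrability of the sequence follows from the polynomial envelope assumption on $\softmax$, the Gaussianity (and hence arbitrary moment bounds) of $\valueN$ conditional on $\activations{n}{\depthSymbol - 1}$, and the moment propagation estimates of \Cref{lem:mmnt_propagation} applied inductively, combined with H\"older's inequality. This gives
\begin{align*}
    \lim_{n \to \infty} \E [ f_{n, a1}^{\depthSymbol 1}(\inputSymbol) f_{n, b1}^{\depthSymbol 1}(\inputSymbol') ]
    =
    \sum_{k, k'=1}^{\spatialDimension} \E [ \tildeLogit{a k}{\depthSymbol 1}(\inputSymbol) \tildeLogit{b k'}{\depthSymbol 1}(\inputSymbol') \val{k 1}{\depthSymbol 1}(\inputSymbol) \val{k' 1}{\depthSymbol 1}(\inputSymbol') ]
    \, ,
\end{align*}
and since in the limit $\valueSymbol^{\depthSymbol 1}$ is independent of $\logitSymbol^{\depthSymbol 1}$ (they depend on independent weight matrices acting on the same GP limit, so conditional on $\activationSymbol^{\depthSymbol - 1}$ they are independent Gaussians, and marginalising preserves independence), the factor splits into $\E [\tildeLogit{a k}{\depthSymbol 1}(\inputSymbol) \tildeLogit{b k'}{\depthSymbol 1}(\inputSymbol')] \cdot \E [\val{k 1}{\depthSymbol 1}(\inputSymbol) \val{k' 1}{\depthSymbol 1}(\inputSymbol')]$, with the value covariance equal to $\valueVar \kron_{k=k'} \kerntildef{k k'}{\depthSymbol}{\inputSymbol}{\inputSymbol'}$. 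Plugging back in yields the limit $\limitVariance$ matching the covariance prescribed by \Cref{eq:sqrt_scaling_kernel}.

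The main obstacle is the uniform integrability step: the softmax is bounded, but $\softmax$ may be polynomial of arbitrary degree and the values $\valueN$ are only asymptotically Gaussian, so obtaining a uniform moment bound of order strictly greater than two requires carefully combining the polynomial envelope with the inductive moment propagation (\Cref{lem:mmnt_propagation}) and the fact that $\sqrt{\layerDimension{\depthSymbol - 1}}^{-1}$ scaling inside $\logitSymbol_n^{\depthSymbol 1}$ and $\valueN$ keeps all moments bounded uniformly in $n$. This is the step that most differs from the exchangeability-based arguments of \citep{matthews2018gaussian} and the one that genuinely uses the multi-head structure, since single-head attention would only provide a scale-mixture limit here.
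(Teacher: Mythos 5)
Your plan follows the paper's proof in its essentials: integrate out the output weights $\weightMatSymbol_{\sequenceVariable}^{\depthSymbol, \outputSymbol}$ to reduce the claim to convergence of $(\layerDimensionN)^{-1}\E [ \indexedActivation{\depthSymbol 1}{\sequenceVariable}{\inputSymbol} \indexedActivation{\depthSymbol 1}{\sequenceVariable}{\inputSymbol'}^\top ]$, establish convergence in distribution of the integrand, and upgrade to convergence of expectations via uniform integrability (H\"older, the polynomial envelope on $\softmax$, \Cref{lem:mmnt_propagation}, \Cref{lem:sup_ui}, \Cref{thm:mean_convergence}). Where you diverge is after this reduction: the paper also integrates out $\weightMatSymbol_{\sequenceVariable}^{\depthSymbol 1, \valueSymbol}$ exactly, leaving $\valueVar \E [ \softmax(\logitSymbol_{\sequenceVariable}^{\depthSymbol 1}(\inputSymbol)) \, \indexedActivity{\depthSymbol-1}{\sequenceVariable}{\inputSymbol}\indexedActivity{\depthSymbol-1}{\sequenceVariable}{\inputSymbol}^\top / \layerDimension{\depthSymbol - 1} \, \softmax(\logitSymbol_{\sequenceVariable}^{\depthSymbol 1}(\inputSymbol))^\top ]$, whose integrand converges by \Cref{lem:inner_prod_converge,lem:logit_dist_convergence,lem:slutsky}, so no independence argument is ever needed. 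You instead keep the values and factorise the limiting expectation by claiming $\valueSymbol^{\depthSymbol 1}$ and $\logitSymbol^{\depthSymbol 1}$ are independent in the limit because "marginalising preserves independence." That step is false in general (conditional independence given $\indexedActivity{\depthSymbol-1}{\sequenceVariable}{\inputSymbol}$ does not survive marginalisation); what rescues it here is that both conditional covariances are functions of $\indexedActivity{\depthSymbol-1}{\sequenceVariable}{\inputSymbol}\indexedActivity{\depthSymbol-1}{\sequenceVariable}{\inputSymbol'}^\top/\layerDimension{\depthSymbol-1}$, which concentrates on a deterministic limit by \Cref{lem:inner_prod_converge}, so the joint limit is a product of independent Gaussians. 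If you take this route, that is the argument you must spell out; the paper's exact integration of the value weights is precisely how it sidesteps it.

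The genuine error is the Kronecker delta in your limiting value covariance. In your double sum the indices $k, k'$ are \emph{spatial}, and all rows of $\valueN(\inputSymbol)$ are built from the same weight column $\weightMatSymbol_{\sequenceVariable, \cdot 1}^{\depthSymbol 1, \valueSymbol}$, so $\E [ \val{\sequenceVariable, k 1}{\depthSymbol 1}(\inputSymbol) \val{\sequenceVariable, k' 1}{\depthSymbol 1}(\inputSymbol') ] = \frac{\valueVar}{\layerDimension{\depthSymbol - 1}} \E \langle \indexedActivity{\depthSymbol - 1}{\sequenceVariable, k \cdot}{\inputSymbol} , \indexedActivity{\depthSymbol - 1}{\sequenceVariable, k' \cdot}{\inputSymbol'} \rangle \to \valueVar \, \kerntildef{k k'}{\depthSymbol}{\inputSymbol}{\inputSymbol'}$ for \emph{every} pair $k, k'$, with no $\kron_{k = k'}$ factor. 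With your delta the limit collapses to $\valueVar \sum_{k} \kerntildef{k k}{\depthSymbol}{\inputSymbol}{\inputSymbol'} \E [ \tildeLogitN{a k}{\depthSymbol 1}{\inputSymbol} \tildeLogitN{b k}{\depthSymbol 1}{\inputSymbol'} ]$, which discards the off-diagonal terms of the double sum in \Cref{eq:sqrt_scaling_kernel} and therefore does not equal $\limitVariance$. Dropping the delta repairs the computation, and the rest of your plan then goes through. (A smaller bookkeeping slip: after integrating out only $\weightMatSymbol_{\sequenceVariable}^{\depthSymbol, \outputSymbol}$ the prefactor should be $\outVar$, not $\OVVar$, since $\valueVar$ already sits inside $\indexedActivation{\depthSymbol 1}{\sequenceVariable}{\inputSymbol}$; this is harmless under the main text's convention that all variances equal one.)
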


\begin{proof}
    Observe that $\E \summand_{\sequenceVariable , 1}^2$ can be written as
    \begin{align*}
        \frac{\outVar}{\layerDimensionN}
        \! \!
        \sum_{(\inputSymbol, i), (\inputSymbol', j)}
        \! \! \! \!
        (\projectionCoefficients^{\inputSymbol, i})^\top
        \E \biggl[
            \indexedActivation{\depthSymbol 1}{\sequenceVariable}{\inputSymbol}
            \varepsilon_{\cdot i}
            (\varepsilon_{\cdot j})^\top
            \indexedActivation{\depthSymbol 1}{\sequenceVariable}{\inputSymbol'}^\top
        \biggr]
        \projectionCoefficients^{\inputSymbol', j}
        =
        \frac{\outVar}{\layerDimensionN}
        \sum_{(\inputSymbol, i)}
        (\projectionCoefficients^{\inputSymbol, i})^\top
        \E \biggl[
            \indexedActivation{\depthSymbol 1}{\sequenceVariable}{\inputSymbol}
            \indexedActivation{\depthSymbol 1}{\sequenceVariable}{\inputSymbol}^\top
        \biggr]
        \projectionCoefficients^{\inputSymbol, i}
        \, ,
    \end{align*}
    and thus it will be sufficient to show that $\E [ \indexedActivation{\depthSymbol 1}{\sequenceVariable}{\inputSymbol}  \indexedActivation{\depthSymbol 1}{\sequenceVariable}{\inputSymbol}^\top ] / \layerDimensionN$ 
    converges to the mean of the weak distributional limit.
    \begin{align*}
        \frac{1}{\layerDimensionN}
        \E \biggl[
            \indexedActivation{\depthSymbol 1}{\sequenceVariable}{\inputSymbol}
            \indexedActivation{\depthSymbol 1}{\sequenceVariable}{\inputSymbol}^\top
        \biggr]
        &=
        \frac{1}{\layerDimensionN}
        \E \biggl[
            \softmax(\logitSymbol_{\sequenceVariable}^{\depthSymbol 1} (\inputSymbol))
            \indexedActivity{\depthSymbol - 1}{\sequenceVariable}{\inputSymbol}
            \weightMatSymbol_{\sequenceVariable}^{\depthSymbol 1 , \valueSymbol}
            (\weightMatSymbol_{\sequenceVariable}^{\depthSymbol 1 , \valueSymbol})^\top
            \indexedActivity{\depthSymbol - 1}{\sequenceVariable}{\inputSymbol}^\top
            \softmax(\logitSymbol_{\sequenceVariable}^{\depthSymbol 1} (\inputSymbol))^\top
        \biggr]
        \\
        &=
        \valueVar
        \E \biggl[
            \softmax(\logitSymbol_{\sequenceVariable}^{\depthSymbol 1} (\inputSymbol))
            \frac{
                \indexedActivity{\depthSymbol - 1}{\sequenceVariable}{\inputSymbol}
                \indexedActivity{\depthSymbol - 1}{\sequenceVariable}{\inputSymbol}^\top
            }{\layerDimension{\depthSymbol - 1}}
            \softmax(\logitSymbol_{\sequenceVariable}^{\depthSymbol 1} (\inputSymbol))^\top
        \biggr]
        \, ,
    \end{align*}
    suggests the~desired result could be obtained by application of \Cref{thm:mean_convergence} which requires that the~integrands converge in distribution to the~relevant limit, and that their collection is uniformly integrable.
    Combination of the continuous mapping theorem and \Cref{lem:inner_prod_converge,lem:logit_dist_convergence,lem:slutsky} yields convergence in distribution; application of the H{\" o}lder's inequality, the polynomial bound on $\softmax$, and \Cref{lem:mmnt_propagation} yields uniform integrability by \Cref{lem:sup_ui}, concluding the proof.
\end{proof}

\begin{lemma}\label{lem:head_all_sqmoments_converge}
    For any $\headIndex, \headIndex' \in \natnum$, $\E \lbrack  \summand_{\sequenceVariable , \headIndex}^2 \summand_{\sequenceVariable , \headIndex'}^2 \rbrack$ 
    to the mean of the weak limit of $\{ \summand_{\sequenceVariable , \headIndex}^2 \summand_{\sequenceVariable , \headIndex'}^2 \}_{\sequenceVariable \geq 1}$'s distributions.
\end{lemma}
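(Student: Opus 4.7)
My plan is to follow the same template as the proof of \Cref{lem:head_var_convergence}: I would reduce the claim to an application of \Cref{thm:mean_convergence} to the sequence $Y_{\sequenceVariable} \coloneqq \summand_{\sequenceVariable, \headIndex}^{2}\,\summand_{\sequenceVariable, \headIndex'}^{2}$. For this I need (i) weak convergence $Y_{\sequenceVariable} \convergeDist Y$ to some limit $Y$, and (ii) uniform integrability of $\{Y_{\sequenceVariable}\}_{\sequenceVariable \geq 1}$. The same uniform moment bound obtained in (ii) will additionally yield $\E|\summand_{\sequenceVariable, 1}|^{3} = \littleO(\sqrt{\layerDimensionN})$ for free, since a bounded third moment divided by $\sqrt{\layerDimensionN} \to \infty$ tends to zero.

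For the distributional step, I would chain the inductive hypothesis $\activations{\sequenceVariable}{\depthSymbol - 1} \convergeDist \activationSymbol^{\depthSymbol - 1} \sim \gp(0, \kernel^{\depthSymbol - 1})$ with \Cref{lem:inner_prod_converge,lem:logit_dist_convergence} and Slutsky's theorem to obtain joint convergence in distribution of the activities, logits, and softmaxed logits across any finite set of inputs and heads. Because the rows $\weightOGen{\sequenceVariable, \cdot i}{\depthSymbol \headIndex, \outputSymbol}$ are i.i.d.\ Gaussian across $\headIndex$ and independent of everything in the preceding layer, this extends to joint convergence of $(\summand_{\sequenceVariable, \headIndex}, \summand_{\sequenceVariable, \headIndex'})$; the continuous mapping theorem applied to $(u, v) \mapsto u^{2} v^{2}$ then gives $Y_{\sequenceVariable} \convergeDist Y$.

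For uniform integrability, I plan to invoke \Cref{lem:sup_ui}, so the task reduces to exhibiting $\epsilon > 0$ with $\sup_{\sequenceVariable} \E|Y_{\sequenceVariable}|^{1 + \epsilon} < \infty$, which by Cauchy--Schwarz reduces further to $\sup_{\sequenceVariable} \E|\summand_{\sequenceVariable, \headIndex}|^{4 + \delta} < \infty$ for some $\delta > 0$. Conditional on $\indexedActivation{\depthSymbol \headIndex}{\sequenceVariable}{\inputSymbol}$, each $\summand_{\sequenceVariable, \headIndex}$ is a finite linear combination of Gaussian vectors $\sqrt{\headDimension}\,\weightOGen{\sequenceVariable, \cdot i}{\depthSymbol \headIndex, \outputSymbol}$ with entrywise variance $\outVar / \layerDimensionN$, so standard Gaussian moment formulas bound its conditional $(4 + \delta)$-th moment by a polynomial in the entries of $\indexedActivation{\depthSymbol \headIndex}{\sequenceVariable}{\inputSymbol}\,\indexedActivation{\depthSymbol \headIndex}{\sequenceVariable}{\inputSymbol}^{\top} / \layerDimensionN$. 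Taking expectation reduces the task to bounding arbitrary polynomial moments of entries of $\indexedActivation{\depthSymbol \headIndex}{\sequenceVariable}{\inputSymbol} = \softmax(\logitSymbol_{\sequenceVariable}^{\depthSymbol \headIndex}(\inputSymbol))\,\indexedActivity{\depthSymbol - 1}{\sequenceVariable}{\inputSymbol}\,\weightsV$, which I would handle using the polynomial growth bounds on $\softmax$ and $\nonlinearity$, independence of $\weightsV$ from the preceding layer, H\"older's inequality, and \Cref{lem:mmnt_propagation} applied to $\indexedActivity{\depthSymbol - 1}{\sequenceVariable}{\inputSymbol}$.

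The main obstacle will be this last uniform higher-moment bound. Because $\softmax$ is only polynomially (not linearly) bounded, entries of $\indexedActivation{\depthSymbol \headIndex}{\sequenceVariable}{\inputSymbol}$ couple large moments of softmaxed logits (which are themselves normalised inner products of Gaussian query/key features) with large moments of the value projection, and the H\"older splitting must be arranged so that only moments already controlled by \Cref{lem:mmnt_propagation} appear on the right-hand side. I also need to confirm that the $\layerDimensionN^{-1}$ normalisation inside the logit inner product contributes no unbounded prefactor as $\sequenceVariable \to \infty$, which should follow from \Cref{lem:inner_prod_converge} but requires care in the $\scaling = 1/2$ regime.
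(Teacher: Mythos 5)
Your strategy is sound and reaches the correct conclusion, but it is organised differently from the paper's proof. The paper never treats $Y_\sequenceVariable \coloneqq \summand_{\sequenceVariable,\headIndex}^2\summand_{\sequenceVariable,\headIndex'}^2$ as a single random variable to which \Cref{thm:mean_convergence} is applied: it first integrates out the Gaussian output weights exactly, which collapses $\E[\summand_{\sequenceVariable,\headIndex}^2\summand_{\sequenceVariable,\headIndex'}^2]$ into a finite weighted sum of fourth moments of head-output entries of the form $\E[ f^{\depthSymbol\headIndex}_{a1}(s) f^{\depthSymbol\headIndex}_{b1}(t) f^{\depthSymbol\headIndex'}_{c2}(u) f^{\depthSymbol\headIndex'}_{d2}(v)]$ (after discarding an $\mathcal{O}(1/\layerDimensionN)$ diagonal term), and then proves convergence of each such expectation by expressing the integrand through softmaxed logits and normalised inner products and invoking \Cref{lem:inner_prod_converge,lem:logit_dist_convergence}, the continuous mapping theorem, \Cref{lem:slutsky}, and \Cref{thm:mean_convergence} with uniform integrability from H\"older plus \Cref{lem:mmnt_propagation}. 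Your route keeps the output weights in play, establishes joint weak convergence of $(\summand_{\sequenceVariable,\headIndex},\summand_{\sequenceVariable,\headIndex'})$ together with a uniform $(4+\delta)$-moment bound, and applies \Cref{thm:mean_convergence} once at the top level; the toolkit (exchangeability, H\"older, \Cref{lem:mmnt_propagation,lem:inner_prod_converge,lem:logit_dist_convergence,lem:slutsky,lem:sup_ui}) is the same, and your version has the pleasant by-product of yielding $\E|\summand_{\sequenceVariable,1}|^3 = \littleO(\sqrt{\layerDimensionN})$ directly from the moment bound.

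One step needs more care than your one-line justification: $\summand_{\sequenceVariable,\headIndex}$ is an inner product whose dimension grows with $\sequenceVariable$, so joint convergence of $(\summand_{\sequenceVariable,\headIndex},\summand_{\sequenceVariable,\headIndex'})$ does not follow from the continuous mapping theorem applied to finitely many convergent coordinates. You need the conditional-Gaussian device you already use for the moment bound: conditionally on the head outputs, the pair is a centred Gaussian vector, independent across heads (distinct blocks of $\weightsO$ are independent), whose random variances are quadratic forms in $\indexedActivation{\depthSymbol \headIndex}{\sequenceVariable}{\inputSymbol}\indexedActivation{\depthSymbol \headIndex}{\sequenceVariable}{\inputSymbol'}^\top/\layerDimensionN$; these converge jointly in distribution by \Cref{lem:inner_prod_converge,lem:logit_dist_convergence} and \Cref{lem:slutsky}, and joint convergence of the pair then follows from the conditional characteristic function and bounded convergence, exactly the place where the paper instead sidesteps the issue by integrating the output weights out first. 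With that insertion your plan goes through, and the uniform-integrability part is as you describe, since the required moments of head outputs and softmaxed logits are supplied directly by \Cref{lem:mmnt_propagation}.
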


\begin{proof}[Proof of \Cref{lem:head_all_sqmoments_converge}]
    Defining $\tildeActivation{\headIndex}{\sequenceVariable, \cdot i}{\inputSymbol} \coloneqq \sqrt{\headDimension} \indexedActivation{\depthSymbol \headIndex}{\sequenceVariable}{\inputSymbol} \weightOGen{\sequenceVariable, \cdot i}{\depthSymbol \headIndex, \outputSymbol}$, we observe $\E \lbrack  \summand_{\sequenceVariable , \headIndex}^2 \summand_{\sequenceVariable , \headIndex'}^2 \rbrack$ equals
    \begin{align*}
        \sum_{\substack{(\inputSymbol_1, i_1) \\ (\inputSymbol_2, i_2)}}
        \sum_{\substack{(\inputSymbol_1', j_1) \\ (\inputSymbol_2', j_2)}}
        (\projectionCoefficients^{\inputSymbol_1, i_1})^\top
        \E \biggl[
            \tildeActivation{\headIndex}{\sequenceVariable, \cdot i_1}{\inputSymbol_1}
            \tildeActivation{\headIndex}{\sequenceVariable, \cdot i_2}{\inputSymbol_2}
            \projectionCoefficients^{\inputSymbol_2, i_2}
            (\projectionCoefficients^{\inputSymbol_1', j_1})^\top
            \tildeActivation{\headIndex'}{\sequenceVariable, \cdot j_1}{\inputSymbol_1'} 
            \tildeActivation{\headIndex'}{\sequenceVariable, \cdot j_2}{\inputSymbol_2'} 
        \biggr]
        \projectionCoefficients^{\inputSymbol_2', j_2}
        \, ,
    \end{align*}
    which means that the~expectation can be evaluated as a~weighted sum of terms of the~form
    \begin{equation*}
        \E \bigl[
            \tildeActivation{\headIndex}{\sequenceVariable, a i}{s}
            \tildeActivation{\headIndex}{\sequenceVariable, b j}{t}
            \tildeActivation{\headIndex'}{\sequenceVariable, c k}{u} 
            \tildeActivation{\headIndex'}{\sequenceVariable, d l}{v} 
        \bigr]
        \, , 
    \end{equation*}
    where $a, b, c, d \in [\spatialDimension]$, $i, j, k, l \in \projectionIndeces_{\mathcal{I}}$, and $s, t, u, v \in \projectionIndeces_{\indexSet}$.
    We therefore only need to show convergence of these expectations.
    Substituting:
    \begin{align*}
        \E \bigl[
            \tildeActivation{\headIndex}{\sequenceVariable, a i}{s}
            &
            \tildeActivation{\headIndex}{\sequenceVariable, b j}{t}
            \tildeActivation{\headIndex'}{\sequenceVariable, c k}{u} 
            \tildeActivation{\headIndex'}{\sequenceVariable, d l}{v} 
        \bigr]
        \\
        &= 
        \biggl(\frac{\outVar}{\layerDimensionN}\biggr)^2
        \E \biggl[
            \indexedActivation{\depthSymbol \headIndex}{\sequenceVariable, a \cdot}{s} \varepsilon_{\cdot i}^{\headIndex}
            (\varepsilon_{\cdot j}^{\headIndex})^\top \indexedActivation{\depthSymbol \headIndex}{\sequenceVariable, b \cdot}{t}^\top
            \indexedActivation{\depthSymbol \headIndex'}{\sequenceVariable, c \cdot}{u} \varepsilon_{\cdot k}^{\headIndex'}
            (\varepsilon_{\cdot l}^{\headIndex'})^\top \indexedActivation{\depthSymbol \headIndex'}{\sequenceVariable, d \cdot}{v}^\top
        \biggr]
        \\
        &=
        \biggl(\frac{\outVar}{\layerDimensionN}\biggr)^2
        \E \biggl[
            \indexedActivation{\depthSymbol \headIndex}{\sequenceVariable, a \cdot}{s} 
            \indexedActivation{\depthSymbol \headIndex}{\sequenceVariable, b \cdot}{t}^\top
            \indexedActivation{\depthSymbol \headIndex'}{\sequenceVariable, c \cdot}{u} 
            \indexedActivation{\depthSymbol \headIndex'}{\sequenceVariable, d \cdot}{v}^\top
        \biggr]
        \delta_{i = j} \delta_{k = l}
        \, , 
    \end{align*}
    where $\varepsilon_{i}^\headIndex$ are i.i.d.\ standard normal random variables,
    and re-purposing the~$i, j$ indices, we have
    \begin{align*}
        &\frac{1}{(\layerDimensionN)^2}
        \E \biggl[
            \indexedActivation{\depthSymbol \headIndex}{\sequenceVariable, a \cdot}{s} 
            \indexedActivation{\depthSymbol \headIndex}{\sequenceVariable, b \cdot}{t}^\top
            \indexedActivation{\depthSymbol \headIndex'}{\sequenceVariable, c \cdot}{u} 
            \indexedActivation{\depthSymbol \headIndex'}{\sequenceVariable, d \cdot}{v}^\top
        \biggr]
        =
        \frac{1}{(\layerDimensionN)^2}
        \sum_{i, j = 1}^{\layerDimensionN}
            \E \biggl[
                \indexedActivation{\depthSymbol \headIndex}{\sequenceVariable, a i}{s} 
                \indexedActivation{\depthSymbol \headIndex}{\sequenceVariable, b i}{t}
                \indexedActivation{\depthSymbol \headIndex'}{\sequenceVariable, c j}{u} 
                \indexedActivation{\depthSymbol \headIndex'}{\sequenceVariable, d j}{v}
            \biggr]
        \\
        &=
        \frac{1}{\layerDimensionN}
        \E \biggl[
            \indexedActivation{\depthSymbol \headIndex}{\sequenceVariable, a 1}{s} 
            \indexedActivation{\depthSymbol \headIndex}{\sequenceVariable, b 1}{t}
            \indexedActivation{\depthSymbol \headIndex'}{\sequenceVariable, c 1}{u} 
            \indexedActivation{\depthSymbol \headIndex'}{\sequenceVariable, d 1}{v}
        \biggr]
        +
        \frac{\layerDimensionN - 1}{\layerDimensionN}
        \E \biggl[
            \indexedActivation{\depthSymbol \headIndex}{\sequenceVariable, a 1}{s} 
            \indexedActivation{\depthSymbol \headIndex}{\sequenceVariable, b 1}{t}
            \indexedActivation{\depthSymbol \headIndex'}{\sequenceVariable, c 2}{u} 
            \indexedActivation{\depthSymbol \headIndex'}{\sequenceVariable, d 2}{v}
        \biggr]
        \, .
    \end{align*}
    Note that we can bound the integrands by a universal constant (\Cref{lem:mmnt_propagation}), and thus we can focus only on the latter term on the r.h.s.
    We can thus turn to
    \begin{align*}%
        &\E \biggl[
            \indexedActivation{\depthSymbol \headIndex}{\sequenceVariable, a 1}{s} 
            \indexedActivation{\depthSymbol \headIndex}{\sequenceVariable, b 1}{t}
            \indexedActivation{\depthSymbol \headIndex'}{\sequenceVariable, c 2}{u} 
            \indexedActivation{\depthSymbol \headIndex'}{\sequenceVariable, d 2}{v}
        \biggr]
        \\
        &=
        \valueStd^4
        \E \biggl[
            \softmax(\logitSymbol_{\sequenceVariable}^{\depthSymbol \headIndex} (s))_{a \cdot} 
            \frac{
                \indexedActivity{\depthSymbol - 1}{\sequenceVariable}{s}
                \indexedActivity{\depthSymbol - 1}{\sequenceVariable}{t}^\top
            }{
                \layerDimension{\depthSymbol - 1}
            }
            \softmax(\logitSymbol_{\sequenceVariable}^{\depthSymbol \headIndex} (s))_{b \cdot}^\top
            \softmax(\logitSymbol_{\sequenceVariable}^{\depthSymbol \headIndex'} (u))_{c \cdot}
            \frac{
                \indexedActivity{\depthSymbol - 1}{\sequenceVariable}{u}
                \indexedActivity{\depthSymbol - 1}{\sequenceVariable}{v}^\top
            }{
                \layerDimension{\depthSymbol - 1}
            }
            \softmax(\logitSymbol_{\sequenceVariable}^{\depthSymbol \headIndex'} (v))_{d \cdot}^\top
        \biggr]
        \nonumber
        \, .
    \end{align*}
    Observe that by \Cref{lem:inner_prod_converge}, 
    \begin{equation*}
            \left(
                \frac{
                    \indexedActivity{\depthSymbol - 1}{\sequenceVariable}{s}
                    \indexedActivity{\depthSymbol - 1}{\sequenceVariable}{t}^\top
                }{
                    \layerDimension{\depthSymbol - 1}
                }
                \, , \,
            \frac{
                \indexedActivity{\depthSymbol - 1}{\sequenceVariable}{u}
                \indexedActivity{\depthSymbol - 1}{\sequenceVariable}{v}^\top
            }{
                \layerDimension{\depthSymbol - 1}
            }
            \right)
            \convergeProb
            (
                \kerntildef{}{\depthSymbol}{s}{t}
                \, , \,
                \kerntildef{}{\depthSymbol}{u}{v}
            )
            \, ,
    \end{equation*}
    and by \Cref{lem:logit_dist_convergence} and the~continuous mapping theorem 
    $$ \softmax(\logitSymbol_{\sequenceVariable}^{\depthSymbol \headIndex} (s))_{a \cdot} \softmax(\logitSymbol_{\sequenceVariable}^{\depthSymbol \headIndex} (s))_{b \cdot} \softmax(\logitSymbol_{\sequenceVariable}^{\depthSymbol \headIndex'} (u))_{c \cdot} \softmax(\logitSymbol_{\sequenceVariable}^{\depthSymbol \headIndex'} (v))_{d \cdot}  \, ,$$
    converges in distribution.
    By \Cref{lem:slutsky}, this means that the~integrand converges in distribution.
    Finally, to obtain the convergence of the expectation, we apply \Cref{thm:mean_convergence} where the required uniform integrability can be obtained by applying H{\" o}lder's inequality and \Cref{lem:mmnt_propagation}.
\end{proof}

\subsubsection{Convergence of $\logitN$}

\begin{lemma}\label{lem:logit_dist_convergence}
    Let the~assumptions of \Cref{thm:gp_convergence_sqrt} hold.
    Then 
    $\logitsN \coloneqq \{ \logitN(\inputSymbol) \colon x \in \indexSet , \headIndex \in \natnum \}$  %
    converges in distribution to a~centred GP with covariance as described in \Cref{eq:logit_cov}.
\end{lemma}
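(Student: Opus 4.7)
The plan is to mirror the proof of \Cref{thm:gp_convergence_sqrt}. Since $\indexSet$ and $\natnum$ are countable, \Cref{lem:fin_dim_marg} reduces the claim to convergence of every finite-dimensional marginal of $\logitsN$, and the Cram\'er--Wold device further reduces this to showing that for any finite $\projectionIndeces \subset \indexSet \times [\spatialDimension]^2 \times \natnum$ and coefficients $\{\projectionCoefficientsLogit^{\inputSymbol a b \headIndex} \in \R{}\}$, the scalar
\begin{equation*}
\projectionLogit
\coloneqq \sum_{(\inputSymbol, a, b, \headIndex) \in \projectionIndeces} \projectionCoefficientsLogit^{\inputSymbol a b \headIndex}\, \logitSymbol^{\depthSymbol \headIndex}_{\sequenceVariable, ab}(\inputSymbol)
= \frac{1}{\sqrt{\logitDimension}} \sum_{k=1}^{\logitDimension} \summandLogit{\sequenceVariable, k}
\end{equation*}
converges to a centred Gaussian whose variance matches the covariance dictated by \Cref{eq:logit_cov}, where
\begin{equation*}
\summandLogit{\sequenceVariable, k} \coloneqq \sum_{(\inputSymbol, a, b, \headIndex) \in \projectionIndeces} \projectionCoefficientsLogit^{\inputSymbol a b \headIndex}\, \query{\sequenceVariable, a k}{\depthSymbol \headIndex}(\inputSymbol)\, \key{\sequenceVariable, b k}{\depthSymbol \headIndex}(\inputSymbol).
\end{equation*}

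I would then apply the exchangeable CLT (\Cref{lemma:eCLT}) to $\{\summandLogit{\sequenceVariable, k}\}_{k \geq 1}$, verifying its hypotheses in direct parallel to \Cref{lem:head_exchangeability,lem:head_mean_corr_zero,lem:head_var_convergence,lem:head_all_sqmoments_converge}. Exchangeability follows from de Finetti's theorem: conditional on the activities $\{\indexedActivity{\depthSymbol-1}{\sequenceVariable}{\inputSymbol}\colon \inputSymbol \in \projectionIndeces_\indexSet\}$, the columns $\weightQ{\cdot k}, \weightK{\cdot k}$ are i.i.d.\ over $k$, rendering $\summandLogit{\sequenceVariable, k}$ conditionally i.i.d. Zero mean and zero within-row correlation ($\E \summandLogit{\sequenceVariable, 1} = \E[\summandLogit{\sequenceVariable, 1} \summandLogit{\sequenceVariable, 2}] = 0$) follow from independence and zero-mean-ness of the query and key weight columns combined with the Gaussian Isserlis identity; finiteness of the relevant absolute moments comes from \Cref{lem:mmnt_propagation} and H\"older's inequality.

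For the variance limit, expanding $\E \summandLogit{\sequenceVariable,1}^2$ and using (i) independence of weights across heads to produce a $\kron_{\headIndex=\headIndex'}$ factor, and (ii) independence of $\weightQ{}$ and $\weightK{}$ within a head to factor the remaining four-fold Gaussian expectation, reduces the computation to a weighted sum of products of two normalised inner products of activity rows. By \Cref{lem:inner_prod_converge} these inner products converge in probability to entries of $\kerntilde^\depthSymbol$, and uniform integrability from \Cref{lem:mmnt_propagation} (through \Cref{lem:sup_ui}) together with \Cref{thm:mean_convergence} passes to the limit, yielding precisely the variance dictated by \Cref{eq:logit_cov}. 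The same strategy applied to $\E[\summandLogit{\sequenceVariable,1}^2 \summandLogit{\sequenceVariable,2}^2]$ splits contributions into weight-column-diagonal ($k = k'$) and off-diagonal parts: the diagonal part is of order $\layerDimension{\depthSymbol-1}^{-1}$ and vanishes, while the off-diagonal part factorises as $(\sigma_*^2)^2$ via \Cref{lem:slutsky} and the inner-product convergence already established. The third-moment bound $\E|\summandLogit{\sequenceVariable,1}|^3 = \littleO(\sqrt{\logitDimension})$ is immediate from a uniform upper bound on $\E|\summandLogit{\sequenceVariable,1}|^3$ (again via \Cref{lem:mmnt_propagation}) combined with $\logitDimension \to \infty$.

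The main obstacle will be the bookkeeping in these variance and fourth-moment computations: one must track how the double summation over $(\inputSymbol, a, b, \headIndex)$ and $(\inputSymbol', a', b', \headIndex')$ interacts with the three independent sources of cancellation---independence across heads, independence between $\weightQ{}$ and $\weightK{}$, and independence across weight columns $k$---so as to produce the factored Kronecker-and-product structure $\kron_{\headIndex=\headIndex'}\, \kerntildef{a b}{\depthSymbol}{\inputSymbol}{\inputSymbol'}\, \kerntildef{i j}{\depthSymbol}{\inputSymbol}{\inputSymbol'}$ appearing in \Cref{eq:logit_cov}. Beyond this accounting, the argument is essentially routine once the inductive GP hypothesis on $\activations{\sequenceVariable}{\depthSymbol-1}$ is in force, and re-uses almost verbatim the ingredients of the proofs of \Cref{lem:head_var_convergence} and \Cref{lem:head_all_sqmoments_converge}.
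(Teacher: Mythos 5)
Your proposal follows essentially the same route as the paper's proof: \Cref{lem:fin_dim_marg} plus the Cram\'er--Wold device, a decomposition of the projection into an average over the query/key embedding index, and then \Cref{lemma:eCLT} with exchangeability obtained by conditioning on $\indexedActivity{\depthSymbol-1}{\sequenceVariable}{\cdot}$, zero mean/cross-correlation from the independent zero-mean weight columns, and the variance, fourth-moment, and third-moment conditions verified exactly as in \Cref{lem:logit_mean_corr_zero,cor:logit_var_convergence,lem:logit_square_moments,lem:logit_third_moments} via \Cref{lem:inner_prod_converge,lem:mmnt_propagation,lem:sup_ui,lem:slutsky} and \Cref{thm:mean_convergence}. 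The only cosmetic differences are that you write the Frobenius pairing out in scalar coordinates and phrase the fourth-moment bookkeeping as a diagonal/off-diagonal split over hidden indices, which is equivalent to the paper's treatment.
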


\begin{proof}
    Using \Cref{lem:fin_dim_marg} and the~Cram{\' e}r Wold device \citep[p.~383]{billingsey86}, we can again restrict our attention to one~dimensional projections of finite dimensional marginals of $\logitsN$
    \begin{align*}
        \projectionLogit
        &\coloneqq
        \sum_{(\inputSymbol, \headIndex) \in \projectionIndeces}
            \langle
                \projectionCoefficientsLogit^{\inputSymbol, \headIndex} ,
                \logitN(\inputSymbol)
            \rangle_F
        =
        \sum_{(\inputSymbol, \headIndex) \in \projectionIndeces}
            \bigl\langle
                \projectionCoefficientsLogit^{\inputSymbol, \headIndex} ,
                \frac{1}{\sqrt{\layerDimensionN}}
                \sum_{j = 1}^{\layerDimensionN}
                    \query{\sequenceVariable, \cdot j}{\depthSymbol \headIndex}(\inputSymbol)
                    (\key{\sequenceVariable, \cdot j}{\depthSymbol \headIndex}(\inputSymbol))^\top
            \bigr\rangle_F
        \\
        &=
        \frac{1}{\sqrt{\layerDimensionN}}
        \sum_{j = 1}^{\layerDimensionN}
        \underbrace{
            \sum_{(\inputSymbol, \headIndex) \in \projectionIndeces}
                \bigl\langle
                    \projectionCoefficientsLogit^{\inputSymbol, \headIndex} ,
                    \query{\sequenceVariable, \cdot j}{\depthSymbol \headIndex}(\inputSymbol)
                    (\key{\sequenceVariable, \cdot j}{\depthSymbol \headIndex}(\inputSymbol))^\top
                \bigr\rangle_F
        }_{\eqqcolon \summandLogitN}
        \, ,
    \end{align*}
    The~above formula suggests the~desired result follows from \Cref{lemma:eCLT}:
    \begin{itemize}
        \item Exchangeability requirement is satisfied by \Cref{lem:logit_exchangeability}.
        \item Zero mean and covariance follow from \Cref{lem:logit_mean_corr_zero}.
        \item Convergence of variance is established in \Cref{cor:logit_var_convergence}.
        \item Convergence of $\E \lbrack \summandLogit{\sequenceVariable , 1}^2 \summandLogit{\sequenceVariable , 2}^2 \rbrack$ is proved in \Cref{lem:logit_square_moments}.
        \item $\littleO(\layerDimensionN)$ growth of the~third absolute moments is implied by \Cref{lem:logit_third_moments}.
        \qedhere
    \end{itemize}
\end{proof}

\begin{lemma}\label{lem:logit_exchangeability}
    Under the~assumptions of \Cref{thm:gp_convergence_sqrt}, $\summandLogit{\sequenceVariable, j}$ are exchangeable over the~$j$ index.
\end{lemma}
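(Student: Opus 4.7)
The plan is to mirror the argument in \Cref{lem:head_exchangeability}: exhibit a $\sigma$-algebra conditional on which the $\summandLogitN$ become i.i.d.\ across $j$, and invoke de Finetti's theorem (equivalently, observe that any mixture of product measures is exchangeable).

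First I would unpack the definition. Since $\queryN(\inputSymbol) = \indexedActivity{\depthSymbol-1}{\sequenceVariable}{\inputSymbol}\weightsQ$ and $\keyN(\inputSymbol) = \indexedActivity{\depthSymbol-1}{\sequenceVariable}{\inputSymbol}\weightsK$, the $j$-th columns satisfy $\query{\sequenceVariable,\cdot j}{\depthSymbol\headIndex}(\inputSymbol) = \indexedActivity{\depthSymbol-1}{\sequenceVariable}{\inputSymbol}\,\weightMatSymbol_{\sequenceVariable,\cdot j}^{\depthSymbol\headIndex,\querySymbol}$ and analogously for keys, so
\[
    \summandLogitN
    =
    \sum_{(\inputSymbol,\headIndex)\in\projectionIndeces}
        \bigl\langle
            \projectionCoefficientsLogit^{\inputSymbol,\headIndex},\,
            \indexedActivity{\depthSymbol-1}{\sequenceVariable}{\inputSymbol}\,
            \weightMatSymbol_{\sequenceVariable,\cdot j}^{\depthSymbol\headIndex,\querySymbol}
            (\weightMatSymbol_{\sequenceVariable,\cdot j}^{\depthSymbol\headIndex,\keySymbol})^\top
            \indexedActivity{\depthSymbol-1}{\sequenceVariable}{\inputSymbol}^\top
        \bigr\rangle_F .
\]
Hence $\summandLogitN$ depends on $j$ only through the weight-column pairs $\bigl(\weightMatSymbol_{\sequenceVariable,\cdot j}^{\depthSymbol\headIndex,\querySymbol},\weightMatSymbol_{\sequenceVariable,\cdot j}^{\depthSymbol\headIndex,\keySymbol}\bigr)_{\headIndex}$ and on the previous-layer post-nonlinearities.

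Next I would condition on the $\sigma$-algebra $\mathcal{G}_{\sequenceVariable}$ generated by $\{\indexedActivity{\depthSymbol-1}{\sequenceVariable,\cdot i}{\inputSymbol}\colon i\in[\layerDimension{\depthSymbol-1}],\,\inputSymbol\in\projectionIndeces_{\indexSet}\}$. By the assumed Gaussian initialisation, the columns $\weightMatSymbol_{\sequenceVariable,\cdot j}^{\depthSymbol\headIndex,\querySymbol}$ and $\weightMatSymbol_{\sequenceVariable,\cdot j}^{\depthSymbol\headIndex,\keySymbol}$ are i.i.d.\ across $j$ (and across $\headIndex$) and independent of $\mathcal{G}_{\sequenceVariable}$. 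Consequently, given $\mathcal{G}_{\sequenceVariable}$, the tuples $\bigl\{(\weightMatSymbol_{\sequenceVariable,\cdot j}^{\depthSymbol\headIndex,\querySymbol},\weightMatSymbol_{\sequenceVariable,\cdot j}^{\depthSymbol\headIndex,\keySymbol})\colon\headIndex\in\natnum\bigr\}_{j\in\natnum}$ form an i.i.d.\ sequence in $j$. Because $\summandLogitN$ is the \emph{same} fixed measurable function of $\mathcal{G}_{\sequenceVariable}$ and the $j$-th tuple, the sequence $(\summandLogitN)_{j\in\natnum}$ is conditionally i.i.d.\ given $\mathcal{G}_{\sequenceVariable}$, and therefore exchangeable; marginalising over $\mathcal{G}_{\sequenceVariable}$ preserves exchangeability, completing the proof.

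There is no genuine obstacle here; the argument is a routine application of the same conditioning trick used for \Cref{lem:head_exchangeability}. The only point worth verifying carefully is that the weight columns indexed by $j$ are never shared across distinct $j$ (they are not, by the entrywise i.i.d.\ Gaussian prior), so that conditioning on $\mathcal{G}_{\sequenceVariable}$ really does decouple the sequence.
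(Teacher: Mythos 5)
Your proposal is correct and follows essentially the same route as the paper: rewrite $\query{\sequenceVariable,\cdot j}{\depthSymbol\headIndex}(\inputSymbol)(\key{\sequenceVariable,\cdot j}{\depthSymbol\headIndex}(\inputSymbol))^\top$ in terms of the $j$\textsuperscript{th} weight columns, condition on the previous-layer activations $\{\indexedActivity{\depthSymbol-1}{\sequenceVariable}{\inputSymbol}\colon \inputSymbol\in\projectionIndeces_{\indexSet}\}$, and conclude that the $\summandLogitN$ are conditionally i.i.d.\ over $j$, hence exchangeable. The only cosmetic difference is that you (like the paper) cite de~Finetti, whereas the direction actually used (conditionally i.i.d.\ $\Rightarrow$ exchangeable) is the elementary one and needs no theorem.
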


\begin{proof}
    Observe
    \begin{align*}
        \query{\sequenceVariable, \cdot j}{\depthSymbol \headIndex}(\inputSymbol)
        (\key{\sequenceVariable, \cdot j}{\depthSymbol \headIndex}(\inputSymbol))^\top
        =
        \indexedActivity{\depthSymbol - 1}{\sequenceVariable}{\inputSymbol} 
        \weightQ{\cdot i}
        (\weightK{\cdot i})^\top
        \indexedActivity{\depthSymbol - 1}{\sequenceVariable}{\inputSymbol}^\top
        \, ,
    \end{align*}
    which means that the~individual terms $\summandLogit{\sequenceVariable, j}$ are i.i.d.\ if we condition on
        $\{ \indexedActivity{\depthSymbol - 1}{\sequenceVariable}{\inputSymbol} \colon x \in \projectionIndeces_{\indexSet} \}$.
    Application of de Finetti's theorem concludes the~proof.
\end{proof}

\begin{lemma}\label{lem:logit_mean_corr_zero}
    Under the~assumptions of \Cref{thm:gp_convergence_sqrt},
    $\E [\summandLogit{\sequenceVariable, 1}] = \E [\summandLogit{\sequenceVariable, 1} \summandLogit{\sequenceVariable, 2}] = 0$.
\end{lemma}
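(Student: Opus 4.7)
The plan is to condition on the previous layer's post-nonlinearity outputs $\mathcal{A} \coloneqq \{\indexedActivity{\depthSymbol-1}{\sequenceVariable}{\inputSymbol} : \inputSymbol \in \projectionIndeces_{\indexSet}\}$ and exploit the independence of the query and key weight columns from each other and from $\mathcal{A}$. The first step is to rewrite each term appearing in $\summandLogitN$ via the Frobenius identity $\langle A, xy^\top \rangle_F = x^\top A y$ as
\[
    \bigl\langle \projectionCoefficientsLogit^{\inputSymbol,\headIndex}, \query{\sequenceVariable, \cdot j}{\depthSymbol\headIndex}(\inputSymbol) \key{\sequenceVariable, \cdot j}{\depthSymbol\headIndex}(\inputSymbol)^\top \bigr\rangle_F
    =
    (\weightQ{\cdot j})^\top M^{\inputSymbol,\headIndex} \weightK{\cdot j},
\]
where $M^{\inputSymbol,\headIndex} \coloneqq \indexedActivity{\depthSymbol-1}{\sequenceVariable}{\inputSymbol}^\top \projectionCoefficientsLogit^{\inputSymbol,\headIndex} \indexedActivity{\depthSymbol-1}{\sequenceVariable}{\inputSymbol}$ is $\mathcal{A}$-measurable.

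For the first moment, the pair $(\weightQ{\cdot j}, \weightK{\cdot j})$ is independent of $\mathcal{A}$ and the two vectors are themselves independent zero-mean Gaussians, so
\[
    \E\bigl[(\weightQ{\cdot j})^\top M^{\inputSymbol,\headIndex} \weightK{\cdot j} \mid \mathcal{A}\bigr]
    =
    \mathrm{tr}\bigl(M^{\inputSymbol,\headIndex}\, \E[\weightK{\cdot j}(\weightQ{\cdot j})^\top]\bigr)
    = 0 .
\]
Summing over $(\inputSymbol, \headIndex) \in \projectionIndeces$ and applying the tower rule yields $\E[\summandLogit{\sequenceVariable,1}] = 0$.

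For the covariance, the key observation is that $\summandLogit{\sequenceVariable,1}$ and $\summandLogit{\sequenceVariable,2}$ depend on disjoint weight columns ($j=1$ versus $j=2$), which are independent both of each other and of $\mathcal{A}$ (and trivially of the weights of other heads). Hence conditional on $\mathcal{A}$ the two random variables factorise, and each has conditional mean zero by the previous paragraph; another application of the tower rule delivers $\E[\summandLogit{\sequenceVariable,1}\summandLogit{\sequenceVariable,2}] = 0$.

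The main obstacle is purely the book-keeping needed to justify Fubini and the tower rule, namely integrability of each summand and of its square. This follows routinely from Cauchy--Schwarz, the polynomial-envelope condition on $\nonlinearity$ propagated by \Cref{lem:mmnt_propagation}, and the finiteness of Gaussian moments, so the argument introduces no subtlety beyond what is already needed elsewhere in the proof of \Cref{thm:gp_convergence_sqrt}.
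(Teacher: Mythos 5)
Your proposal is correct and follows essentially the same route as the paper: both arguments integrate out the layer-$\depthSymbol$ query/key weights given the previous layer's activities, using that $\weightQ{\cdot j}$ and $\weightK{\cdot j}$ are independent, zero-mean, and independent across the $j$ index, with \Cref{lem:mmnt_propagation} supplying the moment bounds that license the interchange of expectations. Your explicit conditioning and conditional-independence phrasing is just a slightly more formal presentation of the paper's "inner expectation is zero" computation, so there is nothing substantive to add.
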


\begin{proof}
    For $\E [\summandLogit{\sequenceVariable, 1}] = 0$, note that for any $\headIndex \in \natnum$,
    $\E [\summandLogit{\sequenceVariable, 1}]$ can be expressed as a~sum over terms
    \begin{equation*}
        \bigl\langle
            \projectionCoefficientsLogit^{\inputSymbol, \headIndex} ,
            \E [
                \indexedActivity{\depthSymbol - 1}{\sequenceVariable}{\inputSymbol}
                \weightQ{\cdot j} 
                (\weightK{\cdot j})^\top
                \indexedActivity{\depthSymbol - 1}{\sequenceVariable}{\inputSymbol}^\top 
            ]
        \bigr\rangle_F
        =
        \bigl\langle
            \projectionCoefficientsLogit^{\inputSymbol, \headIndex} ,
            0
        \bigr\rangle_F
        =
        0 
        \, ,
    \end{equation*}
    as long as $\E [ \indexedActivity{\depthSymbol - 1}{\sequenceVariable, \cdot 1}{\inputSymbol} \indexedActivity{\depthSymbol - 1}{\sequenceVariable, \cdot 1}{\inputSymbol}^\top ]$
    is entry-wise finite for any $(\inputSymbol, \sequenceVariable) \in \projectionIndeces_{\indexSet} \times \natnum$ which can be obtained by \Cref{lem:mmnt_propagation}.
    For $\E [\summandLogit{\sequenceVariable, 1} \summandLogit{\sequenceVariable, 2}] = 0$, we have to evaluate a~weighted sum of terms of the~form
    \begin{align*}
        &(\projectionCoefficientsLogit^{\inputSymbol , \headIndex})^\top
        \E \biggl[
            \indexedActivity{\depthSymbol - 1}{\sequenceVariable}{\inputSymbol} 
            \weightQ{\cdot 1}
            (\weightK{\cdot 1})^\top
            \indexedActivity{\depthSymbol - 1}{\sequenceVariable}{\inputSymbol}^\top
            \indexedActivity{\depthSymbol - 1}{\sequenceVariable}{\inputSymbol'} 
            \weightMatSymbol_{\sequenceVariable, \cdot 2}^{\depthSymbol \headIndex', \querySymbol}
            (\weightMatSymbol_{\sequenceVariable, \cdot 2}^{\depthSymbol \headIndex', \keySymbol})^\top
            \indexedActivity{\depthSymbol - 1}{\sequenceVariable}{\inputSymbol'}^\top
        \biggr]
        \projectionCoefficientsLogit^{\inputSymbol' , \headIndex'}
        \, ,
    \end{align*}
    which are all equal to zero as long as 
    \begin{align*}
        \E \biggl[
            \frac{
                \indexedActivity{\depthSymbol - 1}{\sequenceVariable}{\inputSymbol} 
                \indexedActivity{\depthSymbol - 1}{\sequenceVariable}{\inputSymbol}^\top
            }{\layerDimension{\depthSymbol - 1}}
            \frac{
                \indexedActivity{\depthSymbol - 1}{\sequenceVariable}{\inputSymbol'} 
                \indexedActivity{\depthSymbol - 1}{\sequenceVariable}{\inputSymbol'}^\top
            }{\layerDimension{\depthSymbol - 1}}
        \biggr]
        \, ,
    \end{align*}
    is entry-wise finite.
    Since the~integrand converges in probability to $\kerntildef{}{\depthSymbol}{\inputSymbol}{\inputSymbol} \kerntildef{}{\depthSymbol}{\inputSymbol'}{\inputSymbol'}$ by \Cref{lem:inner_prod_converge}, an~argument analogous to the~one made above for the~$\E [\summandLogit{\sequenceVariable, 1}] = 0$ concludes the~proof.
\end{proof}

\begin{corollary}\label{cor:logit_var_convergence}
    Under the~assumptions of \Cref{thm:gp_convergence_sqrt}, $\lim_{\sequenceVariable \to \infty} \E \lbrack \summandLogit{\sequenceVariable , 1}^2 \rbrack = \limitVariance$.
\end{corollary}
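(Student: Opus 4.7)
My plan is to expand $\E[\summandLogit{\sequenceVariable, 1}^2]$ as a finite weighted sum of four-index expectations, reduce each such term to an average of inner products between pre-attention activities (plus a factor $\queryVar \keyVar$) by taking iterated expectation over the Gaussian weight vectors $\weightQ{\cdot 1}$ and $\weightK{\cdot 1}$, and then pass to the limit using \Cref{lem:inner_prod_converge} together with the uniform integrability provided by \Cref{lem:mmnt_propagation}. Finally, I match the resulting constant to the variance of the Gaussian limit whose covariance is specified in \Cref{eq:logit_cov}.

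Concretely, $\E[\summandLogit{\sequenceVariable,1}^2]$ is a sum over $(\inputSymbol, \headIndex), (\inputSymbol', \headIndex') \in \projectionIndeces$ and spatial indices $a,b,c,d \in [\spatialDimension]$ of weighted expectations of
\[
\bigl[\indexedActivity{\depthSymbol-1}{\sequenceVariable, a \cdot}{\inputSymbol} \weightQ{\cdot 1} (\weightK{\cdot 1})^\top \indexedActivity{\depthSymbol-1}{\sequenceVariable, b \cdot}{\inputSymbol}^\top\bigr]
\bigl[\indexedActivity{\depthSymbol-1}{\sequenceVariable, c \cdot}{\inputSymbol'} \weightMatSymbol_{\sequenceVariable, \cdot 1}^{\depthSymbol \headIndex', \querySymbol} (\weightMatSymbol_{\sequenceVariable, \cdot 1}^{\depthSymbol \headIndex', \keySymbol})^\top \indexedActivity{\depthSymbol-1}{\sequenceVariable, d \cdot}{\inputSymbol'}^\top\bigr].
\]
For $\headIndex \neq \headIndex'$ the weights of the two heads are independent and have zero mean, so iterated expectation gives $\E[\weightQ{\cdot 1}(\weightK{\cdot 1})^\top] = 0$ and the cross-head contribution vanishes, mirroring the argument already used in \Cref{lem:logit_mean_corr_zero}. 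For $\headIndex = \headIndex'$, independence of $\weightQ{\cdot 1}$ and $\weightK{\cdot 1}$ combined with the isotropic Gaussian covariances $\E[\weightQ{\cdot 1}(\weightQ{\cdot 1})^\top] = (\queryVar / \layerDimension{\depthSymbol - 1}) I$ and analogously for $\weightK{\cdot 1}$ reduces the expectation (after conditioning on $\indexedActivity{\depthSymbol-1}{\sequenceVariable}{\cdot}$) to
\[
\queryVar \keyVar \, \E\biggl[
\frac{\indexedActivity{\depthSymbol-1}{\sequenceVariable, a \cdot}{\inputSymbol} \indexedActivity{\depthSymbol-1}{\sequenceVariable, c \cdot}{\inputSymbol'}^\top}{\layerDimension{\depthSymbol-1}}
\cdot
\frac{\indexedActivity{\depthSymbol-1}{\sequenceVariable, b \cdot}{\inputSymbol} \indexedActivity{\depthSymbol-1}{\sequenceVariable, d \cdot}{\inputSymbol'}^\top}{\layerDimension{\depthSymbol-1}}
\biggr].
\]

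Each of the two normalised inner products converges in probability to $\kerntildef{ac}{\depthSymbol}{\inputSymbol}{\inputSymbol'}$ and $\kerntildef{bd}{\depthSymbol}{\inputSymbol}{\inputSymbol'}$ respectively by \Cref{lem:inner_prod_converge}, so by the continuous mapping theorem their product converges in probability to $\kerntildef{ac}{\depthSymbol}{\inputSymbol}{\inputSymbol'} \kerntildef{bd}{\depthSymbol}{\inputSymbol}{\inputSymbol'}$. To promote this to convergence of expectations via \Cref{thm:mean_convergence}, I need uniform integrability of the product; this follows by writing the product as a sum of $\spatialDimension^{-2} (\layerDimension{\depthSymbol-1})^{-2} \sum_{p,q} \indexedActivity{\depthSymbol-1}{\sequenceVariable,ap}{\inputSymbol}\indexedActivity{\depthSymbol-1}{\sequenceVariable,cp}{\inputSymbol'}\indexedActivity{\depthSymbol-1}{\sequenceVariable,bq}{\inputSymbol}\indexedActivity{\depthSymbol-1}{\sequenceVariable,dq}{\inputSymbol'}$, invoking H\"older's inequality to reduce to bounded higher moments of individual $\indexedActivity{\depthSymbol-1}{\sequenceVariable,\cdot\cdot}{\cdot}$ entries, and then applying \Cref{lem:mmnt_propagation} to obtain a uniform $L^{1+\epsilon}$ bound, which yields uniform integrability by \Cref{lem:sup_ui}.

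Assembling the contributions gives
\[
\limitVariance = \sum_{\substack{(\inputSymbol, \headIndex) \in \projectionIndeces \\ (\inputSymbol', \headIndex') \in \projectionIndeces}} \delta_{\headIndex = \headIndex'} \sum_{a,b,c,d} (\projectionCoefficientsLogit^{\inputSymbol,\headIndex})_{ab} (\projectionCoefficientsLogit^{\inputSymbol',\headIndex'})_{cd} \, \queryVar \keyVar \, \kerntildef{ac}{\depthSymbol}{\inputSymbol}{\inputSymbol'} \kerntildef{bd}{\depthSymbol}{\inputSymbol}{\inputSymbol'},
\]
which is exactly the variance under the candidate Gaussian limit with covariance specified by \Cref{eq:logit_cov} (noting that the $\queryVar\keyVar$ factor is absorbed into $\kerntilde^{\depthSymbol}$ in the normalised statement of the theorem where all variance parameters are set to one). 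The only mildly delicate step is the uniform integrability verification, but as outlined it reduces to a routine application of moment propagation already invoked several times in the surrounding lemmas.
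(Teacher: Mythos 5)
Your proposal is correct and follows essentially the same route as the paper: the paper's own proof of \Cref{cor:logit_var_convergence} simply points back to the second half of the proof of \Cref{lem:logit_mean_corr_zero}, where exactly your computation is carried out --- conditioning on the pre-attention activities, integrating the Gaussian query/key columns to obtain products of normalised Gram terms, and passing to the limit via \Cref{lem:inner_prod_converge}, \Cref{lem:mmnt_propagation}, \Cref{lem:sup_ui}, and \Cref{thm:mean_convergence}. Your write-up just makes explicit the cross-head vanishing, the $\queryVar\keyVar\,\kerntildef{ac}{\depthSymbol}{\inputSymbol}{\inputSymbol'}\kerntildef{bd}{\depthSymbol}{\inputSymbol}{\inputSymbol'}$ limit, and its match with \Cref{eq:logit_cov} (the stray $\spatialDimension^{-2}$ factor in your uniform-integrability paragraph is a harmless typo).
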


\begin{proof}
    The~second half the~proof of \Cref{lem:logit_mean_corr_zero} establishes $\E [\summandLogit{\sequenceVariable, i} \summandLogit{\sequenceVariable, j}]$ converges for any $i, j$.
\end{proof}

\vspace{0.5\baselineskip}
\begin{lemma}\label{lem:logit_square_moments}
    Under the~assumptions of \Cref{thm:gp_convergence_sqrt}, $\lim_{\sequenceVariable \to \infty} \E \lbrack \summandLogit{\sequenceVariable , 1}^2 \summandLogit{\sequenceVariable , 2}^2 \rbrack = \limitStd^{4}$.
\end{lemma}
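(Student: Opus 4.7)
The plan is to exploit conditional independence. Observe that $\summandLogit{\sequenceVariable,1}$ and $\summandLogit{\sequenceVariable,2}$ depend on the weight matrices only through the disjoint sets of columns $\{\weightQ{\cdot 1}, \weightK{\cdot 1} : \headIndex \in \projectionIndeces_{\natnum}\}$ and $\{\weightQ{\cdot 2}, \weightK{\cdot 2} : \headIndex \in \projectionIndeces_{\natnum}\}$, which are independent of each other and of every weight used to produce $\indexedActivity{\depthSymbol-1}{\sequenceVariable}{\cdot}$. Setting $\mathcal{G}_{\sequenceVariable} \coloneqq \sigma(\{\indexedActivity{\depthSymbol-1}{\sequenceVariable}{\inputSymbol} : \inputSymbol \in \projectionIndeces_{\indexSet}\})$, the variables $\summandLogit{\sequenceVariable,1}$ and $\summandLogit{\sequenceVariable,2}$ are therefore conditionally i.i.d.\ given $\mathcal{G}_{\sequenceVariable}$, and the tower property yields
\begin{equation*}
    \E\lbrack \summandLogit{\sequenceVariable,1}^2 \summandLogit{\sequenceVariable,2}^2 \rbrack
    =
    \E\bigl[ \E[\summandLogit{\sequenceVariable,1}^2 \mid \mathcal{G}_{\sequenceVariable}]^2 \bigr] \, .
\end{equation*}
It therefore suffices to show that $\E[\summandLogit{\sequenceVariable,1}^2 \mid \mathcal{G}_{\sequenceVariable}]$ converges in probability to the deterministic constant $\limitStd^2$, together with enough integrability to lift this to convergence in the outer expectation.

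For the conditional expectation, I would expand $\summandLogit{\sequenceVariable,1}^2$ as a double sum over $(\inputSymbol, \headIndex), (\inputSymbol', \headIndex') \in \projectionIndeces$. Because the weight matrices are independent across different heads, contributions with $\headIndex \neq \headIndex'$ have zero conditional mean and vanish. For the surviving $\headIndex = \headIndex'$ terms, the independence of $\weightQ{\cdot 1}$ from $\weightK{\cdot 1}$ together with standard Gaussian moment identities expresses $\E[\summandLogit{\sequenceVariable,1}^2 \mid \mathcal{G}_{\sequenceVariable}]$ as a fixed polynomial (whose degree and number of monomials depend only on $\spatialDimension$ and $|\projectionIndeces|$, both finite) in the normalised inner products $\layerDimension{\depthSymbol - 1}^{-1} \indexedActivity{\depthSymbol - 1}{\sequenceVariable, a \cdot}{\inputSymbol} \indexedActivity{\depthSymbol - 1}{\sequenceVariable, b \cdot}{\inputSymbol'}^{\!\top}$. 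By \Cref{lem:inner_prod_converge} these inner products converge in probability to $\kerntildef{ab}{\depthSymbol}{\inputSymbol}{\inputSymbol'}$, so the continuous mapping theorem yields convergence in probability of $\E[\summandLogit{\sequenceVariable,1}^2 \mid \mathcal{G}_{\sequenceVariable}]$ to some deterministic constant $c$. The value $c = \limitStd^2$ is then forced by \Cref{cor:logit_var_convergence}: once uniform integrability of $\{\summandLogit{\sequenceVariable,1}^2\}$ is established, $\E[\summandLogit{\sequenceVariable,1}^2 \mid \mathcal{G}_{\sequenceVariable}] \convergeProb c$ lifts to convergence of means, so $c = \lim_{\sequenceVariable \to \infty} \E[\summandLogit{\sequenceVariable,1}^2] = \limitStd^2$.

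The remaining step is to establish enough uniform integrability. I would prove $\sup_{\sequenceVariable} \E[\summandLogit{\sequenceVariable,1}^{4+\delta}] < \infty$ for some $\delta > 0$ by expanding the $(4{+}\delta)$-th power as a sum over index tuples, applying H\"older's inequality, and bounding the resulting joint moments of entries of $\indexedActivity{\depthSymbol-1}{\sequenceVariable}{\cdot}$ and of the Gaussian weight columns using the polynomial growth of $\nonlinearity$ and \Cref{lem:mmnt_propagation}. This bound gives uniform integrability of $\{\summandLogit{\sequenceVariable,1}^2\}$ (used above) and, by Jensen applied inside the outer expectation, a uniform bound on $\E[\,\E[\summandLogit{\sequenceVariable,1}^2 \mid \mathcal{G}_{\sequenceVariable}]^{2+\delta/2}]$, hence uniform integrability of $\{\E[\summandLogit{\sequenceVariable,1}^2 \mid \mathcal{G}_{\sequenceVariable}]^2\}$ via \Cref{lem:sup_ui}. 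Combining with $\E[\summandLogit{\sequenceVariable,1}^2 \mid \mathcal{G}_{\sequenceVariable}]^2 \convergeProb \limitStd^4$ and \Cref{thm:mean_convergence} then gives the claimed limit. The main obstacle is the combinatorial bookkeeping underlying the explicit Gaussian moment calculation in the second step; this is largely sidestepped by noting that one only needs \emph{some} continuous functional form in finitely many inner products together with the already-known limit from \Cref{cor:logit_var_convergence} to pin down the value of $c$, so the identity of the polynomial itself never has to be made explicit.
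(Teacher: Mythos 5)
Your argument is correct, and it reaches the same limit by a slightly different organisation of the same underlying computation. The paper expands $\E[\summandLogit{\sequenceVariable,1}^2\summandLogit{\sequenceVariable,2}^2]$ directly into eighth-order moments of queries and keys, integrates out the Gaussian weight columns to reduce each term to an expectation of a product of four normalised inner products of $\indexedActivity{\depthSymbol-1}{\sequenceVariable}{\cdot}$, and then computes the limit explicitly as $\queryStd^4\keyStd^4\,\kerntildef{a_1a_2}{\depthSymbol}{\inputSymbol}{\inputSymbol}\kerntildef{b_1b_2}{\depthSymbol}{\inputSymbol}{\inputSymbol}\kerntildef{a_3a_4}{\depthSymbol}{\inputSymbol'}{\inputSymbol'}\kerntildef{b_3b_4}{\depthSymbol}{\inputSymbol'}{\inputSymbol'}$ via \Cref{lem:inner_prod_converge}, the continuous mapping theorem, and \Cref{thm:mean_convergence} with uniform integrability from H\"older plus \Cref{lem:mmnt_propagation}. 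You instead make the column-wise independence explicit through the tower property, writing the target as $\E[\E[\summandLogit{\sequenceVariable,1}^2\mid\mathcal{G}_\sequenceVariable]^2]$, so that only the conditional second moment of a single summand needs to be analysed; moreover you never compute its limiting value, pinning it down as $\limitStd^2$ by consistency with \Cref{cor:logit_var_convergence}. What your route buys is less bookkeeping (a quadratic rather than quartic polynomial in the inner products, and no explicit kernel-product formula); what the paper's route buys is an explicit closed form for the limit, which it gets essentially for free since the same expansion is reused in \Cref{lem:logit_third_moments}. Both proofs rest on identical ingredients: conditional Gaussian integration given the activations, \Cref{lem:inner_prod_converge} plus the continuous mapping theorem, and the H\"older/\Cref{lem:mmnt_propagation}/\Cref{lem:sup_ui}/\Cref{thm:mean_convergence} machinery.

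Two small points to tighten. First, the conditional-mean argument killing the $\headIndex\neq\headIndex'$ cross terms uses the independence of $\weightsQ$ and $\weightsK$ at initialisation, which is indeed the setting of \Cref{thm:gp_convergence_sqrt} ($\scaling=\tfrac12$); state this explicitly, since it fails under the $\WQ=\WK$ coupling used for $\scaling=1$. Second, your uniform-integrability step needs a moment of order strictly greater than four, whereas the paper only records a fourth-moment bound in \Cref{lem:logit_third_moments}; this is not a gap, because the same H\"older-plus-\Cref{lem:mmnt_propagation} argument bounds $\sup_\sequenceVariable\E|\summandLogit{\sequenceVariable,1}|^t$ for any fixed $t$, but you should say that you are proving this slightly stronger bound rather than citing the existing lemma.
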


\begin{proof}
    Defining $R_{\sequenceVariable, j}^{\headIndex}(\inputSymbol) \coloneqq \query{\sequenceVariable, \cdot j}{\depthSymbol \headIndex}(\inputSymbol) (\key{\sequenceVariable, \cdot j}{\depthSymbol \headIndex}(\inputSymbol))^\top$, we can rewrite
    $\E \bigl[ \summandLogit{\sequenceVariable , 1}^2 \summandLogit{\sequenceVariable , 2}^2  \bigr]$ as
    \begin{align*}
        \sum_{\substack{(\inputSymbol, \headIndex) , (\inputSymbol', \headIndex')}}
        \!\!\!\!
            (\projectionCoefficientsLogit^{\inputSymbol, \headIndex})^\top
            \E \biggl[
                R_{\sequenceVariable, 1}^{\headIndex}(\inputSymbol)
                R_{\sequenceVariable, 1}^{\headIndex}(\inputSymbol)^\top
                \projectionCoefficientsLogit^{\inputSymbol, \headIndex}
                (\projectionCoefficientsLogit^{\inputSymbol', \headIndex'})^\top
                R_{\sequenceVariable, 2}^{\headIndex'}(\inputSymbol')
                R_{\sequenceVariable, 2}^{\headIndex'}(\inputSymbol')^\top
            \biggr]
            \projectionCoefficientsLogit^{\inputSymbol', \headIndex'}
        \, ,
    \end{align*}
    where we have w.l.o.g.\ assumed all matrices have been flattened as $\langle A , B \rangle_F = \vectorise(A)^\top \vectorise(B)$.
    The~above could be further rewritten as a~weighted sum of terms which take the following form:
    \begin{align*}
        &\E \biggl[
            \query{\sequenceVariable, a_1 1}{\depthSymbol \headIndex}(\inputSymbol)
            \key{\sequenceVariable, b_1 1}{\depthSymbol \headIndex}(\inputSymbol)
            \query{\sequenceVariable, a_2 1}{\depthSymbol \headIndex}(\inputSymbol)
            \key{\sequenceVariable, b_2 1}{\depthSymbol \headIndex}(\inputSymbol)
            \query{\sequenceVariable, a_3 2}{\depthSymbol \headIndex'}(\inputSymbol')
            \key{\sequenceVariable, b_3 2}{\depthSymbol \headIndex'}(\inputSymbol')
            \query{\sequenceVariable, a_4 2}{\depthSymbol \headIndex'}(\inputSymbol')
            \key{\sequenceVariable, b_4 2}{\depthSymbol \headIndex'}(\inputSymbol')
        \biggr]
        \\
        &\propto
        \E \biggl[
            \frac{
                \indexedActivity{\depthSymbol - 1}{\sequenceVariable, a_1 \cdot}{\inputSymbol}
                \indexedActivity{\depthSymbol - 1}{\sequenceVariable, a_2 \cdot}{\inputSymbol}^\top
            }{\layerDimension{\depthSymbol - 1}}
            \frac{
                \indexedActivity{\depthSymbol - 1}{\sequenceVariable, b_1 \cdot}{\inputSymbol}
                \indexedActivity{\depthSymbol - 1}{\sequenceVariable, b_2 \cdot}{\inputSymbol}^\top
            }{\layerDimension{\depthSymbol - 1}}
            \frac{
                \indexedActivity{\depthSymbol - 1}{\sequenceVariable, a_3 \cdot}{\inputSymbol'}
                \indexedActivity{\depthSymbol - 1}{\sequenceVariable, a_4 \cdot}{\inputSymbol'}^\top
            }{\layerDimension{\depthSymbol - 1}}
            \frac{
                \indexedActivity{\depthSymbol - 1}{\sequenceVariable, b_3 \cdot}{\inputSymbol'}
                \indexedActivity{\depthSymbol - 1}{\sequenceVariable, b_4 \cdot}{\inputSymbol'}^\top
            }{\layerDimension{\depthSymbol - 1}}
        \biggr]
        \, .
    \end{align*}
    Thanks to \Cref{lem:inner_prod_converge} and the continuous mapping theorem, we know that the~integrand converges in probability to
    \begin{equation*}
        \queryStd^4 \keyStd^4
        \kerntildef{a_1 a_2}{\depthSymbol}{\inputSymbol}{\inputSymbol}
        \kerntildef{b_1 b_2}{\depthSymbol}{\inputSymbol}{\inputSymbol}
        \kerntildef{a_3 a_4}{\depthSymbol}{\inputSymbol'}{\inputSymbol'}
        \kerntildef{b_3 b_4}{\depthSymbol}{\inputSymbol'}{\inputSymbol'}
        \, ,
    \end{equation*}
    and thus we can use \Cref{thm:mean_convergence} to obtain that the~above expactation converges as long as the sequence of integrands is uniformly integrable.
    Noting that we can upper bound by $\max_{c \in [\spatialDimension} \max_{z \in \projectionIndeces_{\indexSet}} \E | \indexedActivity{\depthSymbol - 1}{\sequenceVariable, c 1}{z} |^8$ by H{\" o}lder's inequality and exchangeability, uniform integrability can be obtained by \Cref{lem:sup_ui}.
\end{proof}

\begin{lemma}\label{lem:logit_third_moments}
    Under the~assumptions of \Cref{thm:gp_convergence_sqrt}, $\E | \summandLogit{\sequenceVariable, 1} |^3 = \littleO(\sqrt{\layerDimensionN})$.
\end{lemma}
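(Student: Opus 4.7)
The plan is to show the stronger statement that $\E |\summandLogit{\sequenceVariable, 1}|^3$ is \emph{uniformly bounded} in $\sequenceVariable$, from which $\E |\summandLogit{\sequenceVariable, 1}|^3 = \littleO(\sqrt{\layerDimensionN})$ follows immediately because $\layerDimensionN \to \infty$ under \Cref{eq:sim_limit}. Since $\summandLogit{\sequenceVariable, 1}$ is a \emph{finite} linear combination (indexed by $(\inputSymbol, \headIndex) \in \projectionIndeces$ and by the $\spatialDimension \times \spatialDimension$ entries of each $\projectionCoefficientsLogit^{\inputSymbol, \headIndex}$) of the scalar random variables $\query{\sequenceVariable, a 1}{\depthSymbol \headIndex}(\inputSymbol) \, \key{\sequenceVariable, b 1}{\depthSymbol \headIndex}(\inputSymbol)$, Minkowski's inequality in $\Lp{3}$ reduces the task to showing that
\begin{equation*}
    \sup_{\sequenceVariable \in \natnum} \E \bigl|\query{\sequenceVariable, a 1}{\depthSymbol \headIndex}(\inputSymbol) \, \key{\sequenceVariable, b 1}{\depthSymbol \headIndex}(\inputSymbol)\bigr|^3 < \infty
\end{equation*}
for each fixed $(\inputSymbol, \headIndex, a, b)$ that appears in the expansion.

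The key observation for the one-variable bound is that, conditionally on $\indexedActivity{\depthSymbol - 1}{\sequenceVariable}{\inputSymbol}$, both $\query{\sequenceVariable, a 1}{\depthSymbol \headIndex}(\inputSymbol) = \indexedActivity{\depthSymbol - 1}{\sequenceVariable, a \cdot}{\inputSymbol} \weightQ{\cdot 1}$ and $\key{\sequenceVariable, b 1}{\depthSymbol \headIndex}(\inputSymbol)$ are scalar Gaussians (with independent weight vectors) of variances $\queryVar \, \| \indexedActivity{\depthSymbol - 1}{\sequenceVariable, a \cdot}{\inputSymbol} \|^2 / \layerDimension{\depthSymbol - 1}$ and $\keyVar \, \| \indexedActivity{\depthSymbol - 1}{\sequenceVariable, b \cdot}{\inputSymbol} \|^2 / \layerDimension{\depthSymbol - 1}$ respectively. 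Applying Cauchy–Schwarz to split the product, then using the standard Gaussian sixth-moment formula conditionally, and finally Jensen's inequality in the form $(\frac{1}{\layerDimension{\depthSymbol - 1}} \sum_j \indexedActivity{\depthSymbol - 1}{\sequenceVariable, aj}{\inputSymbol}^2)^3 \leq \frac{1}{\layerDimension{\depthSymbol - 1}} \sum_j \indexedActivity{\depthSymbol - 1}{\sequenceVariable, aj}{\inputSymbol}^6$ followed by exchangeability over $j$ (\Cref{lem:exchangeability}) yields
\begin{equation*}
    \E \bigl|\query{\sequenceVariable, a 1}{\depthSymbol \headIndex}(\inputSymbol)\bigr|^6
    \lesssim
    \queryStd^6 \, \E \, \indexedActivity{\depthSymbol - 1}{\sequenceVariable, a 1}{\inputSymbol}^6
    \, ,
\end{equation*}
and analogously for the key. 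The right hand side is uniformly bounded in $\sequenceVariable$ by \Cref{lem:mmnt_propagation} (moments propagate through the network).

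The proof is mostly routine given the previous lemmas; the only mild subtlety is verifying that the conditional Gaussianity really does give independence between the query and key weight vectors so that Cauchy–Schwarz is not wasteful, and that \Cref{lem:mmnt_propagation} provides a bound independent of $\sequenceVariable$ for every fixed finite moment order. Both are immediate from the weight initialisation assumptions and the inductive hypothesis. The $\littleO(\sqrt{\layerDimensionN})$ conclusion is then automatic.
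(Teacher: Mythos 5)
Your proposal is correct, and it proves the same strengthened statement as the paper (a bound on the moment that is uniform in $\sequenceVariable$, which gives $\littleO(\sqrt{\layerDimensionN})$ since $\layerDimensionN \to \infty$), but by a genuinely different route. The paper first applies H{\"o}lder/Lyapunov to reduce to $\limsup_\sequenceVariable \E|\summandLogit{\sequenceVariable,1}|^4 < \infty$, then expands the fourth moment exactly as in \Cref{lem:logit_square_moments} into a sum of products of query/key entries and bounds each term, via H{\"o}lder and exchangeability, by eighth moments of the previous layer's post-nonlinearities, finishing with \Cref{lem:mmnt_propagation}. You instead work with the third moment directly: Minkowski in $\Lp{3}$ over the finite index set $\projectionIndeces \times [\spatialDimension]^2$, Cauchy--Schwarz to split $\E|\query{\sequenceVariable,a1}{\depthSymbol\headIndex}(\inputSymbol)\key{\sequenceVariable,b1}{\depthSymbol\headIndex}(\inputSymbol)|^3$, and then conditional Gaussianity of a single query/key entry given $\indexedActivity{\depthSymbol-1}{\sequenceVariable}{\inputSymbol}$ together with the Gaussian sixth-moment formula and Jensen, so that only sixth moments of the activations are needed from \Cref{lem:mmnt_propagation}. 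This is more elementary and avoids the combinatorial expansion (it is in fact the same conditioning device the paper uses inside the proof of \Cref{lem:mmnt_propagation} itself), whereas the paper's choice buys uniformity of exposition with the neighbouring fourth-moment lemma. Two cosmetic remarks: independence of $\WQ$ and $\WK$ is irrelevant for Cauchy--Schwarz (it only affects constants, which do not matter here), and the exchangeability you invoke is the assumption on the columns of $\activations{\sequenceVariable}{\depthSymbol-1}$ stated in \Cref{thm:gp_convergence_sqrt} rather than \Cref{lem:exchangeability}; alternatively you can skip exchangeability altogether, since the supremum in \Cref{lem:mmnt_propagation} is already taken over the column index.
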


\begin{proof}
    Using H{\" o}lder's inequality, it is sufficient to show $\limsup_{\sequenceVariable} \E | \summandLogit{\sequenceVariable, 1} |^4 < \infty$.
    Setting $R_{\sequenceVariable, j}^{\headIndex}(\inputSymbol) \coloneqq \query{\sequenceVariable, \cdot j}{\depthSymbol \headIndex}(\inputSymbol) (\key{\sequenceVariable, \cdot j}{\depthSymbol \headIndex}(\inputSymbol))^\top$
    \begin{align*}
        \E | \summandLogit{\sequenceVariable, 1} |^4
        =
        \sum_{\substack{(\inputSymbol, \headIndex) , (\inputSymbol', \headIndex')}}
        \!\!\!\!
            (\projectionCoefficientsLogit^{\inputSymbol, \headIndex})^\top
            \E \biggl[
                R_{\sequenceVariable, 1}^{\headIndex}(\inputSymbol)
                R_{\sequenceVariable, 1}^{\headIndex}(\inputSymbol)^\top
                \projectionCoefficientsLogit^{\inputSymbol, \headIndex}
                (\projectionCoefficientsLogit^{\inputSymbol', \headIndex'})^\top
                R_{\sequenceVariable, 1}^{\headIndex'}(\inputSymbol')
                R_{\sequenceVariable, 1}^{\headIndex'}(\inputSymbol')^\top
            \biggr]
            \projectionCoefficientsLogit^{\inputSymbol', \headIndex'}
        \, ,
    \end{align*}
    analogously to the~proof of \Cref{lem:logit_square_moments}.
    Substituting for the~individual terms and using H{\" o}lder's inequality, we we can see that each of the~terms in the~above sum can be itself decomposed into a~sum over $(\layerDimension{\depthSymbol - 1})^8$ terms that are up to a constant upper bounded by 
    \begin{equation*}
        \max_{a \in [\spatialDimension]} \max_{z \in \{ \inputSymbol, \inputSymbol' \} } 
        \E | \indexedActivity{\depthSymbol - 1}{\sequenceVariable, a 1}{z} |^8 
        \, ,
    \end{equation*}
    which means we can conclude this proof by bounding this quantity by a constant independent of $\sequenceVariable$ by \Cref{lem:mmnt_propagation}.
\end{proof}

\subsection{NTK convergence proof}\label{sect:ntk_proofs}

We need to prove convergence of the attention NTK at initialisation, i.e., for any $a, b \in [\spatialDimension]$, $i, j \in \natnum$, and $\inputSymbol, \inputSymbol' \in \indexSet$ 
\begin{align}
    \frac{
        \partial \indexedActivation{\depthSymbol}{\sequenceVariable, \fIndexA i}{\inputSymbol}
    }{
        \partial \params_{\sequenceVariable}^{\leq \depthSymbol}
    }
        \frac{
            \partial \indexedActivation{\depthSymbol}{\sequenceVariable, \fIndexB j}{\inputSymbol'}
        }{
            \partial \params_{\sequenceVariable}^{\leq \depthSymbol}
        }^\top
    \convergeProb
    \delta_{i=j}
    \ntkf{\fIndexA \fIndexB}{\depthSymbol}{\inputSymbol}{\inputSymbol'}
    \, ,
\end{align}
where $\params_{\sequenceVariable}^{\leq \depthSymbol}$ is the collection of trainable parameters in the first $\depthSymbol$ layers, as $\sequenceVariable \to \infty$.
We will further use $\params_{\sequenceVariable}^{\depthSymbol}$ to refer to the trainable parameters of the $\depthSymbol$\textsuperscript{th} layer; e.g., for the attention layer $\params_{\sequenceVariable}^{\depthSymbol} = \{ \uW_{\sequenceVariable}^{\depthSymbol} \} \cup \bigcup_{\headIndex = 1}^{\headDimension} \{ \uW_{\sequenceVariable}^{\depthSymbol \headIndex, \querySymbol} , \uW_{\sequenceVariable}^{\depthSymbol \headIndex, \keySymbol}, \uW_{\sequenceVariable}^{\depthSymbol \headIndex, \valueSymbol} \}$.

Note that
\begin{align}
    \frac{
        \partial \indexedActivation{\depthSymbol}{\sequenceVariable, \fIndexA i}{\inputSymbol}
    }{
        \partial \params_{\sequenceVariable}^{\leq \depthSymbol}
    }
        \frac{
            \partial \indexedActivation{\depthSymbol}{\sequenceVariable, \fIndexB j}{\inputSymbol'}
        }{
            \partial \params_{\sequenceVariable}^{\leq \depthSymbol}
        }^\top
    =
    \underbrace{
    \frac{
        \partial \indexedActivation{\depthSymbol}{\sequenceVariable, \fIndexA i}{\inputSymbol}
    }{
        \partial \params_{\sequenceVariable}^{\depthSymbol}
    }
        \frac{
            \partial \indexedActivation{\depthSymbol}{\sequenceVariable, \fIndexB j}{\inputSymbol'}
        }{
            \partial \params_{\sequenceVariable}^{\depthSymbol}
        }^\top
    }_{\text{direct}}
    +
    \underbrace{
    \frac{
        \partial \indexedActivation{\depthSymbol}{\sequenceVariable, \fIndexA i}{\inputSymbol}
    }{
        \partial \indexedActivity{\depthSymbol - 1}{\sequenceVariable}{\inputSymbol}
    }
    \frac{
        \partial \indexedActivity{\depthSymbol - 1}{\sequenceVariable}{\inputSymbol}
    }{
        \partial \params_{\sequenceVariable}^{< \depthSymbol}
    }
        \frac{
            \partial \indexedActivity{\depthSymbol - 1}{\sequenceVariable}{\inputSymbol'}
        }{
            \partial \params_{\sequenceVariable}^{< \depthSymbol}
        }^\top
        \frac{
            \partial \indexedActivation{\depthSymbol}{\sequenceVariable, \fIndexB j}{\inputSymbol'}
        }{
            \partial \indexedActivity{\depthSymbol - 1}{\sequenceVariable}{\inputSymbol'}
        }^\top
    }_{\text{indirect}}
    \, ,
\end{align}
where the \emph{direct} part corresponds to the contribution due to gradient w.r.t.\ the parameters of the $\depthSymbol$\textsuperscript{th} layer itself, and the \emph{indirect} part is due to effect of the $\depthSymbol$\textsuperscript{th} layer on the contribution due to the parameters of preceding layers.
The next two sections show convergence of each of these terms to a constant in probability, implying the desired result:

\begin{theorem}[NTK convergence]\label{thm:ntk_convergence}
    Under the assumptions of \Cref{thm:gp_convergence_sqrt} (including those stated at the beginning of \Cref{sect:proofs}), for any $\fIndexA, \fIndexB \in [\spatialDimension]$, and $\inputSymbol, \inputSymbol' \in \indexSet$
    \begin{align*}
        \frac{
            \partial \indexedActivation{\depthSymbol}{\sequenceVariable, \fIndexA i}{\inputSymbol}
        }{
            \partial \params_{\sequenceVariable}^{\leq \depthSymbol}
        }
            \frac{
                \partial \indexedActivation{\depthSymbol}{\sequenceVariable, \fIndexB j}{\inputSymbol'}
            }{
                \partial \params_{\sequenceVariable}^{\leq \depthSymbol}
            }^\top
        \convergeProb
        \delta_{i=j}
        \ntkf{\fIndexA \fIndexB}{\depthSymbol}{\inputSymbol}{\inputSymbol'}
        \, ,
    \end{align*}
    where
    \begin{align}
        \ntkf{\fIndexA \fIndexB}{\depthSymbol}{\inputSymbol}{\inputSymbol'}
        =
        &
        2 
        \kernelf{\fIndexA \fIndexB}{\depthSymbol}{\inputSymbol}{\inputSymbol'}
        +
        \OVVar
        \sum_{\substack{\gIndexA, \gIndexB}}^{\spatialDimension}
            \ntktildef{\gIndexA \gIndexB}{\depthSymbol}{\inputSymbol}{\inputSymbol'}
            \E [
                \tildeLogit{\fIndexA \gIndexA}{\depthSymbol 1} (\inputSymbol)
                \tildeLogit{\fIndexB \gIndexB}{\depthSymbol 1} (\inputSymbol')
            ]
        +
        \nonumber
        \\
        &
        \delta_{\scaling = \frac{1}{2}}
        \OVVar
        \QKVar
        (
            2\kerntildef{\fIndexA \fIndexB}{\depthSymbol}{\inputSymbol}{\inputSymbol'}
            +
            \ntktilde_{\fIndexA \fIndexB}^{\depthSymbol}(\inputSymbol, \inputSymbol')
        )
        \sum_{\substack{c_1, c_2 \\ d_1 , d_2}}^{\spatialDimension}
            \kerntildef{c_1 c_2}{\depthSymbol}{\inputSymbol}{\inputSymbol'}
            \kerntildef{d_1 d_2}{
            \depthSymbol}{\inputSymbol}{\inputSymbol'}
            \E \left[
                \frac{
                    \partial
                    \tildeLogitN{\fIndexA c_1}{\depthSymbol 1}{\inputSymbol}
                }{
                    \partial
                    \logitSymbol_{\fIndexA d_1}^{\depthSymbol 1}(\inputSymbol)
                }
                \frac{
                    \partial
                    \tildeLogitN{\fIndexB c_2}{\depthSymbol 1}{\inputSymbol'}
                }{
                    \partial
                    \logitSymbol_{\fIndexB d_2}^{\depthSymbol 1}(\inputSymbol')
                }
            \right]
        +
        \nonumber
        \\
        &
        \delta_{\substack{\scaling = \frac{1}{2}}}
        \OVVar
        \QKVar
        \kerntildef{\fIndexA \fIndexB}{\depthSymbol}{\inputSymbol}{\inputSymbol'}
        \sum_{\substack{c_1 , c_2 \\ d_1, d_2}}^{\depthSymbol}
            \kerntildef{c_1 c_2}{\depthSymbol}{\inputSymbol}{\inputSymbol'}
            \ntktilde_{d_1 d_2}^{\depthSymbol}(\inputSymbol, \inputSymbol')
            \E \left[
                \frac{
                    \partial
                    \tildeLogit{\fIndexA c_1}{\depthSymbol 1} (\inputSymbol)
                }{
                    \partial 
                    \logitSymbol_{\fIndexA d_1}^{\depthSymbol 1} (\inputSymbol)
                }
                \frac{
                    \partial
                    \tildeLogit{\fIndexB c_2}{\depthSymbol 1} (\inputSymbol')
                }{
                    \partial 
                    \logitSymbol_{\fIndexB d_2}^{\depthSymbol 1} (\inputSymbol')
                }
            \right]
        \, .
    \end{align}
\end{theorem}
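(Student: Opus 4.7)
\textbf{Proof plan for \Cref{thm:ntk_convergence}.}
The plan is to proceed by induction on $\depthSymbol$, using the already-established induction-step for the NNGP in \Cref{thm:gp_convergence_sqrt} together with the assumed convergence of the pre-layer NTK, i.e.\ $(\partial \indexedActivity{\depthSymbol-1}{\sequenceVariable}{\inputSymbol}/\partial \params_{\sequenceVariable}^{<\depthSymbol}) (\partial \indexedActivity{\depthSymbol-1}{\sequenceVariable}{\inputSymbol'}/\partial \params_{\sequenceVariable}^{<\depthSymbol})^{\top} \convergeProb \ntktildef{}{\depthSymbol}{\inputSymbol}{\inputSymbol'}$ (diagonally in the embedding index). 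The outer skeleton is to split the NTK into its \emph{direct} and \emph{indirect} pieces as in the excerpt, and then, within the direct piece, to split $\params_{\sequenceVariable}^{\depthSymbol}$ into its four groups $\{\uWO_{\sequenceVariable}, \uWV_{\sequenceVariable}, \uWQ_{\sequenceVariable}, \uWK_{\sequenceVariable}\}$. I expect each of the three surviving contribution families in the theorem's formula to come from a distinct source, which I would verify in the following order.

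\textbf{Direct contributions.} For the $\uWO$ piece, differentiating \Cref{eq:attention_out} yields an inner product of head-stacked activities $[\indexedActivation{\depthSymbol 1}{\sequenceVariable}{\inputSymbol}, \ldots]$; by essentially the same exchangeability/LLN argument used in \Cref{lem:head_var_convergence} (combined with uniform integrability via \Cref{lem:mmnt_propagation}), this converges in probability to $2\kernel_{ab}^{\depthSymbol}(\inputSymbol,\inputSymbol')$, where the factor $2$ reflects the NTK-parametrisation rescaling of the $\outStd^2$ term so that it matches the NNGP covariance. For the $\uWV$ piece, the gradient factors as $\tildeLogit{\sequenceVariable}{\depthSymbol \headIndex}(\inputSymbol)^{\top}\otimes \indexedActivity{\depthSymbol-1}{\sequenceVariable}{\inputSymbol}$ for each head; contracting, summing over heads, and applying the NNGP convergence of the pre-activities together with \Cref{lem:logit_dist_convergence} (plus continuous-mapping for $\softmax$) gives the $\OVVar \sum \ntktilde^{\depthSymbol}_{a'b'} \E[\tildeLogit{ab'}{\depthSymbol 1}\tildeLogit{aa'}{\depthSymbol 1}]$ term, with $\ntktilde^{\depthSymbol}$ arising through the induction hypothesis on the value-input activities. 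For the $\uWQ$ and $\uWK$ pieces, the gradient passes through $\softmax$ and hits the logit via $\partial \tildeLogit{}{}/\partial \logitSymbol$; here the scale factor is $\genericDimenstion^{-\scaling}$ from \Cref{eq:logit_def}. Under $\scaling=1$ the resulting contribution scales as $\genericDimenstion^{-2}$ times an $O(\genericDimenstion)$ sum, so it vanishes; under $\scaling=\tfrac12$ the $\genericDimenstion^{-1}$ times $O(\genericDimenstion)$ sum survives in the limit, producing exactly the two $\delta_{\scaling=1/2}$ terms of the theorem (one from $\uWQ$ paired with itself and with $\uWK$ at the same input $\inputSymbol$, analogously for $\inputSymbol'$).

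\textbf{Indirect contribution.} Here I would expand $\partial \indexedActivation{\depthSymbol}{\sequenceVariable,ai}{\inputSymbol}/\partial \indexedActivity{\depthSymbol-1}{\sequenceVariable}{\inputSymbol}$ into three summands corresponding to paths through $\valueN$, $\queryN$, $\keyN$. The $\valueN$ path contributes $\OVVar \sum \ntktilde^{\depthSymbol} \E[\tildeLogit{}{}\tildeLogit{}{}]$ (merging naturally with the direct-$\uWV$ term via the recursive NNGP/NTK structure through the previous-layer nonlinearity), while the query/key paths again pick up a $\genericDimenstion^{-\scaling}$ factor from the logit scaling, giving contributions that vanish when $\scaling=1$ and survive as the $\QKVar\,\ntktildef{ab}{\depthSymbol}{\inputSymbol}{\inputSymbol'}$-weighted part for $\scaling=\tfrac12$. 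Assembling all surviving summands (using $\OVStd^2 = \outStd^2\valueStd^2$ and $\QKStd^2 = \queryStd^2\keyStd^2$) reproduces the theorem's closed form. Throughout, convergence in probability of each summand follows by an instance of \Cref{thm:mean_convergence}: convergence in distribution via \Cref{lem:inner_prod_converge,lem:logit_dist_convergence} and the continuous-mapping theorem on $\softmax$ and $\nabla \softmax$, plus uniform integrability obtained from \Cref{lem:mmnt_propagation} together with the polynomial-bound assumption on $\softmax$ and $\nabla\softmax$.

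\textbf{Main obstacle.} The delicate step, and the one that really earns the $\delta_{\scaling=1/2}$ dichotomy, is the query/key contribution. Each such term is a sum of $\Theta(\logitDimension)$ products in which one factor is $\partial \tildeLogit{}{}/\partial \logitSymbol$ scaled by $\genericDimenstion^{-\scaling}$; the product of this scaling with the sum's size is $\genericDimenstion^{1-2\scaling}$, which is $\genericDimenstion^{-1}\to 0$ when $\scaling=1$ but order $1$ when $\scaling=\tfrac12$. Making this rigorous requires simultaneously (a) establishing joint convergence of $(\logitSymbol_{\sequenceVariable},\indexedActivity{\depthSymbol-1}{\sequenceVariable}{\inputSymbol})$ so that $\nabla\softmax$ evaluated at $\logitSymbol_{\sequenceVariable}$ can be averaged against $\tilde\kappa$-limits, and (b) controlling higher moments of $\nabla\softmax(\logitSymbol_{\sequenceVariable})$ uniformly in $\sequenceVariable$ to pass an expectation through the limit. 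I expect (b) to be the technically heaviest point: it mirrors \Cref{lem:head_all_sqmoments_converge,lem:logit_square_moments} but now with an extra derivative factor, and needs the a.e.-continuity-and-polynomial-bound assumption on $\nabla\softmax$ stated at the start of \Cref{sect:proofs}, together with Hölder and \Cref{lem:sup_ui} to pull uniform integrability out of the moment bounds on $\indexedActivity{\depthSymbol-1}{\sequenceVariable}{\inputSymbol}$.
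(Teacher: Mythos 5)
Your overall skeleton---splitting the NTK into direct and indirect parts, splitting the direct part over $\{\uWO,\uWV,\uWQ,\uWK\}$, and proving convergence of each piece via exchangeability-based LLN/Chebyshev arguments with uniform integrability from moment bounds---is exactly the paper's strategy (\Cref{lem:wo_ntk,lem:wv_ntk,lem:wqk_ntk,lem:gg_ntk,lem:vv_ntk,lem:gv_ntk}), and your proposed toolkit (\Cref{lem:inner_prod_converge,lem:logit_dist_convergence,lem:slutsky,lem:mmnt_propagation,lem:sup_ui} and \Cref{thm:mean_convergence}) is the right one. However, your attribution of the limiting terms to their sources contains genuine errors that would make the assembly fail. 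First, the $2\,\kernelf{\fIndexA\fIndexB}{\depthSymbol}{\inputSymbol}{\inputSymbol'}$ is not a parametrisation artefact of the $\uWO$ gradient: the $\uWO$ piece converges to \emph{one} copy of $\kernel^{\depthSymbol}_{\fIndexA\fIndexB}$ (\Cref{lem:wo_ntk}) and the direct $\uWV$ piece converges to a second copy (\Cref{lem:wv_ntk}); NTK parametrisation introduces no factor of two. Second, and more seriously, your claim that the direct $\uWV$ gradient yields $\OVVar\sum_{\gIndexA,\gIndexB}\ntktildef{\gIndexA\gIndexB}{\depthSymbol}{\inputSymbol}{\inputSymbol'}\E[\tildeLogit{\fIndexA\gIndexA}{\depthSymbol 1}(\inputSymbol)\tildeLogit{\fIndexB\gIndexB}{\depthSymbol 1}(\inputSymbol')]$ is structurally impossible: the previous-layer NTK $\ntktilde^{\depthSymbol}$ can only enter through the indirect chain-rule factor $\ntkhat$ of \Cref{eq:ntk_hat}, whereas a gradient with respect to the layer-$\depthSymbol$ value weights involves only the inputs $\activitySymbol^{\depthSymbol-1}_{\sequenceVariable}$ and hence only $\kerntilde^{\depthSymbol}$; in the paper that term is the value path of the \emph{indirect} part (\Cref{lem:gg_ntk}). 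Third, the direct query/key gradients produce only the $2\kerntilde_{\fIndexA\fIndexB}$-weighted portion of the third summand (\Cref{lem:wqk_ntk}); the $\ntktilde_{\fIndexA\fIndexB}$-weighted portion and the entire fourth summand (which carries $\ntktilde_{d_1 d_2}$ inside the sum) arise from the indirect query/key path (\Cref{lem:vv_ntk}), again because no direct gradient can generate $\ntktilde$ factors.

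Beyond this bookkeeping, your plan omits a step the paper needs: expanding the indirect part over the value and query/key paths and taking the product for the two inputs produces mixed value-times-logit-gradient cross terms, and these must be shown to converge to zero in probability (this is \Cref{lem:gv_ntk}, which uses that $\ntkhat_{\gIndexA\gIndexZA,\gIndexB\gIndexZB}^{\depthSymbol}\convergeProb 0$ for $\gIndexZA\neq\gIndexZB$ and, for $\scaling=\tfrac12$, the independence of key and query weights). Your heuristic for the $\delta_{\scaling=\frac12}$ dichotomy (a $\genericDimenstion^{1-2\scaling}$ count from the logit scaling against the sum size) is the correct intuition and matches how the $\delta_{\scaling=\frac12}$ factors emerge in \Cref{lem:wqk_ntk,lem:vv_ntk}, but as written the plan would assemble the wrong limit; the term-by-term accounting has to be corrected before the technical programme you describe can be carried out.
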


\Cref{thm:ntk_convergence} will be proven in the following two subsections.

\subsubsection{Direct contribution}

The direct contribution of an attention layer can be expanded as
\begin{align*}
    \frac{
        \partial \indexedActivation{\depthSymbol}{\sequenceVariable, \fIndexA i}{\inputSymbol}
    }{
        \partial \params_{\sequenceVariable}^{\depthSymbol}
    }
        \frac{
            \partial \indexedActivation{\depthSymbol}{\sequenceVariable, \fIndexB j}{\inputSymbol'}
        }{
            \partial \params_{\sequenceVariable}^{\depthSymbol}
        }^\top
    =
    &\frac{
        \partial \indexedActivation{\depthSymbol}{\sequenceVariable, \fIndexA i}{\inputSymbol}
    }{
        \partial \uW_\sequenceVariable^{\depthSymbol, \outputSymbol}
    }
    \frac{
        \partial \indexedActivation{\depthSymbol}{\sequenceVariable, \fIndexB j}{\inputSymbol'}
    }{
        \partial \uW_\sequenceVariable^{\depthSymbol, \outputSymbol}
    }^\top
    +
    \\
    &\sum_{\headIndex = 1}^{\headDimension}
        \frac{
            \partial \indexedActivation{\depthSymbol}{\sequenceVariable, \fIndexA i}{\inputSymbol}
        }{
            \partial \uW_\sequenceVariable^{\depthSymbol\headIndex, \valueSymbol}
        }
        \frac{
            \partial \indexedActivation{\depthSymbol}{\sequenceVariable, \fIndexB j}{\inputSymbol'}
        }{
            \partial \uW_\sequenceVariable^{\depthSymbol\headIndex, \valueSymbol}
        }^\top
        +
        \frac{
            \partial \indexedActivation{\depthSymbol}{\sequenceVariable, \fIndexA i}{\inputSymbol}
        }{
            \partial \uW_\sequenceVariable^{\depthSymbol\headIndex, \querySymbol}
        }
        \frac{
            \partial \indexedActivation{\depthSymbol}{\sequenceVariable, \fIndexB j}{\inputSymbol'}
        }{
            \partial \uW_\sequenceVariable^{\depthSymbol\headIndex, \querySymbol}
        }^\top
        +
        \frac{
            \partial \indexedActivation{\depthSymbol}{\sequenceVariable, \fIndexA i}{\inputSymbol}
        }{
            \partial \uW_\sequenceVariable^{\depthSymbol\headIndex, \keySymbol}
        }
        \frac{
            \partial \indexedActivation{\depthSymbol}{\sequenceVariable, \fIndexB j}{\inputSymbol'}
        }{
            \partial \uW_\sequenceVariable^{\depthSymbol\headIndex, \keySymbol}
        }^\top
    \, .
\end{align*}
We prove convergence of each of these terms next.

\begin{lemma}\label{lem:wo_ntk}
$
    \frac{
        \partial \indexedActivation{\depthSymbol}{\sequenceVariable, \fIndexA i}{\inputSymbol}
    }{
        \partial \uW_\sequenceVariable^{\depthSymbol, \outputSymbol}
    }
    \frac{
        \partial \indexedActivation{\depthSymbol}{\sequenceVariable, \fIndexB j}{\inputSymbol'}
    }{
        \partial \uW_\sequenceVariable^{\depthSymbol, \outputSymbol}
    }^\top
    \convergeProb
    \delta_{i = j}
    \kernelf{a b}{\depthSymbol}{\inputSymbol}{\inputSymbol'}
    \, .
$
\end{lemma}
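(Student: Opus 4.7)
The plan is to first evaluate the derivative in closed form, then contract it into the advertised quantity, and finally establish convergence in probability by an $L^2$ argument that reuses the second‑moment machinery already built up for \Cref{thm:gp_convergence_sqrt}. Under the NTK parametrisation we have $\weightsO = \tfrac{\outStd}{\sqrt{\headDimension \layerDimensionN}}\, \uW_\sequenceVariable^{\depthSymbol,\outputSymbol}$, so writing $\bigVector_{\sequenceVariable}^{\depthSymbol}(\inputSymbol) \coloneqq \bigl[\indexedActivation{\depthSymbol 1}{\sequenceVariable}{\inputSymbol},\ldots,\indexedActivation{\depthSymbol \headDimension}{\sequenceVariable}{\inputSymbol}\bigr] \in \R{\spatialDimension \times \headDimension\layerDimensionN}$ one obtains
\begin{equation*}
    \frac{\partial \indexedActivation{\depthSymbol}{\sequenceVariable, \fIndexA i}{\inputSymbol}}{\partial [\uW_\sequenceVariable^{\depthSymbol,\outputSymbol}]_{k i'}}
    =
    \frac{\outStd}{\sqrt{\headDimension \layerDimensionN}}\,
    \bigVector_{\sequenceVariable, \fIndexA k}^{\depthSymbol}(\inputSymbol)\,\delta_{i = i'},
\end{equation*}
and contracting two copies gives
\begin{equation*}
    \frac{\partial \indexedActivation{\depthSymbol}{\sequenceVariable, \fIndexA i}{\inputSymbol}}{\partial \uW_\sequenceVariable^{\depthSymbol, \outputSymbol}}
    \frac{\partial \indexedActivation{\depthSymbol}{\sequenceVariable, \fIndexB j}{\inputSymbol'}}{\partial \uW_\sequenceVariable^{\depthSymbol, \outputSymbol}}^{\!\top}
    =
    \delta_{i = j}\,
    \frac{\outVar}{\headDimension \layerDimensionN}
    \sum_{\headIndex = 1}^{\headDimension}
    \sum_{k = 1}^{\layerDimensionN}
    \indexedActivation{\depthSymbol \headIndex}{\sequenceVariable, \fIndexA k}{\inputSymbol}\,
    \indexedActivation{\depthSymbol \headIndex}{\sequenceVariable, \fIndexB k}{\inputSymbol'}.
\end{equation*}
Denote the averaged sum on the right by $\genericRV_\sequenceVariable$; it remains to prove $\genericRV_\sequenceVariable \convergeProb \kernelf{\fIndexA \fIndexB}{\depthSymbol}{\inputSymbol}{\inputSymbol'}$.

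Convergence of the mean is essentially already in hand. By \Cref{lem:exchangeability} the summands are jointly exchangeable over $\headIndex$ and (within each head) over $k$, so $\E \genericRV_\sequenceVariable = \outVar\, \E[\indexedActivation{\depthSymbol 1}{\sequenceVariable, \fIndexA 1}{\inputSymbol}\, \indexedActivation{\depthSymbol 1}{\sequenceVariable, \fIndexB 1}{\inputSymbol'}] / \layerDimensionN$. Following the computation in the proof of \Cref{lem:head_var_convergence}, this expectation equals
\begin{equation*}
    \outVar \valueVar\, \E\!\left[
        \softmax(\logitSymbol_{\sequenceVariable}^{\depthSymbol 1}(\inputSymbol))_{\fIndexA \cdot}
        \frac{\indexedActivity{\depthSymbol - 1}{\sequenceVariable}{\inputSymbol}\, \indexedActivity{\depthSymbol - 1}{\sequenceVariable}{\inputSymbol'}^{\!\top}}{\layerDimension{\depthSymbol - 1}}
        \softmax(\logitSymbol_{\sequenceVariable}^{\depthSymbol 1}(\inputSymbol'))_{\fIndexB \cdot}^{\!\top}
    \right],
\end{equation*}
and combining \Cref{lem:inner_prod_converge}, \Cref{lem:logit_dist_convergence}, Slutsky, and the dominated/uniform integrability argument identical to the one used in \Cref{lem:head_var_convergence} shows this converges to $\kernelf{\fIndexA \fIndexB}{\depthSymbol}{\inputSymbol}{\inputSymbol'}$ (compare \Cref{eq:sqrt_scaling_kernel}).

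For convergence in probability it suffices to show $\Var \genericRV_\sequenceVariable \to 0$. Expanding the square and partitioning the index set $\{(\headIndex,k,\headIndex',k')\}$ by whether $\headIndex = \headIndex'$ and whether $k = k'$, exchangeability collapses each block to a single representative moment, giving
\begin{equation*}
    \E \genericRV_\sequenceVariable^2
    =
    \tfrac{\outVar^2}{\headDimension \layerDimensionN}\,A_\sequenceVariable
    + \tfrac{\outVar^2(\layerDimensionN-1)}{\headDimension \layerDimensionN}\,B_\sequenceVariable
    + \tfrac{\outVar^2(\headDimension-1)}{\headDimension \layerDimensionN}\,C_\sequenceVariable
    + \tfrac{\outVar^2(\headDimension-1)(\layerDimensionN-1)}{\headDimension \layerDimensionN}\,D_\sequenceVariable,
\end{equation*}
where $A_\sequenceVariable, B_\sequenceVariable, C_\sequenceVariable, D_\sequenceVariable$ are the four fourth‑order moments of entries of $\indexedActivation{\depthSymbol \headIndex}{\sequenceVariable}{}$ obtained by taking $(\headIndex,k)=(\headIndex',k')$, $\headIndex=\headIndex'\!\neq\!\text{idx}$, etc. The first three prefactors tend to $0$ and the corresponding moments are bounded by \Cref{lem:mmnt_propagation} plus H{\"o}lder, so those blocks vanish; the leading $D_\sequenceVariable$ block, by precisely the argument in \Cref{lem:head_all_sqmoments_converge} (convergence in probability of ratios $\indexedActivity{\depthSymbol-1}{\sequenceVariable}{z}\indexedActivity{\depthSymbol-1}{\sequenceVariable}{z'}^{\top}/\layerDimension{\depthSymbol-1}$ together with distributional convergence of the softmax factors from \Cref{lem:logit_dist_convergence}, combined via Slutsky and uniform integrability), converges to $(\kernelf{\fIndexA \fIndexB}{\depthSymbol}{\inputSymbol}{\inputSymbol'})^{2}$. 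Hence $\E \genericRV_\sequenceVariable^2 \to (\kernelf{\fIndexA \fIndexB}{\depthSymbol}{\inputSymbol}{\inputSymbol'})^{2} = (\lim \E \genericRV_\sequenceVariable)^2$, which yields $\Var \genericRV_\sequenceVariable \to 0$ and thus $\genericRV_\sequenceVariable \convergeProb \kernelf{\fIndexA \fIndexB}{\depthSymbol}{\inputSymbol}{\inputSymbol'}$.

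The main obstacle is the $D_\sequenceVariable$ term, because the two heads share the same previous‑layer activations and so the four softmax factors and the two Gram ratios all depend on the same $\indexedActivity{\depthSymbol-1}{\sequenceVariable}{}$; independence is only asymptotic. The fix is exactly the one used in \Cref{lem:head_all_sqmoments_converge}: condition on the previous layer, use that the two value‑weight blocks are independent across $\headIndex \neq \headIndex'$ so the $k,k'$ summation factorises into two independent Gram averages, then invoke \Cref{lem:inner_prod_converge} to collapse each to its deterministic limit. Everything else --- exchangeability, uniform integrability via the polynomial envelope on $\softmax$ and \Cref{lem:mmnt_propagation}, and the passage from mean convergence to $L^2$ via \Cref{thm:mean_convergence} --- is already in place.
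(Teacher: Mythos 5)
Your proposal is correct and is essentially the paper's proof: the same reduction of the gradient contraction to the head-averaged sum of per-head products, the same mean computation (mirroring \Cref{lem:head_var_convergence}), and the same exchangeability-based second-moment/Chebyshev argument, which the paper merely packages as \Cref{lem:wlln_exch} applied to the per-head summands, while your dominant cross-head block is treated exactly as in \Cref{lem:head_all_sqmoments_converge}. One bookkeeping slip worth fixing: by exchangeability $\E[\genericRV_\sequenceVariable] = \outVar\, \E[\indexedActivation{\depthSymbol 1}{\sequenceVariable, \fIndexA 1}{\inputSymbol}\, \indexedActivation{\depthSymbol 1}{\sequenceVariable, \fIndexB 1}{\inputSymbol'}]$ with no residual $1/\layerDimensionN$ factor (the $\headDimension\layerDimensionN$ summands cancel the prefactor exactly), although the displayed formula you subsequently give for this expectation and its limit $\kernelf{\fIndexA \fIndexB}{\depthSymbol}{\inputSymbol}{\inputSymbol'}$ are the correct ones.
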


\begin{proof}[Proof of \Cref{lem:wo_ntk}]
Observe
\begin{align*}
    \frac{
        \partial \indexedActivation{\depthSymbol}{\sequenceVariable, \fIndexA i}{\inputSymbol}
    }{
        \partial \uW_\sequenceVariable^{\depthSymbol, \outputSymbol}
    }
    \frac{
        \partial \indexedActivation{\depthSymbol}{\sequenceVariable, \fIndexB j}{\inputSymbol'}
    }{
        \partial \uW_\sequenceVariable^{\depthSymbol, \outputSymbol}
    }^\top
    &=
    \delta_{i = j}
    \sum_{\headIndex = 1}^{\headDimension}
    \sum_{k = 1}^{\valueDimension}
        \frac{\outVar}{\headDimension \valueDimension}
        \indexedActivation{\depthSymbol\headIndex}{\sequenceVariable, \fIndexA k}{\inputSymbol}
        \indexedActivation{\depthSymbol\headIndex}{\sequenceVariable, \fIndexB k}{\inputSymbol'}
    \\
    &=
    \delta_{i = j}
    \frac{\outVar}{\headDimension}
    \sum_{\headIndex = 1}^{\headDimension}
    \sum_{c_1, c_2 = 1}^{\spatialDimension}
        \tildeLogitN{\sequenceVariable, \fIndexA c_1}{\depthSymbol \headIndex}{\inputSymbol}
        \tildeLogitN{\sequenceVariable, \fIndexB c_2}{\depthSymbol \headIndex}{\inputSymbol'}
        \frac{
            \langle
                \val{\sequenceVariable, c_1 \cdot}{\depthSymbol\headIndex}(\inputSymbol) 
                ,
                \val{\sequenceVariable, c_2 \cdot}{\depthSymbol\headIndex}(\inputSymbol')
            \rangle
        }{
            \valueDimension
        }
    \, .
\end{align*}
Since $\spatialDimension$ is fixed, we can focus on an arbitrary pair $c_1, c_2 \in [\spatialDimension]$.
Notice that by the continuous mapping theorem and \Cref{lem:inner_prod_converge,lem:slutsky}, the individual summands converge in distribution
$$
\tildeLogitN{\sequenceVariable, \fIndexA c_1}{\depthSymbol\headIndex}{\inputSymbol}
\tildeLogitN{\sequenceVariable, \fIndexB c_2}{\depthSymbol\headIndex}{\inputSymbol'}
\frac{
    \langle
        \val{\sequenceVariable, c_1 \cdot}{\depthSymbol\headIndex}(\inputSymbol) 
        ,
        \val{\sequenceVariable, c_2 \cdot}{\depthSymbol\headIndex}(\inputSymbol')
    \rangle
}{
    \valueDimension
}
\convergeDist
\valueVar
\tildeLogitN{\fIndexA c_1}{\depthSymbol\headIndex}{\inputSymbol}
\tildeLogitN{\fIndexB c_2}{\depthSymbol\headIndex}{\inputSymbol'}
\kerntildef{c_1 c_2}{\depthSymbol}{\inputSymbol}{\inputSymbol'}
\, ,
$$
where $\tildeLogit{}{\depthSymbol
\headIndex}$ follows the $\softmax_\#$ pushforward of the GP distribution of $\logitSymbol^{\depthSymbol}$ described in \Cref{thm:gp_convergence_sqrt} if $\scaling = \frac{1}{2}$, or $\tildeLogitN{}{\depthSymbol}{\inputSymbol} = \softmax (\queryStd \keyStd \kerntildef{}{\depthSymbol}{\inputSymbol}{\inputSymbol})$ a.s.\ if $\scaling = 1$ \citep[appendix A]{yang2019v2}.
The desired result could thus be established by application of \Cref{lem:wlln_exch}, averaging over the $\headIndex$ index, if its assumptions hold.

Starting with the exchangeability assumption, note that if we condition on $\indexedActivity{\depthSymbol - 1}{\sequenceVariable}{\inputSymbol}, \indexedActivity{\depthSymbol - 1}{\sequenceVariable}{\inputSymbol'}$, the individual terms are i.i.d.\ because the parameters of individual heads are i.i.d.
Since $\{\tildeLogitN{a c_1}{\depthSymbol \headIndex}{\inputSymbol} \tildeLogitN{b c_2}{\depthSymbol\headIndex}{\inputSymbol'}\}_{\headIndex \geq 1}$ are also i.i.d.\ (see \Cref{thm:gp_convergence_sqrt} for $\scaling  = \frac{1}{2}$, and constancy under $\scaling = 1$), it is also clear that the $\E [ \genericRV_{*, 1} \genericRV_{*, 2} ] = (\E [\genericRV_{*, 1}])^2$ is satisfied.
All that remains is to show $\limsup_{\rowIndex \to \infty} \E | \genericRV_{\rowIndex, 1} |^{2 + \varepsilon} < \infty$, and where we will use $\varepsilon = 2$ for convenience.
By H{\" o}lder's inequality
\begin{align*}
    \E \left\{
        \left[
            \tildeLogitN{\sequenceVariable, \fIndexA c_1}{\depthSymbol 1}{\inputSymbol}
            \tildeLogitN{\sequenceVariable, \fIndexB c_2}{\depthSymbol 1}{\inputSymbol'}
            \vphantom{
                \frac{
                    \langle
                        \val{\sequenceVariable, c_1 \cdot}{\depthSymbol 1}(\inputSymbol) 
                        ,
                        \val{\sequenceVariable, c_2 \cdot}{\depthSymbol 1}(\inputSymbol')
                    \rangle
                }{
                    \valueDimension
                }
            }
            \right.\right.
            &
            \left.\left.
            \frac{
                \langle
                    \val{\sequenceVariable, c_1 \cdot}{\depthSymbol 1}(\inputSymbol) 
                    ,
                    \val{\sequenceVariable, c_2 \cdot}{\depthSymbol 1}(\inputSymbol')
                \rangle
            }{
                \valueDimension
            }
        \right]^4
    \right\}
    \\
    &\lesssim
    \poly \biggl(
        \max_{c, c' \in [\spatialDimension], z \in \{\inputSymbol, \inputSymbol'\}}
            \E
                |
                    \tildeLogitN{\sequenceVariable, c c'}{\depthSymbol 1}{z} 
                |^{16} 
        ,
        \max_{c \in [\spatialDimension], z \in \{\inputSymbol, \inputSymbol'\}}
            \E |
                    \indexedActivity{\depthSymbol -1}{\sequenceVariable, c 1}{z}
            |^{16}
    \biggr)
    \, ,
\end{align*}
where we used the assumed exchangeability of $\indexedActivity{\depthSymbol-1}{\sequenceVariable}{z}$ over its columns.
Application of \Cref{lem:mmnt_propagation} implies that the above can be bounded by a constant independent of $\sequenceVariable$, implying all assumptions of \Cref{lem:wlln_exch} are satisfied.
\end{proof}

\begin{lemma}\label{lem:wv_ntk}
$
    \sum_{\headIndex = 1}^{\headDimension}
        \frac{
            \partial \indexedActivation{\depthSymbol}{\sequenceVariable, \fIndexA i}{\inputSymbol}
        }{
            \partial \uW_\sequenceVariable^{\depthSymbol\headIndex, \valueSymbol}
        }
        \frac{
            \partial \indexedActivation{\depthSymbol}{\sequenceVariable, \fIndexB j}{\inputSymbol'}
        }{
            \partial \uW_\sequenceVariable^{\depthSymbol\headIndex, \valueSymbol}
        }^\top
    \convergeProb
    \delta_{i = j}
    \kernelf{a b}{\depthSymbol}{\inputSymbol}{\inputSymbol'}
    \, .
$
\end{lemma}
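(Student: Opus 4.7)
The plan is to mirror the proof of \Cref{lem:wo_ntk}, with the extra complication that the Kronecker $\delta_{i=j}$ now arises \emph{asymptotically}, via an inner product of columns of $\uW_\sequenceVariable^{\depthSymbol,\outputSymbol}$, rather than exactly. First I would expand the gradient using the NTK parametrisation $\WV = \valueStd (\layerDimension{\depthSymbol - 1})^{-1/2} \uWV$ together with the chain rule through the output mixing matrix, obtaining
\begin{align*}
    \frac{\partial \indexedActivation{\depthSymbol}{\sequenceVariable, \fIndexA i}{\inputSymbol}}{\partial \uW_{\sequenceVariable, l k}^{\depthSymbol \headIndex, \valueSymbol}}
    =
    \frac{\outStd \valueStd}{\sqrt{\headDimension \valueDimension \layerDimension{\depthSymbol - 1}}}
    \Bigl[
        \sum_{c = 1}^{\spatialDimension}
            \tildeLogit{\sequenceVariable, \fIndexA c}{\depthSymbol \headIndex}(\inputSymbol)\,
            \indexedActivity{\depthSymbol - 1}{\sequenceVariable, c l}{\inputSymbol}
    \Bigr]
    \uW_{\sequenceVariable, (\headIndex - 1)\valueDimension + k, i}^{\depthSymbol, \outputSymbol}
    \, ,
\end{align*}
and an analogous expression at $(\inputSymbol', \fIndexB, j)$. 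Contracting over $(l, k)$ for a fixed head and then summing over heads produces a double sum over $c_1, c_2 \in [\spatialDimension]$ of the form
\begin{align*}
    \OVVar
    \sum_{c_1, c_2 = 1}^{\spatialDimension}
    \Bigl\{
        \frac{1}{\headDimension}
        \sum_{\headIndex = 1}^{\headDimension}
            \tildeLogit{\sequenceVariable, \fIndexA c_1}{\depthSymbol \headIndex}(\inputSymbol)\,
            \tildeLogit{\sequenceVariable, \fIndexB c_2}{\depthSymbol \headIndex}(\inputSymbol')\,
            \frac{\langle \uW_{\sequenceVariable, (\headIndex-1)\valueDimension + \cdot, i}^{\depthSymbol, \outputSymbol}, \uW_{\sequenceVariable, (\headIndex-1)\valueDimension + \cdot, j}^{\depthSymbol, \outputSymbol} \rangle}{\valueDimension}
    \Bigr\}
    \frac{\langle \indexedActivity{\depthSymbol - 1}{\sequenceVariable, c_1 \cdot}{\inputSymbol},\, \indexedActivity{\depthSymbol - 1}{\sequenceVariable, c_2 \cdot}{\inputSymbol'} \rangle}{\layerDimension{\depthSymbol - 1}}
    \, .
\end{align*}

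Next, since $\spatialDimension$ is finite, it is enough to analyse each $(c_1, c_2)$ pair separately. The rightmost factor converges in probability to $\kerntildef{c_1 c_2}{\depthSymbol}{\inputSymbol}{\inputSymbol'}$ by \Cref{lem:inner_prod_converge}, so by Slutsky (\Cref{lem:slutsky}) it is enough to analyse the braced per-head average. I would apply the exchangeable weak law \Cref{lem:wlln_exch}: exchangeability over $\headIndex$ follows by conditioning on $\{ \indexedActivity{\depthSymbol - 1}{\sequenceVariable}{\inputSymbol}, \indexedActivity{\depthSymbol - 1}{\sequenceVariable}{\inputSymbol'} \}$ (given which the per-head summands are i.i.d., since $\{\uW_\sequenceVariable^{\depthSymbol \headIndex, \querySymbol}, \uW_\sequenceVariable^{\depthSymbol \headIndex, \keySymbol}, \uW_\sequenceVariable^{\depthSymbol, \outputSymbol}\}$ are independent across heads). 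The crucial simplification is that $\uW_\sequenceVariable^{\depthSymbol, \outputSymbol}$ is independent of the $\tildeLogit$ factors, so the inner Gaussian quadratic form averages independently: $\valueDimension^{-1} \langle \uW_{\cdot i}, \uW_{\cdot j}\rangle \convergeProb \delta_{i = j}$ by the usual law of large numbers, while the head-averaged attention product converges in probability to $\E[\tildeLogit{\fIndexA c_1}{\depthSymbol 1}(\inputSymbol)\,\tildeLogit{\fIndexB c_2}{\depthSymbol 1}(\inputSymbol')]$ by \Cref{lem:logit_dist_convergence} and \Cref{thm:mean_convergence}. Combining everything yields the claimed limit $\delta_{i = j}\, \kernelf{\fIndexA \fIndexB}{\depthSymbol}{\inputSymbol}{\inputSymbol'}$ after recognising the right-hand side of \Cref{eq:sqrt_scaling_kernel} (or its $\scaling = 1$ analogue).

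The main obstacle is verifying the $(2 + \varepsilon)$-moment condition of \Cref{lem:wlln_exch} for the per-head summand, because unlike in \Cref{lem:wo_ntk} the summand now also contains the unbounded Gaussian factor $\valueDimension^{-1}\langle \uW_{\cdot i}^{\depthSymbol, \outputSymbol}, \uW_{\cdot j}^{\depthSymbol, \outputSymbol}\rangle$. I plan to handle this with H\"older's inequality, splitting off the output-weight inner product (whose moments are uniformly bounded in $\valueDimension$ by standard sub-exponential/Gaussian chaos estimates) from the product of $\tildeLogit$ factors and the empirical inner product of $\indexedActivity{\depthSymbol-1}{\sequenceVariable}{}$ columns, both of which have uniformly bounded moments of every order by \Cref{lem:mmnt_propagation} combined with the polynomial bound on $\softmax$. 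Uniform integrability then gives convergence of the full direct-contribution term, completing the argument.
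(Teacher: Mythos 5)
Your proposal is correct and is essentially the paper's own argument: the same gradient expansion into output weights, attention factors and activity inner products, then \Cref{lem:inner_prod_converge}, \Cref{lem:slutsky}, the exchangeable weak LLN (\Cref{lem:wlln_exch}) with exchangeability over heads by conditioning on $\indexedActivity{\depthSymbol - 1}{\sequenceVariable}{\inputSymbol}, \indexedActivity{\depthSymbol - 1}{\sequenceVariable}{\inputSymbol'}$, and H\"older plus \Cref{lem:mmnt_propagation} for the $(2+\varepsilon)$-moment bound. The only cosmetic difference is bookkeeping: the paper applies \Cref{lem:wlln_exch} over the double index $(\headIndex, k)$ keeping $\uW_{\sequenceVariable , k i}^{\depthSymbol \headIndex , \outputSymbol} \uW_{\sequenceVariable , k j}^{\depthSymbol \headIndex , \outputSymbol}$ (and the activity inner product) inside the summand, so that $\delta_{i=j}$ arises from $\E [\uW_{\sequenceVariable , k i}^{\depthSymbol \headIndex , \outputSymbol} \uW_{\sequenceVariable , k j}^{\depthSymbol \headIndex , \outputSymbol}] = \delta_{i=j}$, whereas you first collapse the $k$-sum into a per-head quadratic form converging in probability to $\delta_{i=j}$, factor out the activity inner product by Slutsky, and average over heads only.
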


\begin{proof}[Proof of \Cref{lem:wv_ntk}]
    Note that
    \begin{align*}
        \sum_{\headIndex = 1}^{\headDimension}
            \frac{
                \partial \indexedActivation{\depthSymbol}{\sequenceVariable, \fIndexA i}{\inputSymbol}
            }{
                \partial \uW_\sequenceVariable^{\depthSymbol\headIndex, \valueSymbol}
            }
            \frac{
                \partial \indexedActivation{\depthSymbol}{\sequenceVariable, \fIndexB j}{\inputSymbol'}
            }{
                \partial \uW_\sequenceVariable^{\depthSymbol\headIndex, \valueSymbol}
            }^\top
        &=
        \frac{\outVar}{\headDimension \valueDimension}
        \sum_{\headIndex = 1}^{\headDimension}
        \sum_{k = 1}^{\valueDimension}
            \uW_{\sequenceVariable, k i}^{\depthSymbol\headIndex, \outputSymbol}
            \uW_{\sequenceVariable, k j}^{\depthSymbol\headIndex, \outputSymbol}
            \frac{\valueVar}{\layerDimension{\depthSymbol - 1}}
            \left\langle
                \tildeLogitN{\sequenceVariable, \fIndexA \cdot}{\depthSymbol\headIndex}{\inputSymbol}
                \indexedActivity{\depthSymbol - 1}{}{\inputSymbol}
                ,
                \tildeLogitN{\sequenceVariable, \fIndexB \cdot}{\depthSymbol\headIndex}{\inputSymbol}
                \indexedActivity{\depthSymbol - 1}{}{\inputSymbol'}
            \right\rangle
        \\
        &=
        \frac{\outVar\valueVar}{\headDimension \valueDimension}
        \sum_{\headIndex, k}
        \sum_{c_1, c_2 = 1}^{\spatialDimension}
            \uW_{\sequenceVariable, k i}^{\depthSymbol\headIndex, \outputSymbol}
            \uW_{\sequenceVariable, k j}^{\depthSymbol\headIndex, \outputSymbol}
            \tildeLogitN{\sequenceVariable, \fIndexA c_1}{\depthSymbol\headIndex}{\inputSymbol}
            \tildeLogitN{\sequenceVariable, \fIndexB c_2}{\depthSymbol\headIndex}{\inputSymbol'}
            \frac{
                \langle
                    \indexedActivity{\depthSymbol - 1}{c_1 \cdot}{\inputSymbol}
                    ,
                    \indexedActivity{\depthSymbol - 1}{c_2 \cdot}{\inputSymbol'}
                \rangle
            }{
                \layerDimension{\depthSymbol - 1}
            }
        \, .
    \end{align*}
    Since $\spatialDimension$ is fixed, we can focus on an arbitrary $c_1, c_2 \in [\spatialDimension]$.
    Notice that by the assumed independence of the entries of $\uW_{\sequenceVariable}^{\depthSymbol\headIndex, \outputSymbol}$, the continuous mapping theorem and \Cref{lem:inner_prod_converge,lem:slutsky}, the individual summands converge in distribution
    $$
        \uW_{\sequenceVariable, k i}^{\depthSymbol\headIndex, \outputSymbol}
        \uW_{\sequenceVariable, k j}^{\depthSymbol\headIndex, \outputSymbol}
        \tildeLogitN{\sequenceVariable, \fIndexA c_1}{\depthSymbol\headIndex}{\inputSymbol}
        \tildeLogitN{\sequenceVariable, \fIndexB c_2}{\depthSymbol\headIndex}{\inputSymbol'}
        \frac{
            \langle
                \indexedActivity{\depthSymbol - 1}{c_1 \cdot}{\inputSymbol}
                ,
                \indexedActivity{\depthSymbol - 1}{c_2 \cdot}{\inputSymbol'}
            \rangle
        }{
            \layerDimension{\depthSymbol - 1}
        }
        \convergeDist
        \uW_{\sequenceVariable, k i}^{\depthSymbol\headIndex, \outputSymbol}
        \uW_{\sequenceVariable, k j}^{\depthSymbol\headIndex, \outputSymbol}
        \tildeLogitN{\fIndexA c_1}{\depthSymbol\headIndex}{\inputSymbol}
        \tildeLogitN{\fIndexB c_2}{\depthSymbol\headIndex}{\inputSymbol'}
        \kerntildef{c_1 c_2}{\depthSymbol}{\inputSymbol}{\inputSymbol'}
        \, ,
    $$
    with the distribution of
    $\tildeLogitN{}{\depthSymbol\headIndex}{\inputSymbol}$
    as in the proof of \Cref{lem:wo_ntk}.
    The desired result can thus again be obtained by applying \Cref{lem:wlln_exch}, averaging over $\headIndex$ and $k$, if its assumptions hold.
    As $\E [\uW_{\sequenceVariable , k i}^{\depthSymbol\headIndex, \outputSymbol} \uW_{\sequenceVariable, k j}^{\depthSymbol\headIndex, \outputSymbol}] = \delta_{i = j}$ and $\E |\uW_{\sequenceVariable , k i}^{\depthSymbol\headIndex, \outputSymbol}|^t < \infty$ for any $t \geq 1$, the same argument as in \Cref{lem:wo_ntk} applies.
\end{proof}

\begin{lemma}\label{lem:wqk_ntk}
$
    \sum_{\headIndex = 1}^{\headDimension}
        \frac{
            \partial \indexedActivation{\depthSymbol}{\sequenceVariable, \fIndexA i}{\inputSymbol}
        }{
            \partial \uW_\sequenceVariable^{\depthSymbol\headIndex, \querySymbol}
        }
        \frac{
            \partial \indexedActivation{\depthSymbol}{\sequenceVariable, \fIndexB j}{\inputSymbol'}
        }{
            \partial \uW_\sequenceVariable^{\depthSymbol\headIndex, \querySymbol}
        }^\top
        +
        \frac{
            \partial \indexedActivation{\depthSymbol}{\sequenceVariable, \fIndexA i}{\inputSymbol}
        }{
            \partial \uW_\sequenceVariable^{\depthSymbol\headIndex, \keySymbol}
        }
        \frac{
            \partial \indexedActivation{\depthSymbol}{\sequenceVariable, \fIndexB j}{\inputSymbol'}
        }{
            \partial \uW_\sequenceVariable^{\depthSymbol\headIndex, \keySymbol}
        }^\top
    $
    converges in probability to
    $$
    \delta_{i = j}
    \delta_{\scaling = \frac{1}{2}}
    2
    \OVVar
    \QKVar
    \kerntildef{a b}{\depthSymbol}{\inputSymbol}{\inputSymbol'}
    \sum_{\substack{c_1, c_2 \\ d_1 , d_2}}^{\spatialDimension}
        \kerntildef{c_1 c_2}{\depthSymbol}{\inputSymbol}{\inputSymbol'}
        \kerntildef{d_1 d_2}{
        \depthSymbol}{\inputSymbol}{\inputSymbol'}
        \E \left[
            \frac{
                \partial
                \tildeLogitN{\fIndexA c_1}{\depthSymbol 1}{\inputSymbol}
            }{
                \partial
                \logitSymbol_{\fIndexA d_1}^{\depthSymbol 1}(\inputSymbol)
            }
            \frac{
                \partial
                \tildeLogitN{\fIndexB c_2}{\depthSymbol 1}{\inputSymbol'}
            }{
                \partial
                \logitSymbol_{\fIndexB d_2}^{\depthSymbol 1}(\inputSymbol')
            }
        \right]
    \, .
    $$
\end{lemma}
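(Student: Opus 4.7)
The plan is to establish the claim for the $\uW^{\depthSymbol\headIndex, \querySymbol}$ contribution first; the $\uW^{\depthSymbol\headIndex, \keySymbol}$ term is handled identically by the $\querySymbol \leftrightarrow \keySymbol$ symmetry of $\logitN$, which produces the factor of two in the stated limit. The starting step is the chain rule: because $\indexedActivation{\depthSymbol}{\sequenceVariable, \fIndexA i}{\inputSymbol}$ depends on $\uW_\sequenceVariable^{\depthSymbol\headIndex, \querySymbol}$ only through the row $\softmax(\logitSymbol_\sequenceVariable^{\depthSymbol\headIndex}(\inputSymbol))_{\fIndexA \cdot}$, differentiation factors the gradient into (i) an output--value block $\sqrt{\outVar/(\headDimension \valueDimension)}\, \uW_{\sequenceVariable, k i}^{\depthSymbol \headIndex, \outputSymbol} \val{\sequenceVariable, c k}{\depthSymbol \headIndex}(\inputSymbol)$, (ii) a softmax Jacobian entry $\partial \softmax(\logitSymbol_\sequenceVariable^{\depthSymbol \headIndex}(\inputSymbol))_{\fIndexA c}/\partial \logitSymbol_{\fIndexA d}^{\depthSymbol \headIndex}(\inputSymbol)$, and (iii) a pre-activation sensitivity $\partial \logitSymbol_{\fIndexA d}^{\depthSymbol\headIndex}(\inputSymbol) / \partial \uW^{\depthSymbol\headIndex, \querySymbol}_{\sequenceVariable, pq} = \logitDimension^{-\scaling}\queryStd (\layerDimension{\depthSymbol-1})^{-1/2} \indexedActivity{\depthSymbol - 1}{\sequenceVariable, \fIndexA p}{\inputSymbol} \key{\sequenceVariable, dq}{\depthSymbol\headIndex}(\inputSymbol)$. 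The Frobenius inner product with the analogous derivative for $\indexedActivation{\depthSymbol}{\sequenceVariable, \fIndexB j}{\inputSymbol'}$ then becomes a sum over $\headIndex, p, q, k, k'$ and the logit spatial indices $c_1, c_2, d_1, d_2$.

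Next I would isolate three inner-product sub-sums that converge by known arguments. The sum over $p$ yields $(\layerDimension{\depthSymbol - 1})^{-1} \sum_p \indexedActivity{\depthSymbol - 1}{\sequenceVariable, \fIndexA p}{\inputSymbol}\indexedActivity{\depthSymbol - 1}{\sequenceVariable, \fIndexB p}{\inputSymbol'} \convergeProb \kerntildef{\fIndexA \fIndexB}{\depthSymbol}{\inputSymbol}{\inputSymbol'}$ by \Cref{lem:inner_prod_converge}; the sum over $q$ gives $\logitDimension^{-1} \sum_q \key{\sequenceVariable, d_1 q}{\depthSymbol \headIndex}(\inputSymbol) \key{\sequenceVariable, d_2 q}{\depthSymbol \headIndex}(\inputSymbol') \convergeProb \keyVar \kerntildef{d_1 d_2}{\depthSymbol}{\inputSymbol}{\inputSymbol'}$ by combining the same lemma with a within-layer WLLN over the i.i.d.\ columns of $\uW^{\depthSymbol\headIndex, \keySymbol}$; and the $(k, k')$ sum collapses via the independence of the $\uW^{\depthSymbol, \outputSymbol}$ entries to $\delta_{i = j} \outVar \valueVar \kerntildef{c_1 c_2}{\depthSymbol}{\inputSymbol}{\inputSymbol'}$, following the same reasoning as in \Cref{lem:wo_ntk,lem:wv_ntk}. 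Collecting the powers of the dimensions leaves an overall prefactor $\logitDimension^{1 - 2\scaling}$, which equals $1$ when $\scaling = 1/2$ but decays as $\logitDimension^{-1}$ when $\scaling = 1$, producing the $\delta_{\scaling = 1/2}$ indicator in the conclusion.

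Given these scalar convergences, what remains is the $\headDimension^{-1}\sum_\headIndex$ of the softmax-Jacobian product at the two inputs. These head-indexed terms are exchangeable --- conditioning on the preceding layer's activities as in \Cref{lem:head_exchangeability} makes them i.i.d.\ --- so I would invoke \Cref{lem:wlln_exch} to replace the average by the limiting expectation under the joint distribution of $(\logitSymbol^{\depthSymbol 1}, \activationSymbol^{\depthSymbol - 1})$ identified in \Cref{lem:logit_dist_convergence} and \Cref{thm:gp_convergence_sqrt}, yielding the claimed Jacobian-product expectation. The $(2 + \varepsilon)$-moment control required by \Cref{lem:wlln_exch} follows from H{\"o}lder's inequality combined with the polynomial growth of $\nabla \softmax$ and \Cref{lem:mmnt_propagation}; distributional convergence of the individual summands then comes from \Cref{lem:slutsky} and the continuous mapping theorem.

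The main obstacle I anticipate is the interaction between the a.e.-continuity of $\nabla \softmax$ and the continuous mapping theorem. Because the limit of $\logitSymbol^{\depthSymbol 1}$ is Gaussian (hence absolutely continuous), its law assigns zero mass to the discontinuity set of $\nabla \softmax$ and the a.e.-continuous version of the continuous mapping theorem applies; but chaining this with the simultaneous convergence of the activity, key and value inner products via Slutsky-type arguments, and then upgrading from weak convergence to convergence of moments via \Cref{thm:mean_convergence}, is where the book-keeping becomes most delicate.
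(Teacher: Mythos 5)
Your route is in outline the paper's own: differentiate through the logits via the chain rule (working with one of $\uW^{\querySymbol},\uW^{\keySymbol}$ and doubling by symmetry), pull out the activity and key/query inner products via \Cref{lem:inner_prod_converge}, obtain the $\delta_{\scaling=\frac12}$ indicator from the $(\logitDimension)^{1-2\scaling}$ prefactor, and finish with an exchangeability-based LLN over heads plus H{\"o}lder, \Cref{lem:mmnt_propagation}, \Cref{lem:sup_ui}, \Cref{lem:slutsky} and \Cref{thm:mean_convergence}. However, one step would fail as stated: you claim the $(k,k')$ sum ``collapses via the independence of the $\uW^{\depthSymbol,\outputSymbol}$ entries'' to $\delta_{i=j}\outVar\valueVar\kerntildef{c_1 c_2}{\depthSymbol}{\inputSymbol}{\inputSymbol'}$, and only \emph{afterwards} average the softmax-Jacobian product over heads. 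Within a single head the double sum
\begin{align*}
    \frac{1}{\valueDimension}
    \sum_{k_1, k_2}
        \uW_{\sequenceVariable, k_1 i}^{\depthSymbol\headIndex,\outputSymbol}
        \uW_{\sequenceVariable, k_2 j}^{\depthSymbol\headIndex,\outputSymbol}
        \val{\sequenceVariable, c_1 k_1}{\depthSymbol\headIndex}(\inputSymbol)
        \val{\sequenceVariable, c_2 k_2}{\depthSymbol\headIndex}(\inputSymbol')
\end{align*}
does \emph{not} converge in probability to that constant: its off-diagonal ($k_1 \neq k_2$) part has zero mean but $\Theta(1)$ variance, since it equals $\valueDimension^{-1}$ times the product of two sums each of size $O(\sqrt{\valueDimension})$ minus a diagonal correction. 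That fluctuation is only killed by the additional average over the $\headDimension$ conditionally independent heads. Consequently the sequential factorisation ``collapse the value/output block, then apply \Cref{lem:wlln_exch} to what remains (the head-average of the Jacobian product)'' is not legitimate: the output weights, values, key inner product and Jacobians are coupled within each head, and an average over $\headIndex$ of products does not factor into a product of averages. Note also that neither \Cref{lem:wo_ntk} nor \Cref{lem:wv_ntk}, which you cite as templates, has this double-$k$ structure.

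The repair is exactly what the paper does: treat the \emph{entire} per-$(\headIndex, k_1, k_2)$ summand (output weights, values, key/query inner product scaled by $(\logitDimension)^{2\scaling}$, and both Jacobian factors) as a single exchangeable array $\generalMean_{\sequenceVariable}$, identify $\E[\generalMean_{\sequenceVariable}]$ by exchangeability, and run a Chebyshev/second-moment argument on $\generalMean_{\sequenceVariable}$ directly: expand $\E[\generalMean_{\sequenceVariable}^2]$ over $\headIndex_1,\headIndex_2,k_1,\ldots,k_4$, bound every summand by a constant via H{\"o}lder and \Cref{lem:mmnt_propagation}, and observe that only the index patterns contributing $\mathcal{O}((\headDimension\valueDimension)^2)$ terms survive; the limiting expectation then factorises because the inner products converge to constants (\Cref{lem:inner_prod_converge}) and the per-head Jacobian limits are independent across heads (\Cref{thm:gp_convergence_sqrt}), so $\E[\generalMean_{\sequenceVariable}^2] - \{\E[\generalMean_{\sequenceVariable}]\}^2 \to 0$. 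With that restructuring your remaining ingredients (the $\querySymbol\leftrightarrow\keySymbol$ symmetry for the factor $2$, the scaling argument for $\delta_{\scaling=\frac12}$ — which correctly does not rely on independence of $\uW^{\querySymbol}$ and $\uW^{\keySymbol}$, important since $\WQ=\WK$ a.s.\ when $\scaling=1$ — and the uniform-integrability bookkeeping) coincide with the paper's proof.
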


\begin{proof}[Proof of \Cref{lem:wqk_ntk}]
    By symmetry, it is sufficient to prove convergence for the gradients w.r.t.\ $\uW_\sequenceVariable^{\depthSymbol\headIndex, \keySymbol}$.
    Observe
    \begin{align*}
        &\sum_{\headIndex = 1}^{\headDimension}
            \frac{
                \partial \indexedActivation{\depthSymbol}{\sequenceVariable, \fIndexA i}{\inputSymbol}
            }{
                \partial \uW_\sequenceVariable^{\depthSymbol\headIndex, \keySymbol}
            }
            \frac{
                \partial \indexedActivation{\depthSymbol}{\sequenceVariable, \fIndexB j}{\inputSymbol'}
            }{
                \partial \uW_\sequenceVariable^{\depthSymbol\headIndex, \keySymbol}
            }^\top
        \\
        &=
        \frac{\outVar}{\headDimension \valueDimension}
        \sum_{\headIndex = 1}^{\headDimension}
        \sum_{k_1, k_2 = 1}^{\valueDimension}
        \sum_{\substack{c_1, c_2 \\ d_1, d_2}}^{\spatialDimension}
            \uW_{\sequenceVariable, k_1 i}^{\depthSymbol\headIndex, \outputSymbol}
            \uW_{\sequenceVariable, k_2 j}^{\depthSymbol\headIndex, \outputSymbol}
            \val{\sequenceVariable, c_1 k_1}{\depthSymbol\headIndex} (\inputSymbol)
            \val{\sequenceVariable, c_2 k_2}{\depthSymbol\headIndex} (\inputSymbol')
            \frac{
                \partial
                \tildeLogitN{\sequenceVariable, \fIndexA c_1}{\depthSymbol\headIndex}{\inputSymbol}
            }{
                \partial
                \logitSymbol_{\sequenceVariable, \fIndexA d_1}^{\depthSymbol\headIndex}(\inputSymbol)
            }
            \frac{
                \partial
                \tildeLogitN{\sequenceVariable, \fIndexB c_2}{\depthSymbol\headIndex}{\inputSymbol'}
            }{
                \partial
                \logitSymbol_{\sequenceVariable, \fIndexB d_2}^{\depthSymbol\headIndex}(\inputSymbol')
            }
            \frac{
                \partial
                \logitSymbol_{\sequenceVariable, \fIndexA d_1}^{\depthSymbol\headIndex}(\inputSymbol)
            }{
                \partial \uW_\sequenceVariable^{\depthSymbol\headIndex, \keySymbol}
            }
            \frac{
                \partial
                \logitSymbol_{\sequenceVariable, \fIndexB d_2}^{\depthSymbol\headIndex}(\inputSymbol')
            }{
                \partial \uW_\sequenceVariable^{\depthSymbol\headIndex, \keySymbol}
            }^\top
        \, .
    \end{align*}
    Since $\spatialDimension$ is fixed, we can focus on arbitrary $c_1, c_2, d_1, d_2 \in [\spatialDimension]$.
    Rewriting the r.h.s.\ above for one such choice, we obtain
    \begin{align*}
        \frac{\outVar \keyVar}{\headDimension \valueDimension}
        \! \!
        \sum_{\headIndex , k_1, k_2}
            \! \!
            \uW_{\sequenceVariable, k_1 i}^{\depthSymbol\headIndex, \outputSymbol}
            \uW_{\sequenceVariable, k_2 j}^{\depthSymbol\headIndex, \outputSymbol}
            \val{\sequenceVariable, c_1 k_1}{\depthSymbol\headIndex} (\inputSymbol)
            \val{\sequenceVariable, c_2 k_2}{\depthSymbol\headIndex} (\inputSymbol')
            \frac{
                \partial
                \tildeLogitN{\sequenceVariable, \fIndexA c_1}{\depthSymbol\headIndex}{\inputSymbol}
            }{
                \partial
                \logitSymbol_{\sequenceVariable, \fIndexA d_1}^{\depthSymbol\headIndex}(\inputSymbol)
            }
            \frac{
                \partial
                \tildeLogitN{\sequenceVariable, \fIndexB c_2}{\depthSymbol\headIndex}{\inputSymbol'}
            }{
                \partial
                \logitSymbol_{\sequenceVariable, \fIndexB d_2}^{\depthSymbol\headIndex}(\inputSymbol')
            }
            \frac{
                \langle
                    \query{\sequenceVariable, \fIndexA \cdot}{\depthSymbol\headIndex} (\inputSymbol)
                    ,
                    \query{\sequenceVariable, \fIndexB \cdot}{\depthSymbol\headIndex} (\inputSymbol')
                \rangle
            }{
                (\logitDimension)^{2\scaling}
            }
            \frac{
                \langle
                    \indexedActivity{\depthSymbol - 1}{d_1 \cdot}{\inputSymbol}
                    ,
                    \indexedActivity{\depthSymbol - 1}{d_2 \cdot}{\inputSymbol'}
                \rangle
            }{
                \layerDimension{\depthSymbol - 1}
            }
        \, .
    \end{align*}
    Noting that 
    $
            \langle
                \indexedActivity{\depthSymbol - 1}{d_1 \cdot}{\inputSymbol}
                ,
                \indexedActivity{\depthSymbol - 1}{d_2 \cdot}{\inputSymbol'}
            \rangle
        /
            \layerDimension{\depthSymbol - 1}
    $
    only depends on the spatial dimension indices $d_1$ and $d_2$, we can use \Cref{lem:inner_prod_converge} to establish it converges in probability to $\kerntildef{d_1 d_2}{\depthSymbol}{\inputSymbol}{\inputSymbol'}$, implying that we only need to prove that the rest of the terms in the above sum also converges in probability.
    Let
    \begin{align*}
        \generalMean_{\sequenceVariable}
        =
        \frac{1}{\headDimension \valueDimension}
        \sum_{\headIndex, k_1, k_2}
            \uW_{\sequenceVariable, k_1 i}^{\depthSymbol\headIndex, \outputSymbol}
            \uW_{\sequenceVariable, k_2 j}^{\depthSymbol\headIndex, \outputSymbol}
            \val{\sequenceVariable, c_1 k_1}{\depthSymbol\headIndex} (\inputSymbol)
            \val{\sequenceVariable, c_2 k_2}{\depthSymbol\headIndex} (\inputSymbol')
            \frac{
                \partial
                \tildeLogitN{\sequenceVariable, \fIndexA c_1}{\depthSymbol\headIndex}{\inputSymbol}
            }{
                \partial
                \logitSymbol_{\sequenceVariable, \fIndexA d_1}^{\depthSymbol\headIndex}(\inputSymbol)
            }
            \frac{
                \partial
                \tildeLogitN{\sequenceVariable, \fIndexB c_2}{\depthSymbol\headIndex}{\inputSymbol'}
            }{
                \partial
                \logitSymbol_{\sequenceVariable, \fIndexB d_2}^{\depthSymbol\headIndex}(\inputSymbol')
            }
            \frac{
                \langle
                    \query{\sequenceVariable, \fIndexA \cdot}{\depthSymbol\headIndex} (\inputSymbol)
                    ,
                    \query{\sequenceVariable, \fIndexB \cdot}{\depthSymbol\headIndex} (\inputSymbol')
                \rangle
            }{
                (\logitDimension)^{2\scaling}
            }
        \, ,
    \end{align*}
    and note that 
    $
        \E [\generalMean_{\sequenceVariable}]
        =
        \delta_{i = j} 
        \E \left[
            \frac{
                \langle
                    \query{\sequenceVariable, \fIndexA \cdot}{\depthSymbol 1} (\inputSymbol)
                    ,
                    \query{\sequenceVariable, \fIndexB \cdot}{\depthSymbol 1} (\inputSymbol')
                \rangle
            }{
                (\logitDimension)^{2\scaling}
            }
            \frac{
                \langle
                    \indexedActivity{\depthSymbol - 1}{\sequenceVariable, c_1 \cdot}{\inputSymbol}
                    ,
                    \indexedActivity{\depthSymbol - 1}{\sequenceVariable, c_2 \cdot}{\inputSymbol'}
                \rangle
            }{
                \layerDimension{\depthSymbol - 1}
            }
            \frac{
                \partial
                \tildeLogitN{\sequenceVariable, \fIndexA c_1}{\depthSymbol 1}{\inputSymbol}
            }{
                \partial
                \logitSymbol_{\sequenceVariable, \fIndexA d_1}^{\depthSymbol 1}(\inputSymbol)
            }
            \frac{
                \partial
                \tildeLogitN{\sequenceVariable, \fIndexB c_2}{\depthSymbol 1}{\inputSymbol'}
            }{
                \partial
                \logitSymbol_{\sequenceVariable, \fIndexB d_2}^{\depthSymbol 1}(\inputSymbol')
            }
        \right]
    $
    by exchangeability.
    This suggests that the required result could be obtained using the Chebyshev's inequality
    \begin{align*}
        \Prob (
            | \generalMean_{\sequenceVariable} - \E \generalMean_{n} |
            \geq
            \delta
        )
        \leq
        \frac{\E [ \generalMean_{\sequenceVariable}^2 ] - \{ \E [ \generalMean_{\sequenceVariable} ] \}^2 }{\delta^2}
        \, ,
    \end{align*}
    if $\E [ \generalMean_{\sequenceVariable} ]$ converges to the desired limit.
    To establish this convergence, observe
    \begin{align}\label{eq:wqk_weak_limit_mean}
        \delta_{i = j}
        \frac{
            \langle
                \query{\sequenceVariable, \fIndexA \cdot}{\depthSymbol 1} (\inputSymbol)
                ,
                \query{\sequenceVariable, \fIndexB \cdot}{\depthSymbol 1} (\inputSymbol')
            \rangle
        }{
            (\logitDimension)^{2\scaling}
        }
        \frac{
            \langle
                \indexedActivity{\depthSymbol - 1}{\sequenceVariable, c_1 \cdot}{\inputSymbol}
                ,
                \indexedActivity{\depthSymbol - 1}{\sequenceVariable, c_2 \cdot}{\inputSymbol'}
            \rangle
        }{
            \layerDimension{\depthSymbol - 1}
        }
        &\frac{
            \partial
            \tildeLogitN{\sequenceVariable, \fIndexA c_1}{\depthSymbol 1}{\inputSymbol}
        }{
            \partial
            \logitSymbol_{\sequenceVariable, \fIndexA d_1}^{\depthSymbol 1}(\inputSymbol)
        }
        \frac{
            \partial
            \tildeLogitN{\sequenceVariable, \fIndexB c_2}{\depthSymbol 1}{\inputSymbol'}
        }{
            \partial
            \logitSymbol_{\sequenceVariable, \fIndexB d_2}^{\depthSymbol 1}(\inputSymbol')
        }
        \nonumber
        \\
        \convergeDist
        \delta_{i = j}
        \delta_{\scaling = \frac{1}{2}}
        \queryVar
        \kerntildef{a b}{\depthSymbol}{\inputSymbol}{\inputSymbol'}
        \kerntildef{c_1 c_2}{\depthSymbol}{\inputSymbol}{\inputSymbol'}
        &\frac{
            \partial
            \tildeLogitN{\fIndexA c_1}{\depthSymbol 1}{\inputSymbol}
        }{
            \partial
            \logitSymbol_{\fIndexA d_1}^{\depthSymbol 1}(\inputSymbol)
        }
        \frac{
            \partial
            \tildeLogitN{\fIndexB c_2}{\depthSymbol 1}{\inputSymbol'}
        }{
            \partial
            \logitSymbol_{\fIndexB d_2}^{\depthSymbol 1}(\inputSymbol')
        }
        \, ,
    \end{align}
    since the first two terms converge in probability (\Cref{lem:inner_prod_converge}), and the last converges in distribution by \Cref{thm:gp_convergence_sqrt} and the continuous mapping theorem, implying that the product of all three thus converges in distribution by \Cref{lem:slutsky}.
    Convergence of $\E [\generalMean_{\sequenceVariable}]$ could thus be obtained by establishing uniform integrability of the $(\generalMean_{\sequenceVariable})_{\sequenceVariable \geq 1}$ sequence (\Cref{thm:mean_convergence}).
    
    By \Cref{lem:sup_ui}, uniform integrability of $\generalMean_{\sequenceVariable}$ can be established by showing $\E [ \generalMean_{\sequenceVariable}^2 ] \to \{ \E [ \generalMean_{*} ] \}^2$ which would also imply $\generalMean_{\sequenceVariable} \convergeProb \E [\generalMean_{*}]$ by the above Chebyshev's inequality.
    For the rest of this proof, we drop the $\inputSymbol, \inputSymbol'$ from our equations for brevity; this allows us to write
    \begin{align*}
        \E [\generalMean_{\sequenceVariable}^2]
        =
        \frac{1}{(\headDimension \valueDimension)^2}
        \sum_{\substack{\headIndex_1, \headIndex_2 \\ k_1, k_2, k_3, k_4}}
            \E \left[
                \prod_{t = 0}^{1}
                \uW_{\sequenceVariable, k_{2t + 1} i}^{\depthSymbol\headIndex_{t + 1}, \outputSymbol}
                \uW_{\sequenceVariable, k_{2t + 2} j}^{\depthSymbol\headIndex_{t + 1}, \outputSymbol}
                \val{\sequenceVariable, c_1 k_{2t + 1}}{\depthSymbol\headIndex_{t + 1}}
                \val{\sequenceVariable, c_2 k_{2t + 2}}{\depthSymbol\headIndex_{t + 1}}
                \frac{
                    \langle
                        \query{\sequenceVariable, \fIndexA \cdot}{\depthSymbol\headIndex_{t + 1}}
                        ,
                        \query{\sequenceVariable, \fIndexB \cdot}{\depthSymbol\headIndex_{t+1}}
                    \rangle
                }{
                    (\logitDimension)^{2\scaling}
                }
                \frac{
                    \partial
                    \tildeLogit{\sequenceVariable, \fIndexA c_1}{\depthSymbol\headIndex_{t + 1}}
                }{
                    \partial
                    \logitSymbol_{\sequenceVariable, \fIndexA d_1}^{\depthSymbol\headIndex_{t + 1}}
                }
                \frac{
                    \partial
                    \tildeLogit{\sequenceVariable, \fIndexB c_2}{\depthSymbol\headIndex_{t + 1}}
                }{
                    \partial
                    \logitSymbol_{\sequenceVariable, \fIndexB d_2}^{\depthSymbol\headIndex_{t + 1}}
                }
            \right]
        \, .
    \end{align*}
    From above, we can restrict our attention to groups of terms that include at least $\mathcal{O}((\headDimension \valueDimension)^2)$ of the summands as long as the expectation of the square of each term can be bounded by a constant independent of the $\headIndex, k$ and $\sequenceVariable$ indices.
    \begin{align}\label{eq:wqk_weak_limit_bound}
        \E \left\{
            \left[
                \prod_{t = 0}^{1}
                \right.\right.
                &
                \left.\left.
                \uW_{\sequenceVariable, k_{2t + 1} i}^{\depthSymbol\headIndex_{t + 1}, \outputSymbol}
                \uW_{\sequenceVariable, k_{2t + 2} j}^{\depthSymbol\headIndex_{t + 1}, \outputSymbol}
                \val{\sequenceVariable, c_1 k_{2t + 1}}{\depthSymbol\headIndex_{t + 1}}
                \val{\sequenceVariable, c_2 k_{2t + 2}}{\depthSymbol\headIndex_{t + 1}}
                \frac{
                    \langle
                        \query{\sequenceVariable, \fIndexA \cdot}{\depthSymbol\headIndex_{t + 1}}
                        ,
                        \query{\sequenceVariable, \fIndexB \cdot}{\depthSymbol\headIndex_{t+1}}
                    \rangle
                }{
                    (\logitDimension)^{2\scaling}
                }
                \frac{
                    \partial
                    \tildeLogit{\sequenceVariable, \fIndexA c_1}{\depthSymbol\headIndex_{t + 1}}
                }{
                    \partial
                    \logitSymbol_{\sequenceVariable, \fIndexA d_1}^{\depthSymbol\headIndex_{t + 1}}
                }
                \frac{
                    \partial
                    \tildeLogit{\sequenceVariable, \fIndexB c_2}{\depthSymbol\headIndex_{t + 1}}
                }{
                    \partial
                    \logitSymbol_{\sequenceVariable, \fIndexB d_2}^{\depthSymbol\headIndex_{t + 1}}
                }
            \right]^2
        \right\}
        \nonumber
        \\
        &\lesssim
        \E \left\{
            \left[
                \prod_{t = 0}^{1}
                \val{\sequenceVariable, c_1 k_{2t + 1}}{\depthSymbol\headIndex_{t + 1}}
                \val{\sequenceVariable, c_2 k_{2t + 2}}{\depthSymbol\headIndex_{t + 1}}
                \frac{
                    \langle
                        \query{\sequenceVariable, \fIndexA \cdot}{\depthSymbol\headIndex_{t + 1}}
                        ,
                        \query{\sequenceVariable, \fIndexB \cdot}{\depthSymbol\headIndex_{t+1}}
                    \rangle
                }{
                    (\logitDimension)^{2\scaling}
                }
            \right]^4
        \right\}
        \lesssim
        \poly \left(
            \max_{c \in [\spatialDimension] , z \in \{ \inputSymbol, \inputSymbol' \}}
                \E | \indexedActivity{\depthSymbol - 1}{\sequenceVariable, c 1}{z} |^{16}
        \right)
        \, ,
    \end{align}
    by H{\" o}lder's inequality and exchangeability.
    Application of \Cref{lem:mmnt_propagation} allows us to bound the above r.h.s.\ by a constant independent of $\headIndex, k$ and $\sequenceVariable$ as desired.
    
    We can thus only focus on the terms for which $\headIndex_1 \neq \headIndex_2$.
    Among these, the only ones with non-zero expectation are those where $i = j$, $k_1 = k_2$, and $k_3 = k_4$, contributing to $\E [ \generalMean_{\sequenceVariable}^2 ]$ by
    \begin{align}\label{eq:wqk_square_simplif}
        \delta_{i = j}
        \frac{\outVar \valueVar}{(\headDimension \valueDimension)^2}
        \sum_{\substack{h_1 h_2 \\ k_1, k_2}}
            \E \left[
                \left(
                    \frac{
                        \langle
                            \activitySymbol_{\sequenceVariable, c_1 \cdot}^{\depthSymbol - 1}
                            ,
                            \activitySymbol_{\sequenceVariable, c_2 \cdot}^{\depthSymbol - 1}
                        \rangle
                    }{
                        \layerDimension{\depthSymbol - 1}
                    }
                \right)^2
                \frac{
                    \langle
                        \query{\sequenceVariable, \fIndexA \cdot}{\depthSymbol 1}
                        ,
                        \query{\sequenceVariable, \fIndexB \cdot}{\depthSymbol 1}
                    \rangle
                }{
                    (\logitDimension)^{2\scaling}
                }
                \frac{
                    \langle
                        \query{\sequenceVariable, \fIndexA \cdot}{\depthSymbol 2}
                        ,
                        \query{\sequenceVariable, \fIndexB \cdot}{\depthSymbol 2}
                    \rangle
                }{
                    (\logitDimension)^{2\scaling}
                }
                \frac{
                    \partial
                    \tildeLogit{\sequenceVariable, \fIndexA c_1}{\depthSymbol 1}
                }{
                    \partial
                    \logitSymbol_{\sequenceVariable, \fIndexA d_1}^{\depthSymbol 1}
                }
                \frac{
                    \partial
                    \tildeLogit{\sequenceVariable, \fIndexB c_2}{\depthSymbol 1}
                }{
                    \partial
                    \logitSymbol_{\sequenceVariable, \fIndexB d_2}^{\depthSymbol 1}
                }
                \frac{
                    \partial
                    \tildeLogit{\sequenceVariable, \fIndexA c_1}{\depthSymbol 2}
                }{
                    \partial
                    \logitSymbol_{\sequenceVariable, \fIndexA d_1}^{\depthSymbol 2}
                }
                \frac{
                    \partial
                    \tildeLogit{\sequenceVariable, \fIndexB c_2}{\depthSymbol 2}
                }{
                    \partial
                    \logitSymbol_{\sequenceVariable, \fIndexB d_2}^{\depthSymbol 2}
                }
            \right]
    \end{align}
    by exchangeability.
    Noting that the sum cancels out with the $\headDimension \valueDimension$ terms, we see that the limit of $\E [ \generalMean_{\sequenceVariable}^2 ]$ will be identical to that of \Cref{eq:wqk_square_simplif}.
    Applying \Cref{lem:inner_prod_converge,lem:slutsky}, \Cref{thm:gp_convergence_sqrt} (resp.\ the result by \citet[appendix A]{yang2019v2} if $\scaling = 1$), and the continuous mapping theorem
    \begin{align}\label{eq:wqk_weak_limit_square}
        \delta_{i = j}
        &\left(
            \frac{
                \langle
                    \activitySymbol_{\sequenceVariable, c_1 \cdot}^{\depthSymbol - 1}
                    ,
                    \activitySymbol_{\sequenceVariable, c_2 \cdot}^{\depthSymbol - 1}
                \rangle
            }{
                \layerDimension{\depthSymbol - 1}
            }
        \right)^2
        \frac{
            \langle
                \query{\sequenceVariable, \fIndexA \cdot}{\depthSymbol 1}
                ,
                \query{\sequenceVariable, \fIndexB \cdot}{\depthSymbol 1}
            \rangle
        }{
            (\logitDimension)^{2\scaling}
        }
        \frac{
            \langle
                \query{\sequenceVariable, \fIndexA \cdot}{\depthSymbol 2}
                ,
                \query{\sequenceVariable, \fIndexB \cdot}{\depthSymbol 2}
            \rangle
        }{
            (\logitDimension)^{2\scaling}
        }
        \frac{
            \partial
            \tildeLogit{\sequenceVariable, \fIndexA c_1}{\depthSymbol 1}
        }{
            \partial
            \logitSymbol_{\sequenceVariable, \fIndexA d_1}^{\depthSymbol 1}
        }
        \frac{
            \partial
            \tildeLogit{\sequenceVariable, \fIndexB c_2}{\depthSymbol 1}
        }{
            \partial
            \logitSymbol_{\sequenceVariable, \fIndexB d_2}^{\depthSymbol 1}
        }
        \frac{
            \partial
            \tildeLogit{\sequenceVariable, \fIndexA c_1}{\depthSymbol 2}
        }{
            \partial
            \logitSymbol_{\sequenceVariable, \fIndexA d_1}^{\depthSymbol 2}
        }
        \frac{
            \partial
            \tildeLogit{\sequenceVariable, \fIndexB c_2}{\depthSymbol 2}
        }{
            \partial
            \logitSymbol_{\sequenceVariable, \fIndexB d_2}^{\depthSymbol 2}
        }
        \nonumber
        \\
        &\convergeDist
        \delta_{i = j}
        \delta_{\scaling = \frac{1}{2}}
        \queryStd^4
        [\kerntildef{a b}{\depthSymbol}{\inputSymbol}{\inputSymbol'}]^2
        [\kerntildef{c_1 c_2}{\depthSymbol}{\inputSymbol}{\inputSymbol'}]^2
        \frac{
            \partial
            \tildeLogit{\fIndexA c_1}{\depthSymbol 1}
        }{
            \partial
            \logitSymbol_{\fIndexA d_1}^{\depthSymbol 1}
        }
        \frac{
            \partial
            \tildeLogit{\fIndexB c_2}{\depthSymbol 1}
        }{
            \partial
            \logitSymbol_{\fIndexB d_2}^{\depthSymbol 1}
        }
        \frac{
            \partial
            \tildeLogit{\fIndexA c_1}{\depthSymbol 2}
        }{
            \partial
            \logitSymbol_{\fIndexA d_1}^{\depthSymbol 2}
        }
        \frac{
            \partial
            \tildeLogit{\fIndexB c_2}{\depthSymbol 2}
        }{
            \partial
            \logitSymbol_{\fIndexB d_2}^{\depthSymbol 2}
        }
        \, ,
    \end{align}
    where $\frac{\partial \tildeLogit{}{\depthSymbol \headIndex}}{\partial \logitSymbol^{\depthSymbol \headIndex}}$
    follows the $(\nabla \softmax)_{\#}$ pushforward of the GP distribution of $\logitSymbol^{\depthSymbol}$ described in \Cref{thm:gp_convergence_sqrt} if $\scaling = \frac{1}{2}$, and is a.s.\ constant if $\scaling = 1$ as the limit $\tildeLogit{}{\depthSymbol \headIndex}$ is a.s.\ constant \citep[appendix A]{yang2019v2}, both by the assumed continuity of $\nabla \softmax$.
    
    Finally, because \Cref{eq:wqk_weak_limit_bound} establishes uniform integrability, and $\frac{\partial \tildeLogit{}{\depthSymbol 1}}{\partial \logitSymbol^{\depthSymbol 1}}$ is independent of $\frac{\partial \tildeLogit{}{\depthSymbol 2}}{\partial \logitSymbol^{\depthSymbol 2}}$ by \Cref{thm:gp_convergence_sqrt}, we can combine \Cref{eq:wqk_weak_limit_mean,eq:wqk_weak_limit_square} with \Cref{thm:mean_convergence} to conclude that both $\E [ \generalMean_{\sequenceVariable}^2 ]$ and $\{ \E [ \generalMean_{\sequenceVariable} ] \}^2$ converge to the same limit.
\end{proof}

\subsubsection{Indirect contribution}

The indirect contribution of an attention layer can be expanded as
\begin{align*}
    \frac{
        \partial \indexedActivation{\depthSymbol}{\sequenceVariable, \fIndexA i}{\inputSymbol}
    }{
        \partial \indexedActivity{\depthSymbol - 1}{\sequenceVariable}{\inputSymbol}
    }
    \frac{
        \partial \indexedActivity{\depthSymbol - 1}{\sequenceVariable}{\inputSymbol}
    }{
        \partial \params_{\sequenceVariable}^{< \depthSymbol}
    }
        \frac{
            \partial \indexedActivity{\depthSymbol - 1}{\sequenceVariable}{\inputSymbol'}
        }{
            \partial \params_{\sequenceVariable}^{< \depthSymbol}
        }^\top
        \frac{
            \partial \indexedActivation{\depthSymbol}{\sequenceVariable, \fIndexB j}{\inputSymbol'}
        }{
            \partial \indexedActivity{\depthSymbol - 1}{\sequenceVariable}{\inputSymbol'}
        }^\top
    =
    \sum_{\gIndexA, \gIndexB = 1}^{\spatialDimension}
    \sum_{\gIndexZA, \gIndexZB = 1}^{\layerDimension{\depthSymbol - 1}}
        \ntkhatf{
            \gIndexA \gIndexZA , \gIndexB \gIndexZB
        }{\depthSymbol}{\inputSymbol}{\inputSymbol'}
        \frac{
            \partial \indexedActivation{\depthSymbol}{\sequenceVariable, \fIndexA i}{\inputSymbol}
        }{
            \partial \indexedActivity{\depthSymbol - 1}{\sequenceVariable, \gIndexA \gIndexZA}{\inputSymbol}
        }
        \frac{
            \partial \indexedActivation{\depthSymbol}{\sequenceVariable, \fIndexB j}{\inputSymbol'}
        }{
            \partial \indexedActivity{\depthSymbol - 1}{\sequenceVariable, \gIndexB \gIndexZB}{\inputSymbol'}
        }
    \, ,
\end{align*}
where\footnote{$\ntkhat$ should technically also be subscripted with $\sequenceVariable$ as all other variables dependent on the $\params_{\sequenceVariable}^{\leq \depthSymbol}$; we make an exception here and omit this from our notation as the number of subscripts of $\ntkhat$ is already high.}
\begin{align}\label{eq:ntk_hat}
    \ntkhatf{
        \gIndexA \gIndexZA , \gIndexB \gIndexZB
    }{\depthSymbol}{\inputSymbol}{\inputSymbol'}
    \coloneqq
    \left\langle
        \frac{
            \partial \indexedActivity{\depthSymbol - 1}{\sequenceVariable, \gIndexA \gIndexZA}{\inputSymbol}
        }{
            \partial \params_{\sequenceVariable}^{< \depthSymbol}
        }
        ,
        \frac{
            \partial \indexedActivity{\depthSymbol - 1}{\sequenceVariable, \gIndexB \gIndexZB}{\inputSymbol'}
        }{
            \partial \params_{\sequenceVariable}^{< \depthSymbol}
        }
    \right\rangle
    \, ,
\end{align}
which we know converges a.s., and thus also in probability, to $\delta_{\gIndexZA = \gIndexZB} \ntktildef{\gIndexA \gIndexB}{\depthSymbol}{\inputSymbol}{\inputSymbol'}$ for architectures without attention layers \citep{yang2019v2}.
Expanding the indirect contribution further
\begin{align*}
    &\sum_{\gIndexA, \gIndexB}
    \sum_{\gIndexZA, \gIndexZB}
        \ntkhatf{
            \gIndexA \gIndexZA , \gIndexB \gIndexZB
        }{\depthSymbol}{\inputSymbol}{\inputSymbol'}
        \frac{
            \partial \indexedActivation{\depthSymbol}{\sequenceVariable, \fIndexA i}{\inputSymbol}
        }{
            \partial \indexedActivity{\depthSymbol - 1}{\sequenceVariable, \gIndexA \gIndexZA}{\inputSymbol}
        }
        \frac{
            \partial \indexedActivation{\depthSymbol}{\sequenceVariable, \fIndexB j}{\inputSymbol'}
        }{
            \partial \indexedActivity{\depthSymbol - 1}{\sequenceVariable, \gIndexB \gIndexZB}{\inputSymbol'}
        }
    \\
    &=
    \sum_{\gIndexA, \gIndexB}
    \sum_{\gIndexZA, \gIndexZB}
        \ntkhatf{
            \gIndexA \gIndexZA , \gIndexB \gIndexZB
        }{\depthSymbol}{\inputSymbol}{\inputSymbol'}
        \frac{\outVar}{\headDimension \valueDimension}
        \sum_{\headIndex_1, \headIndex_2 = 1}^{\headDimension}
        \sum_{k_1, k_2 = 1}^{\valueDimension}
            \uW_{\sequenceVariable , k_1 i}^{\depthSymbol \headIndex_1 , \outputSymbol}
            \uW_{\sequenceVariable , k_2 j}^{\depthSymbol \headIndex_2 , \outputSymbol}
            \frac{
                \partial
                \tildeLogitN{\sequenceVariable , \fIndexA \cdot}{\depthSymbol \headIndex_1}{\inputSymbol}
                \val{\sequenceVariable, \cdot k_1}{\depthSymbol \headIndex_1} (\inputSymbol)
            }{
                \partial \indexedActivity{\depthSymbol - 1}{\sequenceVariable, \gIndexA \gIndexZA}{\inputSymbol}
            }
            \frac{
                \partial
                \tildeLogitN{\sequenceVariable , \fIndexB \cdot}{\depthSymbol \headIndex_1}{\inputSymbol}
                \val{\sequenceVariable, \cdot k_2}{\depthSymbol \headIndex_2} (\inputSymbol')
            }{
                \partial \indexedActivity{\depthSymbol - 1}{\sequenceVariable, \gIndexB \gIndexZB}{\inputSymbol'}
            }
    \, .
\end{align*}
In the rest of this section, we drop the $\inputSymbol, \inputSymbol'$ from most of our equations so as to reduce the number of multi-line expressions.
Continuing with the inner sum from above we obtain
\begin{align*}
    &\frac{\outVar}{\headDimension \valueDimension}
    \sum_{\substack{\headIndex_1, \headIndex_2\\ k_1, k_2}}
        \uW_{\sequenceVariable , k_1 i}^{\depthSymbol \headIndex_1 , \outputSymbol}
        \uW_{\sequenceVariable , k_2 j}^{\depthSymbol \headIndex_2 , \outputSymbol}
        \frac{
            \partial
            \tildeLogit{\sequenceVariable , \fIndexA \cdot}{\depthSymbol \headIndex_1}
            \val{\sequenceVariable, \cdot k_1}{\depthSymbol \headIndex_1}
        }{
            \partial \activitySymbol_{\sequenceVariable, \gIndexA \gIndexZA}^{\depthSymbol - 1}
        }
        \frac{
            \partial
            \tildeLogit{\sequenceVariable , \fIndexB \cdot}{\depthSymbol \headIndex_1}
            \val{\sequenceVariable, \cdot k_2}{\depthSymbol \headIndex_2}
        }{
            \partial \activitySymbol_{\sequenceVariable, \gIndexB \gIndexZB}^{\depthSymbol - 1}
        }
    \\
    &=
    \frac{\outVar}{\headDimension \valueDimension}
    \sum_{\substack{\headIndex_1, \headIndex_2\\ k_1, k_2}}
        \uW_{\sequenceVariable , k_1 i}^{\depthSymbol \headIndex_1 , \outputSymbol}
        \uW_{\sequenceVariable , k_2 j}^{\depthSymbol \headIndex_2 , \outputSymbol}
        \begin{aligned}[t]
            &\biggl(
                \sum_{c_1=1}^{\spatialDimension}
                    \tildeLogit{\sequenceVariable, \fIndexA c_1}{\depthSymbol\headIndex_1}
                    \frac{
                        \partial
                        \val{\sequenceVariable, c_1 k_1}{\depthSymbol \headIndex_1}
                    }{
                        \partial \activitySymbol_{\sequenceVariable, \gIndexA \gIndexZA}^{\depthSymbol - 1}
                    }
                    +
                    \val{\sequenceVariable, c_1 k_1}{\depthSymbol \headIndex_1}
                    \sum_{d_1 = 1}^{\spatialDimension}
                        \frac{
                            \partial
                            \tildeLogit{\sequenceVariable, \fIndexA c_1}{\depthSymbol\headIndex_1}
                        }{
                            \partial 
                            \logitSymbol_{\sequenceVariable, \fIndexA d_1}^{\depthSymbol\headIndex_1}
                        }
                        \frac{
                            \partial 
                            \logitSymbol_{\sequenceVariable, \fIndexA d_1}^{\depthSymbol\headIndex_1}
                        }{
                            \partial \activitySymbol_{\sequenceVariable, \gIndexA \gIndexZA}^{\depthSymbol - 1}
                        }
            \biggr)
            \cdot
            \\
            &\biggl(
                \sum_{c_2=1}^{\spatialDimension}
                    \tildeLogit{\sequenceVariable, \fIndexB c_2}{\depthSymbol\headIndex_2}
                    \frac{
                        \partial
                        \val{\sequenceVariable, c_2 k_2}{\depthSymbol \headIndex_2}
                    }{
                        \partial \activitySymbol_{\sequenceVariable, \gIndexB \gIndexZB}^{\depthSymbol - 1}
                    }
                    +
                    \val{\sequenceVariable, c_2 k_2}{\depthSymbol \headIndex_2}
                    \sum_{d_2 = 1}^{\spatialDimension}
                        \frac{
                            \partial
                            \tildeLogit{\sequenceVariable, \fIndexB c_2}{\depthSymbol\headIndex_2}
                        }{
                            \partial 
                            \logitSymbol_{\sequenceVariable, \fIndexB d_2}^{\depthSymbol\headIndex_2}
                        }
                        \frac{
                            \partial 
                            \logitSymbol_{\sequenceVariable, \fIndexB d_2}^{\depthSymbol\headIndex_2}
                        }{
                            \partial \activitySymbol_{\sequenceVariable, \gIndexB \gIndexZB}^{\depthSymbol - 1}
                        }
            \biggr)
            \, ,
        \end{aligned}
\end{align*}
which gives us four sums after multiplying out the terms inside the parenthesis, for each of which we prove convergence separately.
Since the spatial dimension $\spatialDimension$ does not change with $\sequenceVariable$, we will restrict our attention to an arbitrary fixed choice of $\gIndexA, \gIndexB, c_1, c_2, d_1, d_2 \in [\spatialDimension]$ throughout.

\begin{lemma}\label{lem:gg_ntk}
    $
    \frac{\outVar}{\headDimension \valueDimension}
    \sum_{\gIndexZA, \gIndexZB}
    \sum_{\substack{\headIndex_1, \headIndex_2\\ k_1, k_2}}
        \ntkhat_{
            \gIndexA \gIndexZA , \gIndexB \gIndexZB
        }^{\depthSymbol}
        \uW_{\sequenceVariable , k_1 i}^{\depthSymbol \headIndex_1 , \outputSymbol}
        \uW_{\sequenceVariable , k_2 j}^{\depthSymbol \headIndex_2 , \outputSymbol}
        \tildeLogit{\sequenceVariable, \fIndexA c_1}{\depthSymbol\headIndex_1}
        \tildeLogit{\sequenceVariable, \fIndexB c_2}{\depthSymbol\headIndex_2}
        \frac{
            \partial
            \val{\sequenceVariable, c_1 k_1}{\depthSymbol \headIndex_1}
        }{
            \partial \activitySymbol_{\sequenceVariable, \gIndexA \gIndexZA}^{\depthSymbol - 1}
        }
        \frac{
            \partial
            \val{\sequenceVariable, c_2 k_2}{\depthSymbol \headIndex_2}
        }{
            \partial \activitySymbol_{\sequenceVariable, \gIndexB \gIndexZB}^{\depthSymbol - 1}
        }
    $
    converges in probability to
    \begin{align*}
        \delta_{i = j}
        \delta_{\substack{c_1 = \gIndexA \\ c_2 = \gIndexB}}
        \OVVar
        \ntktildef{\gIndexA \gIndexB}{\depthSymbol}{\inputSymbol}{\inputSymbol'}
        \E [
            \tildeLogit{\fIndexA c_1}{\depthSymbol 1} (\inputSymbol)
            \tildeLogit{\fIndexB c_2}{\depthSymbol 1} (\inputSymbol')
        ]
        \, ,
    \end{align*}
\end{lemma}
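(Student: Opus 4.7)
The plan is to follow the Chebyshev-inequality strategy used in \Cref{lem:wqk_ntk}, reducing the claim to (a) convergence of the first moment to the stated limit and (b) vanishing of the variance. The first step is to substitute the closed-form derivative $\partial \val{\sequenceVariable, c_1 k_1}{\depthSymbol \headIndex_1}(\inputSymbol) / \partial \activitySymbol_{\sequenceVariable, \gIndexA \gIndexZA}^{\depthSymbol - 1}(\inputSymbol) = \delta_{c_1 = \gIndexA} \valueStd (\layerDimension{\depthSymbol - 1})^{-1/2} \uW_{\sequenceVariable, \gIndexZA k_1}^{\depthSymbol \headIndex_1, \valueSymbol}$ (and analogously for the $\inputSymbol'$ factor), which extracts the $\delta_{c_1 = \gIndexA}\delta_{c_2 = \gIndexB}$ indicators and leaves an expression $\generalMean_{\sequenceVariable}$ that is a normalised sum over $\headIndex_1, \headIndex_2, k_1, k_2, \gIndexZA, \gIndexZB$ of products of $\uW^{\outputSymbol}$, $\uW^{\valueSymbol}$, $\tildeLogit^{\depthSymbol}$, and $\ntkhat^{\depthSymbol}$ factors.

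For the mean, I would exploit that $\uW^{\outputSymbol}$ is independent of everything in layers $\leq \depthSymbol$, using $\E[\uW_{\sequenceVariable, k_1 i}^{\depthSymbol \headIndex_1, \outputSymbol} \uW_{\sequenceVariable, k_2 j}^{\depthSymbol \headIndex_2, \outputSymbol}] = \delta_{\headIndex_1 = \headIndex_2}\delta_{k_1 = k_2}\delta_{i = j}$ to collapse two of the four outer sums and produce the $\delta_{i=j}$ prefactor. Conditional on $\indexedActivity^{\depthSymbol - 1}$, the weights $\uW^{\valueSymbol}$ are independent of both $\tildeLogit^{\depthSymbol \headIndex}$ (which only depends on $\uW^{\querySymbol}, \uW^{\keySymbol}$ at this layer) and $\ntkhat^{\depthSymbol}$ (which depends only on $\params^{< \depthSymbol}$), so $\E[\uW_{\sequenceVariable, \gIndexZA k}^{\depthSymbol \headIndex, \valueSymbol} \uW_{\sequenceVariable, \gIndexZB k}^{\depthSymbol \headIndex, \valueSymbol}\mid\indexedActivity^{\depthSymbol-1}] = \delta_{\gIndexZA = \gIndexZB}$. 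After the $\headIndex, k$ sums cancel $\headDimension \valueDimension$, the remaining quantity is $\frac{\OVVar}{\layerDimension{\depthSymbol-1}}\sum_{\gIndexZA} \E[\ntkhat_{\gIndexA \gIndexZA, \gIndexB \gIndexZA}^{\depthSymbol}\, \tildeLogit_{\sequenceVariable, \fIndexA c_1}^{\depthSymbol 1}(\inputSymbol)\,\tildeLogit_{\sequenceVariable, \fIndexB c_2}^{\depthSymbol 1}(\inputSymbol')]$. By the induction hypothesis for the preceding layers, each $\ntkhat_{\gIndexA \gIndexZA, \gIndexB \gIndexZA}^{\depthSymbol} \convergeProb \ntktilde_{\gIndexA \gIndexB}^{\depthSymbol}$; combined with the distributional convergence of $\tildeLogit^{\depthSymbol 1}$ from \Cref{thm:gp_convergence_sqrt} and \Cref{lem:slutsky}, the integrand converges in distribution to $\ntktildef{\gIndexA \gIndexB}{\depthSymbol}{\inputSymbol}{\inputSymbol'}\,\tildeLogit_{\fIndexA c_1}^{\depthSymbol 1}(\inputSymbol)\,\tildeLogit_{\fIndexB c_2}^{\depthSymbol 1}(\inputSymbol')$, with the two limit factors independent. \Cref{thm:mean_convergence} upgrades this to convergence of expectations once uniform integrability is established via H{\" o}lder's inequality, \Cref{lem:mmnt_propagation}, and \Cref{lem:sup_ui}.

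For the variance, expanding $\generalMean_{\sequenceVariable}^2$ yields an eightfold sum over two independent copies of the $(\headIndex, k, \gIndexZA, \gIndexZB)$ indices. Repeated application of the same $\uW^{\outputSymbol}$ and $\uW^{\valueSymbol}$ Isserlis pairings shows that the only terms contributing at leading order are those where the two copies use disjoint heads but otherwise match diagonally in their internal indices; these factor across independent heads and, after applying the first-moment analysis inside each factor, give exactly $\{\E[\generalMean_{\sequenceVariable}]\}^2$ in the limit. All other terms are suppressed by at least one factor of $(\headDimension \valueDimension)^{-1}$ or $(\layerDimension{\depthSymbol-1})^{-1}$, and are controlled via the same moment bounds as in \Cref{lem:wqk_ntk}. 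Chebyshev's inequality then closes the argument.

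The main obstacle is the coupling between the $\ntkhat_{\gIndexA \gIndexZA, \gIndexB \gIndexZB}^{\depthSymbol}$ summand and the $\uW^{\valueSymbol}$ average: unlike in \Cref{lem:wqk_ntk}, where the analogous $\layerDimension{\depthSymbol-1}$-indexed sum collapses cleanly via \Cref{lem:inner_prod_converge} on the $\indexedActivity^{\depthSymbol-1}$ inner products, here we must simultaneously pass the inductive coordinate-wise convergence of $\ntkhat^{\depthSymbol}$ through a free matrix average over independent $\uW^{\valueSymbol}$ entries. I would handle this by first conditioning on $\params^{< \depthSymbol}$, which makes $\ntkhat^{\depthSymbol}$ effectively deterministic and reduces the inner computation to Gaussian integration over $\uW^{\querySymbol}, \uW^{\keySymbol}, \uW^{\valueSymbol}, \uW^{\outputSymbol}$, and only then applying the induction hypothesis in the outer expectation together with the uniform-integrability estimates.
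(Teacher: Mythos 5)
Your overall route is the same as the paper's: substitute the closed form of $\partial \val{\sequenceVariable, c_1 k_1}{\depthSymbol \headIndex_1} / \partial \activitySymbol_{\sequenceVariable, \gIndexA \gIndexZA}^{\depthSymbol - 1}$ to pull out the $\delta_{c_1 = \gIndexA}\delta_{c_2 = \gIndexB}$ factors, control the first moment by integrating out $\uW^{\outputSymbol}, \uW^{\valueSymbol}$ and combining $\ntkhat^{\depthSymbol} \convergeProb \ntktilde^{\depthSymbol}$ with \Cref{thm:gp_convergence_sqrt}, \Cref{lem:slutsky}, H{\" o}lder, \Cref{lem:mmnt_propagation}, \Cref{lem:sup_ui} and \Cref{thm:mean_convergence}, and then close via Chebyshev. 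The mean part is essentially the paper's argument (the paper collapses the $\gIndexZA$-average by exchangeability, which you should invoke explicitly rather than summing limits of $\layerDimension{\depthSymbol-1} \to \infty$ individually convergent terms).

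The gap is in the variance step. After Wick-pairing the $\uW^{\outputSymbol}$ and $\uW^{\valueSymbol}$ factors in $\E[\generalMean_{\sequenceVariable}^2]$, it is not true that every configuration other than ``disjoint heads, diagonally matched within each copy'' is suppressed by a factor of $(\headDimension \valueDimension)^{-1}$ or $(\layerDimension{\depthSymbol-1})^{-1}$. The cross-copy pairings, in which an output/value weight from the first copy is matched with one from the second copy, still leave two free head indices, two free $k$ indices and two free embedding indices $\gIndexZA_1, \gIndexZB_1$, and therefore contribute $\Theta\bigl((\headDimension \valueDimension \layerDimension{\depthSymbol - 1})^2\bigr)$ summands --- exactly the order of the normalisation, i.e.\ leading order in the counting. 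What makes these contributions vanish is not counting: in such configurations the surviving $\ntkhat^{\depthSymbol}$ factors are generically \emph{off-diagonal} in the embedding index (the diagonal sub-case $\gIndexZA_1 = \gIndexZB_1$ is the only part that is suppressed, by $1/\layerDimension{\depthSymbol-1}$), and one must use that $\ntkhat_{\gIndexA \gIndexZA, \gIndexB \gIndexZB}^{\depthSymbol} \convergeProb 0$ whenever $\gIndexZA \neq \gIndexZB$ \citep{yang2019v2}, together with the uniform moment bounds of \Cref{lem:mmnt_propagation} and \Cref{thm:mean_convergence}, to conclude that their expectations tend to zero. This is exactly how the paper disposes of its categories (ii) and (iii); as written, your argument would discard these leading-order terms for a reason that is false, so the step needs this off-diagonal-vanishing argument (which you already use, in its diagonal form, for the mean). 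A secondary point: in the surviving disjoint-head configuration, $\ntkhat_{\gIndexA 1, \gIndexB 1}^{\depthSymbol} \ntkhat_{\gIndexA 2, \gIndexB 2}^{\depthSymbol}$ and the two heads' $\tildeLogit{\sequenceVariable, \cdot \cdot}{\depthSymbol \headIndex}$ factors are only \emph{asymptotically} independent, so the identification of this block with $\{\E[\generalMean_{\sequenceVariable}]\}^2$ again has to go through convergence in distribution plus uniform integrability rather than exact factorisation at finite $\sequenceVariable$; your ``in the limit'' phrasing suggests you intend this, but it should be made explicit.
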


\begin{proof}[Proof of \Cref{lem:gg_ntk}]
    Note that
    \begin{align*}
        &\frac{1}{\headDimension \valueDimension}
        \sum_{\gIndexZA, \gIndexZB}
        \sum_{\substack{\headIndex_1, \headIndex_2\\ k_1, k_2}}
            \ntkhat_{
                \gIndexA \gIndexZA , \gIndexB \gIndexZB
            }^{\depthSymbol}
            \uW_{\sequenceVariable , k_1 i}^{\depthSymbol \headIndex_1 , \outputSymbol}
            \uW_{\sequenceVariable , k_2 j}^{\depthSymbol \headIndex_2 , \outputSymbol}
            \tildeLogit{\sequenceVariable, \fIndexA c_1}{\depthSymbol\headIndex_1}
            \tildeLogit{\sequenceVariable, \fIndexB c_2}{\depthSymbol\headIndex_2}
            \frac{
                \partial
                \val{\sequenceVariable, c_1 k_1}{\depthSymbol \headIndex_1}
            }{
                \partial \activitySymbol_{\sequenceVariable, \gIndexA \gIndexZA}^{\depthSymbol - 1}
            }
            \frac{
                \partial
                \val{\sequenceVariable, c_2 k_2}{\depthSymbol \headIndex_2}
            }{
                \partial \activitySymbol_{\sequenceVariable, \gIndexB \gIndexZB}^{\depthSymbol - 1}
            }
        \\
        &=
        \delta_{\substack{c_1 = \gIndexA \\ c_2 = \gIndexB}}
        \frac{\valueVar}{\headDimension \valueDimension \layerDimension{\depthSymbol - 1}}
        \sum_{\gIndexZA, \gIndexZB}
        \sum_{\substack{\headIndex_1, \headIndex_2\\ k_1, k_2}}
            \ntkhat_{
                \gIndexA \gIndexZA , \gIndexB \gIndexZB
            }^{\depthSymbol}
            \uW_{\sequenceVariable , k_1 i}^{\depthSymbol \headIndex_1 , \outputSymbol}
            \uW_{\sequenceVariable , k_2 j}^{\depthSymbol \headIndex_2 , \outputSymbol}
            \uW_{\sequenceVariable, \gIndexZA k_1}^{\depthSymbol \headIndex_1, \valueSymbol}
            \uW_{\sequenceVariable, \gIndexZB k_2}^{\depthSymbol \headIndex_2, \valueSymbol}
            \tildeLogit{\sequenceVariable, \fIndexA c_1}{\depthSymbol\headIndex_1}
            \tildeLogit{\sequenceVariable, \fIndexB c_2}{\depthSymbol\headIndex_2}
        \coloneqq
        \delta_{\substack{c_1 = \gIndexA \\ c_2 = \gIndexB}}
        \valueVar \,
        \generalMean_{\sequenceVariable}
        \, .
    \end{align*}
    Further, 
    $
        \E [\generalMean_{\sequenceVariable}]
        =
        \E [
            \ntkhat_{
                \gIndexA 1 , \gIndexB 1
            }^{\depthSymbol}
            \tildeLogit{\sequenceVariable, \fIndexA c_1}{\depthSymbol 1}
            \tildeLogit{\sequenceVariable, \fIndexB c_2}{\depthSymbol 1}
        ]
    $
    by exchangeability.
    As in the proof of \Cref{lem:wqk_ntk}, the desired result could thus be obtained by an application of Chebyshev's inequality, $\Prob(|\generalMean_{\sequenceVariable} - \E \generalMean_{\sequenceVariable}| \geq \delta) \leq \delta^{-2} [ \E [\generalMean_{\sequenceVariable}^2] - \{ \E [ \generalMean_{\sequenceVariable} ] \}^2 ] $, if $\E [ \generalMean_{\sequenceVariable} ]$ converges to the desired limit and $|\! \E [\generalMean_{\sequenceVariable}^2] - \{ \E [ \generalMean_{\sequenceVariable} ] \}^2| \to 0$ as $\sequenceVariable \to \infty$.
    
    To establish convergence of the mean, first note that
    $
        \ntkhat_{
            \gIndexA 1 \gIndexB 1
        }^{\depthSymbol}
        \tildeLogit{\sequenceVariable, \fIndexA c_1}{\depthSymbol 1}
        \tildeLogit{\sequenceVariable, \fIndexB c_2}{\depthSymbol 1}
        \convergeDist
        \ntktilde_{
            \gIndexA \gIndexB
        }^{\depthSymbol}
        \tildeLogit{\fIndexA c_1}{\depthSymbol 1}
        \tildeLogit{\fIndexB c_2}{\depthSymbol 1}
    $
    by \Cref{thm:gp_convergence_sqrt}, the continuous mapping theorem,
    $
        \ntkhat_{
            \gIndexA 1 \gIndexB 1
        }^{\depthSymbol}
        \convergeProb
        \ntktilde_{
            \gIndexA \gIndexB
        }^{\depthSymbol}
    $ \citep{yang2019v2}, and \Cref{lem:slutsky}.
    Inspecting \Cref{thm:mean_convergence} and \Cref{lem:sup_ui}, we see it is sufficient to show $\E [ \generalMean_{\sequenceVariable}^2 ] \to \{ \E [ \generalMean_{*} ] \}^2$ to establish both convergence of the mean, and $\generalMean_{\sequenceVariable} \convergeProb \E [ \generalMean_{*} ]$.
    We thus turn to $\E [\generalMean_{\sequenceVariable}^2]$
    \begin{align*}
        \frac{1}{(
            \headDimension \valueDimension \layerDimension{\depthSymbol - 1}
        )^2}
        \sum_{\substack{\gIndexZA_1, \gIndexZB_1\\ \gIndexZA_2, \gIndexZB_2}}
        \sum_{\substack{\headIndex_1, \headIndex_2, \headIndex_3, \headIndex_4\\ k_1, k_2, k_3, k_4}}
            \! \! \! \!
            \E \left[
                \prod_{t=0}^1
                    \ntkhat_{
                        \gIndexA \gIndexZA_{t+1} ,
                        \gIndexB \gIndexZB_{t+1}
                    }^{\depthSymbol}
                    \uW_{\sequenceVariable , k_{2t + 1} i}^{\depthSymbol \headIndex_{2t + 1} , \outputSymbol}
                    \uW_{\sequenceVariable , k_{2t + 2} j}^{\depthSymbol \headIndex_{2t + 2} , \outputSymbol}
                    \uW_{\sequenceVariable, \gIndexZA_{t + 1} k_{2t + 1}}^{\depthSymbol \headIndex_{2t + 1}, \valueSymbol}
                    \uW_{\sequenceVariable, \gIndexZB_{t + 1} k_{2t + 2}}^{\depthSymbol \headIndex_{2t + 2}, \valueSymbol}
                    \tildeLogit{\sequenceVariable, \fIndexA c_1}{\depthSymbol\headIndex_{2t + 1}}
                    \tildeLogit{\sequenceVariable, \fIndexB c_2}{\depthSymbol\headIndex_{2t + 2}}
            \right]
        .
    \end{align*}
    We can thus restrict our attention to groups of terms that include at least $\mathcal{O}((\headDimension \valueDimension \layerDimension{\depthSymbol - 1})^2)$ of the summands, as long as the expectation of the square of each term can be bounded by a constant independent of the $\headIndex, k, \gIndexZA, \gIndexZB$ and $\sequenceVariable$ indices.
    Observe that
    \begin{align}\label{eq:gg_bound}
        \E \left\{
            \left[
            \prod_{t=0}^1
                    \ntkhat_{
                        \gIndexA \gIndexZA_{t+1} ,
                        \gIndexB \gIndexZB_{t+1}
                    }^{\depthSymbol}
                    \uW_{\sequenceVariable , k_{2t + 1} i}^{\depthSymbol \headIndex_{2t + 1} , \outputSymbol}
                    \right.\right.
                    &\left.\left.
                    \vphantom{\prod_{t=0}^1}
                    \uW_{\sequenceVariable , k_{2t + 2} j}^{\depthSymbol \headIndex_{2t + 2} , \outputSymbol}
                    \uW_{\sequenceVariable, \gIndexZA_{t + 1} k_1}^{\depthSymbol \headIndex_{2t + 1}, \valueSymbol}
                    \uW_{\sequenceVariable, \gIndexZB_{t + 1} k_2}^{\depthSymbol \headIndex_{2t + 2}, \valueSymbol}
                    \tildeLogit{\sequenceVariable, \fIndexA c_1}{\depthSymbol\headIndex_{2t + 1}}
                    \tildeLogit{\sequenceVariable, \fIndexB c_2}{\depthSymbol\headIndex_{2t + 2}}
            \right]^2
        \right\}
        \nonumber
        \\
        &\lesssim
        \poly \left(
            \max_{\substack{
                \gIndexA, \gIndexB \in [\spatialDimension],
                \gIndexZA, \gIndexZB \in \{ 1, 2 \} \\
                z, z' \in \{ \inputSymbol, \inputSymbol' \}
            }}
                \E [
                    \ntkhat_{
                        \gIndexA \gIndexZA , \,
                        \gIndexB \gIndexZB
                    }^{\depthSymbol} (z, z')^4
                ]
            \, ,
            \max_{\substack{
                c, c' \in [\spatialDimension]  \\
                z \in \{ \inputSymbol , \inputSymbol' \}
            }}
                \E [
                    \tildeLogit{\sequenceVariable, c , c'}{\depthSymbol 1} (z)^8
                ]
        \right)
        \, ,
    \end{align}
    and thus we can obtain the desired bound by applying \Cref{lem:mmnt_propagation}.
    We thus shift our attention to the terms that are not $\littleO( (\headDimension \valueDimension \layerDimension{\depthSymbol - 1})^2 )$ of which there are three types: (i)~$i = j$, $(\headIndex_1, k_1, \gIndexZA_1) = (\headIndex_2, k_2, \gIndexZB_2)$, and $(\headIndex_3, k_3, \gIndexZA_3) = (\headIndex_4, k_4, \gIndexZB_4)$; (ii)~$i = j$, $(\headIndex_1, k_1, \gIndexZA_1) = (\headIndex_4, k_4, \gIndexZB_4)$, and $(\headIndex_2, k_2, \gIndexZA_2) = (\headIndex_3, k_3, \gIndexZB_3)$; (iii)~$(\headIndex_1, k_1, \gIndexZA_1) = (\headIndex_3, k_3, \gIndexZA_3)$, and $(\headIndex_2, k_2, \gIndexZA_2) = (\headIndex_4, k_4, \gIndexZA_4)$.
    Hence the limit of $\E [\generalMean_{\sequenceVariable}^2 ]$ will up to a constant coincide with that of 
    \begin{align*}
        \E \left[
            \left(
                \ntkhat_{
                    \gIndexA 1 ,
                    \gIndexB 2
                }^{\depthSymbol}
                \tildeLogit{\sequenceVariable, \fIndexA \gIndexA}{\depthSymbol 1}
                \tildeLogit{\sequenceVariable, \fIndexB \gIndexB}{\depthSymbol 2}
            \right)^2
        \right]
        +
        \delta_{i = j}
        \E \left[
            \left(
                \ntkhat_{
                    \gIndexA 1 ,
                    \gIndexB 1
                }^{\depthSymbol}
                \ntkhat_{
                    \gIndexA 2 ,
                    \gIndexB 2
                }^{\depthSymbol}
                +
                \ntkhat_{
                    \gIndexA 1 ,
                    \gIndexB 2
                }^{\depthSymbol}
                \ntkhat_{
                    \gIndexA 2 ,
                    \gIndexB 1
                }^{\depthSymbol}
            \right)
            \tildeLogit{\sequenceVariable, \fIndexA \gIndexA}{\depthSymbol 1}
            \tildeLogit{\sequenceVariable, \fIndexB \gIndexB}{\depthSymbol 1}
            \tildeLogit{\sequenceVariable, \fIndexA \gIndexA}{\depthSymbol 2}
            \tildeLogit{\sequenceVariable, \fIndexB \gIndexB}{\depthSymbol 2}
        \right]
        \, ,
    \end{align*}
    by exchangeability.
    Noticing $\tildeLogit{\sequenceVariable}{\depthSymbol}$ converges in distribution by \Cref{thm:gp_convergence_sqrt} and the continuous mapping theorem, and the $\ntkhat^{\depthSymbol}$ converges in this distribution \citep{yang2019v2}, both integrands converge in distribution by the continuous mapping theorem and \Cref{lem:slutsky}.
    Since the $\tildeLogit{\sequenceVariable}{\depthSymbol \headIndex}$ corresponding to different heads are independent in the limit (\Cref{thm:gp_convergence_sqrt}), and the limit of  $\ntkhat_{\gIndexA \gIndexZA , \gIndexB \gIndexZB}^{\depthSymbol}$ is non-zero only if $\gIndexZA = \gIndexZB$
    \citep{yang2019v2}, application of \Cref{thm:mean_convergence} combined with the bound from \Cref{eq:gg_bound} concludes the proof.
\end{proof}

\begin{lemma}\label{lem:vv_ntk}
    $
        \frac{\outVar}{\headDimension \valueDimension}
        \sum_{\gIndexZA, \gIndexZB}
        \sum_{\substack{\headIndex_1, \headIndex_2\\ k_1, k_2}}
            \ntkhat_{
                \gIndexA \gIndexZA , \gIndexB \gIndexZB
            }^{\depthSymbol}
            \uW_{\sequenceVariable , k_1 i}^{\depthSymbol \headIndex_1 , \outputSymbol}
            \uW_{\sequenceVariable , k_2 j}^{\depthSymbol \headIndex_2 , \outputSymbol}
            \val{\sequenceVariable, c_1 k_1}{\depthSymbol \headIndex_1}
            \val{\sequenceVariable, c_2 k_2}{\depthSymbol \headIndex_2}
            \frac{
                \partial
                \tildeLogit{\sequenceVariable, \fIndexA c_1}{\depthSymbol\headIndex_1}
            }{
                \partial 
                \logitSymbol_{\sequenceVariable, \fIndexA d_1}^{\depthSymbol\headIndex_1}
            }
            \frac{
                \partial
                \tildeLogit{\sequenceVariable, \fIndexB c_2}{\depthSymbol\headIndex_2}
            }{
                \partial 
                \logitSymbol_{\sequenceVariable, \fIndexB d_2}^{\depthSymbol\headIndex_2}
            }
            \frac{
                \partial 
                \logitSymbol_{\sequenceVariable, \fIndexA d_1}^{\depthSymbol\headIndex_1}
            }{
                \partial \activitySymbol_{\sequenceVariable, \gIndexA \gIndexZA}^{\depthSymbol - 1}
            }
            \frac{
                \partial 
                \logitSymbol_{\sequenceVariable, \fIndexB d_2}^{\depthSymbol\headIndex_2}
            }{
                \partial \activitySymbol_{\sequenceVariable, \gIndexB \gIndexZB}^{\depthSymbol - 1}
            }
    $
    converges in probability to
    \begin{align*}
        \delta_{\substack{i = j \\ \scaling = \frac{1}{2}}}
        \OVVar
        \QKVar
        \ntktilde_{\gIndexA \gIndexB}^{\depthSymbol}(\inputSymbol, \inputSymbol')
        \kerntildef{c_1 c_2}{\depthSymbol}{\inputSymbol}{\inputSymbol'}
        \left( 
            \delta_{\substack{d_1 = \gIndexA \\ d_2 = \gIndexB}}
            \kerntildef{\fIndexA \fIndexB}{\depthSymbol}{\inputSymbol}{\inputSymbol'}
            +
            \delta_{\substack{\gIndexA = \fIndexA \\ \gIndexB = \fIndexB}}
            \kerntildef{d_1 d_2}{\depthSymbol}{\inputSymbol}{\inputSymbol'}
        \right)
        \E \left[
            \frac{
                \partial
                \tildeLogit{\fIndexA c_1}{\depthSymbol 1} (\inputSymbol)
            }{
                \partial 
                \logitSymbol_{\fIndexA d_1}^{\depthSymbol 1} (\inputSymbol)
            }
            \frac{
                \partial
                \tildeLogit{\fIndexB c_2}{\depthSymbol 1} (\inputSymbol')
            }{
                \partial 
                \logitSymbol_{\fIndexB d_2}^{\depthSymbol 1} (\inputSymbol')
            }
        \right]
        \, .
    \end{align*}
\end{lemma}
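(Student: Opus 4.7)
The plan is to mirror the strategy of \Cref{lem:wqk_ntk} and \Cref{lem:gg_ntk}: expand the logit derivatives explicitly, reduce to an empirical mean of exchangeable summands, and then combine a Chebyshev concentration argument with a uniform integrability based mean-convergence step (via \Cref{thm:mean_convergence} and \Cref{lem:sup_ui}). First I would expand
\begin{align*}
\frac{\partial \logitSymbol_{\sequenceVariable, \fIndexA d_1}^{\depthSymbol\headIndex_1}(\inputSymbol)}{\partial \activitySymbol_{\sequenceVariable, \gIndexA \gIndexZA}^{\depthSymbol - 1}(\inputSymbol)}
=
\frac{1}{(\logitDimension)^{\scaling}}
\Bigl[
\delta_{\gIndexA = \fIndexA} \sum_{e} \uWQ_{\sequenceVariable, \gIndexZA e} \key{\sequenceVariable, d_1 e}{\depthSymbol\headIndex_1}(\inputSymbol)
+ \delta_{\gIndexA = d_1} \sum_{e} \query{\sequenceVariable, \fIndexA e}{\depthSymbol\headIndex_1}(\inputSymbol) \uWK_{\sequenceVariable, \gIndexZA e}
\Bigr],
\end{align*}
and similarly on the primed side. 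Multiplying the two yields four groups of terms classified by which branch (query or key) is differentiated on each side. Since the spatial dimension $\spatialDimension$ is fixed, I would then focus on one choice of $\gIndexA, \gIndexB, c_1, c_2, d_1, d_2 \in [\spatialDimension]$ at a time, as was done in \Cref{lem:wqk_ntk}.

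Next, for each of the four groups I would introduce the empirical mean $\bar{S}_\sequenceVariable$ obtained by pulling out the normalising constant $(\headDimension \valueDimension \layerDimension{\depthSymbol-1})^{-1}$ (the extra $\layerDimension{\depthSymbol-1}$ comes from the query or key contractions) and show $\bar{S}_\sequenceVariable - \E \bar{S}_\sequenceVariable \convergeProb 0$ via Chebyshev, exactly as in \Cref{lem:wqk_ntk}. By exchangeability of the $\uW^{\depthSymbol,\outputSymbol}$ pairs, the expectation is non-zero only when $i=j$ and $\headIndex_1 = \headIndex_2$, $k_1 = k_2$. Moreover, the two cross groups (query-branch on one side, key-branch on the other) have zero mean in the limit because they force an expectation of the form $\E[\uW^{\querySymbol} \uW^{\keySymbol}]$ with independent zero-mean Gaussians. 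The two remaining groups produce the two Kronecker-delta summands in the stated formula: the query/query contraction generates $\queryVar$ times the $\kerntilde_{d_1 d_2}^{\depthSymbol}$ block (carrying the $\delta_{\gIndexA = \fIndexA, \gIndexB = \fIndexB}$ constraint), while the key/key contraction yields $\keyVar$ times $\kerntilde_{\fIndexA \fIndexB}^{\depthSymbol}$ with the $\delta_{d_1 = \gIndexA, d_2 = \gIndexB}$ constraint. Combining with the value-branch factor $\val \cdot \val \cdot (\uW^{\outputSymbol})^2$ that converges to $\outVar \valueVar \kerntilde_{c_1 c_2}^{\depthSymbol}$, and the $\ntkhat_{\gIndexA \gIndexZA, \gIndexB \gIndexZB}^{\depthSymbol} \to \delta_{\gIndexZA = \gIndexZB} \ntktilde_{\gIndexA \gIndexB}^{\depthSymbol}$ from \citep{yang2019v2}, produces the full limit including the $\OVVar \QKVar$ prefactor. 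The $\delta_{\scaling = \frac{1}{2}}$ factor appears for the same reason as in \Cref{lem:wqk_ntk}: under $\scaling = 1$ the $(\logitDimension)^{-1}$ scaling of logits combined with the square from two differentiations introduces an extra $(\logitDimension)^{-1}$ that kills the limit.

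For the Chebyshev step, the bound on $\E \bar{S}_\sequenceVariable^2 - (\E \bar{S}_\sequenceVariable)^2$ is obtained by expanding the square, noting that there are $\mathcal{O}((\headDimension \valueDimension \layerDimension{\depthSymbol-1})^4)$ summands, and using H\"older's inequality together with \Cref{lem:mmnt_propagation} to bound the expectation of each to a constant uniform in $\sequenceVariable$, exactly as in \Cref{eq:wqk_weak_limit_bound}. Only the pairings contributing $\mathcal{O}((\headDimension \valueDimension \layerDimension{\depthSymbol-1})^4)$ terms survive, and by exchangeability these are precisely the ones that also assemble into $(\E \bar{S}_\sequenceVariable)^2$. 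Weak convergence of the integrand for $\E \bar{S}_\sequenceVariable$ follows from \Cref{thm:gp_convergence_sqrt}, \Cref{lem:inner_prod_converge}, \Cref{lem:slutsky}, the continuous mapping theorem, and the established joint convergence of $\ntkhat^{\depthSymbol}$ together with $\tildeLogit{}{\depthSymbol}$ derivatives; uniform integrability is again secured by \Cref{lem:sup_ui}.

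The hardest part is the combinatorial bookkeeping in the second step: carefully verifying which of the four (query/key)$\times$(query/key) branch combinations survives in expectation, and tracking the correct $\delta$-constraints so that the cross terms indeed vanish while the diagonal ones reproduce the two pieces in the stated sum. In particular, one has to confirm that the cross terms (say query-branch on $(\inputSymbol, \fIndexA, \gIndexA)$ and key-branch on $(\inputSymbol', \fIndexB, \gIndexB)$) contribute zero even after using exchangeability to fix $\headIndex_1 = \headIndex_2$, which relies on $\E[\uWQ_{\sequenceVariable, \gIndexZA \cdot} \cdot \key{\sequenceVariable, d_1 \cdot}{\depthSymbol \headIndex_1}] = 0$ and the analogous identity on the primed side. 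Once that is established, the rest of the algebra is a routine matching of the bookkeeping already worked out in \Cref{lem:wqk_ntk}.
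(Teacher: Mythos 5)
Your overall architecture (expand the logit gradients into query/key branches, reduce to a normalised sum, Chebyshev plus \Cref{thm:mean_convergence}/\Cref{lem:sup_ui}) is the right family of argument, and you correctly identify where the $\delta_{\scaling = \frac{1}{2}}$ factor comes from. But there is a genuine gap in the step you yourself flag as the hardest: the cross-branch terms do \emph{not} vanish because they "force an expectation of the form $\E[\uWQ \, \uWK]$ with independent zero-mean Gaussians". In the cross term, the factor on the $\inputSymbol$ side is (schematically) $\query{\sequenceVariable, \fIndexA u_1}{\depthSymbol \headIndex}(\inputSymbol)\, \uW_{\sequenceVariable, \gIndexZA u_1}^{\depthSymbol\headIndex,\keySymbol}$ and on the $\inputSymbol'$ side $\key{\sequenceVariable, d_2 u_2}{\depthSymbol \headIndex}(\inputSymbol')\, \uW_{\sequenceVariable, \gIndexZB u_2}^{\depthSymbol\headIndex,\querySymbol}$; when you integrate over the weights, $\uW^{\keySymbol}_{\gIndexZA u_1}$ pairs with the key weights \emph{inside} $\key{\sequenceVariable, d_2 u_2}{\depthSymbol\headIndex}(\inputSymbol')$ and $\uW^{\querySymbol}_{\gIndexZB u_2}$ pairs with the query weights inside $\query{\sequenceVariable, \fIndexA u_1}{\depthSymbol\headIndex}(\inputSymbol)$, giving a non-zero contribution proportional to $\activitySymbol^{\depthSymbol-1}_{\sequenceVariable, \fIndexA \gIndexZB}(\inputSymbol)\,\activitySymbol^{\depthSymbol-1}_{\sequenceVariable, d_2 \gIndexZA}(\inputSymbol') / \layerDimension{\depthSymbol-1}$ at every finite $\sequenceVariable$. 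These terms disappear only asymptotically, because they carry an extra $1/\layerDimension{\depthSymbol-1}$ suppression relative to the diagonal branch contributions (single-coordinate products instead of normalised inner products), and killing them requires the uniform moment bounds of \Cref{lem:mmnt_propagation}; this is exactly what the second line of \Cref{eq:vv_subtask_mean} in the paper's proof of \Cref{lem:vv_subtask_convg} handles. Moreover, under $\scaling = 1$ the paper assumes $\WQ = \WK$ almost surely, so an independence argument is unavailable there in any case (there the whole expression dies from the $(\logitDimension)^{1-2\scaling}$ factor, which you do note).

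A secondary, structural point: after using exchangeability over $(\headIndex, k)$ the sums over $\gIndexZA, \gIndexZB, u_1, u_2$ remain \emph{inside} the expectation and are coupled through the random $\ntkhat_{\gIndexA\gIndexZA,\gIndexB\gIndexZB}^{\depthSymbol}$, so the expression is not a plain exchangeable average and a single flat Chebyshev over all indices is delicate to justify. The paper resolves this with a two-stage argument: it first proves the auxiliary \Cref{lem:vv_subtask_convg}, showing the inner normalised sum $\generalMean_{\sequenceVariable}^{\headIndex_1\headIndex_2}$ converges in probability to the constant carrying both Kronecker-delta pieces (and vanishes for $\headIndex_1 \neq \headIndex_2$), and only then runs the outer mean/second-moment/Chebyshev argument over heads and $k$. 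Your single-level plan could in principle be made to work, but the missing content is precisely this inner-sum analysis — which is also where your cross-term cancellation argument currently fails.
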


\begin{proof}[Proof of \Cref{lem:vv_ntk}]
    To make the notation more succinct, we define
    \begin{align}\label{eq:vv_grad_logit_summand}
        \frac{
            \partial 
            \logitSymbol_{\sequenceVariable, \fIndexA d_1}^{\depthSymbol\headIndex_1}
        }{
            \partial \activitySymbol_{\sequenceVariable, \gIndexA \gIndexZA}^{\depthSymbol - 1}
        }
        =
        \frac{1}{(\logitDimension)^{\scaling} \sqrt{\layerDimension{\depthSymbol - 1}}}
                \sum_{u_1 = 1}^{\logitDimension}
                    \underbrace{
                        \delta_{d_1 = \gIndexA}
                        \keyStd
                        \query{\fIndexA u_1}{\headIndex_1}
                            \uW_{\sequenceVariable, \gIndexZA u_1}^{\depthSymbol\headIndex_1 , \keySymbol}
                        +
                        \delta_{\gIndexA = \fIndexA}
                        \queryStd
                        \key{\sequenceVariable, d_1 u_1}{\depthSymbol \headIndex_1}
                            \uW_{\sequenceVariable, \gIndexZA u_1}^{\depthSymbol\headIndex_1 , \querySymbol}
                    }_{\coloneqq \Gamma_{\gIndexZA u_1}^{\headIndex_1}}
        \, .
    \end{align}
    which leads us to
    \begin{align*}
        \generalMean_{\sequenceVariable}
        =
        \frac{\outVar}{\headDimension \valueDimension \layerDimension{\depthSymbol - 1} (\logitDimension)^{2\scaling}}
        \sum_{\substack{\gIndexZA, \gIndexZB \\ \headIndex_1, \headIndex_2}}
        \sum_{\substack{k_1, k_2 \\ u_1 , u_2}}
            \ntkhat_{
                \gIndexA \gIndexZA , \gIndexB \gIndexZB
            }^{\depthSymbol}
            \uW_{\sequenceVariable , k_1 i}^{\depthSymbol \headIndex_1 , \outputSymbol}
            \uW_{\sequenceVariable , k_2 j}^{\depthSymbol \headIndex_2 , \outputSymbol}
            \val{\sequenceVariable, c_1 k_1}{\depthSymbol \headIndex_1}
            \val{\sequenceVariable, c_2 k_2}{\depthSymbol \headIndex_2}
            \frac{
                \partial
                \tildeLogit{\sequenceVariable, \fIndexA c_1}{\depthSymbol\headIndex_1}
            }{
                \partial 
                \logitSymbol_{\sequenceVariable, \fIndexA d_1}^{\depthSymbol\headIndex_1}
            }
            \frac{
                \partial
                \tildeLogit{\sequenceVariable, \fIndexB c_2}{\depthSymbol\headIndex_2}
            }{
                \partial 
                \logitSymbol_{\sequenceVariable, \fIndexB d_2}^{\depthSymbol\headIndex_2}
            }
            \Gamma_{\sequenceVariable, \gIndexZA u_1}^{\headIndex_1}
            \Gamma_{\sequenceVariable, \gIndexZB u_2}^{\headIndex_2}
        \, .
    \end{align*}
    Unlike in the proof of \Cref{lem:gg_ntk}, the mean 
    \begin{align}\label{eq:vv_mean}
        \E [
            \generalMean_{\sequenceVariable}
        ]
        =
        \delta_{i = j}
        \frac{\OVVar}{\layerDimension{\depthSymbol - 1} (\logitDimension)^{2\scaling}}
        \sum_{\gIndexZA, \gIndexZB}
        \sum_{\substack{u_1 , u_2}}
            \E \left[
                \ntkhat_{
                    \gIndexA \gIndexZA , \gIndexB \gIndexZB
                }^{\depthSymbol}
                \frac{
                    \langle
                        \activitySymbol_{\sequenceVariable, c_1 \cdot}^{\depthSymbol - 1}
                        ,
                        \activitySymbol_{\sequenceVariable, c_2 \cdot}^{\depthSymbol - 1}
                    \rangle
                }{
                    \layerDimension{\depthSymbol - 1}
                }
                \frac{
                    \partial
                    \tildeLogit{\sequenceVariable, \fIndexA c_1}{\depthSymbol 1}
                }{
                    \partial 
                    \logitSymbol_{\sequenceVariable, \fIndexA d_1}^{\depthSymbol 1}
                }
                \frac{
                    \partial
                    \tildeLogit{\sequenceVariable, \fIndexB c_2}{\depthSymbol 1}
                }{
                    \partial 
                    \logitSymbol_{\sequenceVariable, \fIndexB d_2}^{\depthSymbol 1}
                }
                \Gamma_{\sequenceVariable, \gIndexZA u_1}^{1}
                \Gamma_{\sequenceVariable, \gIndexZB u_2}^{1}
            \right]
        \, ,
    \end{align}
    only eliminates some of the sums.
    This issue can be resolved with the help of \Cref{lem:vv_subtask_convg}.
    
    \begin{lemma}\label{lem:vv_subtask_convg}
        The random variable 
        \begin{align*}
            \generalMean_{\sequenceVariable}^{\headIndex_1 \headIndex_2}
            \coloneqq
            \frac{1}{\layerDimension{\depthSymbol - 1} (\logitDimension)^{2\scaling}}
            \sum_{\gIndexZA, \gIndexZB}
            \sum_{\substack{u_1 , u_2}}
                    \ntkhat_{
                        \gIndexA \gIndexZA , \gIndexB \gIndexZB
                    }^{\depthSymbol}
                    \Gamma_{\sequenceVariable, \gIndexZA u_1}^{\headIndex_1}
                    \Gamma_{\sequenceVariable, \gIndexZB u_2}^{\headIndex_2}
            \, ,
        \end{align*}
        converges in probability to
        $$
            \delta_{\scaling = \frac{1}{2}}
            \delta_{\headIndex_1 = \headIndex_2}
            \QKVar
            \ntktilde_{\gIndexA \gIndexB}^{\depthSymbol}(\inputSymbol, \inputSymbol')
            \left( 
                \delta_{\substack{d_1 = \gIndexA \\ d_2 = \gIndexB}}
                \kerntildef{\fIndexA \fIndexB}{\depthSymbol}{\inputSymbol}{\inputSymbol'}
                +
                \delta_{\substack{\gIndexA = \fIndexA \\ \gIndexB = \fIndexB}}
                \kerntildef{d_1 d_2}{\depthSymbol}{\inputSymbol}{\inputSymbol'}
            \right)
            \, .
        $$
    \end{lemma}
    
    \begin{proof}
        Notice
        \begin{align}\label{eq:vv_subtask_mean}
            \E [\generalMean_\sequenceVariable^{\headIndex_1 \headIndex_2}]
            =
            &\delta_{\headIndex_1 = \headIndex_2}
            \frac{\QKVar}{(\logitDimension)^{2\scaling - 1}}
            \E 
                \left[
                    \ntkhat_{\gIndexA 1, \gIndexB 1}^{\depthSymbol}
                    \left(
                        \delta_{\substack{d_1 = \gIndexA \\ d_2 = \gIndexB}}
                        \frac{
                            \langle
                                \activitySymbol_{\sequenceVariable, \fIndexA \cdot}^{\depthSymbol}
                                ,
                                \activitySymbol_{\sequenceVariable, \fIndexB \cdot}^{\depthSymbol}
                            \rangle
                        }{
                            \layerDimension{\depthSymbol - 1}
                        }
                        +
                        \delta_{\substack{\gIndexA = \fIndexA \\ \gIndexB = \fIndexB}}
                        \frac{
                            \langle
                                \activitySymbol_{\sequenceVariable, d_1 \cdot}^{\depthSymbol}
                                ,
                                \activitySymbol_{\sequenceVariable, d_2 \cdot}^{\depthSymbol}
                            \rangle
                        }{
                            \layerDimension{\depthSymbol - 1}
                        }
                    \right)
                \right]
                +
                \nonumber
                \\
                &\delta_{\headIndex_1 = \headIndex_2}
                \frac{\QKVar}{(\logitDimension)^{2\scaling - 1}}
                \E \left[
                    \ntkhat_{\gIndexA 1, \gIndexB 1}^{\depthSymbol}
                    \left(
                        \delta_{\substack{d_1 = \gIndexA \\ \gIndexB = \fIndexB}}
                            \frac{
                                \activitySymbol_{\sequenceVariable, \fIndexA 1}^{\depthSymbol - 1}
                                \activitySymbol_{\sequenceVariable, d_2 1}^{\depthSymbol - 1}
                            }{
                                \layerDimension{\depthSymbol - 1}
                            }
                        +
                        \delta_{\substack{\gIndexA = \fIndexA \\ d_2 = \gIndexB}}
                            \frac{
                                \activitySymbol_{\sequenceVariable, d_1 1}^{\depthSymbol - 1}
                                \activitySymbol_{\sequenceVariable, \fIndexB 1}^{\depthSymbol - 1}
                            }{
                                \layerDimension{\depthSymbol - 1}
                            }
                    \right)
                \right]
                \, ,
        \end{align}
        and thus we can combine the fact that $\ntkhat_{\gIndexA 
        \gIndexZA, \gIndexB \gIndexZB}^{\depthSymbol} \convergeProb \delta_{\gIndexZA = \gIndexZB} \ntktilde_{\gIndexA \gIndexB}^{\depthSymbol}$ \citep{yang2019v2} with \Cref{lem:inner_prod_converge,lem:mmnt_propagation}, the continuous mapping theorem, and \Cref{thm:mean_convergence} to obtain 
        $$
            \E [ \generalMean_{\sequenceVariable}^{\headIndex_1 \headIndex_2} ] 
            \to
            \delta_{\scaling = \frac{1}{2}}
            \delta_{\headIndex_2 = \headIndex_2}
            \QKVar
            \ntktilde_{\gIndexA \gIndexB}^{\depthSymbol}(\inputSymbol, \inputSymbol')
            \left( 
                \delta_{\substack{d_1 = \gIndexA \\ d_2 = \gIndexB}}
                \kerntildef{\fIndexA \fIndexB}{\depthSymbol}{\inputSymbol}{\inputSymbol'}
                +
                \delta_{\substack{\gIndexA = \fIndexA \\ \gIndexB = \fIndexB}}
                \kerntildef{d_1 d_2}{\depthSymbol}{\inputSymbol}{\inputSymbol'}
            \right)
            \, ,
        $$
        as $\sequenceVariable \to \infty$.
        To obtain the convergence of $\generalMean_{\sequenceVariable}^{\headIndex_1 \headIndex_2}$ to in probability, it is thus sufficient to show $| \! \E [ (\generalMean_{\sequenceVariable}^{\headIndex_1 \headIndex_2})^2 ] - \{ \E [ \generalMean_{\sequenceVariable}^{\headIndex_1 \headIndex_2} ] \}^2|$ converges to zero as $\sequenceVariable \to \infty$.
        Substituting
        \begin{align*}
            \E [ (\generalMean_{\sequenceVariable}^{\headIndex_1 \headIndex_2})^2 ]
            =
            \frac{1}{(\layerDimension{\depthSymbol - 1} (\logitDimension)^{2\scaling})^2}
            \sum_{\substack{\gIndexZA_1, \gIndexZB_1 \\ \gIndexZA_2, \gIndexZB_2}}
            \sum_{\substack{u_1 , u_2 \\ u_3 , u_4}}
                \E \left[
                    \ntkhat_{
                        \gIndexA \gIndexZA_1 , \gIndexB \gIndexZB_1
                    }^{\depthSymbol}
                    \ntkhat_{
                        \gIndexA \gIndexZA_2 , \gIndexB \gIndexZB_2
                    }^{\depthSymbol}
                    \Gamma_{\sequenceVariable, \gIndexZA_1 u_1}^{1}
                    \Gamma_{\sequenceVariable, \gIndexZB_1 u_2}^{1}
                    \Gamma_{\sequenceVariable, \gIndexZA_2 u_3}^{1}
                    \Gamma_{\sequenceVariable, \gIndexZB_2 u_4}^{1}
                \right]
            \, ,
        \end{align*}
        we can once again restrict our attention to groups of terms that include at least $\mathcal{O}((\layerDimension{\depthSymbol - 1} (\logitDimension)^{2\scaling})^2)$ of the summands as long as each term can be bounded by a constant independent of the $\gIndexZA, \gIndexZB$ and $\sequenceVariable$ indices.
        This bound can be again obtained by a repeated application of H{\" o}lder's inequality, followed by \Cref{lem:mmnt_propagation}.
        We can thus shift our attention to the terms for which either of the following holds: (i)~$(\gIndexZA_1, u_1) = (\gIndexZB_1, u_2)$ and $(\gIndexZA_2, u_3) = (\gIndexZB_2, u_4)$; or (ii)~$(\gIndexZA_1, u_1) = (\gIndexZA_2, u_3)$ and $(\gIndexZB_1, u_2) = (\gIndexZB_2, u_4)$; or $(\gIndexZA_1, u_1) = (\gIndexZB_2, u_4)$ and $(\gIndexZB_1, u_2) = (\gIndexZA_2, u_3)$.
        
        As in \Cref{eq:vv_subtask_mean}, we can use the above established boundedness to see that the contribution from any terms that involve the cross terms like $\query{\fIndexA 1}{\depthSymbol 1} \key{\sequenceVariable, d_1 1}{\depthSymbol 1} \uW_{\sequenceVariable, 1 1}^{\depthSymbol 1 , \keySymbol}
        \uW_{\sequenceVariable 1 1}^{\depthSymbol 1, \querySymbol}$, and terms with either of $\gIndexZA_1 \neq \gIndexZB_2$ and $\gIndexZA_2 \neq \gIndexZB_2$ (the limit of $\ntkhat_{\gIndexA \gIndexZA, \gIndexB, \gIndexZB}$ is zero if $\gIndexZA \neq \gIndexZB$), vanish.
        With some algebraic manipulation analogous to that in \Cref{eq:vv_subtask_mean}, we thus obtain 
        $$
            \E [ (\generalMean_{\sequenceVariable}^{\headIndex_1 \headIndex_2})^2 ] 
            \to
            \delta_{\scaling = \frac{1}{2}}
            \delta_{\headIndex_1 = \headIndex_2}
            \left[
                \QKVar
                \ntktilde_{\gIndexA \gIndexB}^{\depthSymbol}(\inputSymbol, \inputSymbol')
                \left( 
                    \delta_{\substack{d_1 = \gIndexA \\ d_2 = \gIndexB}}
                    \kerntildef{\fIndexA \fIndexB}{\depthSymbol}{\inputSymbol}{\inputSymbol'}
                    +
                    \delta_{\substack{\gIndexA = \fIndexA \\ \gIndexB = \fIndexB}}
                    \kerntildef{d_1 d_2}{\depthSymbol}{\inputSymbol}{\inputSymbol'}
                \right)
            \right]^2
            \, ,
        $$
        as desired.
        Application of Cheybshev's inequality concludes the proof.
    \end{proof}
    
    With $\generalMean_{\sequenceVariable}^{\headIndex_1 \headIndex_2}$ defined as in \Cref{lem:vv_subtask_convg}, we can revisit \Cref{eq:vv_mean} 
    \begin{align*}
        \E [\generalMean_{\sequenceVariable}]
        =
        \delta_{i = j}
        \OVVar
        \E \left[
            \generalMean_{\sequenceVariable}^{1 1}
            \frac{
                \langle
                    \activitySymbol_{\sequenceVariable, c_1 \cdot}^{\depthSymbol - 1}
                    ,
                    \activitySymbol_{\sequenceVariable, c_2 \cdot}^{\depthSymbol - 1}
                \rangle
            }{
                \layerDimension{\depthSymbol - 1}
            }
            \frac{
                \partial
                \tildeLogit{\sequenceVariable, \fIndexA c_1}{\depthSymbol 1}
            }{
                \partial 
                \logitSymbol_{\sequenceVariable, \fIndexA d_1}^{\depthSymbol 1}
            }
            \frac{
                \partial
                \tildeLogit{\sequenceVariable, \fIndexB c_2}{\depthSymbol 1}
            }{
                \partial 
                \logitSymbol_{\sequenceVariable, \fIndexB d_2}^{\depthSymbol 1}
            }
        \right]
        \, .
    \end{align*}
    Note that the first two terms converge in probability to constant by \Cref{lem:inner_prod_converge,lem:vv_subtask_convg} and the continuous mapping theorem, 
    $$
        \frac{
            \partial
            \tildeLogit{\sequenceVariable, \fIndexA c_1}{\depthSymbol 1}
        }{
            \partial 
            \logitSymbol_{\sequenceVariable, \fIndexA d_1}^{\depthSymbol 1}
        }
        \frac{
            \partial
            \tildeLogit{\sequenceVariable, \fIndexB c_2}{\depthSymbol 1}
        }{
            \partial 
            \logitSymbol_{\sequenceVariable, \fIndexB d_2}^{\depthSymbol 1}
        }
        \convergeDist
        \frac{
            \partial
            \tildeLogit{\fIndexA c_1}{\depthSymbol 1}
        }{
            \partial 
            \logitSymbol_{\fIndexA d_1}^{\depthSymbol 1}
        }
        \frac{
            \partial
            \tildeLogit{\fIndexB c_2}{\depthSymbol 1}
        }{
            \partial 
            \logitSymbol_{\fIndexB d_2}^{\depthSymbol 1}
        }
        \, ,
    $$
    if $\scaling = \frac{1}{2}$ by \Cref{thm:gp_convergence_sqrt}, and in probability to a constant if $\scaling = 1$ \citep[appendix A]{yang2019v2}, both using the assumed continuity of $\nabla \softmax$.
    Since $\nabla \softmax$ is also assumed to be bounded, we can combine H{\" o}lder's inequality with \Cref{lem:mmnt_propagation} to establish uniform integrability (see the proof of \Cref{lem:vv_subtask_convg} for the bound on $\generalMean_{\sequenceVariable}^{ 1 1 }$) via \Cref{lem:sup_ui}, and with that convergence of $\E [ \generalMean_\sequenceVariable ]$ by \Cref{lem:slutsky} and \Cref{thm:mean_convergence}, yielding
    \begin{align*}
        \E [\generalMean_{\sequenceVariable}]
        \to
        \delta_{\substack{i = j \\ \scaling = \frac{1}{2}}}
        \OVVar
        \QKVar
        \ntktilde_{\gIndexA \gIndexB}^{\depthSymbol}(\inputSymbol, \inputSymbol')
        \kerntildef{c_1 c_2}{\depthSymbol}{\inputSymbol}{\inputSymbol'}
        \left( 
            \delta_{\substack{d_1 = \gIndexA \\ d_2 = \gIndexB}}
            \kerntildef{\fIndexA \fIndexB}{\depthSymbol}{\inputSymbol}{\inputSymbol'}
            +
            \delta_{\substack{\gIndexA = \fIndexA \\ \gIndexB = \fIndexB}}
            \kerntildef{d_1 d_2}{\depthSymbol}{\inputSymbol}{\inputSymbol'}
        \right)
        \E \left[
            \frac{
                \partial
                \tildeLogit{\fIndexA c_1}{\depthSymbol 1} (\inputSymbol)
            }{
                \partial 
                \logitSymbol_{\fIndexA d_1}^{\depthSymbol 1} (\inputSymbol)
            }
            \frac{
                \partial
                \tildeLogit{\fIndexB c_2}{\depthSymbol 1} (\inputSymbol')
            }{
                \partial 
                \logitSymbol_{\fIndexB d_2}^{\depthSymbol 1} (\inputSymbol')
            }
        \right]
        \, .
    \end{align*}
    
    Convergence of $\generalMean_{\sequenceVariable}$ to the same constant can be obtained via Chebyshev's inequality by proving $| \! \E [ \generalMean_{\sequenceVariable}^2] - \{ \E [ \generalMean_{\sequenceVariable}] \}^2  | \to 0$.
    Using the notation from \Cref{lem:vv_subtask_convg},
    the second moment of $\generalMean_{\sequenceVariable}$ can be written as
    \begin{align*}
        \E [\generalMean_{\sequenceVariable}^2]
        =
        \frac{\outStd^{4}}{(\headDimension \valueDimension)^2}
        \sum_{\substack{\headIndex_1, \headIndex_2, \headIndex_3, \headIndex_4 \\ k_1, k_2, k_3, k_4}}
            \E \left[
                \prod_{t=0}^1
                    \generalMean_{\sequenceVariable}^{\headIndex_{2t + 1} \headIndex_{2t + 2}}
                    \uW_{\sequenceVariable , k_{2t + 1} i}^{\depthSymbol \headIndex_{2t + 1} , \outputSymbol}
                    \uW_{\sequenceVariable , k_{2t + 2} j}^{\depthSymbol \headIndex_{2t + 2} , \outputSymbol}
                    \val{\sequenceVariable, c_1 k_{2t + 1}}{\depthSymbol \headIndex_{2t + 1}}
                    \val{\sequenceVariable, c_2 k_{2t + 2}}{\depthSymbol \headIndex_{2t + 2}}
                    \frac{
                        \partial
                        \tildeLogit{\sequenceVariable, \fIndexA c_1}{\depthSymbol\headIndex_{2t + 1}}
                    }{
                        \partial 
                        \logitSymbol_{\sequenceVariable, \fIndexA d_1}^{\depthSymbol\headIndex_{2t + 1}}
                    }
                    \frac{
                        \partial
                        \tildeLogit{\sequenceVariable, \fIndexB c_2}{\depthSymbol\headIndex_{2t + 2}}
                    }{
                        \partial 
                        \logitSymbol_{\sequenceVariable, \fIndexB d_2}^{\depthSymbol\headIndex_{2t + 2}}
                    }
            \right]
        \, .
    \end{align*}
    Because $\nabla \softmax$ is bounded by assumption, we can again use H{\" o}lder's inequality together with \Cref{lem:mmnt_propagation} to bound each of the summands by a constant independent of the $\headIndex, k$ and $\sequenceVariable$ indices.
    This means we can restrict our attention only to groups of terms that include at least $\mathcal{O}((\headDimension \valueDimension)^2)$ of the summands. 
    These fall into one of the following three categories: (i)~$i = j$, $(\headIndex_1, k_1) = (\headIndex_2, k_2)$, and $(\headIndex_3, k_3) = (\headIndex_4, k_4)$; (i)~$(\headIndex_1, k_1) = (\headIndex_3, k_3)$, and $(\headIndex_2, k_4) = (\headIndex_4, k_4)$; and (iii)~$i = j$, $(\headIndex_1, k_1) = (\headIndex_4, k_4)$, and $(\headIndex_2, k_2) = (\headIndex_3, k_3)$.
    Using exchangeability, we thus obtain
    \begin{align*}
        \E [\generalMean_{\sequenceVariable}^2]
        =
        &\OVStd^4
        \E \left[
            \generalMean_{\sequenceVariable}^{1 2}
            \generalMean_{\sequenceVariable}^{1 2}
            \frac{
                \langle
                    \activitySymbol_{\sequenceVariable, c_1 \cdot}^{\depthSymbol - 1}
                    ,
                    \activitySymbol_{\sequenceVariable, c_1 \cdot}^{\depthSymbol - 1}
                \rangle
            }{
                \layerDimension{\depthSymbol - 1}
            }
            \frac{
                \langle
                    \activitySymbol_{\sequenceVariable, c_2 \cdot}^{\depthSymbol - 1}
                    ,
                    \activitySymbol_{\sequenceVariable, c_2 \cdot}^{\depthSymbol - 1}
                \rangle
            }{
                \layerDimension{\depthSymbol - 1}
            }
            \frac{
                \partial
                \tildeLogit{\sequenceVariable, \fIndexA c_1}{\depthSymbol 1}
            }{
                \partial 
                \logitSymbol_{\sequenceVariable, \fIndexA d_1}^{\depthSymbol 1}
            }
            \frac{
                \partial
                \tildeLogit{\sequenceVariable, \fIndexB c_2}{\depthSymbol 1}
            }{
                \partial 
                \logitSymbol_{\sequenceVariable, \fIndexB d_2}^{\depthSymbol 1}
            }
                    \frac{
                        \partial
                        \tildeLogit{\sequenceVariable, \fIndexA c_1}{\depthSymbol 2}
                    }{
                        \partial 
                        \logitSymbol_{\sequenceVariable, \fIndexA d_1}^{\depthSymbol 2}
                    }
                    \frac{
                        \partial
                        \tildeLogit{\sequenceVariable, \fIndexB c_2}{\depthSymbol 2}
                    }{
                        \partial 
                        \logitSymbol_{\sequenceVariable, \fIndexB d_2}^{\depthSymbol 2}
                    }
        \right]
        +
        \\
        &\delta_{i = j}
        \OVStd^4
        \E \left[
            (
                \generalMean_{\sequenceVariable}^{1 1}
                \generalMean_{\sequenceVariable}^{2 2}
                +
                \generalMean_{\sequenceVariable}^{1 2}
                \generalMean_{\sequenceVariable}^{2 1}
            )
            \left(
                \frac{
                    \langle
                        \activitySymbol_{\sequenceVariable, c_1 \cdot}^{\depthSymbol - 1}
                        ,
                        \activitySymbol_{\sequenceVariable, c_2 \cdot}^{\depthSymbol - 1}
                    \rangle
                }{
                    \layerDimension{\depthSymbol - 1}
                }
            \right)^2
            \frac{
                \partial
                \tildeLogit{\sequenceVariable, \fIndexA c_1}{\depthSymbol 1}
            }{
                \partial 
                \logitSymbol_{\sequenceVariable, \fIndexA d_1}^{\depthSymbol 1}
            }
            \frac{
                \partial
                \tildeLogit{\sequenceVariable, \fIndexB c_2}{\depthSymbol 1}
            }{
                \partial 
                \logitSymbol_{\sequenceVariable, \fIndexB d_2}^{\depthSymbol 1}
            }
                    \frac{
                        \partial
                        \tildeLogit{\sequenceVariable, \fIndexA c_1}{\depthSymbol 2}
                    }{
                        \partial 
                        \logitSymbol_{\sequenceVariable, \fIndexA d_1}^{\depthSymbol 2}
                    }
                    \frac{
                        \partial
                        \tildeLogit{\sequenceVariable, \fIndexB c_2}{\depthSymbol 2}
                    }{
                        \partial 
                        \logitSymbol_{\sequenceVariable, \fIndexB d_2}^{\depthSymbol 2}
                    }
        \right]
        +
        \\
        &\littleO((\headDimension \valueDimension)^2)
        \, .
    \end{align*}
    Note by the assumed continuity of $\nabla \softmax$, \Cref{thm:gp_convergence_sqrt}, \Cref{lem:inner_prod_converge,lem:vv_subtask_convg}, the continuous mapping theorem, and \Cref{lem:slutsky}, both the integrands converge in distribution, which, combined with the above derived bound and \Cref{thm:mean_convergence}, implies
    \begin{align*}
        \E [ \generalMean_{\sequenceVariable}^2 ]
        \to
        \delta_{\substack{i = j \\ \scaling = \frac{1}{2}}}
        \left[
            \OVVar
            \QKVar
            \ntktilde_{\gIndexA \gIndexB}^{\depthSymbol}(\inputSymbol, \inputSymbol')
            \kerntildef{c_1 c_2}{\depthSymbol}{\inputSymbol}{\inputSymbol'}
            \left( 
                \delta_{\substack{d_1 = \gIndexA \\ d_2 = \gIndexB}}
                \kerntildef{\fIndexA \fIndexB}{\depthSymbol}{\inputSymbol}{\inputSymbol'}
                +
                \delta_{\substack{\gIndexA = \fIndexA \\ \gIndexB = \fIndexB}}
                \kerntildef{d_1 d_2}{\depthSymbol}{\inputSymbol}{\inputSymbol'}
            \right)
            \E \left[
                \frac{
                    \partial
                    \tildeLogit{\fIndexA c_1}{\depthSymbol 1} (\inputSymbol)
                }{
                    \partial 
                    \logitSymbol_{\fIndexA d_1}^{\depthSymbol 1} (\inputSymbol)
                }
                \frac{
                    \partial
                    \tildeLogit{\fIndexB c_2}{\depthSymbol 1} (\inputSymbol')
                }{
                    \partial 
                    \logitSymbol_{\fIndexB d_2}^{\depthSymbol 1} (\inputSymbol')
                }
            \right]
        \right]^2
    \end{align*}
    where we have used the fact that $\generalMean_{\sequenceVariable}^{\headIndex_1 \headIndex_2}$ converges in probability to zero whenever $\headIndex_1 \neq \headIndex_2$ (\Cref{lem:vv_subtask_convg}), and the asymptotic indepedence of $\tildeLogit{\sequenceVariable}{\depthSymbol 1}$ and $\tildeLogit{\sequenceVariable}{\depthSymbol 2}$ (\Cref{thm:gp_convergence_sqrt} if $\scaling = \frac{1}{2}$, resp.\ \citep[appendix A]{yang2019v2} if $\scaling = 1$).
\end{proof}

\begin{lemma}\label{lem:gv_ntk}
    $
    \frac{\outVar}{\headDimension \valueDimension}
    \sum_{\gIndexZA, \gIndexZB}
    \sum_{\substack{\headIndex_1, \headIndex_2\\ k_1, k_2}}
        \ntkhat_{
            \gIndexA \gIndexZA , \gIndexB \gIndexZB
        }^{\depthSymbol}
        \uW_{\sequenceVariable , k_1 i}^{\depthSymbol \headIndex_1 , \outputSymbol}
        \uW_{\sequenceVariable , k_2 j}^{\depthSymbol \headIndex_2 , \outputSymbol}
        \tildeLogit{\sequenceVariable, \fIndexA c_1}{\depthSymbol\headIndex_1}
        \val{\sequenceVariable, c_2 k_2}{\depthSymbol \headIndex_2}
        \frac{
            \partial
            \val{\sequenceVariable, c_1 k_1}{\depthSymbol \headIndex_1}
        }{
            \partial \activitySymbol_{\sequenceVariable, \gIndexA \gIndexZA}^{\depthSymbol - 1}
        }
        \frac{
            \partial
            \tildeLogit{\sequenceVariable, \fIndexB c_2}{\depthSymbol\headIndex_2}
        }{
            \partial 
            \logitSymbol_{\sequenceVariable, \fIndexB d_2}^{\depthSymbol\headIndex_2}
        }
        \frac{
            \partial 
            \logitSymbol_{\sequenceVariable, \fIndexB d_2}^{\depthSymbol\headIndex_2}
        }{
            \partial \activitySymbol_{\sequenceVariable, \gIndexB \gIndexZB}^{\depthSymbol - 1}
        }
    \convergeProb
    0
    \, .
    $
\end{lemma}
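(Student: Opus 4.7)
The plan is to follow the template of \Cref{lem:gg_ntk,lem:vv_ntk}: let $\generalMean_\sequenceVariable$ denote the displayed expression divided by $\outVar$, and establish $\generalMean_\sequenceVariable \convergeProb 0$ by proving $\E[\generalMean_\sequenceVariable] \to 0$ together with $\Var[\generalMean_\sequenceVariable] \to 0$, then concluding via Chebyshev's inequality.

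For the mean, I would use the mutual independence of the four weight families $\uW^{\depthSymbol \headIndex, \querySymbol}, \uW^{\depthSymbol \headIndex, \keySymbol}, \uW^{\depthSymbol \headIndex, \valueSymbol}, \uW^{\depthSymbol \headIndex, \outputSymbol}$ across heads and types. Integrating out $\uW^{\depthSymbol, \outputSymbol}$ forces $i = j$, $k_1 = k_2 = k$, and $\headIndex_1 = \headIndex_2 = \headIndex$. In each surviving matched-head summand, $\val{\sequenceVariable, c_2 k}{\depthSymbol \headIndex}$ and $\partial \val{\sequenceVariable, c_1 k}{\depthSymbol \headIndex} / \partial \activitySymbol_{\sequenceVariable, \gIndexA \gIndexZA}^{\depthSymbol - 1}$ are both linear in the $k$-th column of $\uW^{\depthSymbol \headIndex, \valueSymbol}$ and independent of every other factor, so integrating out this column collapses the product to the deterministic factor $\delta_{c_1 = \gIndexA} \valueVar \activitySymbol_{\sequenceVariable, c_2 \gIndexZA}^{\depthSymbol - 1} / \layerDimension{\depthSymbol - 1}$. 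After the sum over $k$ cancels $(\valueDimension)^{-1}$ and exchangeability over heads cancels $(\headDimension)^{-1}$, the mean reduces to
\begin{align*}
\delta_{i = j, c_1 = \gIndexA} \OVVar \sum_{\gIndexZA, \gIndexZB} \E \! \left[ \ntkhat_{\gIndexA \gIndexZA , \gIndexB \gIndexZB}^{\depthSymbol} \frac{\activitySymbol_{\sequenceVariable, c_2 \gIndexZA}^{\depthSymbol - 1}}{\layerDimension{\depthSymbol - 1}} \tildeLogit{\sequenceVariable, \fIndexA c_1}{\depthSymbol 1} \frac{\partial \tildeLogit{\sequenceVariable, \fIndexB c_2}{\depthSymbol 1}}{\partial \logitSymbol_{\sequenceVariable, \fIndexB d_2}^{\depthSymbol 1}} \frac{\partial \logitSymbol_{\sequenceVariable, \fIndexB d_2}^{\depthSymbol 1}}{\partial \activitySymbol_{\sequenceVariable, \gIndexB \gIndexZB}^{\depthSymbol - 1}} \right] .
\end{align*}

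Using $\ntkhat_{\gIndexA \gIndexZA , \gIndexB \gIndexZB}^{\depthSymbol} \convergeProb \delta_{\gIndexZA = \gIndexZB} \ntktilde_{\gIndexA \gIndexB}^{\depthSymbol}$ from \citep{yang2019v2} to isolate the dominant diagonal contribution, the crucial algebraic identity
\begin{align*}
\sum_{\gIndexZA = 1}^{\layerDimension{\depthSymbol - 1}} \activitySymbol_{\sequenceVariable, c_2 \gIndexZA}^{\depthSymbol - 1} \frac{\partial \logitSymbol_{\sequenceVariable, \fIndexB d_2}^{\depthSymbol 1}}{\partial \activitySymbol_{\sequenceVariable, \gIndexB \gIndexZA}^{\depthSymbol - 1}} = \delta_{d_2 = \gIndexB} \logitSymbol_{\sequenceVariable, \fIndexB c_2}^{\depthSymbol 1} + \delta_{\gIndexB = \fIndexB} \logitSymbol_{\sequenceVariable, c_2 d_2}^{\depthSymbol 1}
\end{align*}
follows by plugging \Cref{eq:vv_grad_logit_summand} together with the definitions of $\query{\sequenceVariable, \cdot}{\depthSymbol 1}$ and $\key{\sequenceVariable, \cdot}{\depthSymbol 1}$: the inner sums $\sum_{\gIndexZA} \activitySymbol_{\sequenceVariable, c_2 \gIndexZA}^{\depthSymbol - 1} \uW_{\sequenceVariable, \gIndexZA u}^{\depthSymbol 1, \keySymbol}$ and $\sum_{\gIndexZA} \activitySymbol_{\sequenceVariable, c_2 \gIndexZA}^{\depthSymbol - 1} \uW_{\sequenceVariable, \gIndexZA u}^{\depthSymbol 1, \querySymbol}$ reassemble into a key and query vector, and the remaining $\sum_u$ reconstructs the logit. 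Combined with the $(\layerDimension{\depthSymbol - 1})^{-1}$ prefactor from the $\uW^V$ integration, the diagonal contribution is $O(1 / \layerDimension{\depthSymbol - 1})$ times a bounded expectation (H\"older's inequality, \Cref{lem:mmnt_propagation}, the polynomial bound on $\softmax$, and the assumed boundedness of $\nabla \softmax$ together provide uniform integrability), hence vanishes. The off-diagonal ($\gIndexZA \neq \gIndexZB$) contribution is killed by the same $\ntkhat \to 0$ plus uniform-integrability argument used in \Cref{lem:vv_ntk}.

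The variance calculation follows the same template: expand $\E[\generalMean_\sequenceVariable^2]$ over the quadrupled summation indices, restrict (as in \Cref{lem:vv_ntk}) to the groups contributing $\Omega((\headDimension \valueDimension \layerDimension{\depthSymbol - 1})^2)$ nontrivially-paired summands, integrate out matched $\uW^V, \uW^O$ pairs to obtain two copies of the $\activitySymbol_{\sequenceVariable, c_2 \gIndexZA}^{\depthSymbol - 1} / \layerDimension{\depthSymbol - 1}$ factor, and apply the collapse identity twice to obtain $O(1 / (\layerDimension{\depthSymbol - 1})^2)$ per group and hence a vanishing second moment. The main obstacle is precisely this collapse identity: a naive term-by-term moment bound on $\widetilde{\logitSymbol}$, $\partial \widetilde{\logitSymbol} / \partial \logitSymbol$, and $\partial \logitSymbol / \partial \activitySymbol$ would only give $O(1)$, whereas exploiting the bilinear structure of $\logitSymbol = \querySymbol \keySymbol^\top / (\logitDimension)^\scaling$ to fold $\sum_{\gIndexZA} \activitySymbol_{\sequenceVariable, c_2 \gIndexZA}^{\depthSymbol - 1} \partial \logitSymbol / \partial \activitySymbol_{\sequenceVariable, \gIndexB \gIndexZA}^{\depthSymbol - 1}$ into a single logit produces the decisive $(\layerDimension{\depthSymbol - 1})^{-1}$ factor and hence the desired vanishing.
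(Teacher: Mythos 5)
Your skeleton (first and second moments plus Chebyshev, integrating out $\uW_{\sequenceVariable}^{\depthSymbol, \outputSymbol}$ and the value weights to force $i=j$, matched heads, and to produce the factor $\delta_{c_1 = \gIndexA}\valueVar \activitySymbol_{\sequenceVariable, c_2 \gIndexZA}^{\depthSymbol-1}/\layerDimension{\depthSymbol-1}$) reproduces the paper's reduction of $\E[\generalMean_\sequenceVariable]$, and your collapse identity $\sum_{\gIndexZA}\activitySymbol_{\sequenceVariable, c_2 \gIndexZA}^{\depthSymbol-1}\,\partial\logitSymbol_{\sequenceVariable,\fIndexB d_2}^{\depthSymbol 1}/\partial\activitySymbol_{\sequenceVariable,\gIndexB\gIndexZA}^{\depthSymbol-1} = \delta_{d_2=\gIndexB}\logitSymbol_{\sequenceVariable,\fIndexB c_2}^{\depthSymbol 1}+\delta_{\gIndexB=\fIndexB}\logitSymbol_{\sequenceVariable,c_2 d_2}^{\depthSymbol 1}$ is algebraically correct and is a neat way to dispose of the diagonal group $\gIndexZA=\gIndexZB$, which indeed vanishes against the leftover $(\layerDimension{\depthSymbol-1})^{-1}$. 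The paper takes a different route here: it isolates the sub-sum $\generalMean_{\sequenceVariable}^{\headIndex}$ of \Cref{lem:gv_subtask_convg}, which contains $\ntkhat$, the activations and $\partial\logitSymbol/\partial\activitySymbol$ but \emph{none} of the $\softmax$-dependent factors, proves $\generalMean_{\sequenceVariable}^{\headIndex}\convergeProb 0$, and only then reattaches $\tildeLogit{\sequenceVariable, \fIndexA c_1}{\depthSymbol 1}$ and the $\nabla\softmax$ factor via \Cref{lem:slutsky} and uniform integrability.

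The genuine gap is your off-diagonal control, in the mean and a fortiori in the variance. With your normalisation the explicit prefactor is only $(\layerDimension{\depthSymbol-1})^{-1}$ against roughly $(\layerDimension{\depthSymbol-1})^2$ off-diagonal summands, so by exchangeability the off-diagonal part of the mean is
\begin{align*}
    \frac{\layerDimension{\depthSymbol-1}(\layerDimension{\depthSymbol-1}-1)}{\layerDimension{\depthSymbol-1}}
    \,
    \E \left[
        \ntkhat_{\gIndexA 1, \gIndexB 2}^{\depthSymbol}
        \,
        \activitySymbol_{\sequenceVariable, c_2 1}^{\depthSymbol - 1}
        \,
        \tildeLogit{\sequenceVariable, \fIndexA c_1}{\depthSymbol 1}
        \,
        \frac{
            \partial \tildeLogit{\sequenceVariable, \fIndexB c_2}{\depthSymbol 1}
        }{
            \partial \logitSymbol_{\sequenceVariable, \fIndexB d_2}^{\depthSymbol 1}
        }
        \,
        \frac{
            \partial \logitSymbol_{\sequenceVariable, \fIndexB d_2}^{\depthSymbol 1}
        }{
            \partial \activitySymbol_{\sequenceVariable, \gIndexB 2}^{\depthSymbol - 1}
        }
    \right]
    ,
\end{align*}
i.e.\ a single-term expectation multiplied by a diverging count. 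Saying it is "killed by the same $\ntkhat\to 0$ plus uniform-integrability argument used in \Cref{lem:vv_ntk}" does not work: that argument only shows the single-term expectation tends to zero, which cannot beat the multiplicity; even after extracting the $(\layerDimension{\depthSymbol-1})^{-1/2}$ hidden in $\partial\logitSymbol/\partial\activitySymbol_{\gIndexB 2}$ you still need $\E[\ntkhat_{\gIndexA 1,\gIndexB 2}^{\depthSymbol}\cdot(\text{bounded factors})] = \littleO((\layerDimension{\depthSymbol-1})^{-1/2})$, a \emph{rate} that the cited qualitative result of \citep{yang2019v2} does not supply. Nor can you invoke the $\scaling=\tfrac12$ zero-mean trick inside your full expectation, because $\tildeLogit{\sequenceVariable, \fIndexA c_1}{\depthSymbol 1}$ and $\nabla\softmax$ depend on the same query/key weights as $\partial\logitSymbol/\partial\activitySymbol$, so the conditional expectation of the $\Gamma$ factors no longer factors out. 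This is exactly what the paper's decomposition buys: inside $\generalMean_\sequenceVariable^{\headIndex}$ the query/key weights can be integrated out, giving an exactly zero mean when $\scaling=\tfrac12$ and an extra $(\layerDimension{\depthSymbol-1})^{-1}$ when $\scaling=1$, after which the number of off-diagonal terms matches the prefactor and the termwise limit $\ntkhat_{\gIndexA 1,\gIndexB 2}^{\depthSymbol}\convergeProb 0$ suffices. The same bookkeeping problem recurs in your second-moment step, where the collapse identity again only handles the index-matched groups; as written, the off-diagonal contributions are not controlled and the proof does not go through without either a quantitative off-diagonal bound on $\ntkhat$ or a restructuring along the lines of \Cref{lem:gv_subtask_convg}.
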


\begin{proof}[Proof of \Cref{lem:gv_ntk}]
    Observing that 
    $
        \frac{
            \partial
            \val{\sequenceVariable, c_1 k_1}{\depthSymbol \headIndex_1}
        }{
            \partial \activitySymbol_{\sequenceVariable, \gIndexA \gIndexZA}^{\depthSymbol - 1}
        }
        =
        \delta_{c_1 = \gIndexA}
        \valueStd
        \frac{
            \uW_{\sequenceVariable, \gIndexZA k_1}^{\depthSymbol \headIndex_1, \valueSymbol}
        }{
            \sqrt{\layerDimension{\depthSymbol - 1}}
        }
    $
    and setting
    \begin{align*}
        \generalMean_{\sequenceVariable}
        =
        \frac{\valueStd}{\headDimension \valueDimension \sqrt{\layerDimension{\depthSymbol - 1}}}
        \sum_{\gIndexZA, \gIndexZB}
        \sum_{\substack{\headIndex_1, \headIndex_2\\ k_1, k_2}}
            \ntkhat_{
                \gIndexA \gIndexZA , \gIndexB \gIndexZB
            }^{\depthSymbol}
            \uW_{\sequenceVariable , k_1 i}^{\depthSymbol \headIndex_1 , \outputSymbol}
            \uW_{\sequenceVariable , k_2 j}^{\depthSymbol \headIndex_2 , \outputSymbol}
            \uW_{\sequenceVariable, \gIndexZA k_1}^{\depthSymbol \headIndex_1, \valueSymbol}
            \val{\sequenceVariable, c_2 k_2}{\depthSymbol \headIndex_2}
            \tildeLogit{\sequenceVariable, \fIndexA c_1}{\depthSymbol\headIndex_1}
            \frac{
                \partial
                \tildeLogit{\sequenceVariable, \fIndexB c_2}{\depthSymbol\headIndex_2}
            }{
                \partial 
                \logitSymbol_{\sequenceVariable, \fIndexB d_2}^{\depthSymbol\headIndex_2}
            }
            \frac{
                \partial 
                \logitSymbol_{\sequenceVariable, \fIndexB d_2}^{\depthSymbol\headIndex_2}
            }{
                \partial \activitySymbol_{\sequenceVariable, \gIndexB \gIndexZB}^{\depthSymbol - 1}
            }
        \, ,
    \end{align*}
    we immediately see that 
    \begin{align*}
        \E [\generalMean_\sequenceVariable]
        =
        \delta_{i = j}
        \frac{\valueVar}{\layerDimension{\depthSymbol - 1}}
        \sum_{\gIndexZA, \gIndexZB}
            \E \left[
                \ntkhat_{
                    \gIndexA \gIndexZA , \gIndexB \gIndexZB
                }^{\depthSymbol}
                \activitySymbol_{\sequenceVariable, c_2 \gIndexZA}^{\depthSymbol - 1}
                \tildeLogit{\sequenceVariable, \fIndexA c_1}{\depthSymbol 1}
                \frac{
                    \partial
                    \tildeLogit{\sequenceVariable, \fIndexB c_2}{\depthSymbol 1}
                }{
                    \partial 
                    \logitSymbol_{\sequenceVariable, \fIndexB d_2}^{\depthSymbol 1}
                }
                \frac{
                    \partial 
                    \logitSymbol_{\sequenceVariable, \fIndexB d_2}^{\depthSymbol 1}
                }{
                    \partial \activitySymbol_{\sequenceVariable, \gIndexB \gIndexZB}^{\depthSymbol - 1}
                }
            \right]
        \, .
    \end{align*}
    Analogously to the proof of \Cref{lem:vv_ntk}, we define
    \begin{align}\label{eq:gv_subtask_variable}
        \generalMean_{\sequenceVariable}^{\headIndex}
        &=
        \frac{1}{\layerDimension{\depthSymbol - 1}}
        \sum_{\gIndexZA, \gIndexZB}
            \ntkhat_{
                \gIndexA \gIndexZA , \gIndexB \gIndexZB
            }^{\depthSymbol}
            \activitySymbol_{\sequenceVariable, c_2 \gIndexZA}^{\depthSymbol - 1}
            \frac{
                \partial 
                \logitSymbol_{\sequenceVariable, \fIndexB d_2}^{\depthSymbol \headIndex}
            }{
                \partial \activitySymbol_{\sequenceVariable, \gIndexB \gIndexZB}^{\depthSymbol - 1}
            }
        \\
        &=
        \frac{1}{(\layerDimension{\depthSymbol - 1})^{3/2} (\logitDimension)^{\scaling}}
        \sum_{\gIndexZA, \gIndexZB}
        \sum_{u_1 = 1}^{\logitDimension}
            \ntkhat_{
                \gIndexA \gIndexZA , \gIndexB \gIndexZB
            }^{\depthSymbol}
            \activitySymbol_{\sequenceVariable, c_2 \gIndexZA}^{\depthSymbol - 1}
                \biggl(
                    \underbrace{
                        \delta_{d_2 = \gIndexB}
                        \keyStd
                        \query{\sequenceVariable, \fIndexB u_1}{\depthSymbol\headIndex}
                        \uW_{\sequenceVariable, \gIndexZB u_1}^{\depthSymbol\headIndex , \keySymbol}
                        +
                        \delta_{\gIndexB = \fIndexB}
                        \queryStd
                        \key{\sequenceVariable, d_2 u_1}{\depthSymbol \headIndex}
                            \uW_{\sequenceVariable, \gIndexZB u_1}^{\depthSymbol\headIndex , \querySymbol}
                    }_{\eqqcolon \Gamma_{\sequenceVariable, \gIndexZB u_1}^{\headIndex}}
                \biggr)
        \, ,
    \end{align}
    and make use of an auxiliary lemma.
    
    \begin{lemma}\label{lem:gv_subtask_convg}
        $\generalMean_{\sequenceVariable}^{\headIndex} \convergeProb 0$.
    \end{lemma}
    
    \begin{proof}
        Observe $\E [ \generalMean_{\sequenceVariable}^{\headIndex} ] = 0$ if $\scaling = \frac{1}{2}$ (independence of key and query weights), and
        \begin{align*}
            \E [ \generalMean_{\sequenceVariable}^{\headIndex} ]
            =
            \frac{\QKStd}{(\layerDimension{\depthSymbol - 1})^2}
            \sum_{\gIndexZA, \gIndexZB}
                \E \left[
                    \ntkhat_{
                        \gIndexA \gIndexZA , \gIndexB \gIndexZB
                    }^{\depthSymbol}
                    \activitySymbol_{\sequenceVariable, c_2 \gIndexZA}^{\depthSymbol - 1}
                        \biggl(
                            \delta_{d_2 = \gIndexB}
                            \activitySymbol_{\sequenceVariable, \fIndexB \gIndexZB}^{\depthSymbol - 1}
                            +
                            \delta_{\gIndexB = \fIndexB}
                            \activitySymbol_{\sequenceVariable, d_2 \gIndexZB}^{\depthSymbol - 1}
                        \biggr)
                \right]
            \, ,
        \end{align*}
        if $\scaling = 1$ (key and query weights are equal a.s.).
        Since each of the summands can be bounded by a constant indpendent of the $\gIndexZA, \gIndexZB$ and $\sequenceVariable$ indices by \Cref{lem:mmnt_propagation}, we can restrict our focus to the terms for which $\gIndexZA \neq \gIndexZB$, yielding
        \begin{align*}
            \E [ \generalMean_{\sequenceVariable}^{\headIndex} ]
            =
            \QKStd
            \frac{\layerDimension{\depthSymbol - 1}(\layerDimension{\depthSymbol - 1} - 1)}{(\layerDimension{\depthSymbol - 1})^2}
            \E \left[
                \ntkhat_{
                    \gIndexA 1 , \gIndexB 2
                }^{\depthSymbol}
                \activitySymbol_{\sequenceVariable, c_2 1}^{\depthSymbol - 1}
                    \biggl(
                        \delta_{d_2 = \gIndexB}
                        \activitySymbol_{\sequenceVariable, \fIndexB \gIndexZB}^{\depthSymbol - 1}
                        +
                        \delta_{\gIndexB = \fIndexB}
                        \activitySymbol_{\sequenceVariable, d_2 \gIndexZB}^{\depthSymbol - 1}
                    \biggr)
            \right]
            +
            \littleO((\layerDimension{\depthSymbol - 1})^2)
            \, ,
        \end{align*}
        Since $\ntkhat_{\gIndexA 1 , \gIndexB 2}^{\depthSymbol} \convergeProb 0$ \citep{yang2019v2}, and the $\activitySymbol_{\sequenceVariable, c_2 \gIndexZA}^{\depthSymbol - 1} \activitySymbol_{\sequenceVariable, \fIndexB \gIndexZB}^{\depthSymbol - 1}$ products converge in distribution by continuity of the assumed $\nonlinearity$ and the continuous mapping theorem, the integrand converges to zero in distribution by \Cref{lem:slutsky}.
        Using \Cref{lem:mmnt_propagation,lem:sup_ui} and \Cref{thm:mean_convergence}, we again establish $\E [ \generalMean_{\sequenceVariable}^{\headIndex} ] \to 0$.
        
        To obtain convergence in probability, observe
        \begin{align*}
            \E [ (\generalMean_{\sequenceVariable}^\headIndex)^2 ]
            =
            \frac{1}{(\layerDimension{\depthSymbol - 1})^{3} (\logitDimension)^{2\scaling}}
            \sum_{\substack{\gIndexZA_1, \gIndexZB_1 \\ \gIndexZA_2, \gIndexZB_2}}
            \sum_{u_1, u_2}
                \E \left[
                    \ntkhat_{
                        \gIndexA \gIndexZA_1 , \gIndexB \gIndexZB_1
                    }^{\depthSymbol}
                    \ntkhat_{
                        \gIndexA \gIndexZA_2 , \gIndexB \gIndexZB_2
                    }^{\depthSymbol}
                    \activitySymbol_{\sequenceVariable, c_2 \gIndexZA_1}^{\depthSymbol - 1}
                    \activitySymbol_{\sequenceVariable, c_2 \gIndexZA_2}^{\depthSymbol - 1}
                    \Gamma_{\sequenceVariable, \gIndexZB_1 u_1}^{\headIndex}
                    \Gamma_{\sequenceVariable, \gIndexZB_2 u_2}^{\headIndex}
                \right]
            \, ,
        \end{align*}
        and note that we can again bound each of the summands using H{\" o}lder's inequality and \Cref{lem:mmnt_propagation} as in to \Cref{lem:vv_subtask_convg}.
        We can thus restrict our attention to groups of terms that include at least $\mathcal{O}((\layerDimension{\depthSymbol - 1})^3 (\logitDimension)^{2\scaling})$ of the summands.
        If $\scaling = 1$, we can thus focus on $u_1 \neq u_2$, in which case integrating $\Gamma_{\sequenceVariable, \gIndexZB_1 u_1}^{\headIndex} \Gamma_{\sequenceVariable, \gIndexZB_2 u_2}^{\headIndex}$
        over key and query weights will yield an additional $\layerDimension{\depthSymbol - 1}$ factor,
        for example
        $$
            \E [
                \query{\fIndexB u_1}{\depthSymbol\headIndex}
                \uW_{\sequenceVariable, \gIndexZB_1 u_1}^{\depthSymbol\headIndex , \keySymbol}
                \query{\fIndexB u_2}{\depthSymbol\headIndex}
                \uW_{\sequenceVariable, \gIndexZB_2 u_2}^{\depthSymbol\headIndex , \keySymbol}
            ]
            =
            \frac{\queryVar}{\layerDimension{\depthSymbol}}
            \activitySymbol_{\sequenceVariable, \gIndexB \gIndexZB_1}
            \activitySymbol_{\sequenceVariable, \gIndexB \gIndexZB_2}
            \, ,
        $$
        using the equality of key and query weights.
        Since $\ntkhat_{\gIndexA \gIndexZA, \gIndexB \gIndexZB}^{\depthSymbol}$ converges in probability to zero whenever $\gIndexZA \neq \gIndexZB$ \citep{yang2019v2}, and there are only $(\layerDimension{\depthSymbol - 1})^2$ terms for which $\gIndexZA_1 = \gIndexZB_1$ and $\gIndexZA_2 = \gIndexZB_2$, we can use the continuous mapping theorem, \Cref{lem:slutsky}, and \Cref{thm:mean_convergence} to establish that $\E [ (\generalMean_{\sequenceVariable}^{\headIndex})^2 ] \to 0$.
        If $\scaling = \frac{1}{2}$, all terms for which $u_1 \neq 0$ will have zero expectation (independence of key and query weights), and thus analogous argument to the one for $\scaling = 1$.
    \end{proof}
    
    With \Cref{lem:gv_subtask_convg} at hand, we can simplify
    \begin{align*}
        \E [\generalMean_\sequenceVariable]
        =
        \delta_{i = j}
        \valueVar
        \E \left[
            \generalMean_{\sequenceVariable}^1
            \tildeLogit{\sequenceVariable, \fIndexA c_1}{\depthSymbol 1}
            \frac{
                \partial
                \tildeLogit{\sequenceVariable, \fIndexB c_2}{\depthSymbol 1}
            }{
                \partial 
                \logitSymbol_{\sequenceVariable, \fIndexB d_2}^{\depthSymbol 1}
            }
        \right]
        \, ,
    \end{align*}
    and use the assumed continuity of $\nabla \softmax$ together with the continuous mapping theorem and \Cref{lem:slutsky} to establish that the integrand converges in distribution to zero.
    Since $\nabla \softmax$ is bounded by assumption, we can used H{\" o}lder's inequality and \Cref{lem:mmnt_propagation} to establish uniform integrability via \Cref{lem:sup_ui} (see the proof of \Cref{lem:gv_subtask_convg} for the bound on $\generalMean_\sequenceVariable^1$).
    We thus have $\E [\generalMean_\sequenceVariable] \to 0$
    by \Cref{thm:mean_convergence}.
    
    To establish $\generalMean_{\sequenceVariable} \convergeProb 0$, it is sufficient to show $\E [ (\generalMean_{\sequenceVariable})^2 ] \to 0$ and apply Chebyshev's inequality.\footnote{We will be using the explicit parenthesis here to distinguish between $\generalMean_{\sequenceVariable}^\headIndex$ with $\headIndex = 2$, and $(\generalMean_\sequenceVariable)^2$.}
    We have
    \begin{align*}
        \E [(\generalMean_{\sequenceVariable})^2]
        =
        \frac{\valueVar}{(\headDimension \valueDimension\layerDimension{\depthSymbol - 1})^2}
        \sum_{\substack{\gIndexZA_1, \gIndexZB_1 \\ \gIndexZA_2, \gIndexZB_2}}
        \sum_{\substack{\headIndex_1, \headIndex_2, \headIndex_3, \headIndex_4 \\ k_1, k_2, k_3, k_4}}
            \begin{aligned}[t]
            \E \left[
                \prod_{t=0}^1
                \right.&\left.
                \ntkhat_{
                    \gIndexA \gIndexZA_{t+1} , \gIndexB \gIndexZB_{t+1}
                }^{\depthSymbol}
                \uW_{\sequenceVariable , k_{2t + 1} i}^{\depthSymbol \headIndex_{2t + 1} , \outputSymbol}
                \uW_{\sequenceVariable , k_{2t + 2} j}^{\depthSymbol \headIndex_{2t + 2} , \outputSymbol}
                \right.
                \\
                &\left.
                \uW_{\sequenceVariable, \gIndexZA_{t+1} k_{2t + 1}}^{\depthSymbol \headIndex_{2t + 1}, \valueSymbol}
                \val{\sequenceVariable, c_2 k_{2t + 2}}{\depthSymbol \headIndex_{2t + 2}}
                \tildeLogit{\sequenceVariable, \fIndexA c_1}{\depthSymbol\headIndex_{2t + 1}}
                \frac{
                    \partial
                    \tildeLogit{\sequenceVariable, \fIndexB c_2}{\depthSymbol\headIndex_{2t + 2}}
                }{
                    \partial 
                    \logitSymbol_{\sequenceVariable, \fIndexB d_2}^{\depthSymbol\headIndex_{2t + 2}}
                }
                \sqrt{\layerDimension{\depthSymbol - 1}}
                \frac{
                    \partial 
                    \logitSymbol_{\sequenceVariable, \fIndexB d_2}^{\depthSymbol\headIndex_{2t + 2}}
                }{
                    \partial \activitySymbol_{\sequenceVariable, \gIndexB \gIndexZB_{t+1}}^{\depthSymbol - 1}
                }
        \right]
        \, ,
        \end{aligned}
    \end{align*}
    where notice we are multiplying
    $
        \frac{
            \partial 
            \logitSymbol_{\sequenceVariable, \fIndexB d_2}^{\depthSymbol\headIndex_{2t + 2}}
        }{
            \partial \activitySymbol_{\sequenceVariable, \gIndexB \gIndexZB_{t+1}}^{\depthSymbol - 1}
        }
    $
    by $\sqrt{\layerDimension{\depthSymbol - 1}}$ as this term scales as $(\layerDimension{\depthSymbol - 1})^{-1/2}$ (see \Cref{eq:gv_subtask_variable}).
    Since $\nabla \softmax$ is bounded by assumption, we can use the H{\" o}lder's inequality to bound each of the summands by
    \begin{align*}
        \poly \biggl(
            \max_{\substack{
                \gIndexA, \gIndexB \in [\spatialDimension],
                \gIndexZA, \gIndexZB \in \{ 1, 2 \} \\
                z, z' \in \{ \inputSymbol, \inputSymbol' \}
            }}
                \E [
                    \ntkhat_{
                        \gIndexA \gIndexZA , \,
                        \gIndexB \gIndexZB
                    }^{\depthSymbol} (z, z')^4
                ]
            \, ,
            \max_{\substack{
                c, c' \in [\spatialDimension]  \\
                z \in \{ \inputSymbol , \inputSymbol' \}
            }}
                \E [
                    \tildeLogit{\sequenceVariable, c c'}{\depthSymbol 1} (z)^8
                ]
            \, ,
            \max_{\substack{c \in [\spatialDimension] \\ z \in \{ \inputSymbol, \inputSymbol' \}}}
                \E | \indexedActivity{\sequenceVariable, \depthSymbol - 1}{c 1}{z} |^{16}
        \biggr)
        \, ,
    \end{align*}
    which will be bounded by a constant independent of the $\gIndexZA, \gIndexZB, \headIndex, k$ and $\sequenceVariable$ by \Cref{lem:mmnt_propagation}.
    By \Cref{lem:sup_ui}, we can thus restrict our attention to the terms that are not $\littleO((\headDimension \valueDimension \layerDimension{\depthSymbol - 1})^2)$, which fall into one of the following three categories: 
    (i)~$i = j$, $(\headIndex_1, k_1) = (\headIndex_2 , k_2)$, and $(\headIndex_3, k_3) = (\headIndex_4, k_4)$; 
    (ii)~$(\headIndex_1, k_1, \gIndexZA_1) = (\headIndex_3 , k_3, \gIndexZA_2)$, and $(\headIndex_2, k_2, \gIndexZB_1) = (\headIndex_4, k_4, \gIndexZB_2)$; 
    (iii)~$i = j$, $(\headIndex_1, k_1) = (\headIndex_3, k_3)$, and $(\headIndex_2 , k_2) = (\headIndex_4, k_4)$.
    Using exchangeability, we thus obtain
    \begin{align}\label{eq:gv_square}
        \E [(\generalMean_{\sequenceVariable})^2]
        =
        &\valueStd^4
        \E \left[
            \frac{
                \langle
                    \activitySymbol_{\sequenceVariable, c_2 \cdot}^{\depthSymbol - 1}
                    ,
                    \activitySymbol_{\sequenceVariable, c_2 \cdot}^{\depthSymbol - 1}
                \rangle
            }{
                \layerDimension{\depthSymbol - 1}
            }
            \left(
                \ntkhat_{
                    \gIndexA 1, \gIndexB 2
                }^{\depthSymbol}
                \tildeLogit{\sequenceVariable, \fIndexA c_1}{\depthSymbol 1}
                \frac{
                    \partial
                    \tildeLogit{\sequenceVariable, \fIndexB c_2}{\depthSymbol 1}
                }{
                    \partial 
                    \logitSymbol_{\sequenceVariable, \fIndexB d_2}^{\depthSymbol 1}
                }
                \sqrt{\layerDimension{\depthSymbol - 1}}
                \frac{
                    \partial 
                    \logitSymbol_{\sequenceVariable, \fIndexB d_2}^{\depthSymbol 2}
                }{
                    \partial \activitySymbol_{\sequenceVariable, \gIndexB 2}^{\depthSymbol - 1}
                }
            \right)^2
        \right]
        +
        \\
        &
        \delta_{i = j}
        2
        \valueVar
        \E \left[
            \generalMean_{\sequenceVariable}^{1}
            \generalMean_{\sequenceVariable}^{2}
            \tildeLogit{\sequenceVariable, \fIndexA c_1}{\depthSymbol 1}
            \tildeLogit{\sequenceVariable, \fIndexA c_1}{\depthSymbol 2}
            \frac{
                \partial
                \tildeLogit{\sequenceVariable, \fIndexB c_2}{\depthSymbol 1}
            }{
                \partial 
                \logitSymbol_{\sequenceVariable, \fIndexB d_2}^{\depthSymbol 1}
            }
                    \frac{
                        \partial
                        \tildeLogit{\sequenceVariable, \fIndexB c_2}{\depthSymbol 2}
                    }{
                        \partial 
                        \logitSymbol_{\sequenceVariable, \fIndexB d_2}^{\depthSymbol 2}
                    }
        \right]
        +
        \littleO((\headDimension \valueDimension \layerDimension{\depthSymbol - 1})^2)
        \, ,
    \end{align}
    where we have used
    $
        \E [
            \uW_{\sequenceVariable, \gIndexZA_{1} 1}^{\depthSymbol 1, \valueSymbol}
            \val{\sequenceVariable, c_2 1}{\depthSymbol 1} 
            \uW_{\sequenceVariable, \gIndexZA_2 2}^{\depthSymbol 2, \valueSymbol}
            \val{\sequenceVariable, c_2 2}{\depthSymbol 2}
        ]
        =
        \frac{\valueVar}{\layerDimension{\depthSymbol - 1}}
        \activitySymbol_{\sequenceVariable, c_2 \gIndexZA_1}^{\depthSymbol - 1}
        \activitySymbol_{\sequenceVariable, c_2 \gIndexZA_2}^{\depthSymbol - 1}
    $
    and the definition of $\generalMean_{\sequenceVariable}^\headIndex$ from \Cref{eq:gv_subtask_variable}.
    We prove convergence of both of these expectations to zero separately.
    
    Starting with the second expectation in \Cref{eq:gv_square}, we can use the assumed continuity of $\nabla \softmax$, \Cref{thm:gp_convergence_sqrt}, \Cref{eq:gv_subtask_variable}, the continuous mapping theorem, and \Cref{lem:slutsky} to establish that the integrand converges in distribution to zero.
    Because $\nabla \softmax$ is bounded by assumption, we can combine H{\" o}lder's inequality and \Cref{lem:mmnt_propagation} to establish uniform integrability via \Cref{lem:sup_ui} (see the proof of \Cref{lem:gv_subtask_convg} for the bound on $\generalMean_{\sequenceVariable}^\headIndex$), and thus convergence of the expectation to zero by \Cref{thm:mean_convergence}.
    
    For the first expectation in \Cref{eq:gv_square}, note that the absolute value of the expectation can be upper bounded by
    \begin{align*}
        \E \left[
            \left|
                \frac{
                    \langle
                        \activitySymbol_{\sequenceVariable, c_2 \cdot}^{\depthSymbol - 1}
                        ,
                        \activitySymbol_{\sequenceVariable, c_2 \cdot}^{\depthSymbol - 1}
                    \rangle
                }{
                    \layerDimension{\depthSymbol - 1}
                }
            \right|
            (
                \ntkhat_{
                    \gIndexA 1, \gIndexB 2
                }^{\depthSymbol}
            )^2
            \left(
                \left(
                    \tildeLogit{\sequenceVariable, \fIndexA c_1}{\depthSymbol 1}
                    \frac{
                        \partial
                        \tildeLogit{\sequenceVariable, \fIndexB c_2}{\depthSymbol 1}
                    }{
                        \partial 
                        \logitSymbol_{\sequenceVariable, \fIndexB d_2}^{\depthSymbol 1}
                    }
                \right)^2
                +
                \layerDimension{\depthSymbol - 1}
                \left(
                    \frac{
                        \partial 
                        \logitSymbol_{\sequenceVariable, \fIndexB d_2}^{\depthSymbol 2}
                    }{
                        \partial \activitySymbol_{\sequenceVariable, \gIndexB 2}^{\depthSymbol - 1}
                    }
                \right)^2
            \right)
        \right]
        \, ,
    \end{align*}
    where, when multiplied out, we can use that $\ntkhat_{\gIndexA 1, \gIndexB 2}^{\depthSymbol} \convergeProb 0$ \citep{yang2019v2}, and an argument analogous to the one above---using \Cref{lem:inner_prod_converge} and the continuous mapping theorem to obtain convergence in probability for the inner product---to establish convergence to zero.
    Finally, for the second term, observe
    \begin{align*}
        \sqrt{\layerDimension{\depthSymbol - 1}}
        \frac{
            \partial 
            \logitSymbol_{\sequenceVariable, \fIndexB d_2}^{\depthSymbol 2}
        }{
            \partial \activitySymbol_{\sequenceVariable, \gIndexB 2}^{\depthSymbol - 1}
        }
        =
        \frac{1}{(\logitDimension)^{\scaling}}
        \sum_{u_1 = 1}^{\logitDimension}
            \delta_{d_2 = \gIndexB}
            \keyStd
            \query{\sequenceVariable, \fIndexB u_1}{\depthSymbol\headIndex}
            \uW_{\sequenceVariable, \gIndexZB u_1}^{\depthSymbol\headIndex , \keySymbol}
            +
            \delta_{\gIndexB = \fIndexB}
            \queryStd
            \key{\sequenceVariable, d_2 u_1}{\depthSymbol \headIndex}
                \uW_{\sequenceVariable, \gIndexZB u_1}^{\depthSymbol\headIndex , \querySymbol}
        \, ,
    \end{align*}
    which converges in probability to a constant if $\scaling = 1$ by \Cref{lem:wlln_exch} (using \citep{yang2019v2} to establish convergence in distribution of the keys and queries, and \Cref{lem:mmnt_propagation} for the moment bound).
    If $\scaling = \frac{1}{2}$, the sum will converge in distribution \citep{yang2019v2} and since the rest of the term in the expectation converge in probability, their product converges in distribution by \Cref{lem:slutsky}.
    One can then again combine H{\" o}lder's inequality, \Cref{lem:mmnt_propagation,lem:sup_ui} and \Cref{thm:mean_convergence} to obtain convergence of the expectation to zero.
    Hence $\E [ (\generalMean_{\sequenceVariable})^2 ] \to 0$, implying $\generalMean_{\sequenceVariable} \convergeProb 0$ as desired.
\end{proof}

\subsection{Expressivity of $\genericDimenstion^{-1}$ and $\genericDimenstion^{-1/2}$ induced attention kernels}

\linearNoConv*

\begin{proof}[Proof of \Cref{prop:linear_scale_no_conv}]
    Consider $\kernelf{aa}{\text{CNN}}{\inputSymbol}{\inputSymbol} = \sum_{i = 1}^{\genericDimenstion_f} \kerntildef{N_a(i) N_a(i)}{}{\inputSymbol}{\inputSymbol} \frac{1}{\genericDimenstion_f}$, and the corresponding attention kernel $\kernelf{aa}{\text{ATTN}}{\inputSymbol}{\inputSymbol} = \sum_{i, j=1}^{\spatialDimension} \kerntildef{i j}{}{\inputSymbol}{\inputSymbol} \softmaxMean_{ai}^{\inputSymbol} \softmaxMean_{aj}^{\inputSymbol}$.
    Note that $\kernelf{aa}{\text{CNN}}{\inputSymbol}{\inputSymbol}$ is just sum of terms on a subset of the diagonal of $\kerntildef{}{}{\inputSymbol}{\inputSymbol}$.
    Hence it must be that $\softmaxMean_{ai}^{\inputSymbol} = \pm (\genericDimenstion_f)^{-1/2}$ since we require that the same set of coefficients $\{ \softmaxMean_{ai}^\inputSymbol \colon i \in [\spatialDimension] \}$ works all kernels $\kerntilde$ simultaneously, and thus for any $\kerntildef{aa}{}{\inputSymbol}{\inputSymbol}$ including all diagonal matrices with non-negative entries. 
    Therefore $\softmaxMean_{ai}^{\inputSymbol} \softmaxMean_{aj}^{\inputSymbol} = \pm (\genericDimenstion_f)^{-1}$ for all $i, j$, making signs the only degree of freedom.\footnote{As a side note, this degree of freedom disappears when $\softmaxMean$ is a limit of the softmax variables (non-negativity).}
    We conclude by noting that we can make $\kernelf{aa}{\text{ATTN}}{\inputSymbol}{\inputSymbol} \neq \kernelf{aa}{\text{CNN}}{\inputSymbol}{\inputSymbol}$ by choosing $\kerntildef{aa}{}{\inputSymbol}{\inputSymbol}$ diagonal except for one pair of off-diagonal entries.
\end{proof}

\sqrtConvRecover*

\begin{proof}[Proof of \Cref{prop:sqrt_scale_conv_recover}]
    We provide a simple construction here, and expand on more realistic ones after the proof.
    
    Consider $\Omega = [0, 1)$ with the usual Borel $\sigma$-algebra $\mathcal{B}$ and the Lebesgue measure $\lambda$.
    Let $\xbar{\mathbb{R}} = \R{} \cup \{-\infty, \infty \}$ be the extended real axis and $\xbar{\mathcal{B}}$ be the $\sigma$-algebra generated by the interval topology on $\xbar{\mathbb{R}}$.
    Now construct the random variables $\logitSymbol_{ai} \colon \Omega \to \xbar{\mathbb{R}}$ such that $\logitSymbol_{ai} = - \infty$ a.s.\ if $i \notin N_a$, and $\logitSymbol_{ai} = \infty \cdot \indicator{A_{ai}}$ a.s.\ where and $A_{a i} \coloneqq \left[\frac{i_{(a)} - 1}{\genericDimenstion_f}, \frac{i_{(a)}}{\genericDimenstion}\right)$, with $i_{(a)}$ being the position of $i$ in the ordered set $N_a$, and $\infty \cdot 0$ is to be interpreted as $0$.
\end{proof}

For a more realistic construction consider the usual $\logitSymbol(\inputSymbol) = \genericDimenstion^{-1/2} \querySymbol(\inputSymbol) \keySymbol(\inputSymbol)^\top$ but now additionally multiply each row of $\querySymbol(\inputSymbol$ by a corresponding scalar random variable $c_a^\querySymbol \colon \Omega \to \xbar{\mathbb{R}}$, similarly each row of $\keySymbol(\inputSymbol)$ by $c_a^\keySymbol \colon \Omega \to \xbar{\mathbb{R}}$.
Then $\logitSymbol_{ab}(\inputSymbol) = \genericDimenstion^{-1/2} c_a^\querySymbol c_b^\keySymbol \langle \querySymbol_{a\cdot}(\inputSymbol) , \keySymbol_{b\cdot}(\inputSymbol) \rangle$ and thus one can achieve the desired result by setting up the joint distribution of $\{ c_1^{\querySymbol} , \ldots , c_{\spatialDimension}^{\querySymbol}, c_{1}^{\keySymbol}, \ldots , c_{\spatialDimension}^{\keySymbol} \}$ in analogy to that in the above proof.

\subsection{Auxiliary results}

\begin{lemma}[Billingsley, 1986, p. 19]\label{lem:fin_dim_marg}
    Let $X, (X_n)_{n \geq 1}$ be random variables taking values in $(\R{\natnum}, \mathcal{B}^N)$, $\mathcal{B}^\natnum$ the~usual Borel $\sigma$-algebra.
    Then $X_n \convergeDist X$ if and only if for each finite $J \subset \natnum$ and the~corresponding projection $\Gamma^J \colon \R{\natnum} \to \R{J}$, $\Gamma^J (X_n) \convergeDist \Gamma^J (X)$ as $n \to \infty$.
\end{lemma}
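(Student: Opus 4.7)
The forward direction is immediate: the projection $\Gamma^J \colon \R{\natnum} \to \R{J}$ is continuous with respect to the product topology on $\R{\natnum}$ for every finite $J \subset \natnum$, so $X_n \convergeDist X$ yields $\Gamma^J(X_n) \convergeDist \Gamma^J(X)$ by the continuous mapping theorem.

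For the reverse direction, the plan is the standard tightness-plus-identification argument. First I would establish tightness of the sequence of laws of $(X_n)_{n\geq 1}$ on $\R{\natnum}$: given any $\epsilon > 0$, convergence in distribution of each one-dimensional marginal implies the sequence $(\Gamma^{\{k\}}(X_n))_{n \geq 1}$ is tight on $\R{}$, so there is a finite $M_k$ with $\Prob(|X_{n,k}| > M_k) \leq \epsilon\, 2^{-k}$ uniformly in $n$. By Tychonoff's theorem, $K_\epsilon \coloneqq \prod_{k \geq 1} [-M_k, M_k]$ is compact in the product topology, and a union bound gives $\Prob(X_n \notin K_\epsilon) \leq \epsilon$ for all $n$. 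Since $\R{\natnum}$ with the product topology is Polish (metrisable by, e.g., $d(x,y) = \sum_{k \geq 1} 2^{-k} \min(1, |x_k - y_k|)$), Prokhorov's theorem then guarantees that every subsequence of $(X_n)$ admits a further subsequence converging in distribution to some $\R{\natnum}$-valued random variable $Y$. Applying the continuous mapping theorem to each projection $\Gamma^J$ along this subsequence and comparing with the assumed limit $\Gamma^J(X)$ pins down all finite-dimensional marginals of $Y$ as equal in law to those of $X$.

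The main obstacle is completing the identification $Y \stackrel{d}{=} X$ from agreement of finite-dimensional marginals alone. This is handled by noting that the cylinder sets $\{\Gamma^J{}^{-1}(B) \colon J \subset \natnum \text{ finite},\, B \in \mathcal{B}^{J}\}$ form a $\pi$-system generating $\mathcal{B}^{\natnum}$, so Dynkin's $\pi$-$\lambda$ theorem forces the two laws to agree on all of $\mathcal{B}^{\natnum}$. Having shown that every subsequence of $(X_n)$ has a further subsequence converging in distribution to $X$, the usual subsequence principle upgrades this to $X_n \convergeDist X$, completing the proof.
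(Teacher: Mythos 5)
Your proof is correct. Note, however, that the paper does not prove this lemma at all: it is quoted verbatim as a known result from Billingsley, so there is no in-paper argument to compare against. Both directions of your argument are sound: the forward implication is exactly the continuous mapping theorem applied to the (continuous) coordinate projections, and the converse via tightness is complete --- uniform tightness of each coordinate sequence follows from its weak convergence, Tychonoff gives compactness of the product of intervals, the union bound gives tightness of the laws on $\R{\natnum}$, which is Polish under the product metric you wrote down, so Prokhorov yields subsequential weak limits; agreement of finite-dimensional marginals plus the $\pi$-$\lambda$ theorem on cylinder sets (which do generate the Borel $\sigma$-algebra here, since the product of countably many separable metric spaces has Borel $\sigma$-algebra equal to the product $\sigma$-algebra) identifies every subsequential limit with $X$, and the subsequence principle finishes the claim. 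It is worth knowing that Billingsley's own proof takes a different route: he shows directly that the finite-dimensional sets whose boundaries are null form a convergence-determining class for the product topology, exploiting that basic open sets in $\R{\natnum}$ constrain only finitely many coordinates. That argument avoids Prokhorov and tightness entirely and is somewhat more elementary, whereas your compactness argument is the one that generalises most readily to other projective-limit settings.
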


\begin{lemma}[Billingsley, 1986, p. 31]\label{lem:sup_ui}
    A sequence of real valued random variables $(\genericRV_\rowIndex)_{\rowIndex \geq 1}$ is uniformly integrable if 
    $$\sup_{\rowIndex} \E | \genericRV_\rowIndex |^{1 + \varepsilon}  < \infty \, .$$
\end{lemma}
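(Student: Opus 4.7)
The plan is to apply a Markov-type truncation bound: on the event $\{|X_n|\geq c\}$ we can pay a factor of $c^{-\varepsilon}$ to trade one power of $|X_n|$ for $|X_n|^{1+\varepsilon}$, and then the uniform bound on the $(1+\varepsilon)$-th moments takes care of the rest uniformly in $n$.

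Concretely, first I would recall \Cref{def:ui}: uniform integrability requires that for every $\delta>0$ there exists $c_\delta\geq 0$ with $\E|X_n|\indicator{|X_n|\geq c_\delta}\leq \delta$ for all $n$ simultaneously. Set $M\coloneqq \sup_n \E|X_n|^{1+\varepsilon}$, which is finite by hypothesis. For any $c>0$, on the event $\{|X_n|\geq c\}$ one has $c^{\varepsilon}\leq |X_n|^{\varepsilon}$, so $|X_n|\leq c^{-\varepsilon}|X_n|^{1+\varepsilon}$ pointwise on that event. Multiplying by $\indicator{|X_n|\geq c}$ and taking expectations gives
\begin{equation*}
\E|X_n|\indicator{|X_n|\geq c} \;\leq\; c^{-\varepsilon}\,\E\bigl[|X_n|^{1+\varepsilon}\indicator{|X_n|\geq c}\bigr] \;\leq\; c^{-\varepsilon} M,
\end{equation*}
uniformly in $n$.

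Finally, given any $\delta>0$, I would choose $c_\delta \coloneqq (M/\delta)^{1/\varepsilon}$ (or $0$ if $M=0$), which makes $c_\delta^{-\varepsilon}M=\delta$ and hence $\sup_n \E|X_n|\indicator{|X_n|\geq c_\delta}\leq \delta$. Since $\delta$ was arbitrary, the family $\{X_n\}_{n\geq 1}$ satisfies \Cref{def:ui}, completing the proof. There is no real obstacle here: the argument is a single application of the trivial truncation inequality $|X_n|\indicator{|X_n|\geq c}\leq c^{-\varepsilon}|X_n|^{1+\varepsilon}$ combined with the assumed uniform bound, so no induction, limiting, or regularity argument is required.
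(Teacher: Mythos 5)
Your proof is correct: the truncation inequality $|X_n|\indicator{|X_n|\geq c}\leq c^{-\varepsilon}|X_n|^{1+\varepsilon}$ together with the uniform moment bound is exactly the standard argument, which is the one the paper implicitly relies on by citing Billingsley rather than proving the lemma itself. Nothing is missing, and the explicit choice $c_\delta=(M/\delta)^{1/\varepsilon}$ verifies \Cref{def:ui} directly.
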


\begin{theorem}[Billingsley, 1986, theorem 3.5]\label{thm:mean_convergence}
    If $(\genericRV_\rowIndex)_{\rowIndex \geq 1}$ are uniformly integrable and $\genericRV_{\rowIndex} \convergeDist \genericRV$, then $\genericRV$ is integrable and 
    $$\E [\genericRV_{\rowIndex}] \to \E [\genericRV] \, .$$
\end{theorem}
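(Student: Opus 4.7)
The plan is to reduce the problem to weak convergence against bounded continuous test functions via a truncation argument, then use uniform integrability to control the tail error. Introduce the Lipschitz clipping $f_K(x) \coloneqq (x \vee (-K)) \wedge K$, which is bounded and continuous for every $K > 0$. The strategy is to interpose $\E[f_K(X_n)]$ and $\E[f_K(X)]$ between $\E[X_n]$ and $\E[X]$, send $n \to \infty$ first (handled by the Portmanteau theorem applied to $f_K$) and then $K \to \infty$ (handled by uniform integrability and integrability of $X$).

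First I would establish integrability of $X$. Uniform integrability applied with $\varepsilon = 1$ and the corresponding threshold $c_1$ yields $\E|X_n| \leq c_1 + 1$, so $\sup_n \E|X_n| < \infty$. Since $|\cdot| \wedge K$ is bounded and continuous, the Portmanteau theorem combined with $X_n \convergeDist X$ gives $\E[|X| \wedge K] = \lim_n \E[|X_n| \wedge K] \leq \sup_n \E|X_n| < \infty$, and monotone convergence as $K \to \infty$ yields $\E|X| < \infty$. Next I would bound $|\E[X_n] - \E[X]|$ using the decomposition
\[
\E[X_n] - \E[X] = \bigl(\E[X_n] - \E[f_K(X_n)]\bigr) + \bigl(\E[f_K(X_n)] - \E[f_K(X)]\bigr) + \bigl(\E[f_K(X)] - \E[X]\bigr).
\]
For fixed $K$, the middle term tends to zero as $n \to \infty$ by Portmanteau. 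The first term is bounded by $\E[|X_n| \indicator{|X_n| > K}]$, which by uniform integrability can be made smaller than any $\varepsilon > 0$ uniformly in $n$ by taking $K$ large. The third term is bounded by $\E[|X| \indicator{|X| > K}] \to 0$ as $K \to \infty$ by dominated convergence (using the integrability of $X$). Choosing $K$ large first to control the first and third terms, then $n$ large to handle the middle term, completes the proof.

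The main subtlety will be the choice of truncation function. A hard indicator $x \mapsto x \indicator{|x| \leq K}$ is not continuous at $\pm K$, so the Portmanteau theorem does not immediately give $\E[X_n \indicator{|X_n| \leq K}] \to \E[X \indicator{|X| \leq K}]$ (this would only hold when $\pm K$ are continuity points of the distribution of $X$). The Lipschitz clipping $f_K$ resolves this: it is simultaneously bounded, continuous everywhere, and satisfies $|x - f_K(x)| \leq |x| \indicator{|x| > K}$, so the truncation error has exactly the tail form that uniform integrability is designed to control.
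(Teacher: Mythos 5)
Your proof is correct and complete: the integrability of $\genericRV$ via $|\cdot| \wedge K$ and monotone convergence, the three-term decomposition with the Lipschitz clipping $f_K$, the use of uniform integrability for the $\genericRV_\rowIndex$-tail, and dominated convergence for the $\genericRV$-tail all go through, and you rightly flag that a hard truncation $x \mapsto x\indicator{|x|\leq K}$ would fail at the Portmanteau step unless $\pm K$ were continuity points. The paper itself gives no proof of this statement --- it is quoted as an external result from Billingsley --- so there is nothing to compare against beyond noting that your truncation argument is essentially the standard textbook route and can stand as a self-contained justification of the cited theorem.
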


\begin{lemma}[Slutsky's lemmas]\label{lem:slutsky}
    Let $X, (X_n)_{n \geq 1}$ and $(Y_n)_{n \geq 1}$ be real valued random variables defined on the same probability space, and assume $X_n \convergeDist X$ and $Y_n \convergeProb c$ for some $c \in \R{}$.
    Then
    \begin{align}
        &X_n Y_n \convergeDist c X
        \, ,
        &X_n + Y_n \convergeDist X + c
        \, .
    \end{align}
\end{lemma}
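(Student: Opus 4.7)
The plan is to reduce both claims to a single application of the continuous mapping theorem applied to the bivariate sequence $(X_n, Y_n)$. Concretely, I would first establish joint convergence in distribution, $(X_n, Y_n) \convergeDist (X, c)$, and then compose with the continuous maps $(x, y) \mapsto x + y$ and $(x, y) \mapsto xy$, which send $(X, c)$ to $X + c$ and $cX$ respectively. With joint convergence in hand, the two displayed conclusions follow at once, since both maps are everywhere continuous (empty discontinuity set) so any Borel probability measure on $\R{2}$, in particular the law of $(X, c)$, assigns zero mass to their discontinuity set.

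The main step is therefore to show $(X_n, Y_n) \convergeDist (X, c)$. Here I would invoke the Portmanteau theorem: it suffices to verify $\limsup_{n \to \infty} \Prob((X_n, Y_n) \in F) \leq \Prob((X, c) \in F)$ for every closed $F \subset \R{2}$. Writing $F_\delta$ for the closed $\delta$-neighbourhood of $F$ in the Euclidean metric, the elementary split
\[
    \Prob((X_n, Y_n) \in F) \leq \Prob((X_n, c) \in F_\delta) + \Prob(|Y_n - c| > \delta)
\]
holds for every $\delta > 0$. The second summand vanishes as $n \to \infty$ by the hypothesis $Y_n \convergeProb c$. For the first, I would apply Portmanteau to the one-dimensional convergence $X_n \convergeDist X$ using the closed (by continuity of $x \mapsto (x, c)$) set $\{x \in \R{} : (x, c) \in F_\delta\}$, obtaining $\limsup_n \Prob((X_n, c) \in F_\delta) \leq \Prob((X, c) \in F_\delta)$. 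Sending $\delta \downarrow 0$ and using continuity of measure --- $F$ closed implies $F_\delta \downarrow F$ --- yields the desired inequality.

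The only real subtlety is why joint convergence holds at all: marginal convergence in distribution does \emph{not} in general imply joint convergence. The argument works here precisely because the limit of $Y_n$ is deterministic, so that the concentration event $\{|Y_n - c| \leq \delta\}$ traps $(X_n, Y_n)$ inside an arbitrarily thin horizontal neighbourhood of $(X_n, c)$ with probability tending to one. This step is the only one where the assumption on $Y_n$ is actually used; everything else is the standard soft machinery of Portmanteau and continuous mapping.
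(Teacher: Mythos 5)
Your argument is correct: establishing the joint convergence $(X_n, Y_n) \convergeDist (X, c)$ via the Portmanteau bound $\Prob((X_n,Y_n)\in F) \leq \Prob((X_n,c)\in F_\delta) + \Prob(|Y_n - c| > \delta)$, letting $n \to \infty$ for fixed $\delta$ and then $\delta \downarrow 0$ (using that $F_\delta \downarrow F$ for closed $F$), and finishing with the continuous mapping theorem for $(x,y)\mapsto x+y$ and $(x,y)\mapsto xy$, is the standard textbook proof of Slutsky's theorem, and every step you use — closedness of $\{x : (x,c)\in F_\delta\}$, vanishing of the probability term by $Y_n \convergeProb c$, continuity of measure from above — is sound. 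The paper itself states this lemma as a classical auxiliary result without proof, so there is no in-paper argument to compare against; your writeup supplies the missing justification correctly, including the key observation that joint convergence is only available because the limit of $Y_n$ is deterministic.
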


\begin{lemma}[Weak LLN for exchangeable triangular arrays]\label{lem:wlln_exch}
    Let 
    $\genericRV_\rowIndex \coloneqq \{ \genericRV_{\rowIndex, \colIndex} \colon \colIndex = 1, 2, \ldots \}$
    be an infinitely exchangeable sequence of random variables on $\R{\natnum}$ 
    s.t.\ $\limsup_{\rowIndex \to \infty} \E  |\genericRV_{\rowIndex, 1}|^{2 + \varepsilon} < \infty$ for some $\varepsilon > 0$, and
    define
    $
        \generalMean_{\rowIndex} 
        \coloneqq
        \frac{1}{\genericDimenstion_\rowIndex}
        \sum_{\colIndex=1}^{\genericDimenstion_\rowIndex} 
            \genericRV_{\rowIndex, \colIndex} 
        \, , 
    $
    for some sequence $(\genericDimenstion_\rowIndex)_{\rowIndex \geq 1}$ s.t.\ $\lim_{\rowIndex \to \infty} \genericDimenstion_\rowIndex = \infty$.
    Assuming all $\genericRV_{\rowIndex}$ are defined on the same space, if $\genericRV_{\rowIndex}$ converges in distribution to some infinitely exchangeable $\genericRV_{*} = \{ \genericRV_{*, \colIndex} \colon \colIndex = 1, 2, \ldots \}$ s.t.\ $\E [\genericRV_{*, 1} \genericRV_{* , 2}] = (\E [ \genericRV_{*, 1} ] )^2$,
    then as $\rowIndex \to \infty$,
    $\E [\generalMean_{\rowIndex} ] \to \E [\genericRV_{*, 1}]$, 
    $\E [\generalMean_{\rowIndex}^2 ] \to (\E [\genericRV_{*, 1}] )^2$, and
    $$\generalMean_{\rowIndex} \convergeProb \E [\genericRV_{*, 1}] \, .$$
\end{lemma}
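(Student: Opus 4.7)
The plan is to reduce everything to two moment computations enabled by exchangeability, and then invoke Theorem~\ref{thm:mean_convergence} together with Chebyshev's inequality. First, exploiting infinite exchangeability of each $\genericRV_{\rowIndex}$, I compute
\begin{align*}
    \E [\generalMean_{\rowIndex}]
    &= \E [\genericRV_{\rowIndex, 1}]
    \, , \\
    \E [\generalMean_{\rowIndex}^{2}]
    &= \frac{1}{\genericDimenstion_{\rowIndex}} \E [\genericRV_{\rowIndex, 1}^{2}]
       + \frac{\genericDimenstion_{\rowIndex} - 1}{\genericDimenstion_{\rowIndex}} \E [\genericRV_{\rowIndex, 1} \genericRV_{\rowIndex, 2}]
    \, ,
\end{align*}
so the entire problem reduces to controlling the three quantities $\E [\genericRV_{\rowIndex, 1}]$, $\E [\genericRV_{\rowIndex, 1}^{2}]$, and $\E [\genericRV_{\rowIndex, 1} \genericRV_{\rowIndex, 2}]$ as $\rowIndex \to \infty$.

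Next I would extract the needed weak convergence from $\genericRV_{\rowIndex} \convergeDist \genericRV_{*}$. By \Cref{lem:fin_dim_marg} applied to the one- and two-coordinate projections, $\genericRV_{\rowIndex, 1} \convergeDist \genericRV_{*, 1}$ and $(\genericRV_{\rowIndex, 1}, \genericRV_{\rowIndex, 2}) \convergeDist (\genericRV_{*, 1}, \genericRV_{*, 2})$, and the continuous mapping theorem gives $\genericRV_{\rowIndex, 1} \genericRV_{\rowIndex, 2} \convergeDist \genericRV_{*, 1} \genericRV_{*, 2}$. The $\limsup_{\rowIndex} \E | \genericRV_{\rowIndex, 1} |^{2 + \varepsilon} < \infty$ hypothesis, combined with \Cref{lem:sup_ui}, immediately yields uniform integrability of $\{ \genericRV_{\rowIndex, 1} \}_{\rowIndex \geq 1}$; by a Cauchy--Schwarz bound
$
    \E | \genericRV_{\rowIndex, 1} \genericRV_{\rowIndex, 2} |^{1 + \varepsilon/2}
    \leq
    \sqrt{ \E | \genericRV_{\rowIndex, 1} |^{2 + \varepsilon} \, \E | \genericRV_{\rowIndex, 2} |^{2 + \varepsilon} }
    =
    \E | \genericRV_{\rowIndex, 1} |^{2 + \varepsilon}
$
(using exchangeability for the equality), the sequence $\{ \genericRV_{\rowIndex, 1} \genericRV_{\rowIndex, 2} \}_{\rowIndex \geq 1}$ is likewise uniformly integrable. \Cref{thm:mean_convergence} then gives $\E [\genericRV_{\rowIndex, 1}] \to \E [\genericRV_{*, 1}]$ and $\E [\genericRV_{\rowIndex, 1} \genericRV_{\rowIndex, 2}] \to \E [\genericRV_{*, 1} \genericRV_{*, 2}] = (\E [\genericRV_{*, 1}])^{2}$ by hypothesis. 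Since $\E [\genericRV_{\rowIndex, 1}^{2}] \leq \limsup_{\rowIndex} \E | \genericRV_{\rowIndex, 1} |^{2 + \varepsilon} < \infty$ is uniformly bounded and $\genericDimenstion_{\rowIndex} \to \infty$, the first term in $\E [\generalMean_{\rowIndex}^{2}]$ vanishes, so $\E [\generalMean_{\rowIndex}^{2}] \to (\E [\genericRV_{*, 1}])^{2}$.

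Finally, the above two limits give $\mathrm{Var}(\generalMean_{\rowIndex}) = \E [\generalMean_{\rowIndex}^{2}] - (\E [\generalMean_{\rowIndex}])^{2} \to 0$, and a single application of Chebyshev's inequality,
\begin{equation*}
    \Prob ( | \generalMean_{\rowIndex} - \E [\genericRV_{\rowIndex, 1}] | \geq \delta )
    \leq
    \frac{\mathrm{Var}(\generalMean_{\rowIndex})}{\delta^{2}}
    \to 0
    \, ,
\end{equation*}
combined with $\E [\genericRV_{\rowIndex, 1}] \to \E [\genericRV_{*, 1}]$, yields $\generalMean_{\rowIndex} \convergeProb \E [\genericRV_{*, 1}]$.

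The argument is essentially routine once the exchangeability decomposition of $\E [\generalMean_{\rowIndex}^{2}]$ is written down; the only mildly delicate step is upgrading the moment hypothesis on $\genericRV_{\rowIndex, 1}$ to uniform integrability of the product $\genericRV_{\rowIndex, 1} \genericRV_{\rowIndex, 2}$, which I expect to handle via the Cauchy--Schwarz bound above so as to stay above the $L^{1}$ threshold required by \Cref{lem:sup_ui}.
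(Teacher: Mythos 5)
Your proposal is correct and follows essentially the same route as the paper's proof: the exchangeability decomposition of $\E[\generalMean_{\rowIndex}]$ and $\E[\generalMean_{\rowIndex}^2]$, convergence of the relevant marginal expectations via \Cref{lem:sup_ui} and \Cref{thm:mean_convergence} (with the continuous mapping theorem for the product), and Chebyshev's inequality to conclude; you merely make explicit the uniform-integrability step for $\genericRV_{\rowIndex,1}\genericRV_{\rowIndex,2}$ (via Cauchy--Schwarz) that the paper leaves implicit. One small correction: the inequality $\E[\genericRV_{\rowIndex,1}^2] \leq \E|\genericRV_{\rowIndex,1}|^{2+\varepsilon}$ is not valid as written and should be replaced by Lyapunov's inequality, which still gives the uniform bound on second moments needed for the $\genericDimenstion_{\rowIndex}^{-1}\E[\genericRV_{\rowIndex,1}^2]$ term to vanish.
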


\begin{proof}[Proof of \Cref{lem:wlln_exch}]
    By exchangeability $\E [\generalMean_{\rowIndex}] = \E [\genericRV_{\rowIndex, 1}]$, and thus by \Cref{lem:sup_ui} and \Cref{thm:mean_convergence}, $\E [\generalMean_{\rowIndex}] \to \E [ \genericRV_{\star, 1}]$.
    Similarly, 
    \begin{align*}
        \E[ \generalMean_\rowIndex^2 ]
        =
        \frac{1}{\genericDimenstion_{\rowIndex}}
        \E [\genericRV_{\rowIndex, 1}^2]
        +
        \frac{\genericDimenstion_{\rowIndex} (\genericDimenstion_{\rowIndex} - 1)}{\genericDimenstion_{\rowIndex}^2}
        \E [ \genericRV_{\rowIndex, 1} \genericRV_{\rowIndex, 2}]
        \, ,
    \end{align*}
    and thus by the continuous mapping theorem and again by \Cref{lem:sup_ui} and \Cref{thm:mean_convergence}, $\E[ \generalMean_\rowIndex^2 ] \to (\E[ \genericRV_{\star,1} ] )^2$ as $\rowIndex \to \infty$.
    Finally, the convergence in probability follows by Chebyshev's inequality
    \begin{equation*}
        \Prob \left\{
            \abs{\generalMean_{\rowIndex} - \E \generalMean_{\rowIndex} }
            \geq
            \delta
        \right\}
        \leq
        \frac{
            \E[ \generalMean_\rowIndex^2 ]
            -
            (\E [\generalMean_\rowIndex])^2
        }{
            \delta^2
        }
        \, .
        \qedhere
    \end{equation*}
\end{proof}

\begin{lemma}[Moment propagation]\label{lem:mmnt_propagation}
    Under the assumptions of \Cref{thm:gp_convergence_sqrt}, for any $\inputSymbol, \inputSymbol' \in \indexSet$, $\depthSymbol \in [\depth + 1]$, and $t \geq 1$
    \begin{align*}
       \sup_{\substack{c \in [\spatialDimension] \\ i \in \natnum}}
       &\sup_{\sequenceVariable}
            \E | 
                \indexedActivity{\depthSymbol - 1}{\sequenceVariable, c i}{\inputSymbol} 
            |^t
        <
        \infty
        \, ,
        \\
       \sup_{\substack{c \in [\spatialDimension] \\ i \in \natnum}}
       &\sup_{\sequenceVariable}
            \E | 
                \indexedActivation{\depthSymbol}{\sequenceVariable, c i}{\inputSymbol} 
            |^t
        <
        \infty
        \, ,
        \\
        \sup_{\substack{c \in [\spatialDimension] \\ \headIndex, i \in \natnum}}
        &\sup_{\sequenceVariable}
        \E |
            \indexedActivation{\depthSymbol\headIndex}{\sequenceVariable, c i}{\inputSymbol}
        |^{t}
        <
        \infty
        \, ,
        \\
        \sup_{\substack{
            c, c' \in [\spatialDimension] \\
            \headIndex \in \natnum
        }}
        &\sup_{\sequenceVariable}
            \E |
                \tildeLogit{\sequenceVariable, c c'}{\depthSymbol \headIndex} (\inputSymbol)
            |^t
        <
        \infty
        \, ,
        \\
        \sup_{\substack{
            a, b \in [\spatialDimension] \\
            i, j \in \natnum
        }}
        &\sup_{\sequenceVariable}
            \E |
                \ntkhat_{a i , \, b j}^{\depthSymbol} ( \inputSymbol, \inputSymbol' ) 
            |^t
        <
        \infty
        \, .
    \end{align*}
\end{lemma}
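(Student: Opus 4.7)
I would prove all five bounds jointly by induction on $\depthSymbol$. The base case $\depthSymbol = 1$ is immediate: $\indexedActivity{0}{\sequenceVariable,ci}{\inputSymbol}=\inputSymbol_{ci}$ is deterministic and bounded, $\ntkhat^{1}\equiv 0$ (no parameters precede the first layer), and the remaining first-layer moments reduce to standard estimates for dense or convolutional layers \citep{matthews2018gaussian,yang2019v2}. For the inductive step I walk through the five quantities in the order they are produced by a single attention forward pass.

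First, the polynomial envelope $|\nonlinearity(z)|\leq\sum_{s=0}^{m} c_s |z|^s$ combined with Minkowski reduces $\E|\activitySymbol^{\depthSymbol-1}_{\sequenceVariable,ci}(\inputSymbol)|^t$ to a finite sum of moments of $\activationSymbol^{\depthSymbol-1}_{\sequenceVariable,ci}(\inputSymbol)$, which are bounded by the inductive hypothesis. Second, conditionally on $\activitySymbol^{\depthSymbol-1}_{\sequenceVariable}(\inputSymbol)$, each entry of $\queryN_{\sequenceVariable}(\inputSymbol),\keyN_{\sequenceVariable}(\inputSymbol),\valueN_{\sequenceVariable}(\inputSymbol)$ is a centred Gaussian of variance $(\sigma_{\bullet}^{2}/\layerDimension{\depthSymbol-1})\,\|\activitySymbol^{\depthSymbol-1}_{\sequenceVariable,a\cdot}(\inputSymbol)\|^2$. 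Standard Gaussian-moment formulas then reduce the problem to bounding $\E[(\|\activitySymbol^{\depthSymbol-1}_{\sequenceVariable,a\cdot}\|^2/\layerDimension{\depthSymbol-1})^{t/2}]$, and Minkowski applied over the $\layerDimension{\depthSymbol-1}$ exchangeable summands bounds this by $\E|\activitySymbol^{\depthSymbol-1}_{\sequenceVariable,a1}|^t$, which is controlled by the previous step.

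The critical step is bounding moments of the logits $\logitSymbol_{\sequenceVariable,ab}^{\depthSymbol\headIndex}(\inputSymbol)=(\logitDimension)^{-\scaling}\langle\queryN_{\sequenceVariable,a\cdot}(\inputSymbol),\keyN_{\sequenceVariable,b\cdot}(\inputSymbol)\rangle$. Conditionally on $\activitySymbol^{\depthSymbol-1}_{\sequenceVariable}(\inputSymbol)$ this is a quadratic form in i.i.d.\ standard Gaussians with coefficient matrix of operator norm $\lesssim(\|\activitySymbol_{a\cdot}\|\|\activitySymbol_{b\cdot}\|/\layerDimension{\depthSymbol-1})$ and Frobenius norm of the same order (after inserting the $(\logitDimension)^{-\scaling}$ prefactor). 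A Hanson--Wright / Marcinkiewicz--Zygmund-type inequality then gives $\E[|\logitSymbol_{\sequenceVariable,ab}^{\depthSymbol\headIndex}|^t\mid\activitySymbol^{\depthSymbol-1}_{\sequenceVariable}]\lesssim C_t(\QKStd)^t(\|\activitySymbol_{a\cdot}\|^2\|\activitySymbol_{b\cdot}\|^2/\layerDimension{\depthSymbol-1}^{2})^{t/2}$ under $\scaling=\tfrac12$; the case $\scaling=1$ is immediate from Cauchy--Schwarz. Averaging over $\activitySymbol^{\depthSymbol-1}_{\sequenceVariable}$ and invoking the first step yields the desired uniform logit bound, after which the polynomial envelope on $\softmax$ and another Minkowski step give the bound on $\tildeLogit{\sequenceVariable,cc'}{\depthSymbol\headIndex}$.

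Next, $\indexedActivation{\depthSymbol\headIndex}{\sequenceVariable,ci}=\sum_{a=1}^{\spatialDimension}\tildeLogit{\sequenceVariable,ca}{\depthSymbol\headIndex}\valueN_{\sequenceVariable,ai}$ is a finite sum ($\spatialDimension$ fixed), so Hölder yields its moment bound from the softmax and value bounds; conditionally on $\{\indexedActivation{\depthSymbol\headIndex}{\sequenceVariable,c\cdot}\}_{\headIndex}$, the full output $\indexedActivation{\depthSymbol}{\sequenceVariable,ci}$ is a centred Gaussian of variance $(\outVar/(\headDimension\valueDimension))\sum_{\headIndex,k}(\indexedActivation{\depthSymbol\headIndex}{\sequenceVariable,ck})^2$, and a third Minkowski/exchangeability step now over the $(\headIndex,k)$ indices gives the required bound. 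Finally, for $\ntkhat^{\depthSymbol}$, the chain-rule expansion into direct and indirect contributions used throughout \Cref{sect:ntk_proofs} expresses each summand as a product of weights, a $\nabla\nonlinearity$, a $\nabla\softmax$ (both bounded a.e.\ by assumption), and the quantities just controlled; repeated Hölder gives a polynomial-in-inductive-bounds upper bound on the moments.

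\textbf{Main obstacle.} The only delicate step is the logit bound: because $\logitSymbol^{\depthSymbol\headIndex}$ is a quadratic form in Gaussians under the $d^{-1/2}$ scaling, a naive second-moment estimate is not sufficient to control all $t$-th moments simultaneously, so I rely on a Hanson--Wright-type concentration inequality (applied conditionally on the activities). Everything else is a routine application of Minkowski, Hölder, exchangeability over embedding/head indices, and the polynomial envelope assumptions on $\nonlinearity$ and $\softmax$.
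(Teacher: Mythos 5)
Your plan for the first four bounds is essentially the paper's own proof: condition on the previous layer's activities, exploit conditional Gaussianity of the query/key/value/output weights, and reduce everything via exchangeability, Minkowski/H\"older and the polynomial envelopes on $\nonlinearity$ and $\softmax$ to moments of $\indexedActivity{\depthSymbol - 1}{\sequenceVariable, c 1}{\inputSymbol}$, which are controlled inductively (with the non-attention base coming from \citet{matthews2018gaussian} and \citet{garriga2019deep}). The one place you genuinely diverge is the logit moment under the $\genericDimenstion^{-1/2}$ scaling: the paper takes $t$ even, expands $\E |\logitSymbol_{\sequenceVariable, c c'}^{\depthSymbol \headIndex}(\inputSymbol)|^t$ as a sum over index tuples $k_1,\ldots,k_t$, and uses independence of $\WQ$ and $\WK$ to argue that only $\mathcal{O}((\logitDimension)^{t/2})$ tuples have nonzero expectation, which cancels the $(\logitDimension)^{-t/2}$ prefactor; you instead apply a conditional Gaussian-chaos / Hanson--Wright moment bound given $\indexedActivity{\depthSymbol-1}{\sequenceVariable}{\inputSymbol}$. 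Both routes land on the same reduction to $\E |\indexedActivity{\depthSymbol - 1}{\sequenceVariable, c 1}{\inputSymbol}|^{4t}$-type quantities, and your version is a clean, valid alternative that avoids the pairing combinatorics.

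The step that does not go through as written is the last one, the uniform bound on $\E |\ntkhat_{a i , \, b j}^{\depthSymbol}(\inputSymbol,\inputSymbol')|^t$. The indirect part of the chain-rule recursion has the form $\sum_{\gIndexZA, \gIndexZB = 1}^{\layerDimension{\depthSymbol - 1}} \ntkhat_{\gIndexA \gIndexZA , \gIndexB \gIndexZB}^{\depthSymbol} \, \frac{\partial \indexedActivation{\depthSymbol}{\sequenceVariable, \fIndexA i}{\inputSymbol}}{\partial \indexedActivity{\depthSymbol - 1}{\sequenceVariable, \gIndexA \gIndexZA}{\inputSymbol}} \frac{\partial \indexedActivation{\depthSymbol}{\sequenceVariable, \fIndexB j}{\inputSymbol'}}{\partial \indexedActivity{\depthSymbol - 1}{\sequenceVariable, \gIndexB \gIndexZB}{\inputSymbol'}}$ (and the analogous expression already for plain dense layers below the attention block): there are $(\layerDimension{\depthSymbol - 1})^2$ summands and each derivative factor only carries a $(\layerDimension{\depthSymbol - 1})^{-1/2}$ normalisation, so bounding each summand by H\"older and summing gives an estimate of order $\layerDimension{\depthSymbol - 1}$, not a constant uniform in $\sequenceVariable$. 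Uniform boundedness relies on the near-diagonality $\ntkhat_{\gIndexA \gIndexZA , \gIndexB \gIndexZB}^{\depthSymbol} \approx \delta_{\gIndexZA = \gIndexZB} \ntktildef{\gIndexA \gIndexB}{\depthSymbol}{\inputSymbol}{\inputSymbol'}$ and on second-moment cancellation of the off-diagonal terms, which a termwise estimate cannot capture. The paper avoids expanding altogether: below the first attention layer it combines exchangeability (over the $i = j$ and $i \neq j$ groups) with the tensor-program results of \citet{yang2019v1}, which yield convergence of $\E |\ntkhat_{a i, b j}^{\depthSymbol}|^t$, and a convergent sequence of moments is bounded; for attention layers it then notes that in \Cref{sect:ntk_proofs} the relevant combinations appear only with explicit $1/(\headDimension \valueDimension)$ and $1/\layerDimension{\depthSymbol - 1}$ normalisations, so the required suprema are polynomials in quantities covered by the earlier parts of \Cref{lem:mmnt_propagation}, closing the induction. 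To repair your plan you would either need to adopt this route or carry out the diagonal-plus-cancellation argument for the double sum explicitly.
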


\begin{proof}[Proof of \Cref{lem:mmnt_propagation}]
    Beginning with
    $
        \E | 
            \indexedActivity{\depthSymbol - 1}{\sequenceVariable, c i}{\inputSymbol} 
        |^t
    $
    and
    $
        \E | 
            \indexedActivation{\depthSymbol}{\sequenceVariable, c i}{\inputSymbol} 
        |^t
    $,
    we see that this condition holds if none of the layers $1, \ldots, \depthSymbol - 1$ uses attention by the assumed polynomial boundedness of $\nonlinearity$ as a corollary of \citep[lemma 20]{matthews2018gaussian} for dense, and \citep[pages 14 and 15]{garriga2019deep} for convolutional networks.\footnote{Note that the bound on $\E | \indexedActivity{0}{\sequenceVariable, c i}{\inputSymbol} |^t = |x_{c i}|^t$ is trivial, which then leads to a bound on $\E |\indexedActivation{\depthSymbol}{\sequenceVariable, c i}{\inputSymbol} |^t$ as the individual columns will be i.i.d.\ Gaussian for any $\sequenceVariable$.}
    To extend to the case where one or more of the preceding layers include attention, we see that it is sufficient to focus on bound for $\activationSymbol^{\depthSymbol}$ in the first attention layer (i.e., with the lowest $\depthSymbol$ among the attention layer), as the bound for the following $\activitySymbol^{\depthSymbol}$ can be obtained from the assumed polynomial boundedness of $\nonlinearity$ and exchangeability, and the bound on the following layers by a simple inductive argument.
    
    Thus, focusing on 
    $
        \E | 
            \indexedActivation{\depthSymbol}{\sequenceVariable, c i}{\inputSymbol} 
        |^t
        =
        \E | 
            \indexedActivation{\depthSymbol}{\sequenceVariable, c 1}{\inputSymbol} 
        |^t
    $ (exchangeability),
    we see that proving the bound for an arbitrary fixed $c \in [\spatialDimension]$ will be sufficient as $\spatialDimension$ is finite.
    Substituting
    \begin{align*}
        \E | 
            \indexedActivation{\depthSymbol}{\sequenceVariable, c 1}{\inputSymbol} 
        |^t
        =
        \E \left\{
            \E \left[
                \Biggl|
                        \sum_{\headIndex=1}^{\headDimension}
                        \sum_{k=1}^{\valueDimension}
                            \indexedActivation{\depthSymbol\headIndex}{\sequenceVariable, c k}{\inputSymbol}
                            \W_{\sequenceVariable, k 1}^{\depthSymbol \headIndex , \outputSymbol}
                    \Biggr|^t
                    \, \, 
                    \Bigg\vert
                    \, \,
                    \indexedActivation{\depthSymbol 1}{\sequenceVariable, c \cdot }{\inputSymbol}
                    \, , \ldots \, ,
                    \indexedActivation{\depthSymbol\headDimension}{\sequenceVariable, c \cdot }{\inputSymbol}
            \right]
        \right\}
        \lesssim
        \E \, \Biggl|
            \frac{\outVar}{\headDimension \valueDimension}
            \sum_{h , k}
                \indexedActivation{\depthSymbol\headIndex}{\sequenceVariable, c k }{\inputSymbol}^2
        \Biggr|^{\frac{t}{2}}
        \, ,
    \end{align*}
    where we have used that if $\varepsilon \sim \gauss( 0, I)$ and $v \in \R{\genericDimenstion}$ is a fixed vector, then $\langle v , \varepsilon \rangle$ is in distribution equal to $\| v \|_2 \varepsilon'$ where $\varepsilon' \sim \gauss(0, 1)$ by standard Gaussian identities, and the fact that $\E | \varepsilon' |^t < \infty$.
    Using H{\" o}lder's inequality if necessary, we can assume $t$ is even, and with that multiply out the r.h.s.\ above, leading to 
    \begin{align*}
        \E \, \Biggl|
            \frac{1}{\headDimension \valueDimension}
            \sum_{h , k}
                \indexedActivation{\depthSymbol\headIndex}{\sequenceVariable, c k }{\inputSymbol}^2
        \Biggr|^{\frac{t}{2}}
        \lesssim
        \E |
            \indexedActivation{\depthSymbol 1}{\sequenceVariable, c 1 }{\inputSymbol}
        |^{t}
        \, ,
    \end{align*}
    by exchangeability.
    It will thus be sufficient to establish 
    $
        \sup_{c \in [\spatialDimension], \headIndex, i \in \natnum}
        \sup_{\sequenceVariable}
        \E |
            \indexedActivation{\depthSymbol\headIndex}{\sequenceVariable, c i}{\inputSymbol}
        |^{t}
        <
        \infty
    $
    for any $t \geq 1$.
    Observe
    \begin{align*}
        \E |
            \indexedActivation{\depthSymbol\headIndex}{\sequenceVariable, c i}{\inputSymbol}
        |^{t}
        =
        \E \left\{
            \E \left[
                \Biggl|
                    \sum_{j = 1}^{\layerDimension{\depthSymbol - 1}}
                        \tildeLogitN{\sequenceVariable, c \cdot}{\depthSymbol\headIndex}{\inputSymbol}
                        \indexedActivity{\depthSymbol - 1}{\sequenceVariable, \cdot j}{\inputSymbol}
                        \weightV{\cdot i}
                \Biggr|^{t}
                \, \, 
                \Bigg\vert
                \, \,
                \tildeLogit{\sequenceVariable}{\depthSymbol\headIndex} (\inputSymbol) \, ,
                \indexedActivity{\depthSymbol - 1}{\sequenceVariable}{\inputSymbol}
            \right]
        \right\}
        \lesssim
        \E \Biggl|
            \frac{\valueVar}{\layerDimension{\depthSymbol - 1}}
            \sum_{j}
                \left(
                    \tildeLogitN{\sequenceVariable, c \cdot}{\depthSymbol\headIndex}{\inputSymbol}
                    \indexedActivity{\depthSymbol - 1}{\sequenceVariable, \cdot j}{\inputSymbol}
                \right)^2
        \Biggr|^{\frac{t}{2}}
        \, ,
    \end{align*}
    meaning we can combine an argument analogous to the one above with H{\" o}lder's inequality and exchangeability to obtain
    \begin{align*}
        \E \Biggl|
            \frac{1}{\layerDimension{\depthSymbol - 1}}
            \sum_{j}
                \left(
                    \tildeLogitN{\sequenceVariable, c \cdot}{\depthSymbol\headIndex}{\inputSymbol}
                    \indexedActivity{\depthSymbol - 1}{\sequenceVariable, \cdot j}{\inputSymbol}
                \right)^2
        \Biggr|^{\frac{t}{2}}
        \lesssim
        \poly \biggl(
            \max_{c' \in [\spatialDimension]}
            \E |
                \tildeLogitN{\sequenceVariable, c c'}{\depthSymbol\headIndex}{\inputSymbol}
            |^{2t}
            \, ,
            \max_{c' \in [\spatialDimension]}
            \sup_{j \in \natnum}
            \E |
                \indexedActivity{\depthSymbol - 1}{\sequenceVariable, c' j}{\inputSymbol}
            |^{2t}
        \biggr)
        \, .
    \end{align*}
    As shown at the beginning of this proof, we can bound
    $
        \E |
            \indexedActivity{\depthSymbol - 1}{\sequenceVariable, c' j}{\inputSymbol}
        |^{4t}
    $
    by a constant independent of $c', j$ and $\sequenceVariable$, and thus it only remains to show that
    $
        \max_{c, c' \in [\spatialDimension]}
        \sup_{\substack{
            \headIndex \in \natnum
        }}
        \sup_{\sequenceVariable}
            \E |
                \tildeLogit{\sequenceVariable, c c'}{\depthSymbol \headIndex} (\inputSymbol)
            |^t
        <
        \infty
    $
    in order to bound the $\E | \indexedActivation{\depthSymbol}{\sequenceVariable, c 1}{\inputSymbol} |^t$.
    Using the assumed entrywise polynomial boundedness of $\softmax$, we can see it will be sufficient to establish
    $
        \max_{c, c' \in [\spatialDimension]}
        \sup_{\substack{
            \headIndex \in \natnum
        }}
        \sup_{\sequenceVariable}
            \E |
                \logitSymbol_{\sequenceVariable, c c'}^{\depthSymbol \headIndex} (\inputSymbol)
            |^t
        <
        \infty
    $.
    We do this separately for $\scaling = 1$ and $\scaling = \frac{1}{2}$.
    
    Starting with the former, we can again replicate the argument from above, yielding
    \begin{align*}
        \E |
            \logitSymbol_{\sequenceVariable, c c'}^{\depthSymbol \headIndex} (\inputSymbol)
        |^t
        =
        \E  \Biggl|
            \frac{1}{\logitDimension}
            \sum_{k = 1}^{\logitDimension}
                \query{\sequenceVariable , c k}{\depthSymbol\headIndex} (\inputSymbol)
                \key{\sequenceVariable, c' k}{\depthSymbol\headIndex} (\inputSymbol)
        \Biggr|^{t}
        \lesssim
        \E |
            \query{\sequenceVariable , c 1}{\depthSymbol 1} (\inputSymbol)
        |^{2t}
        \lesssim
        \E |
            \indexedActivity{\depthSymbol - 1}{\sequenceVariable, c 1}{\inputSymbol}
        |^{4t}
        \, ,
    \end{align*}
    by exchangeability, H{\" o}lder's inequality, and the assumed $\weightsQ = \weightsK$ a.s.\ under $\scaling = 1$ (\Cref{sect:linear_scaling_limit}).
    For the $\scaling = \frac{1}{2}$ case, we start by w.l.o.g.\ assuming we need bound for $t \in \natnum$ even
    \begin{align*}
        \E |
            \logitSymbol_{\sequenceVariable, c c'}^{\depthSymbol \headIndex} (\inputSymbol)
        |^{t}
        =
        \E  \Biggl|
            \frac{1}{\sqrt{\logitDimension}}
            \sum_{k = 1}^{\logitDimension}
                \query{\sequenceVariable , c k}{\depthSymbol\headIndex} (\inputSymbol)
                \key{\sequenceVariable, c' k}{\depthSymbol\headIndex} (\inputSymbol)
        \Biggr|^{t}
        =
        \left(\frac{1}{\sqrt{\logitDimension}}\right)^t
        \sum_{k_1, \ldots, k_t}
        \E \left[
            \prod_{s=1}^{t}
            \query{\sequenceVariable , c k_s}{\depthSymbol\headIndex} (\inputSymbol)
            \key{\sequenceVariable, c' k_s}{\depthSymbol\headIndex} (\inputSymbol)
        \right]
        \, ,
    \end{align*}
    and noting that because $\weightsQ$ and $\weightsK$ are assumed independent under $\scaling = \frac{1}{2}$, there will be at most $\mathcal{O} ( (\logitSymbol)^{t / 2} )$ terms with non-zero expectation, meaning that we can again apply exchangeability and the distributional equivalence between $\query{\sequenceVariable}{\depthSymbol\headIndex}(\inputSymbol)$ and $\key{\sequenceVariable}{\depthSymbol\headIndex}(\inputSymbol)$ to obtain
    \begin{equation*}
        \E |
            \logitSymbol_{\sequenceVariable, c c'}^{\depthSymbol \headIndex} (\inputSymbol)
        |^{t}
        \lesssim
        \E |
            \query{\sequenceVariable , c 1}{\depthSymbol 1} (\inputSymbol)
        |^{2t}
        \lesssim
        \E |
            \indexedActivity{\depthSymbol - 1}{\sequenceVariable, c 1}{\inputSymbol}
        |^{4t}
        \, .
    \end{equation*}
    
    The convergence of 
    $
        \max_{\substack{
            a, b \in [\spatialDimension]
        }}
        \sup_{i, j \in \natnum}
        \sup_{\sequenceVariable}
            \E |
                \ntkhat_{a i , \, b j}^{\depthSymbol} ( \inputSymbol, \inputSymbol' ) 
            |^t
        <
        \infty
    $
    for non-attention layers can be obtained by combining exchangeability between the two groups of $\ntkhat_{a i , \, b j}^{\depthSymbol}$ variables with $i \neq j$ (resp.\ $i = j$) indices, and the results in \citep{yang2019v1} which show that expectations of polynomially bounded functions converge (this is essentially due to the assumed polynomial boundedness of $\nonlinearity$ and $\softmax$ and their (weak) derivatives, the fact that the pre-nonlinearities in the first layer are Gaussian for all of the considered architectures by linearity of Gaussian variables, and the standard combination of \Cref{lem:sup_ui} and \Cref{thm:mean_convergence}---see the proofs of theorems 4.3 and 5.1 in \citep{yang2019v1}).
    This can then be used to prove the bound for the first attention layer by inspecting the proofs in \Cref{sect:ntk_proofs} and noting that 
    $
        \sup_{\sequenceVariable}
            \E |
                \ntkhat_{a i , \, b j}^{\depthSymbol} ( \inputSymbol, \inputSymbol' ) 
            |^t
    $
    can be always bounded by a polynomial in suprema over quantities from previous layers that we already know are uniformly bounded.
    The proof for subsequent attention layers can thus proceed by induction.
\end{proof}

\begin{lemma}[Convergence of inner products]\label{lem:inner_prod_converge}
    Under the assumptions of \Cref{thm:gp_convergence_sqrt}, the following holds for any $a, b \in [\spatialDimension]$, $\inputSymbol, \inputSymbol' \in \indexSet$, $\depthSymbol \in [\depth + 1]$, and $\headIndex \in [\headDimension]$
    \begin{align}
        &
        \E \left[
            \frac{
                \langle \indexedActivity{\depthSymbol - 1}{\sequenceVariable, a \cdot}{\inputSymbol} , \indexedActivity{\depthSymbol - 1}{\sequenceVariable, b \cdot}{\inputSymbol'} \rangle
            }{
                \layerDimension{\depthSymbol - 1}
            }
        \right]
        \overset{\sequenceVariable \to \infty}{\to}
        \kerntildef{a b}{\depthSymbol}{\inputSymbol}{\inputSymbol'}
        \, ,
        &&
        \frac{
            \langle \indexedActivity{\depthSymbol - 1}{\sequenceVariable, a \cdot}{\inputSymbol} , \indexedActivity{\depthSymbol - 1}{\sequenceVariable, b \cdot}{\inputSymbol'} \rangle
        }{
            \layerDimension{\depthSymbol - 1}
        }
        \convergeProb
        \kerntildef{a b}{\depthSymbol}{\inputSymbol}{\inputSymbol'}
        \, ,
        \\
        &
        \E \left[
            \frac{
                \langle \query{\sequenceVariable, a \cdot}{\depthSymbol \headIndex}(\inputSymbol) , \query{\sequenceVariable, b \cdot}{\depthSymbol \headIndex}(\inputSymbol') \rangle
            }{
                \logitDimension
            }
        \right]
        \overset{\sequenceVariable \to \infty}{\to}
        \queryVar
        \kerntildef{a b}{\depthSymbol}{\inputSymbol}{\inputSymbol'}
        \, ,
        &&
        \frac{
            \langle \query{\sequenceVariable, a \cdot}{\depthSymbol \headIndex}(\inputSymbol) , \query{\sequenceVariable, b \cdot}{\depthSymbol \headIndex}(\inputSymbol') \rangle
        }{
            \logitDimension
        }
        \convergeProb
        \queryVar
        \kerntildef{a b}{\depthSymbol}{\inputSymbol}{\inputSymbol'}
        \, ,
        \\
        &
        \E \left[
            \frac{
                \langle \key{\sequenceVariable, a \cdot}{\depthSymbol \headIndex}(\inputSymbol) , \key{\sequenceVariable, b \cdot}{\depthSymbol \headIndex}(\inputSymbol') \rangle
            }{
                \logitDimension
            }
        \right]
        \overset{\sequenceVariable \to \infty}{\to}
        \keyVar
        \kerntildef{a b}{\depthSymbol}{\inputSymbol}{\inputSymbol'}
        \, ,
        &&
        \frac{
            \langle \key{\sequenceVariable, a \cdot}{\depthSymbol \headIndex}(\inputSymbol) , \key{\sequenceVariable, b \cdot}{\depthSymbol \headIndex}(\inputSymbol') \rangle
        }{
            \logitDimension
        }
        \convergeProb
        \keyVar
        \kerntildef{a b}{\depthSymbol}{\inputSymbol}{\inputSymbol'}
        \, ,
        \\
        &
        \E \left[
            \frac{
                \langle \query{\sequenceVariable, a \cdot}{\depthSymbol \headIndex}(\inputSymbol) , \key{\sequenceVariable, b \cdot}{\depthSymbol \headIndex}(\inputSymbol') \rangle
            }{
                \logitDimension
            }
        \right]
        \overset{\sequenceVariable \to \infty}{\to}
        \delta_{\scaling = 1}
        \QKStd
        \kerntildef{a b}{\depthSymbol}{\inputSymbol}{\inputSymbol'}
        \, ,
        &&
        \frac{
            \langle \query{\sequenceVariable, a \cdot}{\depthSymbol \headIndex}(\inputSymbol) , \key{\sequenceVariable, b \cdot}{\depthSymbol \headIndex}(\inputSymbol') \rangle
        }{
            \logitDimension
        }
        \convergeProb
        \delta_{\scaling = 1}
        \QKStd
        \kerntildef{a b}{\depthSymbol}{\inputSymbol}{\inputSymbol'}
        \, .
    \end{align}
\end{lemma}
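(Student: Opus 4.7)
My plan is to handle all four claims by a common template: (i) rewrite each inner product as an empirical mean of exchangeable summands, (ii) identify the (conditional) mean and verify that it converges to the claimed target, and (iii) invoke \Cref{lem:wlln_exch} (weak LLN for exchangeable triangular arrays), whose moment-boundedness hypothesis is supplied by \Cref{lem:mmnt_propagation}. Convergence in expectation then follows from \Cref{lem:sup_ui} and \Cref{thm:mean_convergence}, and convergence in probability from the LLN step.

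First I would prove the activity inner product. By exchangeability of $\{\indexedActivity{\depthSymbol-1}{\sequenceVariable,\cdot j}{\inputSymbol}\colon j\in\natnum\}$ over $j$ (assumed in \Cref{thm:gp_convergence_sqrt}), together with the continuity and polynomial boundedness of $\nonlinearity$, the variables $Y_{\sequenceVariable,j}\coloneqq \indexedActivity{\depthSymbol-1}{\sequenceVariable,aj}{\inputSymbol}\,\indexedActivity{\depthSymbol-1}{\sequenceVariable,bj}{\inputSymbol'}$ are exchangeable over $j$. The inductive GP hypothesis on $\activations{\sequenceVariable}{\depthSymbol-1}$ with column-wise independence in the limit implies $Y_{\sequenceVariable,\cdot}\convergeDist Y_{*,\cdot}$ with $(Y_{*,j})_{j\geq 1}$ i.i.d., so in particular $\E[Y_{*,1}Y_{*,2}]=(\E[Y_{*,1}])^2$, and by definition $\E[Y_{*,1}]=\kerntildef{ab}{\depthSymbol}{\inputSymbol}{\inputSymbol'}$. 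The moment bound $\sup_\sequenceVariable \E|Y_{\sequenceVariable,1}|^{2+\varepsilon}<\infty$ follows from \Cref{lem:mmnt_propagation} (and H\"older's inequality). Hence all hypotheses of \Cref{lem:wlln_exch} are met, giving both $\E$- and probability-convergence to $\kerntildef{ab}{\depthSymbol}{\inputSymbol}{\inputSymbol'}$.

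For the query-query and key-key inner products, I would reduce to the activity case via the tower property. Writing $\query{\sequenceVariable,\cdot k}{\depthSymbol\headIndex}(\inputSymbol)=\indexedActivity{\depthSymbol-1}{\sequenceVariable}{\inputSymbol}\weightQ{\cdot k}$ with $\weightQ{\cdot k}\sim\gauss(0,\queryVar(\layerDimension{\depthSymbol-1})^{-1} I)$ independent across $k$, conditional on $\indexedActivity{\depthSymbol-1}{\sequenceVariable}{}$ the summands are i.i.d.\ with conditional mean $\queryVar\,\layerDimension{\depthSymbol-1}^{-1}\langle\indexedActivity{\depthSymbol-1}{\sequenceVariable,a\cdot}{\inputSymbol},\indexedActivity{\depthSymbol-1}{\sequenceVariable,b\cdot}{\inputSymbol'}\rangle$. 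Again exchangeability over $k$ together with \Cref{lem:mmnt_propagation} and \Cref{lem:wlln_exch} yields convergence in probability (and expectation) to $\queryVar\,\kerntildef{ab}{\depthSymbol}{\inputSymbol}{\inputSymbol'}$ after composing with the activity-inner-product limit via the continuous mapping theorem and \Cref{lem:slutsky}. The key case is identical up to relabelling $\queryStd\leftrightarrow\keyStd$.

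The only step requiring a case split is the mixed query-key inner product. Under $\scaling=\tfrac{1}{2}$, $\WQ$ and $\WK$ are drawn independently with zero mean, so conditional on $\indexedActivity{\depthSymbol-1}{\sequenceVariable}{}$ each summand has mean zero, and the second conditional moment is of order $(\logitDimension)^{-1}$; the conditional Chebyshev bound then gives convergence in probability to zero, and uniform integrability (\Cref{lem:mmnt_propagation} + \Cref{lem:sup_ui}) upgrades this to $L^1$ convergence. Under $\scaling=1$, the identification $\WQ=\WK$ a.s.\ collapses the mixed inner product to the query-query case with variance $\QKStd=\queryStd\keyStd$, and we apply the already proved claim. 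The only mildly delicate point I anticipate is keeping the tower-property interchange of limits and conditional expectations clean, but this is standard once the uniform moment bound from \Cref{lem:mmnt_propagation} is in hand, so no genuine obstacle arises.
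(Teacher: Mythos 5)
Your proposal is correct and uses the same core ingredients as the paper's proof---exchangeability over the embedding index, the uniform moment bounds of \Cref{lem:mmnt_propagation} (turned into uniform integrability via \Cref{lem:sup_ui} and \Cref{thm:mean_convergence}), and a Chebyshev-type variance argument---but it packages them differently. For the activity inner product you invoke \Cref{lem:wlln_exch} directly; this is legitimate, since the summands $\indexedActivity{\depthSymbol-1}{\sequenceVariable, a j}{\inputSymbol}\,\indexedActivity{\depthSymbol-1}{\sequenceVariable, b j}{\inputSymbol'}$ inherit exchangeability over $j$ and converge in distribution to an i.i.d.\ limit by the inductive hypothesis of \Cref{thm:gp_convergence_sqrt}, whereas the paper redoes the mean/second-moment/Chebyshev computation by hand (which is exactly the content of \Cref{lem:wlln_exch}). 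For the query--query and key--key terms the paper integrates the (shared) weight matrix out inside the expectation and expands the second moment over the $\activitySymbol$-indices, while you condition on $\indexedActivity{\depthSymbol-1}{\sequenceVariable}{}$; both routes work, but note one caveat in your phrasing: applying \Cref{lem:wlln_exch} over the column index $k$ requires that the array $\query{\sequenceVariable, a k}{\depthSymbol \headIndex}(\inputSymbol)\query{\sequenceVariable, b k}{\depthSymbol \headIndex}(\inputSymbol')$ converge in distribution to an exchangeable limit satisfying $\E[Z_{*,1}Z_{*,2}]=(\E[Z_{*,1}])^2$, i.e.\ asymptotically i.i.d.\ query columns---true (a dense layer on top of $\indexedActivity{\depthSymbol-1}{\sequenceVariable}{}$, and non-circular because it only needs the activity part you prove first), but not established in your write-up. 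The cleaner version of your own step is the conditional route you gesture at: given $\indexedActivity{\depthSymbol-1}{\sequenceVariable}{}$ the summands are i.i.d.\ with conditional mean $\queryVar\,\langle \indexedActivity{\depthSymbol-1}{\sequenceVariable, a \cdot}{\inputSymbol},\indexedActivity{\depthSymbol-1}{\sequenceVariable, b \cdot}{\inputSymbol'}\rangle/\layerDimension{\depthSymbol-1}$ and the conditional variance of the average is of order $(\logitDimension)^{-1}$ times quantities bounded by \Cref{lem:mmnt_propagation}, so conditional Chebyshev plus \Cref{lem:slutsky} and the tower property (which gives the expectation claim exactly, $\E[\cdot]=\queryVar\,\E[\langle \indexedActivity{\depthSymbol-1}{\sequenceVariable, a \cdot}{\inputSymbol},\indexedActivity{\depthSymbol-1}{\sequenceVariable, b \cdot}{\inputSymbol'}\rangle/\layerDimension{\depthSymbol-1}]$, no limit interchange needed) reduce everything to the already proved activity case. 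Your split for the mixed query--key term (zero conditional mean and vanishing conditional variance under $\scaling=\tfrac12$; collapse to the query--query case with $\QKStd$ under the $\WQ=\WK$ identification when $\scaling=1$) matches the paper's statement.
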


\begin{proof}[Proof of \Cref{lem:inner_prod_converge}]
    Notice that all the statements involving $\querySymbol^{\depthSymbol\headIndex}$ or $\keySymbol^{\depthSymbol\headIndex}$ are of the form
    \begin{align*}
        \frac{
            \indexedActivity{\depthSymbol - 1}{\sequenceVariable, a \cdot}{\inputSymbol}
            \weightMatSymbol_{\sequenceVariable}^{\depthSymbol\headIndex}
            (\weightMatSymbol_{\sequenceVariable}^{\depthSymbol\headIndex})^\top
            \indexedActivity{\depthSymbol - 1}{\sequenceVariable, b \cdot}{\inputSymbol'}^\top
        }{
            \logitDimension
        }
        \, ,
    \end{align*}
    i.e., with the same weight matrix multiplying the layer inputs $\activitySymbol_{\sequenceVariable}^{\depthSymbol - 1}(\inputSymbol)$ (recall that under $\scaling = 1$, we assumed $\weightsQ = \weightsK$ a.s.).
    Taking the expectation of the above term we obtain a term proportional up to $\queryStd$ or $\keyStd$ to
    \begin{align*}
        \E \left[
            \frac{
                \langle \indexedActivity{\depthSymbol - 1}{\sequenceVariable, a \cdot}{\inputSymbol} , \indexedActivity{\depthSymbol - 1}{\sequenceVariable, b \cdot}{\inputSymbol'} \rangle
            }{
                \layerDimension{\depthSymbol - 1}
            }
        \right]
        =
        \E \left[
            \indexedActivity{\depthSymbol - 1}{\sequenceVariable, a 1}{\inputSymbol} , \indexedActivity{\depthSymbol - 1}{\sequenceVariable, b 1}{\inputSymbol'} 
        \right]
        \, ,
    \end{align*}
    by the assumed exchangeability of $\activitySymbol_{\sequenceVariable}^{\depthSymbol - 1}$.
    Since the integrand converges in distribution by the assumed continuity of $\nonlinearity$ and the continuous mapping theorem \citep[theorem~9.3.7]{dudley02}, we can combine \Cref{lem:mmnt_propagation} with \Cref{lem:sup_ui} to obtain uniform integrability and thus
    $
        \E [
            \indexedActivity{\depthSymbol - 1}{\sequenceVariable, a 1}{\inputSymbol} , \indexedActivity{\depthSymbol - 1}{\sequenceVariable, b 1}{\inputSymbol'} 
        ]
        \to
        \kerntildef{a b}{\depthSymbol}{\inputSymbol}{\inputSymbol'}
    $
    by \Cref{thm:mean_convergence}, proving the convergence of expectations.
    
    The obtain the convergence in probability, it is sufficient to show that 
    \begin{align*}
        \E \Biggl|
                \frac{
                    \indexedActivity{\depthSymbol - 1}{\sequenceVariable, a \cdot}{\inputSymbol}
                    \weightMatSymbol_{\sequenceVariable}^{\depthSymbol\headIndex}
                    (\weightMatSymbol_{\sequenceVariable}^{\depthSymbol\headIndex})^\top
                    \indexedActivity{\depthSymbol - 1}{\sequenceVariable, b \cdot}{\inputSymbol'}^\top
                }{
                    \logitDimension
                }
        \Biggr|^2
        =
        \sum_{\substack{i_1, j_1 \\ i_2 j_2}}^{\layerDimension{\depthSymbol - 1}}
            \E \left[
                \indexedActivity{\depthSymbol - 1}{\sequenceVariable, a i_1}{\inputSymbol}
                \indexedActivity{\depthSymbol - 1}{\sequenceVariable, b j_1}{\inputSymbol}
                \indexedActivity{\depthSymbol - 1}{\sequenceVariable, a i_2}{\inputSymbol}
                \indexedActivity{\depthSymbol - 1}{\sequenceVariable, b j_2}{\inputSymbol}
                \weightMatSymbol_{\sequenceVariable, i_1 \cdot}^{\depthSymbol\headIndex}
                (\weightMatSymbol_{\sequenceVariable, j_1 \cdot}^{\depthSymbol\headIndex})^\top
                \weightMatSymbol_{\sequenceVariable, i_2 \cdot}^{\depthSymbol\headIndex}
                (\weightMatSymbol_{\sequenceVariable, j_2 \cdot}^{\depthSymbol\headIndex})^\top
            \right]
        \, ,
    \end{align*}
    converges to the square of the mean as by Chebyshev's inequality: $\Prob(|\genericRV_\sequenceVariable - \E \genericRV| \geq \delta) \leq \delta^{-2}(\E [\genericRV_\sequenceVariable^2] - \{ \E [\genericRV_\sequenceVariable] \}^2)$.
    Since we can bound each of the summands using H{\" o}lder's inequality combined with \Cref{lem:mmnt_propagation}, the limit of the above expectation will up to a constant coincide with that of
    \begin{align*}
        \frac{1}{(\layerDimension{\depthSymbol - 1})^2}
        \sum_{i , j}^{\layerDimension{\depthSymbol - 1}}
            \E \left[
                \indexedActivity{\depthSymbol - 1}{\sequenceVariable, a i}{\inputSymbol}
                \indexedActivity{\depthSymbol - 1}{\sequenceVariable, b i}{\inputSymbol}
                \indexedActivity{\depthSymbol - 1}{\sequenceVariable, a j}{\inputSymbol}
                \indexedActivity{\depthSymbol - 1}{\sequenceVariable, b j}{\inputSymbol}
            \right]
        =
        \E \left[
            \indexedActivity{\depthSymbol - 1}{\sequenceVariable, a 1}{\inputSymbol}
            \indexedActivity{\depthSymbol - 1}{\sequenceVariable, b 1}{\inputSymbol}
            \indexedActivity{\depthSymbol - 1}{\sequenceVariable, a 2}{\inputSymbol}
            \indexedActivity{\depthSymbol - 1}{\sequenceVariable, b 2}{\inputSymbol}
        \right]
        +
        \littleO((\layerDimension{\depthSymbol-1})^2)
        \, ,
    \end{align*}
    where the equality is by the assumed exchangeability.
    Since the individual columns of $\activitySymbol_{\sequenceVariable}^{\depthSymbol - 1}$ are asymptotically independent by assumption, we can use an argument analogous to that we made for the 
    $
        \E [
            \indexedActivity{\depthSymbol - 1}{\sequenceVariable, a 1}{\inputSymbol}
            \indexedActivity{\depthSymbol - 1}{\sequenceVariable, b 1}{\inputSymbol}
        ]
    $
    above to obtain the $(\kerntildef{a b}{\depthSymbol}{\inputSymbol}{\inputSymbol'})^2$ limit.
    Noting that the l.h.s.\ above is equal to
    $
        \E \bigl[
            \bigl(
                    \langle \indexedActivity{\depthSymbol - 1}{\sequenceVariable, a \cdot}{\inputSymbol} , \indexedActivity{\depthSymbol - 1}{\sequenceVariable, b \cdot}{\inputSymbol'} \rangle
                /
                    \layerDimension{\depthSymbol - 1}
            \bigr)^2
        \bigr]
    $
    concludes the proof.
\end{proof}

\section{Positional encodings}\label{sect:positional_encodings_appendix}

As in the proofs for attention without positional encodings, we assume the `infinite width, finite fan-out' construction of the~sequence of NNs.
In particular, we will assume that for any $\sequenceVariable \in \natnum$, there is a countably infinite set of random variables $\{ \posEmb{\sequenceVariable, \cdot i}{\depthSymbol} \colon i \in \natnum \} = \{ \posEmb{\cdot i}{\depthSymbol} \colon i \in \natnum \}$, where $\posEmb{\cdot i}{\depthSymbol} \sim \gauss ( 0 , \covPosEmb{}{} )$ i.i.d.\ over the $i$ index, but only a finite number $\peDimension \in \natnum$ is \texttt{add}-ed, 
$$
    \tildeActivity{\depthSymbol - 1}{\sequenceVariable}{\inputSymbol} = \sqrt{\interpKernCoeff}
    \activitySymbol_{\sequenceVariable}^{\depthSymbol - 1}(\inputSymbol) + \sqrt{1 - \interpKernCoeff}
    \posEmb{\sequenceVariable}{\depthSymbol}
    \, ,
$$
or \texttt{append}-ed
$$
    \tildeActivity{\depthSymbol - 1}{\sequenceVariable}{\inputSymbol} = [ \activitySymbol_{\sequenceVariable}^{\depthSymbol - 1}(\inputSymbol) , \posEmb{\sequenceVariable}{\depthSymbol}]
    \, ,
$$
to each of the layer inputs $\indexedActivity{\depthSymbol - 1}{\sequenceVariable}{\inputSymbol}$.
In the \texttt{append} case, we further assume $\interpKernCoeff = \lim_{\sequenceVariable \to \infty} \layerDimension{\depthSymbol - 1} / (\peDimension + \layerDimension{\depthSymbol - 1})$.

\subsection{NNGP limit}

Note that all \Cref{thm:gp_convergence_sqrt} relies on is that the layer's inputs $\{ \indexedActivity{\depthSymbol - 1}{\sequenceVariable}{\inputSymbol}  \colon \inputSymbol \in \indexSet \}$ converge in distribution to some $\indexedActivity{\depthSymbol - 1}{}{\inputSymbol}$ with mutually independent columns, and on the fact that the elementwise absolute moments of $\indexedActivity{\depthSymbol - 1}{\sequenceVariable, a i}{\inputSymbol}$ are bounded uniformly in $a, i$ and $\sequenceVariable$.
Let us thus replace $\activitySymbol_{\sequenceVariable}^{\depthSymbol - 1}(\inputSymbol)$ by $\tildeActivity{\depthSymbol - 1}{\sequenceVariable}{\inputSymbol}$, and see whether the proofs still apply.

\textbf{Exchangeability:} 
The proofs of exchangeability in \Cref{lem:exchangeability,lem:head_exchangeability,lem:logit_exchangeability} are all based on conditioning on $\indexedActivity{\depthSymbol - 1}{\sequenceVariable}{\inputSymbol}$ for some fixed finite subset of the inputs $\inputSymbol$, and then showing that the random variables are conditionally i.i.d.\ for any given $\sequenceVariable \in \natnum$.
If positional encodings are used, the variables will be again i.i.d.\ if we add $\posEmb{\sequenceVariable}{\depthSymbol}$ into the conditioning set.

\textbf{Convergence in distribution:}
To establish $\{ \tildeActivity{\depthSymbol - 1}{\sequenceVariable}{\inputSymbol} \colon \inputSymbol \in \indexSet \}$ converges in distribution in the \texttt{add} case, we can use a simple argument based on the Cram{\' e}r-Wold device and pointwise convergence of the characteristic function, which implies convergence in distribution by L{\' e}vy's continuity theorem (using the fact that $\posEmb{\sequenceVariable, \cdot i}{\depthSymbol}$ are assumed to be i.i.d.\ $\gauss(0, \covPosEmb{}{})$ and the distribution of a particular $\posEmb{\sequenceVariable, \cdot i}{\depthSymbol}$ does not change with $\sequenceVariable$).
An alternative approach has to be taken for the \texttt{append} case where the weak limit of the layer input's distribution may not be well defined; closer inspection of \Cref{sect:nngp_proofs} reveals that all the proofs depend on the convergence of the layer inputs only through \Cref{lem:inner_prod_converge,lem:mmnt_propagation}, which we discuss next.

\textbf{Convergence of inner products and boundedness of moments:}
The proof of each statement of \Cref{lem:mmnt_propagation} relies on $\{ \indexedActivity{\depthSymbol - 1}{\sequenceVariable}{\inputSymbol} \colon \inputSymbol \in \indexSet \}$ only through the bound  
$
    \max_{c \in [\spatialDimension]}
    \sup_{\substack{i \in \natnum}}
    \sup_{\sequenceVariable}
        \E | 
            \indexedActivity{\depthSymbol - 1}{\sequenceVariable, c i}{\inputSymbol} 
        |^t
    <
    \infty
$
which is essentially established using the assumed polynomial bound on $| \nonlinearity |$ and the Gaussianity of the weights at initialisation.
All we need to extend \Cref{lem:mmnt_propagation} to the case where positional encodings are used is to establish 
$
    \max_{c \in [\spatialDimension]}
    \sup_{\substack{i \in \natnum}}
    \sup_{\sequenceVariable}
        \E | 
            \tildeActivity{\depthSymbol - 1}{\sequenceVariable, c i}{\inputSymbol} 
        |^t
    <
    \infty
$.
This can be done by observing 
$
    \E | 
        \tildeActivity{\depthSymbol - 1}{\sequenceVariable, c i}{\inputSymbol} 
    |^t
    \leq
    \max \{
        \E | 
            \indexedActivity{\depthSymbol - 1}{\sequenceVariable, c i}{\inputSymbol}
        |^t
        ,
        \E |
            \posEmb{\sequenceVariable, c 1}{\depthSymbol}
        |^t
    \}
    <
    \infty
$
by the assumption $\posEmb{\sequenceVariable, \cdot i}{\depthSymbol} \sim \generalSum(0, \covPosEmb{}{})$ i.i.d.\ over the $i$ index for any $\sequenceVariable \in \natnum$.

Similarly, the proof of \Cref{lem:inner_prod_converge} can be modified by observing that
\begin{align*}
    \E \left[
        \frac{
            \langle \tildeActivity{\depthSymbol - 1}{\sequenceVariable, a \cdot}{\inputSymbol} , \tildeActivity{\depthSymbol - 1}{\sequenceVariable, b \cdot}{\inputSymbol'} \rangle
        }{
            \peDimension + \layerDimension{\depthSymbol - 1}
        }
    \right]
    =
    \frac{\layerDimension{\depthSymbol - 1}}{\peDimension + \layerDimension{\depthSymbol - 1}}
    \E \left[
        \indexedActivity{\depthSymbol - 1}{\sequenceVariable, a 1}{\inputSymbol} , \indexedActivity{\depthSymbol - 1}{\sequenceVariable, b 1}{\inputSymbol'} 
    \right]
    +
    \frac{\peDimension}{\peDimension + \layerDimension{\depthSymbol - 1}}
    \underbrace{
        \E \left[
            \posEmb{\sequenceVariable, a 1}{\depthSymbol}
            \posEmb{\sequenceVariable, b 1}{\depthSymbol}
        \right]
    }_{= \covPosEmb{a b}{}}
    \, ,
\end{align*}
in the \texttt{append} case by the independence of $\posEmb{\sequenceVariable}{\depthSymbol}$ and exchangeability.
Using the Gaussianity of positional encodings and $\interpKernCoeff = \lim_{\sequenceVariable \to \infty}   \layerDimension{\depthSymbol - 1} / (\peDimension + \layerDimension{\depthSymbol - 1})$, an analogous argument to that made in \Cref{lem:inner_prod_converge} can be used to establish convergence of the r.h.s.\ to $\interpKern\circ\kerntildef{a b}{\depthSymbol}{\inputSymbol}{\inputSymbol'} = \interpKernCoeff \kerntildef{a b}{\depthSymbol}{\inputSymbol}{\inputSymbol'} + (1 - \interpKernCoeff) \covPosEmb{a b}{}$ in both probability and expectation.
For the \texttt{add} case, 
\begin{align*}
    \E \left[
        \frac{
            \langle \tildeActivity{\depthSymbol - 1}{\sequenceVariable, a \cdot}{\inputSymbol} , \tildeActivity{\depthSymbol - 1}{\sequenceVariable, b \cdot}{\inputSymbol'} \rangle
        }{
            \layerDimension{\depthSymbol - 1}
        }
    \right]
    =
    \interpKernCoeff
    \E \left[
        \indexedActivity{\depthSymbol - 1}{\sequenceVariable, a 1}{\inputSymbol} , \indexedActivity{\depthSymbol - 1}{\sequenceVariable, b 1}{\inputSymbol'} 
    \right]
    +
    (1 - \interpKernCoeff)
    \underbrace{
        \E \left[
            \posEmb{\sequenceVariable, a 1}{\depthSymbol}
            \posEmb{\sequenceVariable, b 1}{\depthSymbol}
        \right]
    }_{= \covPosEmb{a b}{}}
    \, ,
\end{align*}
again by the independence of $\posEmb{\sequenceVariable}{\depthSymbol}$ and exchangeability, and thus a similar argument to the one above applies.

Putting all of the above together, addition of positional encodings does not prevent GP behaviour in the infinite width limit;
the only modification of the results in \Cref{sect:nngp_proofs} is thus replacement of any $\kerntildef{a b}{\depthSymbol}{\inputSymbol}{\inputSymbol'}$ in the expression for the limiting covariance of $\activationSymbol^{\depthSymbol}$ and $\logitSymbol^{\depthSymbol}$ by $\interpKern\circ\kerntildef{a b}{\depthSymbol}{\inputSymbol}{\inputSymbol'}$.

\subsection{NTK limit}

There are two sets of changes to the NTK limit.
First, the gradients w.r.t.\ $\indexedActivity{\depthSymbol - 1}{\sequenceVariable}{\inputSymbol}$ in the \emph{indirect} part will now be multiplied by $\sqrt{\interpKernCoeff}$ in the \texttt{add} case, and by $[\layerDimension{\depthSymbol - 1} / (\layerDimension{\depthSymbol} + \peDimension)]^{1/2}$---to ensure convergence of corresponding inner products---in the \texttt{append} case, and all the terms of the form 
$$
    \E \Biggl|
        \frac{
            \langle \tildeActivity{\depthSymbol - 1}{\sequenceVariable, a \cdot}{\inputSymbol} , \tildeActivity{\depthSymbol - 1}{\sequenceVariable, b \cdot}{\inputSymbol'} \rangle
        }{
            \peDimension + \layerDimension{\depthSymbol - 1}
        }
    \Biggr|^k
    \, ,
$$
for some $k > 0$ in the \emph{direct} part will converge to the $k$\textsuperscript{th} power of the $\interpKern\circ\kerntildef{a b}{\depthSymbol}{\inputSymbol}{\inputSymbol'} = \alpha \kerntildef{a b }{\depthSymbol}{\inputSymbol}{\inputSymbol'} + (1 - \alpha) \covPosEmb{a b}{}$ kernel as discussed.
Since we have shown that \Cref{lem:mmnt_propagation,lem:inner_prod_converge} hold mutatis mutandis in the previous section, the rest of the proofs in the direct part can be modified in the obvious way, replacing $\kerntildef{a b}{\depthSymbol}{\inputSymbol}{\inputSymbol'}$ by $\interpKern\circ\kerntildef{a b}{\depthSymbol}{\inputSymbol}{\inputSymbol'}$ as necessary.

Second, there will be a new contribution to the \emph{direct} part due to the gradient w.r.t.\ the trainable $\posEmb{\sequenceVariable}{\depthSymbol}$.
Since $\posEmb{\sequenceVariable}{\depthSymbol}$ is \texttt{add}-ed (resp.\ \texttt{append}-ed) to the layer input $\indexedActivity{\depthSymbol - 1}{\sequenceVariable}{\inputSymbol}$, this contribution will be quite similar to the \emph{indirect} contribution, however with $\ntkhatf{\gIndexA \gIndexZA, \gIndexB \gIndexZB}{\depthSymbol}{\inputSymbol}{\inputSymbol'}$ (\Cref{eq:ntk_hat}) replaced by $(1 - \interpKernCoeff) \delta_{\gIndexA = \gIndexB} \delta_{\gIndexZA = \gIndexZB}$.
Inspecting \Cref{lem:gg_ntk,lem:vv_ntk,lem:gv_ntk}, this will lead to two changes.
Firstly, since $\E |(1 - \interpKernCoeff) \delta_{\gIndexA = \gIndexB} \delta_{\gIndexZA = \gIndexZB}|^{t} < \infty$, all bounds involving $\E | \ntkhat_{\gIndexA \gIndexZA, \gIndexB \gIndexZB}|^t$ can be trivially reduced.
Secondly, as shown in the previous section, 
all appearances of the $\kerntildef{a b}{\depthSymbol}{\inputSymbol}{\inputSymbol'}$ are to be replaced $\interpKern\circ\kerntildef{a b}{\depthSymbol}{\inputSymbol}{\inputSymbol'}$, including those involved indirectly through the modified asymptotic distribution of $\logitSymbol_{\sequenceVariable}^{\depthSymbol}$.
The rest of the proofs is affected by the introduction of positional encodings only through \Cref{lem:mmnt_propagation,lem:inner_prod_converge} which, as mentioned, do hold in a modified form.
Substituting $(1 - \interpKernCoeff) \delta_{\gIndexA = \gIndexB} \delta_{\gIndexZA = \gIndexZB}$ for $\ntkhatf{\gIndexA \gIndexZA, \gIndexB \gIndexZB}{\depthSymbol}{\inputSymbol}{\inputSymbol'}$ in \Cref{lem:gg_ntk,lem:vv_ntk}, we thus conclude that the new contribution to the NTK due to the gradient w.r.t.\ $\posEmb{\sequenceVariable}{\depthSymbol}$ is
\begin{align*}
    &
    (1 - \interpKernCoeff)
    \OVVar
    \sum_{c = 1}^{\spatialDimension}
        \E [
            \tildeLogit{\fIndexA c}{\depthSymbol 1} (\inputSymbol)
            \tildeLogit{\fIndexB c}{\depthSymbol 1} (\inputSymbol')
        ]
    +
    \\
    \delta_{\scaling = \frac{1}{2}}
    &(1 - \interpKernCoeff)
    \OVVar
    \QKVar
    \sum_{\substack{c_1, c_2 \\ d_1, d_2}}^{\spatialDimension}
        \interpKern \circ \kerntildef{c_1 c_2}{\depthSymbol}{\inputSymbol}{\inputSymbol'}
        \left( 
            \delta_{\substack{d_1 = d_2}}
            \interpKern \circ \kerntildef{\fIndexA \fIndexB}{\depthSymbol}{\inputSymbol}{\inputSymbol'}
            +
            \delta_{\substack{\fIndexA = \fIndexB}}
            \interpKern \circ \kerntildef{d_1 d_2}{\depthSymbol}{\inputSymbol}{\inputSymbol'}
        \right)
        \E \left[
            \frac{
                \partial
                \tildeLogit{\fIndexA c_1}{\depthSymbol 1} (\inputSymbol)
            }{
                \partial 
                \logitSymbol_{\fIndexA d_1}^{\depthSymbol 1} (\inputSymbol)
            }
            \frac{
                \partial
                \tildeLogit{\fIndexB c_2}{\depthSymbol 1} (\inputSymbol')
            }{
                \partial 
                \logitSymbol_{\fIndexB d_2}^{\depthSymbol 1} (\inputSymbol')
            }
        \right]
    \, .
\end{align*}

\section{Residual attention}\label{sect:residual_attention_appendix}

Observe that by \citep{garriga2019deep,yang2019v2}, the covariance induced by the skip connection, $\indexedActivation{\depthSymbol}{\sequenceVariable}{\inputSymbol} = \sqrt{\interpKernCoeff} \indexedActivity{\depthSymbol - 1}{\sequenceVariable}{\inputSymbol} + \sqrt{1 - \interpKernCoeff} \tildeActivation{\depthSymbol}{\sequenceVariable}{\inputSymbol}$, in the infinite width limit is equal to 
\begin{align*}
    \E [
        \indexedActivation{\depthSymbol}{\fIndexA 1}{\inputSymbol}
        \indexedActivation{\depthSymbol}{\fIndexB 1}{\inputSymbol'}
    ]
    &=
    \interpKernCoeff
    \E [
        \indexedActivity{\depthSymbol - 1}{\fIndexA 1}{\inputSymbol}
        \indexedActivity{\depthSymbol - 1}{\fIndexB 1}{\inputSymbol'}
    ]
    +
    (1 - \interpKernCoeff)
    \E [
        \tildeActivation{\depthSymbol}{\fIndexA 1}{\inputSymbol}
        \tildeActivation{\depthSymbol}{\fIndexA 1}{\inputSymbol'}
    ]
    \\
    &=
    \interpKernCoeff
    \kerntildef{\fIndexA \fIndexB}{\depthSymbol}{\inputSymbol}{\inputSymbol'}
    +
    (1 - \interpKernCoeff)
    \E [
        \tildeActivation{\depthSymbol}{\fIndexA 1}{\inputSymbol}
        \tildeActivation{\depthSymbol}{\fIndexA 1}{\inputSymbol'}
    ]
    \, .
\end{align*}
To obtain the 
$
    \interpKernCoeff \kerntildef{\fIndexA \fIndexB}{\depthSymbol}{\inputSymbol}{\inputSymbol'}
    +
    (1 - \interpKernCoeff)
    \covPosEmb{\fIndexA \cdot}{}
    \kerntildef{}{\depthSymbol}{\inputSymbol}{\inputSymbol'}
    \covPosEmb{\fIndexB \cdot}{\top}
$
from \Cref{eq:residual_kernel}, it is thus sufficient to choose $\tildeActivation{\depthSymbol}{\sequenceVariable}{\inputSymbol}$ to be the output of attention layer under the $\genericDimenstion^{-1}$ scaling with structured positional encodings (covariance $\covPosEmb{}{}$), identity function for $\softmax$ and the interpolation parameter \emph{for the attention layer} set to zero, resulting in the 
$    
    \covPosEmb{\fIndexA \cdot}{}
    \kerntildef{}{\depthSymbol}{\inputSymbol}{\inputSymbol'}
    \covPosEmb{\fIndexB \cdot}{\top}
$ (see \Cref{tab:kernel_overview}).

\end{document}